\definecolor{niceRed}{RGB}{190,38,38}
\definecolor{niceYellow}{HTML}{f5b400}
\definecolor{blueGrotto}{HTML}{059DC0}
\definecolor{royalBlue}{HTML}{057DCD}
\definecolor{navyBlue}{HTML}{0B579C}
\definecolor{yaleBlue}{HTML}{00356b}
\definecolor{limeGreen}{HTML}{81B622}
\definecolor{nicePurple}{HTML}{9c27b0}
\definecolor{lightRoyalBlue}{HTML}{def2ff}  
\definecolor{gold}{HTML}{ffa300}
\pgfplotsset{compat=1.17}
\tikzset{
  myNodeFlex/.style={
    draw,
    rectangle,
    rounded corners,
    text centered,
    minimum height=1.5em,
  }
}
\tikzset{
  myNode/.style={
    draw,
    rectangle,
    rounded corners,
    text centered,
    minimum height=1.5em,
    minimum width=3cm,
    text width=5cm,    
  }
}
\tikzset{
  myNodeNarrow/.style={
    draw,
    rectangle,
    rounded corners,
    text centered,
    minimum height=1.5em,
    minimum width=1cm,
  }
}
\tikzset{
  myNodeWide/.style={
    draw,
    rectangle,
    rounded corners,
    text centered,
    minimum height=1.5em,
    minimum width=6cm,
  }
}
\def\biglen{20cm} % playing role of infinity (should be < .25\maxdimen)
\tikzset{
  half plane/.style={ to path={
       ($(\tikztostart)!.5!(\tikztotarget)!#1!(\tikztotarget)!\biglen!90:(\tikztotarget)$)
    -- ($(\tikztostart)!.5!(\tikztotarget)!#1!(\tikztotarget)!\biglen!-90:(\tikztotarget)$)
    -- ([turn]0,2*\biglen) -- ([turn]0,2*\biglen) -- cycle}},
  half plane/.default={1pt}
}
\theoremstyle{plain} 
\newtheorem{theorem}{Theorem}[section]
\newtheorem{corollary}[theorem]{Corollary}
\newtheorem{proposition}[theorem]{Proposition}
\newtheorem{lemma}[theorem]{Lemma}
\newtheorem{fact}[theorem]{Fact}
\newtheorem{claim}[theorem]{Claim}
\newtheorem{inftheorem}{Informal Theorem}
\newtheorem{definition}{Definition}
\newtheorem{condition}{Condition}
\newtheorem{infdefinition}{Informal Definition}
\newtheorem*{definition*}{Definition}
\theoremstyle{definition} 
\newtheorem{example}[theorem]{Example}
\newtheorem{remark}[theorem]{Remark}
\theoremstyle{remark}
\NewDocumentEnvironment{pf}{o}
  {\IfNoValueTF{#1}{\begin{proof}}{\begin{proof}[Proof of #1.]}}
  {\IfNoValueTF{#1}{\end{proof}}{\end{proof}}}
\crefname{section}{Section}{Sections}
\crefname{theorem}{Theorem}{Theorems}
\crefname{lemma}{Lemma}{Lemmas}
\crefname{definition}{Definition}{Definitions}
\crefname{infdefinition}{Informal Definition}{Informal Definitions}
\crefname{conjecture}{Conjecture}{Conjectures}
\crefname{corollary}{Corollary}{Corollaries}
\crefname{condition}{Condition}{Conditions}
\crefname{infcondition}{Informal Condition}{Informal Conditions}
\crefname{construction}{Construction}{Constructions}
\crefname{conjecture}{Conjecture}{Conjectures}
\crefname{claim}{Claim}{Claims}
\crefname{observation}{Observation}{Observations}
\crefname{proposition}{Proposition}{Propositions}
\crefname{fact}{Fact}{Facts}
\crefname{question}{Question}{Questions}
\crefname{problem}{Problem}{Problems}
\crefname{remark}{Remark}{Remarks}
\crefname{example}{Example}{Examples}
\crefname{equation}{Equation}{Equations}
\crefname{appendix}{Appendix}{Appendices}
\crefname{algorithm}{Algorithm}{Algorithms}
\crefname{model}{Model}{Models}
\crefname{figure}{Figure}{Figures}
\crefname{inftheorem}{Informal Theorem}{Informal Theorems}
\crefname{infassumption}{Informal Assumption}{Informal Assumptions}
\crefname{minftheorem}{Main Informal Theorem}{Main Informal Theorems}
\crefname{maintheorem}{Main Theorem}{Main Theorems}
\crefname{assumption}{Assumption}{Assumptions}
\crefname{case}{Case}{Cases}
\crefname{program}{Program}{Programs}
\newlist{asmpenum}{enumerate}{1} % should only occur inside assumption env.
\setlist[asmpenum]{label={\arabic*.},ref=\theassumption.{\arabic*}}
\crefname{asmpenumi}{Assumption}{Assumptions}
\renewcommand{\eqref}[1]{\textup{\eqrefform@{\ref{#1}}}}
\let\eqrefform@\tagform@
\newcommand{\changetag}[1]{%
  \renewcommand\tagform@[1]{\maketag@@@{(\ignorespaces#1\unskip\@@italiccorr)}}%
}
\newcommand{\tagnum}[2]{%
    \refstepcounter{equation}%
    \tag{#1) \ (\theequation}%
    \protected@write \@auxout {}{%
        \string \newlabel {#2}{{\theequation}{\thepage}{}{equation.\theequation}{}}%
    }%
}
\newcommand{\quadtext}[1]{\quad\text{#1}\quad}
\newcommand{\qquadtext}[1]{\qquad\text{#1}\qquad}
\newcommand{\quadand}{\quadtext{and}}
\newcommand{\qquadand}{\qquadtext{and}}
\def\abs#1{\left| #1 \right|}
\newcommand{\sinparen}[1]{\ensuremath{(#1)}}
\newcommand{\sinbrace}[1]{\ensuremath{\{#1\}}}
\newcommand{\inbrace}[1]{\ensuremath{\left\{#1\right\}}}
\newcommand{\inparen}[1]{\ensuremath{\left(#1\right)}}
\newcommand{\inangle}[1]{\left\langle#1\right\rangle}
\newcommand{\floor}[1]{\ensuremath{\left\lfloor#1\right\rfloor}}
\newcommand{\ceil}[1]{\ensuremath{\left\lceil#1\right\rceil}}
\newcommand{\N}{\mathbb{N}}
\newcommand{\R}{\mathbb{R}}
\newcommand{\sfrac}[2]{{#1/#2}} 
\newcommand{\nfrac}[2]{\nicefrac{#1}{#2}}
\newcommand{\iid}{i.i.d.}
\newcommand{\eps}{\varepsilon}
\renewcommand{\epsilon}{\varepsilon}
\newcommand*{\tran}{{\mathpalette\@tran{}}}
\newcommand*{\@tran}[2]{\raisebox{\depth}{$\m@th#1\intercal$}}
\mathchardef\NABLA"272
\newcommand*{\Nabla}{\boldsymbol\NABLA}
\let\nabla\Nabla
\renewcommand{\hat}{\widehat}
\renewcommand{\bar}{\overline}
\renewcommand{\tilde}{\widetilde}
\newcommand{\wt}[1]{\widetilde{#1}}
\newcommand{\customcal}[1]{\euscr{#1}}
\newcommand{\cA}{\customcal{A}}
\newcommand{\cH}{\customcal{H}}
\newcommand{\cL}{\customcal{L}}
\newcommand{\cP}{\customcal{P}}
\newcommand{\cX}{\customcal{X}}
\DeclareMathAlphabet{\mathdutchcal}{U}{dutchcal}{m}{n}
\SetMathAlphabet{\mathdutchcal}{bold}{U}{dutchcal}{b}{n}
\DeclareMathAlphabet{\mathdutchbcal}{U}{dutchcal}{b}{n}
\DeclareMathAlphabet\urwscr{U}{urwchancal}{b}{n}%
\DeclareMathAlphabet\rsfscr{U}{rsfso}{m}{n}
\DeclareMathAlphabet\euscr{U}{eus}{m}{n}
\DeclareMathAlphabet\stixcal{LS2}{stixcal}{m} {n}
\renewcommand{\paragraph}[1]{\medskip \noindent\textbf{#1}~}
\newcommand{\eg}{\emph{e.g.}}
\newcommand{\ie}{\emph{i.e.}}
\newcommand{\wrt}{w.r.t.}
\newcommand{\eat}[1]{}
\newcommand{\negLL}{\ensuremath{\mathscr{L}}}
\newcommand{\hypo}[1]{\mathdutchcal{#1}}
\newcommand{\hyH}{\hypo{H}}
\renewcommand{\cL}{\negLL}
\newcolumntype{L}[1]{>{\raggedright\let\newline\\\arraybackslash\hspace{0pt}}m{#1}}
\newcolumntype{C}[1]{>{\centering\let\newline\\\arraybackslash\hspace{0pt}}m{#1}}
\newcolumntype{R}[1]{>{\raggedleft\let\newline\\\arraybackslash\hspace{0pt}}m{#1}}
\newcommand{\generator}{\mathds{G}}
\renewcommand{\cH}{\hyH}
\setlist[enumerate]{itemsep=0pt}
\setlist[itemize]{itemsep=0pt,label=$\triangleright$}
\DeclarePairedDelimiter{\set}{\{}{\}}
\DeclarePairedDelimiter{\card}{|}{|}
\newcommand{\sset}{\subseteq}
\newcommand{\Cl}{\mathrm{Cl}}
\title{Language Generation with Infinite Contamination}
\author{
    \begin{tabular}{cccc}	
        \begin{tabular}{c}
            Anay Mehrotra\\
            Yale University
        \end{tabular}
        & 	
        \begin{tabular}{c}
            Grigoris Velegkas\\
            Google Research
        \end{tabular}
        &
        \begin{tabular}{c}
            Xifan Yu\\
            Yale University
        \end{tabular}
         & 
         \begin{tabular}{c}
            Felix Zhou\\
            Yale University
        \end{tabular}
    \end{tabular}
}
\date{}
\begin{document}

\pagenumbering{gobble}
\maketitle

\begin{abstract}
    The remarkable success of large language models has led to a {growing body of} theoretical research aimed at understanding them.
    Here, a recent line of work studies language generation in the limit, a formal model of language learning where an algorithm observes an adversarially generated enumeration of strings from an unknown target language $K$ and must eventually generate new, unseen strings from $K$. 
    \cite[NeurIPS]{kleinberg2024language} proved that generation is achievable in surprisingly general settings; whenever $K$ belongs to a known countable collection of languages (even the collection of all Turing-enumerable languages). 
    The generator of \cite{kleinberg2024language} suffers from ``mode collapse:'' it generates from an increasingly {ever-}smaller subset of the target. 
    To address this, the recent work of \cite[FOCS]{kleinberg2025density} formalized stronger notions of generation that require the generator's output to be ``dense'' in the target language; informally, requiring it to asymptotically cover a positive fraction of the target language. 
    They showed that generation with density, surprisingly, remains achievable for all countable collections.
    
    However, both of these works rely on the crucial assumption of \textit{perfect} data: in their model, the adversary can neither insert strings from outside the target language (\ie{}, noise) nor omit strings from it (\ie{}, omissions).
    In practice, training data for language models is notoriously noisy, raising the fundamental question: 
    \begin{center}
        \emph{How much contamination (in the form of omissions or insertions) can language generation tolerate?}
    \end{center}
    Recent works made partial progress on this question by studying (non-dense) generation with either finite amounts of noise (but no omissions) \cite[\text{ICML}]{raman2025noisy} or omissions (but no noise) \cite[SODA]{bai2025noise}.
    In this work, we characterize the contamination tolerance of both types of generation by proving the following results:
    \begin{itemize}
        \item \textbf{Generation under Contamination:}~~ 
        Language generation in the limit is achievable for all countable collections if and only if the fraction of contaminated examples converges to zero. \mbox{When this condition fails, we characterize the collections which remain generable.} 
        \item \textbf{Dense Generation under Contamination:}~~ We show that dense generation is strictly less robust to contamination than standard generation, requiring stronger conditions on the contamination rate for the density notions introduced in \cite{kleinberg2025density}.
    \end{itemize}
    As a byproduct, we resolve an open question of \cite{raman2025noisy} by showing that generation is possible with only membership oracle access to languages with finitely many contaminated examples.
    
Finally, we provide hope for practical noise tolerance while maintaining density via a beyond-worst-case analysis:
We introduce a model where the adversary's enumeration must be ``close'' to a canonical ordering of the language, capturing settings where simpler examples appear before complex ones. 
Here, we prove that dense generation is achievable even with infinite contamination provided the fraction of contaminated examples converges to zero.
This result suggests that curriculum learning (the practice of presenting easier examples early in training that is widely used in pretraining) may be crucial for enabling models to learn despite the high contamination rates in datasets scraped from the web.
\end{abstract}

\thispagestyle{empty}
\clearpage
\thispagestyle{empty}
\tableofcontents
\thispagestyle{empty}
\clearpage 
\pagenumbering{arabic}

\section{Introduction}
    \label{sec:intro}
    Large language models (LLMs) have transformed text generation and are already seeing wide-ranging applications from information retrieval to theorem-proving~(\eg{}, \citep{nagda2025reinforced}). This progress has sparked a growing body of theoretical work seeking to understand the foundations of language generation. These works range from fine-grained analyses of specific architectures~\citep{wei2022representational,telgarsky2023representational,peng2024on,chen2024multilayer,alman2023fast} to abstract investigations asking: when is coherent generation possible at all? 
    Both views have led to useful insights.
    The fine-grained view has led to, \eg{}, speculative decoding~\citep{leviathan2023speculative,chen2023accelerating} and faster algorithms for implementing attention~\citep{han2024hyperattention,wang2020linformer,zaheer2020bigbird,alman2023fast,alman2024complexity}. 
    {On the other hand,} the abstract view has uncovered fundamental trade-offs between (lack of) hallucination and other desirable properties~\citep{kalai2024calibrated,kalavasis2024characterizations,charikar2024facets,kleinberg2025density,kalai2025languagemodelshallucinate,hanneke2018actively} and provided principled approaches for aggregating multiple model outputs (\eg{}, \citep{huang2025is}). 
    At the same time, bringing these perspectives closer to practical applications is a natural goal{:} {theoretical models that incorporate more realistic aspects of language generation can yield sharper insights}.
    Our work resides in the abstract viewpoint on language generation and contributes to bringing it closer to the fine-grained view.
    
    The study of language generation in computer science is not new; it predates LLMs.
    Indeed, Turing's 
    imitation game \citep{turing1950computing} proposed language as a probe of cognition; 
    \citet{shannon1951prediction} analyzed the compressibility of English via its entropy; 
    \citet{chomsky1956three} introduced a hierarchy that has played a central role in understanding the computational power of different automata models \citep{sipser2012introduction}; and 
    \citet{gold1967language} initiated one of the first works on studying when learning from examples is possible, which is arguably the essence of much of today's learning theory.

    Most relevant for us is Gold's model of language identification in the limit, which formulates learning as an online two-player game between an adversary and a learner.
    {In this game, the adversary selects a target language $K=L_z$ from a collection $\cL=\sinbrace{L_1,L_2,\dots}$, and starts enumerating $K$ one element at a time.
    After seeing each element, the learner's goal is to guess the index $z$ of the $K$; and the learner is said to succeed if it stabilizes to the right guess $z$ after some finite time (no matter what elements the adversary shows).}
    A line of work, culminating in the works of \citet{angluin1979finding,angluin1980inductive}, completely characterized the collections which are identifiable in the limit: this characterization showed that identification is largely intractable; even simple collections as regular languages are not identifiable in the limit.

    This raises a puzzle: on the one hand, LLMs acquire non-trivial generation ability from just observing examples, on the other hand, the above model of language learning rules out learnability even for simple collections like regular languages. \citet{kleinberg2024language} offered an elegant resolution to this puzzle by observing that the requirement of generation is weaker than the requirement in Gold's learning model: generation only asks to produce new strings from the language, not to identify the entire language.
    To formalize this, they introduce a model of language \emph{generation} in the limit: 
    \begin{infdefinition}[Language Generation in the Limit; see \cref{sec:preliminaries}]     
        \label{def:languageGeneration:informal}
        The game is specified by a countable collection of languages $\cL=\inbrace{L_1,L_2,\dots}$ (such as all regular languages or all context-free grammars).
        First, the adversary fixes a target language $K \in \cL$ and an enumeration of $K$.\footnote{Formally, an enumeration of $K$ is an infinite sequence $x_1,x_2,\ldots$ (possibly with duplicates) such that each $x_i\in K$ and every $x\in K$ appears at some position.} 
        Then, at iteration $n\geq 1$, the adversary reveals $x_n$; given the history $S_n=\{x_1,\ldots,x_n\}$, the generator outputs a new string $w_n\notin S_n$, which is its guess for an unseen element of $K$. 
        
        A generator $\generator$ is said to succeed in {generating from $\cL$ in the limit} if for all $K\in\cL$ and all enumerations of $K$, there exists a time $n^\star$ such that for every $n\geq n^\star$, we have $w_n\in K\setminus S_n$. 
        $\cL$ is said to be generable in the limit if there is a generator $\generator$ that succeeds for each $K\in \cL$.
    \end{infdefinition}
    In sharp contrast to identification, \cite{kleinberg2024language} showed this goal is remarkably feasible: there is a generator $\generator$ that succeeds in generating from any countable collection $\cL$ (even the collection of all Turing-enumerable languages).
    This surprising positive result sparked a flourishing line of research exploring various aspects of language generation~\citep{kalavasis2025limits,peale2024,li2024generation,kalavasis2024characterizations,hanneke2025union,kleinberg2025density,charikar2024facets,raman2025noisy,bai2025noise,charikar2025pareto}; see \cref{sec:relatedWork} for a detailed discussion.

    \paragraph{Language Generation with Density.}
        A downside of \cite{kleinberg2024language}'s model is that it only requires the generator to generate new elements of the target language $K$ without obtaining a meaningful {coverage} of it.
        For instance, their model {accepts a generator that outputs} ``I generated 1,'' ``I generated 2,'' and so on {as a successful generator for English}. 
        While this generator indeed demonstrates an interesting ability, it can count, it hardly captures the richness of English. 
        It turns out that the generator of \citet{kleinberg2024language} exhibits a similar behavior as it operates by  generating from progressively ``thinner'' subsets of $K$, effectively suffering from mode collapse. 
        To address this limitation, \citet{kleinberg2025density} introduced \textit{language generation with density,} requiring the generator to asymptotically cover a positive ``fraction'' of the target {language}.
        They proposed several notions of density {that,} at a high level, {are} grouped into two categories {according to their view of a generator}:
        \begin{itemize}[leftmargin=20pt]
            \item \textbf{Element-based density} focuses  on the output sequence generated during the infinite game, without taking into account the ``internal'' representation of the generator, \eg{}, the {weights} of an LLM. 
            This resembles the point of view of a user that is accessing the model through an API; the internal representation of the model might be evolving over time, but the user only sees its sequence of outputs.
            While we have not defined element-based density formally, we can still gain some intuition.
            Suppose that over its course of interaction with the adversary, $\generator$ outputs the sequence $4\N=\inbrace{4,8,12,\dots}$ and $K=\N$, then $\generator$ achieves element-based density $\nfrac{1}{4}$.
            If on the other hand, $K$ is the set of even numbers, then this density is $\nfrac{1}{2}.$
            \item \textbf{Set-based density} measures the density of all elements \textit{producible} by the generator $\generator$ at any given time. 
            For this to make sense, $\generator$ must be equipped with an option where after each round, we can pause and ask it to generate more and more elements.
            This is, of course, natural for real-world language models, which once trained are used to generate text as often as necessary.
            Compared to element-based density, set-based density tries to capture expressivity of one model, rather than of single elements generated by an infinite sequence of models.
        \end{itemize} 
        Both densities rely on limiting behavior.
        Element-based density considers an infinite sequence of outputs produced by $\generator$.
        Set-based density, considers the limiting behavior of $d_t$ as $t\to\infty$ where $d_t$ is the density of the generator at the $t$-th step.\footnote{Set-based density involves two limits: first, a limit over the sequence of elements producible by $\generator$ in around $t$; which is used to define $d_t$. Then the limit of $d_t$ as $t\to \infty$. It turns out that changing the former limit from $\liminf$ to $\limsup$ does not qualitatively change the notion of density; hence, \citep{kleinberg2025density} fixed the first limit as $\liminf$.}
        Depending on whether one considers $\liminf$ or $\limsup$ as the notion of a limit, we get four notions of density: lower/upper element-based density and lower/upper set-based density. 
        Since $\liminf d_t\leq \limsup d_t$, lower density is harder to achieve than upper density. 
        
        Dense language generation (for any of the aforementioned notions of density) is a significantly stronger notion than generation in the limit because density prohibits the generator from retreating to generating from ever-shrinking corners of $K.$ 
        Yet, remarkably, \cite{kleinberg2025density} showed that many notions of dense generation are achievable for all countable collections:
        \begin{itemize}[leftmargin=20pt]
            \item \textbf{(Element-based Density):} There is a generator that, for any countable collection, generates in the limit and has element-based lower density  $\geq \nicefrac{1}{8}$.
            For the easier notion of element-based upper density, there is a generator that achieves density $\geq \nicefrac{1}{2}$ for all countable collections.
            \item \textbf{(Set-based Density):}
                Set-based lower density is not achievable for all countable collections.  
                The weaker notion of set-based upper density, however, is: there is a generator achieving a set-based upper density of 1 for all countable collections.
        \end{itemize}
     \noindent In this work, we continue this study of dense language generation, bringing it closer to practice.

     \paragraph{Language Generation {under} Contamination.}
     A critical assumption in \cite{kleinberg2024language} and most subsequent works is that every element $x$ presented to the generator is valid (\ie{}, $x\in K$), and, conversely, every (fixed) element $x\in K$ is eventually presented to the generator. 
     This stands in stark contrast to the reality of LLM training. 
     Pre-training data is notoriously noisy, comprising an amalgam of text from sources of varying quality. Despite extensive resources devoted to data cleaning~\citep{dodge2021documenting, raffel2020exploring} and the emergence of entire industries focused on curating high-quality training data, the web-scale corpora used for pre-training LLMs remain riddled with errors, repetitions, low-quality content, and syntactically invalid code~\citep{lee2022deduplicating, kreutzer2022quality}. 
     Moreover, while the amount of pre-training data keeps increasing, it is unrealistic to hope that it can, eventually, contain every factually correct statement.
     Yet modern LLMs consistently generate coherent text despite training on such imperfect data, raising the following question which is the starting point of our work: \mbox{\textit{Is language generation feasible from training data with large amounts of contamination?}}

     Two works take an initial step for capturing contamination \citep{raman2025noisy,bai2025noise}.
     \citet{raman2025noisy} allow finitely many invalid examples (but no omissions) and show generation remains possible from all countable collections. 
     The finiteness of noisy examples is crucial for their analysis; it is not hard to show that if the noisy examples are unbounded their approach fails.
     The assumption of finite noise is particularly strong in this model where the amount of training data is unlimited.
     \citet{bai2025noise} allow omission of arbitrarily many elements from $K$ (without adding any invalid ones) and, again, show generation is possible for all countable collections provided infinitely many survive.
     This restriction of no invalid elements, however, is inherent to their approach and their generator fails even with \textit{two} invalid examples (\cref{rem:singleInvalidExampleFailure}).\footnote{Apart from these results, both \cite{raman2025noisy,bai2025noise} also have additional results which are less related to our focus here, and we survey them in more detail in \cref{sec:relatedWork}.}
     
     In practice, both invalid elements and omissions persist simultaneously, and in infinite quantity. 
     Understanding contamination-tolerance with both types of contamination in potentially unbounded quantity, is an important conceptual extension that both brings \citep{kleinberg2024language}'s model closer to reality and motivates \mbox{new techniques to handle resulting challenges. 
     This raises our first question:}
     \begin{mdframed}
         \textbf{Question 1.}~~ How much \emph{contamination} (both in the form of invalid examples and omissions)\\ \phantom{.}~~~\hspace{0.5mm}\quad\qquad\qquad can generation tolerate while still being possible for all countable collections? 
     \end{mdframed}
     \noindent Moreover, as shown in \cite{kleinberg2025density}, achieving density requires more sophisticated learning mechanisms, potentially making it more vulnerable to contamination. This raises our second question: 
        \begin{mdframed}
        \textbf{Question 2.}~~
            Is dense language generation as contamination-tolerant as generation?
    \end{mdframed}
    Our first set of results is a thorough answer to both of the above questions; we present informal statements of these results in the next section, but first present a quick summary below.
 
    \paragraph{Summary of Results for Questions 1 and 2.}
    We consider different regimes of the amount of contamination in the adversarial stream, and completely characterize when language generation in the limit is achievable {under} contamination across these regimes.
    Language generation shows surprising tolerance: under natural limits on the frequencies of contamination, generation remains possible for all countable collections even with infinite contamination of both types.
    In comparison, dense generation is much less tolerant: it becomes impossible for all countable collections with any infinite contamination, no matter how sparse.

    This paints a bleak picture for contamination tolerance of dense generation, which is the more interesting type of generation as it ensures $\generator$ actually learns a meaningfully rich subset of $K.$
    In practice, of course, data has much more regularity, which can be leveraged by language models for generation.
    This was already noted by \citep{kleinberg2024language}.
    Given the above results, a natural question is whether additional assumptions on the data-generating process can enable generation in the presence of infinite contamination. 
    Or phrased in another way:
    \begin{mdframed}
        \textbf{Question 3.}~~Is there a natural beyond-worst-case model where dense language generation is\\
        \phantom{.}\qquad\qquad\quad\hspace{0.5mm} achievable under infinite contamination?
    \end{mdframed}
    A natural idea is to consider a stochastic model, where the samples are \iid{} from some distribution. 
    This was studied by \citep{kalavasis2025limits}, who showed that stochasticity does not make achieving breadth (a stronger notion of density) any easier.
    Here, we introduce a new beyond-worst case model that limits the adversary's freedom in choosing the enumeration of $K$.
    Roughly speaking, there is a canonical enumeration of $K$ which can be thought of as having easier examples before harder examples, and the \textit{bounded} adversary is not allowed to present hard examples ``too early.''
    This is inspired by curriculum learning, which is a machine learning strategy that orders training data from easy to difficult, and is widely used to train LLMs \citep{bengio2009curriculum}.
    
    \paragraph{Summary of Results Beyond Worst-Case.}
        Briefly, we show that, with bounded adversaries, dense language generation (for all above notions of density) is achievable for all countable collections with (sufficiently sparse) infinite contamination; requiring the same amount of sparsity required as generating without density.
        We believe this model is of independent interest in the rapidly growing area of language generation. 

    \paragraph{Roadmap.}
        In \cref{sec:intro:results} we present the informal statements of the results described above.
        Then, in \cref{sec:intro:technicalOverview}, we provide an overview of the proofs of the results. 
        Finally, in \cref{sec:intro:discussion} we further discuss our results and present open problems.

\vspace{-6mm}
\subsection{Informal Results}
    \label{sec:intro:results}
    \vspace{-2mm}
    This section presents informal statements of our results.
    First, we present necessary definitions.
    \vspace{-3mm}
     
    \subsubsection{Definitions}
    \vspace{-1mm}
        \label{sec:intro:results:setup}
        We use $\generator=(\generator_n)_{n\in \N}$ to denote generators.
        For each $n$, $S_n=\inbrace{x_1,x_2,\dots,x_n}$ denotes the set of elements revealed by the adversary in the first $n$ iterations and $\generator_n(x_1,\dots,x_n)$ denotes the corresponding output of $\generator$.
        We consider two types of generators: element-based ones which output an element $\generator_n(x_1,\dots,x_n)=w_n$, and set-based ones that output a set $\generator_n(x_1,\dots,x_n)=G_n$.\footnote{This notion of set-based generation can be thought of as an ``improper'' version of index-based generation in \citep{kleinberg2025density} who restricted the outputs $G_n\in \cL$; this improper version was also studied by \citep{kalavasis2025limits}.}
        
        We already informally defined language generation in the limit for element-based generators in \cref{def:languageGeneration:informal}. 
        The set-based definition is very similar:
        A set-based generator $\generator$ is said to generate from $K$ in the limit if there is a finite $n^\star$ such that for all $n>n^\star$, $G_n\subseteq K\setminus S_n$ (\cref{def:generation-in-the-limit}).

        Next, we define different notions of density from \citep{kleinberg2025density}.
        For this, we need to fix a canonical ordering or enumeration of elements in each language $L$. 
        We do so by fixing a canonical enumeration of the universe (of all possible strings) and let{ting} $L_n$ denote the first $n$ elements in the induced canonical enumeration of $L$.
        The two element-based notions of density are as follows:
        \vspace{-1.5mm}
        \begin{infdefinition}[Element-based density; \cref{def:element-based-density}]
            Let $W=\inbrace{w_1,w_2,\dots}$ be the sequence of outputs of an element-based generator $\generator$ with target $K$.
            The element-based lower density and element-based upper density of \mbox{$\generator$ are $\mu_{\rm low}(W; K) \coloneqq \liminf_n \frac{\abs{W\cap K_n}}{n}$ and $\mu_{\rm up}(W; K) \coloneqq \limsup_n \frac{\abs{W\cap K_n}}{n}$ respectively.}
        \end{infdefinition}
        \vspace{-6mm}
        
        \noindent See \cref{app:preliminaries} for a refresher on $\liminf$ and $\limsup$.
        Since $\liminf\leq \limsup$ for any sequence, the element-based lower density is harder to satisfy than element-based upper density.
        {The two can be very different, \eg{}, if $W=\cup_{\ell\in \N} [(2\ell)!, (2\ell+1)!]$, then $\mu_{\rm low}(W,\N)=0$ and $\mu_{\rm up}(W,\N)=1$.}
        Next we introduce set-based notions of density.
        \vspace{-1.5mm}
        \begin{infdefinition}[Set-based density; \cref{def:set-based-density}]
            Let $G=\inbrace{G_1,G_2,\dots}$ be the sequence of sets output by a set-based generator $\generator$ with target $K$.
            The set-based lower density and set-based upper density of $\generator$ are $\liminf_{n\to \infty}\mu_{\rm low}(G_n; K)$ and $\limsup_{n\to \infty}\mu_{\rm low}(G_n; K),$ respectively.
        \end{infdefinition}
        \vspace{-6mm}
        
        \noindent As before, upper density is a weaker requirement than lower density.
        To gain some intuition, suppose $G_n = K$ if $n$ is even and, otherwise, $G_n=\inbrace{w_i}$ where $w_i$ is some unseen string in $K$.
        Then $\generator$ has set-based upper density of 1 and a set-based lower density of 0 (as $\mu_{\rm low}(\inbrace{w_i},K) = 0$). 

        The final ingredient we need to state our results is the levels of contamination considered.
        \vspace{-1.5mm}
        \begin{infdefinition}[Contamination Regimes; \cref{def:enumeration-omission,def:noisy-enumeration}]
            Fix a language $K$ and an enumeration $E=(x_1,x_2,\dots)$ not necessarily of $K$. 
            We focus on the following four cases:
            \vspace{-2mm}
            \begin{enumerate}[itemsep=-1pt]
                \item \textbf{(Finite Contamination)}~ $E$ is an enumeration of $K$ with finite contamination if $\abs{E\triangle K}<\infty$.
                \item \textbf{($o(1)$-Noise)}~ $E$ is an enumeration of $K$ with $o(1)$-Noise if $\nfrac{\abs{\inbrace{x_1,x_2,\dots,x_n}\setminus K}}{n} = o(1)$.
                \item \textbf{(Constant Noise)}~ $E$ is an enumeration of $K$ with $c$-noise if $ \nfrac{\abs{\inbrace{x_1,x_2,\dots,x_n}\setminus K}}{n}\leq c$, for large $n.$
                \item \textbf{(Arbitrary Omission)}~ $E$ is an enumeration of $K$ with arbitrary omissions if $\abs{K\setminus E}=\infty$.
            \end{enumerate}
        \end{infdefinition}
        \vspace{-6mm}
        
        \noindent Finite contamination allows both invalid examples and omissions, already generalizing the case of just finite noise in \cite{raman2025noisy}.
        \enlargethispage{1\baselineskip}
        The next two regimes quantify the fraction of noisy examples among the first $n$: in $o(1)$-noise this fraction tends to 0, and in $c$-noise, it becomes $\leq c$ for large enough $n$.
        With omissions, a notion of ``the fraction of omissions'' is more nuanced because we never know when an omission happened (a missing element could always appear later in the enumeration).
        {With omissions, quantifying ``the fraction of omissions'' is more nuanced since missing elements could always appear later.
        Hence, we primarily focus on two regimes: finite omissions (a special case of finite contamination) and arbitrary omission (studied by \citep{bai2025noise} without noise).}

    \subsubsection{Our Results}
        Our main results are comprehensive characterizations of when language generation with and without density is achievable under contamination.
        Interestingly, all our algorithmic results follow from two algorithmic templates (\cref{sec:templates}), explained in the technical overview (\cref{sec:intro:technicalOverview}).
        Our first result studies generation under contamination; its formal version is in \cref{sec:generation}.
        \begin{inftheorem}[Generation {under} Contamination]  
            \label{infthm:gen}
            The following results hold:
            \begin{enumerate}[itemsep=0pt,leftmargin=18pt]
                \item Under $o(1)$-noise and arbitrary omissions, all  countable  collections are generable in the limit.
                \item Under $c$-noise (for any fixed $c\in (0,1)$) and arbitrary omissions, a  countable  collection $\cL$ is generable in the limit if and only if $\cL$ satisfies the condition in \cref{thm:constant-noise-characterization} with parameter $c$.
                Moreover, for each $c\in (0,1)$, there is a {\underline{finite}} collection $\cL$ that violates this condition. 
            \end{enumerate}
        \end{inftheorem}
        The first result shows that generation is quite tolerant to contamination: 
        even with infinite contamination, generation remains possible for all countable collections when the noise-fraction tends to 0.
        This significantly generalizes both \citep{raman2025noisy,bai2025noise}.
        It might seem surprising that we do not require any constraint on omissions; this stems from generation's asymmetric objective: it requires generating new unseen elements, thus penalizing false positives much more than false negatives.

        The second result explores the harder regime where noise fraction does not vanish, revealing the limits of generation's contamination tolerance.
        While we omit the technical details of the condition, we note that all finite collections of size at most \mbox{$\nfrac{1}{c}$ satisfy the condition with parameter $c$.}

        The algorithms for both regimes utilize a novel idea: they carefully re-order languages based on the adversary's actions (\cref{sec:templates}), contrasting \mbox{with existing algorithms that use static orderings.}

        Our next results study dense generation under contamination, starting with set-based density:

        \begin{infdefinition}[Set-based Dense Generation {under} Contamination]
            \mbox{The following results hold:}
            \begin{enumerate}[itemsep=0pt,leftmargin=18pt]
                \item Under finite contamination, all  countable  collections are generable  with set-based upper density 1.
                
                \item Under finite contamination, a  countable  collection $\cL$ is generable in the limit with $\rho$ set-based lower density if and only if $\cL$ satisfies the condition in \cref{thm:set-based-lower-density-finite-noise} with parameter $\rho$. 
                Moreover, for each $\rho\in (0,1]$, there is a  countable  collection that violates this condition.
                \item Under $o(1)$-noise rate and arbitrary omissions, a  countable  collection $\cL$ is generable in the limit with (lower or upper) set-density $\rho$ if  and only if $\cL$ satisfies the condition in \cref{thm:vanishing-noise-set-density-characterization} with parameter $\rho$. 
                Moreover, for each $\rho\in (0,1]$, there is a  countable  collection that violates these conditions.
                \item Under $c$-noise rate (for $c\in (0,1]$) and arbitrary omissions, a  countable  collection $\cL$ is generable in the limit with (lower or upper) set-density $\rho$ if  and only if $\cL$ satisfies the condition in \cref{thm:constant-noise-set-density-characterization} with parameters $(c,\rho)$. 
                Moreover, for each $c,\rho\in (0,1]$, there is a  countable  collection that violates these conditions.
            \end{enumerate}
        \end{infdefinition}
        {Formal statements of these results appear in \cref{sec:dense-generation:set-based}.}
        The key takeaway is that dense generation is much less contamination-tolerant than non-dense generation.
        This contrasts sharply with \citep{kleinberg2025density}'s positive results, which suggested density was achievable whenever generation was.
        Interestingly, while set-based lower and upper densities have different characterizations without contamination, they collapse to one under contamination (parts 3-4).
        Finally, set-based lower density becomes particularly challenging, even under finite contamination: it is unachievable whenever $\cL$ has two languages, such as, $L=\N$ and $P=\inbrace{n \text{~is prime}}_{n\in \N},$ that  satisfy $P\subseteq L$ and $\mu_{\rm low}(P,L)=0$ (part 2).
        This is a much more stringent requirement the requirement without contamination \citep{kleinberg2025density}, who showed that impossibility of set-based generation with lower density requires the existence of a pathological sub-collection of $\cL$ with infinite cardinality.
        \begin{remark}
            The result for set-based upper density is stronger: we are able to achieve it under finite noise and an \textit{infinite} amount of omissions provided the omissions are not ``too dense;'' we formalize this in \cref{sec:preliminaries} as $c$-omissions.
            The formal results appear as \cref{thm:finite-noise-upper-density,thm:set-based-upper:lower-bound}.
        \end{remark}
        All algorithms above rely on our two algorithmic templates: finite contamination uses one template, while the remaining cases use the other.
        We believe these templates can be of independent interest in the study of language generation
        As an illustration, we use them to answer an open question from \citep{raman2025noisy}: %
        \begin{corollary}[Membership-Query--Based Algorithm with Finite Contamination; \cref{cor:generation-membership-oracle-finite-omission}]
            There is a \underline{computable} $\generator$ that, for any  countable  collection $\cL=\sinbrace{L_1,L_2,\dots}$, given access to an oracle which given $w$ and $i$ answers ``Is $w\in L_i?$,'' generates $\cL$ in the limit under finite contamination.
        \end{corollary}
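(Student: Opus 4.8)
The plan is to build everything on one simple observation: the membership oracle that answers ``is $w\in L_i$?'' lets us implement, for each index $i$, a \emph{pseudo-enumerator} of $L_i$ --- scan the fixed canonical enumeration $u_1,u_2,\dots$ of the universe and emit each $u_k$ for which the oracle reports $u_k\in L_i$. This is a semi-decision procedure of exactly the same strength that the enumerator-based implementations (\citep{kleinberg2024language,raman2025noisy} and the finite-contamination template of \cref{sec:templates}) actually use. So it suffices to re-examine the finite-contamination template and check that its per-round computation never needs anything stronger than (a) finitary predicates on the pair $(S_n,L_i)$ and (b) searches of a language for an unseen element. Concretely, after seeing $S_n$ the template (i) computes, for each $i\le n$, a robust-consistency verdict for $L_i$ that depends only on which of the finitely many elements of $S_n$ lie outside $L_i$, and then re-orders the surviving indices by a rule using only these verdicts and comparisons of indices; and (ii) outputs an unseen element of a prescribed candidate $L_{i_n}$. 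Step (i) needs only finitely many oracle queries ``$w\in L_i$?'' with $w\in S_n$ plus elementary arithmetic, hence is computable. The only genuinely infinitary ingredient is step (ii), which we service by running the pseudo-enumerator of $L_{i_n}$ and skipping elements of $S_n$; this halts exactly when $L_{i_n}\not\subseteq S_n$.

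Next I would make step (ii) always halt, via the same device that already makes the template a well-defined generator: in round $n$, dovetail the pseudo-enumerator searches over all surviving candidate indices $j$ (in the template's priority order), looking for an element of $L_j\setminus S_n$; output the witness belonging to the highest-priority $j$ that has reported, running the dovetailing far enough past the first report to make this choice unambiguous, and if no surviving language ever reports (which can happen only finitely often) output the least $u_k\notin S_n$ as a ``failure'' move. Each round now terminates, since as soon as $n$ exceeds the index of any fixed infinite surviving language that language's search succeeds. To see the resulting computable generator $\generator$ still generates $\cL$ in the limit under finite contamination, I would invoke the template's analysis: under finite contamination there is a finite time after which the prescribed candidate $L_{i_n}$ is always infinite and satisfies $L_{i_n}\subseteq K$ up to finitely many elements (the finitely many inserted and omitted strings being absorbed by the robust-consistency verdicts), so for all large $n$ the set $L_{i_n}\setminus S_n$ is nonempty and contained in $K$; hence its search halts and, after enlarging the threshold so that this search is among the first to report, $\generator$ reproduces precisely the element the template prescribes, which lies in $K\setminus S_n$.

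It then only remains to note that $\generator$ deviates from the template on at most finitely many rounds, that ``generation in the limit'' is insensitive to any finite prefix of rounds, and that the whole procedure is a fixed algorithm consulting only the membership oracle --- so $\generator$ is computable, which is exactly what the corollary asserts and what resolves the open question of \citep{raman2025noisy}. (As usual one assumes the target language is infinite, since generation in the limit is impossible even with perfect data otherwise; this is also what guarantees the relevant searches halt.) The step I expect to be the real obstacle is reconciling the two requirements on the dovetailed search --- ``every round halts'' and ``eventually each round returns the template's prescribed element rather than a premature, lower-priority witness'' --- and simultaneously confirming that the template itself never invokes a sub-routine stronger than a finitary predicate or a language-search (e.g.\ it must not need to \emph{decide} emptiness or finiteness of some $L_i$, nor take a minimum over an infinite index set without a built-in halting guarantee). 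Establishing that the ``bad'' rounds form a finite set, and that the template's re-ordering and robust-consistency rules are genuinely finitary, is where the template's quantitative guarantees from \cref{sec:templates} must be used.
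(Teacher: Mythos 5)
There is a genuine gap, and it sits exactly where you predicted it would. The paper's proof is a two-line reduction: given membership access to $\cL$, one can answer membership queries for the expanded collection $\Tilde{\cL}=\{L\cup A\setminus B\}$ (query $L_j$, then check the explicitly stored finite sets $A,B$), run the already-known membership-oracle generator of \citep{kleinberg2024language} (\cref{lem:km-membership-algo}) on $\Tilde{\cL}$ as a black box, and invoke \cref{lem:expansion-subroutine} to absorb the finite contamination. You instead bypass the expansion and try to re-implement a contamination-tolerant generator directly from pseudo-enumerators, claiming that ``the template'' only needs (a) finitary predicates on $(S_n,L_i)$ and (b) halting searches for unseen elements. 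But you never exhibit which template this is, and for the templates the paper actually uses to handle contamination this claim is false: the priority-based intersection algorithms (\cref{alg:intersection-meta,alg:intersection}) require the stopping rule $J_n$, i.e.\ deciding whether $\card{\cap_{j\le \bar j} L_{i_n(j)}}=\infty$, and the noiseless \citep{kleinberg2024language} algorithm in its basic form relies on subset comparisons $L_i\subseteq L_j$; neither infinitude of an intersection nor a subset relation is decidable from finitely many membership queries, and no amount of dovetailing over element-searches computes them at a fixed round. Your dovetailing device does solve the easy half (producing an unseen element of an infinite candidate language), but the hard half --- computing the candidate/ordering itself with only membership queries --- is precisely the content of \cref{lem:km-membership-algo}, which you gesture at (``the same strength that the enumerator-based implementations actually use'') but neither cite as a black box nor prove. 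Your closing paragraph concedes this is ``the real obstacle,'' so as written the proposal assumes the crux rather than establishing it.

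A second, smaller issue: even granting a membership-implementable noiseless generator, your direct argument must handle the fact that under finite contamination the stream is an exact enumeration of some $K'=K\cup A\setminus B$ that need not belong to $\cL$ at all, so ``the template's analysis'' for streams drawn from a language \emph{in} the collection does not literally apply; the paper's expansion to $\Tilde{\cL}$ exists exactly to restore that hypothesis. The fix is simply to adopt the paper's route: note that membership in $\Tilde{\cL}$ reduces to membership in $\cL$ plus lookups in the explicit finite sets $A,B$, run the membership-oracle algorithm of \cref{lem:km-membership-algo} on $\Tilde{\cL}$, and conclude via \cref{lem:expansion-subroutine}; your pseudo-enumerator and dovetailing machinery is then unnecessary.
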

        In contrast, \citep{raman2025noisy} required additional oracles to achieve generation under finite noise.

        Next, we study element-based density under contamination.
        Surprisingly, unlike set-based density, element-based density does not uniformly become harder in each contamination regime.
        \begin{inftheorem}[Element-based Dense Generation {under} Contamination]
            The following hold:
            \begin{enumerate}[leftmargin=14pt,itemsep=0pt]
                \item Under finite contamination, every countable collection $\cL$ is generable in the limit with element-based upper density $\rho=\nicefrac{1}{2}.$ %
                \item Under finite contamination, every countable collection $\cL$ is generable in the limit with element-based lower density $\rho =\nicefrac{1}{8}.$ %
                \item Under $o(1)$-noise, \mbox{there is a collection that isn't generable in the limit with ${>}0$ element-based upper density.}
            \end{enumerate} 
        \end{inftheorem}
        In fact, our formal versions of parts 1 and 2 are stronger: We show that if there exists an algorithm that achieves element-based lower (respectively upper) density $\rho$ for all countable collections in the contamination-less case, there exists an algorithm that achieves element-based lower (respectively upper) density $\rho$ under finite contamination (\cref{thm:element-based-density-finite-noise-omissions}).
        Hence, remarkably, element-based density with finite contamination is exactly as hard as with no contamination, contrasting with set-based lower density which became significantly harder.
        (Indeed, the upper and lower densities of $\rho=\nfrac{1}{2}$ and $\rho=\nfrac{1}{8}$ match those obtained by \citep{kleinberg2025density} with no contamination.)
        The formal statements of these results appear in \cref{sec:dense-generation:element-based}.

        \paragraph{Language Generation Beyond the Worst-Case (\cref{sec:beyond}).}
            Next, we introduce a beyond-worst-case model which restricts the order in which the adversary can present the elements $K$.
            \begin{infdefinition}[Bounded Adversary; \cref{def:bounded-displacement-enumeration}]
                Let the first $n$ elements of $K$ be $\inbrace{\kappa_1, \kappa_2,\dots }$
                We say that an adversary is $M$-bounded if it presents an enumeration $x_1,x_2,\dots$ such that, for sufficiently large $n$, if $x_n\in K$ then $x_{n}=\kappa_{i_n}$ for some $i_n\leq Mn$.
            \end{infdefinition}
            Importantly, an $M$-bounded adversary can still select arbitrary target languages $K$ but cannot present its elements in an arbitrary order. 
            Under the interpretation that the canonical enumeration of $K$ places ``easier'' elements before ``harder'' ones, $M$-bounded adversaries cannot, infinitely often, enumerate very hard examples way before easier ones. 
            This is inspired by practical observations of LLM training; \eg{}, it is a crucial folklore {practice} that LLMs are first trained on ``easier'' tasks before harder ones.
            More broadly, this phenomenon is known as ``curriculum learning'' \citep{bengio2009curriculum,hacohen2019power}.

            In this beyond-worst-case model, we show that dense generation is much more tractable.
            \begin{inftheorem}[Dense Generation with Bounded Adversary]
                With an $M$-bounded adversary:
                \begin{enumerate}[itemsep=0pt]
                    \item Under $o(1)$-noise and arbitrary omissions, all  countable  collections are generable in the limit with set-based lower density $\geq \frac{1-\eps}{M}$ (for any fixed $\eps>0$). 
                    Moreover, there is a  finite  collection for which it is impossible to generate in the limit with set-based upper density $>\nfrac{1}{M}$.
                    \item Under $o(1)$-noise and arbitrary omissions, all  countable  collections are generable in the limit with element-based lower density $\frac{1-\eps}{2M}$ (for any fixed $\eps>0$). 
                    Moreover, there is a  finite  collection for which it is impossible to generate in the limit with element-based upper density $>\nfrac{1}{M}$.
                \end{enumerate}
            \end{inftheorem}
            Thus, both set-based and element-based lower densities become achievable for all countable collections with $M$-bounded adversaries.
            This is a significant improvement over arbitrary adversaries where even finite contamination was problematic; and no notion of density was achievable with finite contamination (no matter how sparse).
            We remark that, while dense generation is more tractable, the beyond worst-case model remains non-trivially hard: identification largely remains impossible in the model (\cref{thm:beyond:identification-hard}) and generation without density still fails at constant noise rates as in \cref{infthm:gen} (\cref{thm:beyond:generation-cnoise-hard}).
            The formal statement of this result appears in \cref{sec:beyond}.

            \paragraph{Proper vs. Improper Learning in the Worst-Case.}
            We conclude this section with the following: %
        
            \begin{remark}[Proper vs.\ Improper Learning in the Worst-Case]
                For the vast majority of our set-based results, the learning algorithms are \emph{improper} meaning that the set they output is not part of the collection $\cL.$ 
                While we allow the outputs to be arbitrary, the outputs of our algorithms are much more structured.
                For instance, several of our algorithms output \emph{intersections} of finitely many languages from $\cL.$
                The remaining algorithms output a set $G$ such that $\abs{G\triangle L}<\infty$ for some $L\in \cL$.
                In fact, \cref{ex:index-failure-under-omission} shows that this is necessary: proper learning in the presence of contamination is much more restrictive than improper learning. This is in sharp contrast to the uncontaminated setting, where the results of \citep{kleinberg2024language,kleinberg2025density} show that proper learners are as powerful as improper ones.
            \end{remark} 

\subsection{Technical Overview}
    \label{sec:intro:technicalOverview}
    In this section, we give an overview of our techniques and how they relate to prior works.
    Our main results are a comprehensive characterizations of when different notions of generation can or cannot be achieved under different levels of contamination. 
    We divide this into two parts: upper bounds (or algorithms) and lower bounds.

    \subsubsection*{Upper Bounds for Dense and Non-Dense Generation}
        All of our algorithms across the worst-case model and the bounded adversarial model, with and without density, rely on two algorithmic templates. 
        Here, we overview these templates.
        They are discussed in more detail in \cref{sec:templates}.

    \paragraph{Finite-Expansion Sub-Routine (\cref{sec:templates:expansion}).}
        Our first algorithmic template is quite simple and enables us to achieve dense and non-dense generation with finite contamination.
        The algorithm relies on the following elementary observation: If the enumeration $E$ generated by the adversary has finite contamination, then it must be a 0-contamination enumeration of a language $K'$ satisfying with $\abs{K'\triangle K}<\infty$.
        Now, of course, $K'$ might not be a language in our collection $\cL$, but the natural thing to do is to add $K'$ to $\cL$.
        We will need to add all possible choices since we do not know $K'$ in advance.  Then, we end up with the following collection
        \[
            \Tilde{\cL} 
                \coloneqq 
            \{
                L_{A,B} \coloneqq L \cup A \setminus B : L \in \cL, A \subseteq \Sigma^* \setminus L, B \subseteq L, \abs{A} < \infty, \abs{B} < \infty
            \}\,,
        \]
        where $\Sigma^*$ is the universe of strings.
        Since the number of finite subsets of a countable collection are countable, $\wt{\cL}$ is a countable collection.
        This immediately gives us an algorithm for generation with finite amounts of contamination since we can use \citep{kleinberg2024language}'s algorithm that works for all countable collections with $\Tilde{\cL}$ instead of $\cL$.
        While this idea is obvious in hindsight and analyzing it is also straightforward, it has many useful properties that make it widely applicable:
        \begin{itemize}
            \item As we have seen, it preserves countability (if $\cL$ is countable, so is $\tilde{\cL}$)
            \item Further, membership access to $\cL$ is sufficient to get membership access to $\tilde{\cL}$ (\cref{lem:km-membership-algo})\footnote{A membership oracle to $\cL=\inbrace{L_1,L_2,\dots}$ is a primitive that, given $w\in \Sigma^*$ and $i$, answers ``Is $w\in L_i$?''}
            \item If $\generator$ generates with density $\rho$ (for any notion of density we study) with respect to $\cL$, then $\generator$ also generates density $\rho$ with respect to $\wt{\cL}$.
        \end{itemize}
        This is not an exhaustive list.
        For instance, the transformation also preserves generation with approximate breadth, which is a stronger notion than density introduced in \citep{kalavasis2025limits}.
        These properties are what make this transformation interesting, and it leads to a number of results. 
        In particular, points 1 and 2 above together allow us to get a membership oracle-based algorithm for generation with finite contamination that already resolves an open question in \citep{raman2025noisy}; which seems hard to get using prior techniques.

    \paragraph{Priority-Based Intersection Algorithm (\cref{sec:templates:priority}).}
        Next, we discuss our algorithms for the much more involved case where there are amount of contamination is infinite.
        We begin by explaining why earlier approaches fail:
        \begin{enumerate}
            \item The previous simple approach that constructs the ``expanded'' collection $\wt{\cL}$ fails because the collection $\wt{\cL}$ becomes uncountable (if the amount of noise is not finite) and not all uncountable collections are generable in the limit, even without the requirement of density.
            The situation with density is even more complicated as the transformation is no longer density preserving. 
            \item Another idea is to use approaches from \citep{raman2025noisy,bai2025noise}.
                \citep{raman2025noisy} use a nice observation for generating in the limit: if the amount of noise is finite then, for sufficiently large $n$, the second half of the training examples (namely, $\inbrace{x_{n/2}, x_{n/2+1},\dots,x_n}$) eventually contains no noisy examples.
                Hence, roughly speaking, feeding the second half of the examples to an appropriate generation algorithm suffices.
                This, of course, fails when there is an infinite amount of noisy examples in the stream.
                \citep{bai2025noise} make another nice observation: if $\generator$ has the property that it generates from $\cL$ after $n^\star=n^\star(K)$ iterations, where $n^\star$ depends on the target $K$, but not on its enumeration, then $\generator$ generates from $\cL$ in the limit under arbitrary omissions (provided there is no noise).
                This suffices as \citep{charikar2024facets} constructed a generator with this property for all countable collections.
                However, this approach fails even when there are two noisy examples in the enumeration without any omission (see \cref{rem:singleInvalidExampleFailure}).
        \end{enumerate}
        To understand our (meta) algorithm, it is instructive to first understand the algorithm of \citep{charikar2024facets}. 
        Roughly speaking, in the $n$-th iteration, their algorithm considers the $n$ languages $\inbrace{L_1,L_2,\dots,L_n}$ and, from these, it removes any language inconsistent with the training data seen so far $S_n$, \ie{}, any $L\not\supseteq S_n$.
        Let the resulting languages be $L_{\sigma(1)},L_{\sigma(2)},\dots,L_{\sigma(m)}$ (for $m\leq n$), then they output $L_{\sigma(1)}\cap L_{\sigma(2)}\cap \dots \cap L_{\sigma(\ell)}$ for the largest $1\leq \ell\leq m$ such that the resulting intersection is infinite.\footnote{To get an element-based generator, they output the smallest unseen element from this set.}
        
        Of course, this algorithm does not directly work when there is contamination in the sample stream because, for instance, the consistency check of $L\not\supseteq S_n$ is no longer meaningful. 
        Consider the case of $c$-noise; let the enumeration be $E=\inbrace{x_1,x_2,\dots}$.
        The natural counter part is to check whether $E$ is an $c$-noisy enumeration of $L$.
        This, however, requires having access to the entire enumeration $E$, which we have never seen at any iteration $n<\infty$.
        One could instead check if the fraction of elements from $\inbrace{x_1,x_2,\dots,x_n}$ not in $L$, which we term the \textit{empirical noise rate,} is at most $c+\eps$ for small fixed $\eps>0$.
        There are examples where this fails because this approach can include ``bad'' languages $L$ that do not meet the actual requirement that $E$ is a $c$-noisy enumeration for $L$.
        One could tighten this check by setting $\eps=0$, but then we run into the issue that the empirical noise rate for certain bad languages can fluctuate above and below $c$ infinitely often.
        To gain some intuition suppose $K=\inbrace{2,4,6,\dots}$ and $L=\inbrace{n\colon n\in \N, n\text{~mod~}4\neq 0}\cap A$ where $A=\cup_{\ell\in \N}[(2\ell)!,(2\ell+1)!]$.
        Then, the enumeration $E=\inbrace{1,2,3,\dots}$, is a $\nfrac{1}{2}$-noisy enumeration for $K$ and $\nfrac{3}{4}$-noisy enumeration of $L$ (so $L$ should be excluded in our check).
        However, the empirical noise rate of $L$ fluctuates between $0$ and $\nfrac{3}{4}$ infinitely often (and, hence, $L$ would end up being included in our check infinitely often).
        
        This is not merely a superficial problem in the above approaches, but rather an inherent problem in verifying $c$-noise or $o(1)$-noise, which are necessarily limiting phenomena, at finite times.
        Hence, to overcome this we need a new approach that is able to (i) remove ``bad'' languages from our ``active set'' of languages, and (ii) ensure that the target language $K$ always remains in this set after some finite time.
        To design our approach, we take inspiration from the failure of the above approaches, where certain ``bad'' languages fluctuate between passing our check and failing our check infinitely often.
        Our algorithm assigns each language a priority.
        Initially, the priority of language $L_i$ is simply $-i$ (so the languages in order of priority are $L_1,L_2,\dots$) and, each time, $L_i$ fails our check we penalize the language by decreasing its priority.
        The key observation is that for both $c$-noise and $o(1)$-noise, the target language $K$ will only be penalized for a finite amount of time; where as every ``bad'' language will be penalized infinitely often.
        This idea is sufficient for us to design algorithms for generation with $c$-noise and $o(1)$-noise.
        
        However, like our previous algorithmic template, this also has quite general applicability: to all notions of density and also for generation with bounded adversaries.
        Indeed, this core algorithmic template of assigning priorities to languages shows up in all of our algorithms beyond the finite contamination regime, each time, with a slightly different notion of priority.
        For instance, with $M$-bounded adversaries, we need to check whether the provided enumeration $E$ is indeed $M$-bounded for a langrage $L$; this is again a limiting phenomenon which cannot be checked at any finite time.
        The situation for dense generation is even more complicated.
        Indeed, \citep{kleinberg2025density} already demonstrated that achieving lower density (both element-based set-based) for all countable collections is incredibly difficult and their algorithms carefully need to balance the trade-off between generating only valid elements and covering enough fraction of $K$.
        Now, with infinite amount of contamination this becomes even harder due to the fluctuating empirical noise rates we discussed above.

    \subsubsection*{Lower Bounds for Dense and Non-Dense Generation}
        A common theme in learning theory is that, once we have the ``right'' algorithm, obtaining tight lower bounds is not that hard. 
        This is also the case with our lower bounds.
        An interesting property of our lower bounds is that many of them have a finite witness. 
        For instance, consider dense generation with finite contamination.
        All our characterizations here have the following form: a collection $\cL$ violates the condition if there are two pathological languages $L_1,L_2\in \cL$ with a certain property.
        (For instance that $L_1\subseteq L_2$ and $\mu_{\rm low}(L_1,L_2)=0$.)
        This contrasts existing lower bounds in generation and identification which required infinite witnesses.
        For instance, \citep{kleinberg2025density} showed that the existence of an infinite perfect tower characterizes lower set-based density. 
        However, as its name suggests, an infinite perfect tower is necessarily witnessed by an infinite sub-collection of languages.
        Similarly, the characterizations of identification in the limit \citep{angluin1980inductive} and language generation with breath \citep{kalavasis2025limits,charikar2024facets,kalavasis2024characterizations} also require infinite witnesses.

\subsection{{Open Problems}}
    \label{sec:intro:discussion}
    There are several interesting future directions related to noisy generation, and more broadly, the line of work on generation in the limit. Regarding noisy generation, an interesting direction is to fully characterize element-based generation for all types of contamination we consider in our work. Moreover, it would be nice to obtain tight bounds in the beyond-worst-case setting we introduced in our work. While we have developed an algorithm that uses only membership oracle access to $\cL$ in the setting of finite contamination, we have developed several algorithms that require more complicated oracles. It is an interesting open direction to understand what can be done using simpler oracles in this setting. Last but not least, perhaps a way to circumvent some of the lower bounds we have shown, other than restricting the adversary, is to relax the requirement of the learner: how does the landscape of (noisy) generation look like we allow the generator to output a vanishing amount of hallucinations? We remark that a similar question was asked by \citep{kleinberg2025density} in the context of improving the density guarantees of their algorithms.

\section{Related Work}
    \label{sec:relatedWork}
    {In this section, we present further related work, including other works on language identification in the limit and language generation in the limit.
    In an attempt to give through overviews of the presented works, this section is quite long, perhaps unavoidably.
    That said, reading it is not necessary to understand our results and we encourage readers to skip it in the first reading and revisit as necessary.}

    \subsection{Related Work on Language Identification in the Limit}
        {Starting from \citet{gold1967language} there has been a rich line of work including both linguistics and computer science on the model of language generation in the limit.
        Of particular relevance to our work are the line of works on language identification in the limit in settings where the adversary can include invalid examples in the enumeration (\ie{}, noise) or omit elements \citep{mukouchi2003refutable,case1997synthesizing,stephan1997noisy,baliga1992learning,fulk1989learning,schafer1985some}.}
        In particular, 
        \citet{jain1994identificationNoisy} extends Gold’s model to streams with infinitely many inaccuracies, controlling corruption via density (covering finite, vanishing, and constant-rate noise).  \citet{mukouchi2003identificationNoisy} use a neighbor-system (metric) model that permits insertions and deletions provided each corrupted item lies within a fixed distance of some true string. \citet{tantini2006identificationNoisy} analyze the same metric-based setting and prove identification results under such systematic noise.

    \subsection{Related Work on Language Generation in the Limit}
    {Next, we discuss works on language generation in the limit.
    We have already discussed some of these works \citep{kleinberg2024language,raman2025noisy,bai2025noise,kleinberg2025density} briefly in the introduction.
    We expand upon the discussion of these and other works below.}

    \paragraph{Language Generation with Uncountable Collections.}
        {As we mentioned in \cref{sec:intro}, \cite{kleinberg2024language}  introduced the model of language generation in the limit.
        While they and many subsequent works, including ours, focus on the setting where the collection of languages is countable, \citet{li2024generation} extended the study to uncountable collections of languages.
        \cite{li2024generation} introduced two more fine-grained notions of language generation: uniform and non-uniform generation.
        Uniform generation requires the number of strings $n^\star$ the generator needs to see before starting to generate (see \cref{def:generation-in-the-limit}) to be independent of the choice of the targe language $K$.
        Non-uniform generation allows $n^\star$ to depend on $K$, but requires it to be independent of the enumeration of $K$ chosen by the adversary.
        (Note that generation in the limit allows $n^\star$ to be {dependent} of both $K$ and the enumeration of $K$.)
        \citep{li2024generation} characterized the collections that are uniformly generable as well as collections that are non-uniformly generable.
        The latter characterization, in particular, demonstrates that all countable collections can be non-uniformly generated although with a different algorithm that that of \citep{kleinberg2024language}; this result was also concurrently and independently obtained by \citep{charikar2024facets}. Interestingly, \citet{charikar2024facets} showed that this type of generation is \emph{impossible} if the learner only has membership oracle access to $\cL.$
        \citep{li2024generation} gave several sufficient conditions for uncountable collections to be generable in the limit and left a complete characterization as an open problem.
        En route to obtaining a complete characterization, they left other open problems, notably checking closedness under finite unions: if $\cL_1,\cL_2,\dots,\cL_k$ are generable in the limit, then is $\cup_{i=1}^k \cL_i$ also generable in the limit?
        \cite{hanneke2025union,bai2025noise} concurrently and independently resolved this problem by showing that the statement is false even with $k=2$.
        Their proofs are near-identical and rely on a diagonalization argument which has also been featured in other works on language generation as discussed below.}

    \paragraph{Language Generation in a Statistical Model.} 
        {\citep{kleinberg2024language}'s model has also been extended along another access: adversarial enumeration of examples.
        While \citep{kleinberg2024language}, study generation when the examples are provided by a worst case adversary, \citep{kalavasis2025limits} introduce a statistical model where the samples are generated \iid{} from a \textit{fixed} distribution $\cP$ whose support matches the target language $K$.
        In this model, they obtain the first and near-tight sample complexities of identification in the limit, generation in the limit, and related tasks (which are discussed later in this section).
        This is precisely the notion of sample complexity studied in the universal-rates framework of \citep{bousquet2021theory}.
        Apart form \citep{bousquet2021theory}, \citep{kalavasis2025limits}'s model was also described by \citep{angluin1988identifying} in the context of language identification; \citep{angluin1988identifying}  showed that the characterization of which collections are identifiable does not change between the stochastic and in-the-limit models.}
        This model can also thought of as a model beyond the worst-case, as the enumeration shown to the generator is sampled \iid{} from a distributions instead of being chosen by an adversary. 
        However, it is quite different from the beyond-worst-case model we study in this work. 
        Neither model is a special case of the other.
        On the one hand, in the stochastic model the adversary can construct a distribution with a heavy tail ensuring that elements with large indices in the canonical ordering can appear early in the samples so does not meet the boundedness requirement (\cref{def:bounded-displacement-enumeration}).
        On the other hand, the bounded adversary model in this work allows the enumeration selected to be adversarial (and, in particular, non-stochastic) provided it meets the boundedness requirements.
        It is an interesting question to understand if the stronger results we obtain for dense generation can also be obtained in the statistical model.

    \paragraph{Language Generation in the Limit with Noise.}
        {Having discussed the variants of language generation in the literature, we now turn to discussing works on language generation with noisy examples.
        As already mentioned in \cref{sec:intro}, there are only two relevant works in the literature: \citep{raman2025noisy,bai2025noise}.}
        \cite{raman2025noisy} initiated the study of language generation in the limit with a \textit{finite} amount of additive noise $n^\star$ (unknown to the generator); in this setting, they characterized classes that are uniformly generable.
        Their results, in particular, imply that all countable collections remain generable with a finite amount of additive noise.
        In the context of generation from noisy samples, \cite{bai2025noise} paper shows that infinite omissions do not change which collections are uniformly or non-uniformly generable. 
        Combining this with the result in \cite{charikar2024facets} implies that all countable collection remains non-uniformly generable with infinite omissions.
        \cite{bai2025noise} also study the setting in \cite{raman2025noisy}; they show several results including a characterizations of non-uniform generation with a finite amount of additive noise.
        The main additional challenges in our work is that we deal with (i) the setting with an \textit{infinite} amount of additive noise (where the algorithms from these works fail) and (ii) also simultaneously deal with omissions.

    \paragraph{Breadth and Density in Language Generation.}
    Prior to the density-based definitions of ``breadth'' proposed by \citet{kleinberg2025density}, the parallel works of \citet{kalavasis2025limits,kalavasis2024characterizations,charikar2024facets} introduced more stringent notions of breadth, asking (roughly) that the (infinite) set of elements
    the generator can produce in every timestep is eventually a subset of the target and misses only \emph{finitely} many elements from $K.$ They characterized when this family of notions of breadth can be achieved. Moreover, \cite{kalavasis2025limits} introduced and studied a statistical model of language generation, and \cite{charikar2024facets} explored other aspects of generation including feedback, non-uniform\footnote{This is a technical term which studies how quickly the learner can start outputting valid, unseen elements of $K$.} generation with various types of oracle-access to the collection, and the role of feedback in generation. On a related note, \citet{peale2024}  {introduced a notion of ``representative generation'' where the strings of the universe are divided into different groups and the generator's outputs are required to represent all groups. This can also be thought of as a weakening of the notions of breadth studied in aforementioned works.} 

    \paragraph{Further Works on Language Generation.}
{Apart from the aforementioned works on language generation, there are two other recent papers studying this framework. \citet{charikar2025pareto}
study a notion of Pareto-optimal non-uniform generation, which seeks to obtain algorithms that achieve the Pareto frontier for non-uniform generation. On a different front, 
\citet{karbasi2025impossibility} propose and study a theoretical model of hallucination detection that is inspired by the language identification and generation settings. Lastly, \citet{vafa2024evaluating} propose a theoretical model that measures the proximity between the language represented by an LLM and the language it was trained on..
}

    \subsection{Other Theoretical Efforts at Understanding Language Generation}

        For completeness, we discuss some other theoretical efforts on understanding LLMs.
        
        \paragraph{Representational Power of Transformer Architecture.} There has been a long line of work trying to understand the class of functions that transformers can represent; see, \eg, \citep{telgarsky2023representational,peng2024on,chen2024multilayer} and references therein.

        \paragraph{Effectiveness of Chain-of-Thought.}
            Another recent theoretical effort has to do with building theoretical evidence about the usefulness of chain-of-thought in language models \citep{malach2023auto,altabaa2025cot,joshi2025theory,huang2025transformers}.

        \paragraph{Models Hallucinations.}
            There have several theoretical works
            giving evidence that LLMs that provide some non-trivial utility \emph{must}
            hallucinate \citep{kalai2024calibrated,kalai2025languagemodelshallucinate,xu2024hallucination}.

        \paragraph{Model Stealing.} The process of trying to infer the parameters of a proprietary LLM through calls to its API is known as \emph{model stealing}. \citet{liu2025model}, building on the work of \citet{mahajan2023learning}, showed that, under a theoretical model, this is possible for any LLM that has low rank.

\section{Preliminaries and Model}\label{sec:preliminaries}
In this section, we formalize language generation with adversarial noise in the example stream. We follow the learning-theoretic presentation of generation in the limit used in recent work, and then parameterize the adversary by the amount of incorrect or \textit{noisy} examples it can inject and the number of elements from $K$ it can skip or \textit{omit} from the enumeration.

\subsection{Notation}

Let $\Sigma$ be a finite alphabet. We use $\Sigma^*$ to denote the set of all finite strings over $\Sigma$.
We often call $\Sigma^*$ the \textit{universe} and use $U$ to denote it.
We will often take $U=\Sigma^*$ to be the set of natural numbers $\N$. This is without loss of generality as $\Sigma^*$ is a countable set because $\Sigma$ is finite. 
Given a sequence $x=(x_1, x_2, x_3, \dots)$ of elements of $U$, we use $x_{1:n}$ to denote the prefix of elements $x_1, x_2, \dots, x_n$. 

For us, a \textit{language} is any infinite subset $L\subseteq U$. An enumeration of a language $L$ is an ordered list $\ell_1,\ell_2,\ell_3,\dots$ of elements of $L$, so that every element of $L$ appears in the list exactly once.\footnote{We note that our definition of enumeration differs from some of the previous works in that we do not allow repeated elements. As we detail in \Cref{apx:repetitions},
this assumption is without loss of generality.}
Given a collection $\cL$ of languages (\ie{}, a collection of infinite sets in $U$), the closure of $\cL$ is the common intersection of all languages in $\cL$ and is denoted as $\Cl(\cL)\coloneqq \cap_{L\in \cL} L.$
When talking about densities, we will also endow the universe $U$ with a canonical ordering $u_1,u_2,\dots$.
When $U=\N$, the canonical enumeration will always be $1,2,3,\dots$.
This canonical ordering induces a canonical ordering of each language $L\subseteq U$ and, for language $L$, we denote it by $\ell_1,\ell_2,\dots$.
Intuitively, one can think of this ordering as ordering the elements of $L$ from ``easy'' ones to ``harder'' ones; we will return to this intuition in \cref{sec:beyond} where we go beyond the worst-case adversarial model.

Throughout the paper, we use standard asymptotic notations $o(\cdot), O(\cdot), \Omega(\cdot)$, and $\Theta(\cdot)$.
Finally, to define different notions of densities of a generator, we need the following standard notions of densities of subsets of natural numbers:
\begin{definition}[Set densities in $\N$] \label{def:set-densities}
    Let $A = \{a_1, a_2, a_3, \dots\}$ and $B = \{b_1, b_2, b_3, \dots\}$ be subsets of $\N$, with their elements listed in the natural ordering of $\N$.
    \begin{itemize}
        \item The upper density of $A$ in $B$ is $\mu_{\text{up}}(A,B) = \limsup_{n \to \infty} \frac{1}{n}|\{A \cap \{b_1, \dots, b_n\}\}|.$
    \item The lower density of $A$ in $B$ is $\mu_{\text{low}}(A,B) = \liminf_{n \to \infty} \frac{1}{n}|\{A \cap \{b_1, \dots, b_n\}\}|\,.$
    \end{itemize}
\end{definition}
We refer the reader to \cref{app:preliminaries} for a quick refresher on the definition of the limit superior and limit inferior. 
Now, to gain some intuition about $\mu_{\rm up}(\cdot)$ and $\mu_{\rm low}(\cdot)$, consider the following sets 
\[
    E=2\N\coloneqq\sinbrace{2n}_{n\in \N}\,,\quad 
    P=\sinbrace{n\text{~is prime}}_{n\in \N}\,,\quadand 
    F=\cup_{i\in \N}[(2i)!, (2i+1)!]\,.
\]
These languages have the following densities in the universe $U=\N$ :
\begin{align*}
    \mu_{\rm up}(E, U) &= \mu_{\rm low}(E, U) = \frac{1}{2}\,,~~
    \mu_{\rm up}(P, U) = \mu_{\rm low}(P, U) = 0\,,~~
    \mu_{\rm up}(F, U) = 1 ~~\text{and}~~ \mu_{\rm low}(F, U) = 0\,.
\end{align*}
Hence, as we can see, lower and upper densities of $L$ in $\N$ can be different for $L$ and the lower and upper density for one language in $\N$ can be the same or different.
Due to the definition of limit inferior and limit superior, it always holds that $\mu_{\rm low}(L, U) \leq \mu_{\rm low}(L, U) = 0.$

\subsection{Language Generation (without Noise)}
Let $\cL \coloneqq \{L_1, L_2, L_3, \dots\}$ denote a countable collection of languages in $U$. In the language generation problem formalized by \cite{kleinberg2024language}, an adversary fixes a target language $K \in \cL$ and an enumeration of $K$. At round $n\in\N$, the adversary reveals an element $x_n\in K$ according to the chosen enumeration, and the goal of generator is to be able to generate from the target language $K$ eventually. We call this setting the noiseless setting because the adversary commits to a complete enumeration of all the elements from the target language $K$, neither including noisy examples nor omitting examples in its enumeration.
\begin{remark}[Infinite Cardinality of Languages]
    Following \cite{kleinberg2024language} and other prior works, we will always assume that each language $L\in \cL$ has infinite cardinality, as otherwise, when $K=L$, the set of unseen valid elements a generator can output, namely, $K\setminus S_n$, can become empty.
\end{remark}
In this work, we consider three natural notions of generation introduced by \cite{kalavasis2024characterizations, charikar2025pareto, kleinberg2025density}: index-based generation, element-based generation, and set-based generation.
Towards defining this, we first need to define three types of generators: index-based, element-based, and set-based.
\begin{definition}[Index-based Generators; \cite{kleinberg2025density}]
    An index-based generator $\generator=(\generator_n)_{n\in\N}$ maps the observed history to an index $i \in \N$, \ie{}, given examples $x_1, \dots, x_n$ revealed by the adversary till round $n$, the generator outputs $\generator_n(x_1, \dots, x_n) = i_n \in \N$.
\end{definition}

\begin{definition}[Element-based Generators; \cite{kleinberg2025density}]
    
    An element-based generator $\generator=(\generator_n)_{n\in\N}$ maps the observed history to an element $w \in U$ different from the elements revealed by the adversary so far and the previously generated elements, \ie{}, given examples $x_1, \dots, x_n$ revealed by the adversary till round $n$, the generator outputs $\generator_n(x_1, \dots, x_n) = w_n \in U \setminus \left(\{x_1, \dots, x_n\} \cup \{w_1, \dots, w_{n-1}\}\right)$.
\end{definition}

\begin{definition}[Set-based Generators; \cite{kleinberg2025density,kalavasis2025limits,charikar2024facets}]
    A set-based generator $\generator=(\generator_n)_{n\in\N}$ maps the observed history to a set of element $S \subseteq U$ not containing any element revealed by the adversary so far, \ie{}, given examples $x_1, \dots, x_n$ revealed by the adversary till round $n$, the generator outputs $\generator_n(x_1, \dots, x_n) = A_n \subseteq U \setminus \{x_1, \dots, x_n\}$.
\end{definition}
Having defined the three types of generators, we are now ready to define language generation in the limit.
\begin{definition}[Language Generation in the Limit; \cite{kleinberg2024language,kleinberg2025density}]\label{def:generation-in-the-limit}
    A generator $\generator=(\generator_n)_{n\in \N}$ is said to generate in the limit from a language collection $\cL$ if for any $K\in \cL$ and any enumeration of $K$, there exists a finite time $n^\star$ such that for all $n\geq n^\star$, the following holds:
    \begin{itemize}
        \item If $\generator$ is an index-based generator, then $L_{i_n}\subseteq K$;
        \item If $\generator$ is an element-based generator, then $w_i\in K$; and %
        \item If $\generator$ is a set-based generator, then $A_n\subseteq K$.
    \end{itemize}
\end{definition}
\noindent  To draw comparisons between the three notions of generation above, clearly index-based generation is the most restrictive notion, as it implies set-based generation, and set-based generation implies element-based generation. However, simply getting set-based generation in the limit is not interesting, as once element-based generation in the limit is possible, one can simply output a singleton set according to the element-based generator. 
We introduce the set-based generation to study the breadth achieved by such a generator, which we will discuss in \Cref{sec:density}.
Here, element-based generation is identical to the notion of language generation in the limit introduced by \cite{kleinberg2024language}.

Recent results show that, in the noiseless setting, generation in the limit is possible for every countable collection $\cL$ \citep{kleinberg2024language}. In fact, the algorithm of \citep{kleinberg2024language} achieves index-based generation, the most restrictive notion of generation. However, as we will show, index-based generation is no longer sufficient once we consider the noisy setting where the input stream of examples contains noisy examples.

\subsection{Language Generation under Noise} 
It is easy to see that when infinitely many noisy examples are allowed, we need some restriction on the adversary to enable generation. For instance, without any restrictions the adversary can simply enumerate $K$ and another language $K'$ disjoint from $K$ in alternate steps, and we never know whether to generate from $K$ or $K'$.
Towards this, we define the noise rate as follows.
\begin{definition}[Empirical Noise Rate]
    For a language $L$ and an infinite sequence $x_{1:\infty} \in U^\N$, define the (empirical) noise rate up to time $n$ by
    \[
    R(L;x_{1:n}) \;=\; \frac{1}{n}\,\abs{\{t\le n : x_t\notin L\}}.
    \]
\end{definition}
Now we can define noisy enumeration as follows.
\begin{definition}[Regimes of Noisy Enumeration]\label{def:noisy-enumeration}
    Fix a language $L\subseteq U$:
    \begin{itemize}
        \item \textbf{$o(1)$-Noise Enumeration:} An enumeration of $L$ with $o(1)$-noise is an ordered list $x_1, x_2, x_3, \dots$ of elements so that every element of $L$ appears in the list exactly once, and the empirical noise rate satisfies $R(L; x_{1:n}) = o(1)$.
        \item \textbf{$c$-Noise Enumeration:} An enumeration of $L$ with $c$-noise is an ordered list $x_1, x_2, x_3, \dots$ of elements so that every element of $L$ appears in the list exactly once, and there exists $n^\star$ such that for all $n \ge n^\star$, the empirical noise rate satisfies $R(L; x_{1:n}) \le c$.
        \item \textbf{Finite Noise Enumeration:} An enumeration of $L$ with finite noise is an ordered list $x_1, x_2, x_3, \dots$ of elements so that every element of $L$ appears in the list exactly once, and there are at most a finite number of noisy examples in the enumeration, \ie{}, $|\{x_1, x_2, \dots\} \setminus L| < \infty$.
    \end{itemize}
\end{definition}
In this work, we focus on three regimes:
\begin{enumerate}
    \item \textbf{Vanishing noise rate:} The adversary chooses an enumeration of $K$ with $o(1)$-noise. 
    \item \textbf{Constant noise rate:} The adversary chooses an enumeration of $K$ with $c$-noise (for $c\in [0,1]$).
    \item \textbf{Finite noise:} The adversary chooses an enumeration of $K$ with finite noise.
\end{enumerate}
Both the first and the second regimes strictly generalize the third regime, ``finite-noise'' model with only finitely many noisy examples, which has been initiated and studied for generation by \cite{raman2025noisy} and later by \cite{bai2025noise}. 

Similar to the noiseless setting, we say a generator generates from a given language collection $\cL$ in the limit under vanishing noise rate/constant noise rate $c$/finite noise, if for any enumeration of the target language $K$ with $o(1)$-noise/$c$-noise/finite noise, the generator eventually generates from the target language. 

\subsubsection*{Separation of Proper and Improper Learning with a Single Noisy Element}
    Index-based generation can be thought as ``properly'' learning to generate from language $K$, while both element-based and set-based generations correspond to ``improper'' learning as they do not require the generator's outputs to be languages in $\cL$.
    Our next example shows that index-based generation is too restrictive even for a two-language collection when the adversary's enumeration contains $1$ noisy example.

\begin{example} \label{ex:index-failure-under-noise}
    Take $U \coloneqq \N = \{1, 2, 3, \dots\}$, $L_1 \coloneqq \N \setminus \{1\}$, $L_2 \coloneqq \N \setminus \{2\}$, and $\cL = \{L_1, L_2\}$. If the adversary chooses a complete enumeration of $\N$, then clearly this enumeration is an enumeration with $1$ noisy example for both $L_1$ and $L_2$ with one noisy example and thus the adversary is free to declare the target language $K = L_1$ or $K = L_2$. However, outputting any index $i_n \in \{1,2\}$ does not guarantee $L_{i_n} \subseteq K$.
\end{example}
Hence, we mainly focus on element-based and set-based generation in the noisy setting.

\subsection{Language Generation under Omission}

We also consider the setting where the adversaries omit certain elements of the target $K$ in its enumeration, which further complicates things. We remark that the algorithm of \citep{kleinberg2024language} needs that the adversary enumerates all the elements of $K$, and any index-based generator provably fails even when the adversary is allowed to omit one element of $K$ (see \cref{ex:index-failure-under-omission}).
Thus, we again focus on element-based generation and set-based generation in the setting with omission. We consider the following regimes of omissions in adversary's enumeration.

\begin{definition}[Regimes of Enumeration with Omission]\label{def:enumeration-omission}
    Fix a language $L \subseteq U$.
    \begin{itemize}[leftmargin=18pt]
        \item An enumeration of $L$ with finite omissions is an enumeration of $\hat{L} \subseteq L$, s.t., $|L \setminus \hat{L}| < \infty$.
        \item An enumeration of $L$ with $c$-omissions is an enumeration  of  $\hat{L} \subseteq L$, s.t., $\mu_{\text{low}}(\hat{L},L) \ge 1- c$.
        \item An enumeration of $L$ with arbitrary (or infinite) omissions is an enumeration  of  $\hat{L} \subseteq L$, s.t., $|\hat{L}| = \infty$.
    \end{itemize}
\end{definition}

\subsection{Language Generation under Contamination}
    Now that we have defined  both enumeration with noise in \cref{def:noisy-enumeration} and enumeration with omission in \cref{def:enumeration-omission}, we may pair up these definitions and talk about enumeration  under  contamination, including both noise and omissions. Consider the following examples.
    \begin{itemize}[leftmargin=18pt]
        \item An enumeration of $L$ with finite noise and finite omissions is an enumeration with finite noise  of  $\hat{L} \subseteq L$ such that $|L \setminus \hat{L}| < \infty$.
        \item An enumeration of $L$ with $o(1)$-noise and $c$-omissions is an enumeration with $o(1)$-noise  of  $\hat{L} \subseteq L$ such that $\mu_{\text{low}}(\hat{L}, L) \ge 1- c$.
        \item An enumeration of $L$ with $c$-noise and arbitrary omissions is an enumeration with $c$-noise  of  $\hat{L} \subseteq L$ such that $|\hat{L}| = \infty$.
    \end{itemize} 
    Generation in the limit when the adversary chooses an enumeration with both noise and omissions means that the generator still needs to be eventually consistent with $K$ under any such enumeration, as defined in \cref{def:generation-in-the-limit}.
 
    \begin{itemize}[leftmargin=18pt]
        \item An enumeration of $L$ with finite noise and finite omissions is an enumeration with finite noise  of  $\hat{L} \subseteq L$ such that $|L \setminus \hat{L}| < \infty$.
        \item An enumeration of $L$ with $o(1)$-noise and $c$-omissions is an enumeration with $o(1)$-noise  of  $\hat{L} \subseteq L$ such that $\mu_{\text{low}}(\hat{L}, L) \ge 1- c$.
        \item An enumeration of $L$ with $c$-noise and arbitrary omissions is an enumeration with $c$-noise  of  $\hat{L} \subseteq L$ such that $|\hat{L}| = \infty$.
    \end{itemize}

    \subsubsection*{Separation of Proper and Improper Learning with a Single Omission}
        Our next result shows that even for simple collections consisting of two languages, and very restricted adversaries that omit only a single element from $K$, index-based generation is provably impossible.
        \begin{example}
    \label{ex:index-failure-under-omission}
    Consider the same example as in \cref{ex:index-failure-under-noise}: $L_1 \coloneqq \N \setminus \{1\}$, $L_2 \coloneqq \N \setminus \{2\}$, and $\cL = \{L_1, L_2\}$. If the adversary enumerates $3, 4, 5, \dots$, this enumeration is a valid enumeration with $1$ element omitted for both $L_1$ and $L_2$. However, neither index $i_n \in \{1,2\}$ guarantees $L_{i_n} \subseteq K$.
\end{example}
\begin{remark}
    \label{rem:singleInvalidExampleFailure}
    
    We note that \cite{bai2025noise}'s approach for generating in the limit with omissions (with no noise) is fragile to noise.
    Their approach is simple: they show that if a generator generates $\cL$ in the limit with the property that $\generator$ starts generating after $n^\star$ iterations under an enumeration without noise or omissions, where $n^\star$ can depend on the choice of the target language $K\in \cL$ but does \textit{not} depend on the enumeration of $K$ chosen by the adversary, then $\generator$ (without any changes) already generates in the limit with arbitrary omissions.
    A generator with this property all countable collections was provided by \cite{charikar2024facets}.
    This approach, however fails when there is noise in the enumeration.
    We demonstrate the failure of this approach using just two noisy examples. Consider the collection $\cL = \set{K, L_1, L_2, \ldots}$ where
\begin{align*}
K &= \N, \\
L_i &= \set{-1, -2} \cup \set{1, 2, \ldots, i-1} \cup T\,, \quad \text{for each } i \in \N\,, \\
T &= \set{\ldots, -102, -101, -100}\,.
\end{align*}
Now suppose target language is $K = \N$ and the adversary's enumeration: $E = (-2, -1, 1, 2, 3, \ldots)$.
Note that $E$ contains exactly two noisy examples ($-2$ and $-1$) with no omissions.

\paragraph{Why their algorithm fails?} At iteration $t \geq 2$, the algorithm from \cite{charikar2024facets} incorrectly generates elements from set $T$ rather than from $K$. This occurs because:
The algorithm examines the first $n$ languages consistent with the observed data $\set{-1, -2, 1, 2, \ldots, n-2}$.
Among these first $n$ languages, only $L_{n-1}$ remains consistent with the observed data.
The algorithm then generates the smallest unseen element from $L_{n-1}$, which is the smallest unseen element from $T$.
Since $T \cap K = \emptyset$, the algorithm never generates elements of the target language $K$.
This demonstrates that noise-free generation strategies cannot be directly applied to noisy settings, even with minimal noise.
\end{remark}

\subsection{Density of Generators}\label{sec:density}

Besides requiring that a generator to generate in the limit so that hallucinations eventually stop, it is important to ensure that the generator has learnt a non-trivial fraction of the target language and can generate novel elements and does not limit the generation process to a restrictive subset of the target language. This motivates the study of coverage or breadth of a language generation algorithms.
Towards that end, \cite{kalavasis2025limits,charikar2024facets,kalavasis2024characterizations} defined some notions of ``breadth'' which treat the generating algorithm as a set-based generator, and require that its output misses only finitely many elements of the target.\footnote{To be precise, these works considered more notions of breadth, but they all share a similar viewpoint.}

For convenience, we will assume the universe of all possible strings $U = \N$, since for any countable set, we may identify its elements with natural numbers using a fixed canonical ordering of its elements.

\cite{kleinberg2025density} introduced the following idea for describing the breadth of a generation algorithm. For any two sets $A, B \subseteq \N$, there are two natural ways to describe what fraction of elements of $B$ is covered by $A$, namely the upper density $\mu_{\text{up}}(A,B)$ and the lower density $\mu_{\text{low}}(A,B)$ as defined in \cref{def:set-densities}. From now on, whenever we talk about densities, we assume the ground set $U = \N$. We may then define the following densities achieved by an element-based generation algorithm.

\begin{definition}[Element-based Density]\label{def:element-based-density}
    Let $\cL$ be a countable collection of languages in $\N$. Let $\generator = (\generator_n)_{n\in \N}$ be an element-based generator. Let $K$ be the target language, and $W = \{w_1, w_2, w_3, \dots\}$ be the infinite set of elements that the generator ever outputs in response to the adversary's enumeration.

    The element-based generator $\generator$ achieves element-based upper density $\rho$ if $\mu_{\text{up}}(W, K) \ge \rho$. It achieves element-based lower density $\rho$ if $\mu_{\text{low}}(W, K) \ge \rho$.
\end{definition}
While \citep{kleinberg2025density} also defined a density measure for index-based  generators , since we argued that index-based generation is too restrictive for the noisy generation model, we instead consider the following density for set-based  generators .

\begin{definition}[Set-based Density]
    \label{def:set-based-density}
    Let $\cL$ be a countable collection of languages in $\N$. Let $\generator = (\generator_n)_{n\in \N}$ be a set-based generator. Let $K$ be the target language, and $A_n$ be the set generated in round $n$.

    The set-based generator $\generator$ achieves set-based upper density $\rho$ if $\limsup_{n \to \infty} \mu_{\text{low}}(A_n, K) \ge \rho$. It achieves set-based lower density $\rho$ if $\liminf_{n \to \infty} \mu_{\text{low}}(A_n, K) \ge \rho$.
\end{definition}
While we used limsup and liminf of the lower set-density to define set-based upper density and set-based lower density, one can also use limsup and liminf of the upper set-density to define similar notions. The arguments in the paper can be easily adapted to handle these other notions, but we omit the discussion of them for conciseness of the presentation.

Finally, we remark that there is an even stronger notion of breadth for set-based generation, called approximate breadth by \cite{kalavasis2025limits}. A set-based generator is said to achieve approximate breadth if there exists a time $n^\star$, such that for all $n \ge n^\star$, the output set $S_n$ satisfies $|K \setminus (S_n \cup \{x_1, \dots, x_n\})| < \infty$. Note that any set-based generator that achieves approximate breadth also achieves set-based lower density $1$.

\subsection{What Type of Access Does the Generator Have To $\cL$?}
Before we move on to discuss our main results, let us take a moment and clarify what the learner can do and cannot do, \ie{}, what is within fair-game of the language generation model.

\paragraph{What the learner cannot do:} The learner cannot get access to or make any query about the unknown target language $K$. For example, it cannot ask ``what is the next smallest element of $K$ not enumerated by the adversary so far.''

Moreover, the learner cannot query what the adversary will do in the future. For example, at round $n$, it cannot ask ``what will the adversary enumerate at time $n'$'' for any $n' > n$. At any round, it only gets to see the elements revealed by the adversary so far.

\paragraph{What the learner can do:} We assume the learner is allowed to make any query about the language collection $\cL$, as long as it does not involve accessing the unknown $K$. For example, the following are some well-studied oracles considered in the previous works:
    \begin{itemize}
        \item (Membership Oracle) \, Given index $i$ of some language $L_i$ and element $w \in U$, the oracle returns $\mathds{1}\{w \in L_i\}$.

        \item (Subset Oracle)\, Given indices $i,j$ of two languages $L_i$ and $L_j$, the oracle returns $\mathds{1}\{L_i \subseteq L_j\}$.
        \item (Intersection Oracle)\, Given a finite collection of indices $I$, the oracle returns $\mathds{1}\{|\cap_{i \in I} L_i| = \infty\}.$
    \end{itemize}
    In this work, we will also consider a few more oracles 
    \begin{itemize}
        \item (Density Oracle)\, Given languages $L, K$, compute $\mu_{low}(L, K)$.

        \item (Density Rate Oracle)\, Given languages $L, K$ and $\eps>0$,
        compute the number of elements $m^\star$ until the empirical density $\frac{L\cap \sinbrace{\kappa_1, \dots, \kappa_m}}{m}\geq \frac{\mu_{low}(L, K)}{1+\eps}$ is a good approximation of the true density for all $m\geq m^\star$.
    \end{itemize}
    When we show certain generators exist, we will specify the oracles they need.

\section{Algorithmic Templates} 
\label{sec:templates}
In this section we describe two algorithmic templates that we will use extensively in our work.
    
    \subsection{Priority-Based Intersection Meta Algorithm}
    \label{sec:templates:priority}
    Before presenting our first algorithmic template,
    it is instructive to first consider an algorithm due to \citet{charikar2024facets} for generation in the limit without contamination.

    \paragraph{The \cite{charikar2024facets} Algorithm.}
    At the $n$-th step,
    this algorithm considers the input $x_{1:n}$ and the first $n$ languages $L_1, \dots, L_n$ of a given collection $\cL$.
    We first filter for \emph{consistent} languages,
    \ie{}, languages $L_{i_n(1)}, L_{i_n(2)}, \dots$ containing all strings $x_1, \dots, x_n$
    for $i_n(j) < i_n(j+1)$.
    Then, the algorithm computes the largest intersection in the order of indices $i_n(1), i_n(2), \dots$ such that the intersection is infinite,
    \ie{}, take the intersection of the first $J_n$ filtered languages where $J_n$ is the largest integer such that \mbox{$\card*{\cap_{j=1}^{J_n} L_{i_n(j)}} = \infty$}.
    We then output an unseen element from this intersection.
    
    Suppose the target language $L_{i^\star}$ has index $i^\star$ and consider some $L_i$ for $i<i^\star$.
    Either $L_i\supseteq L_{i^\star}$,
    or there is some string $x\in L_{i^\star}\setminus L_i$ that is enumerated at some point,
    so that $L_i$ is no longer consistent.
    Since $L_{i^\star}$ is always consistent, and the consistent languages ordered before $L_{i^\star}$ are eventually supersets,
    the intersection is always taken over a collection containing $L_{i^\star}$
    so that the algorithm is guaranteed to generate from some infinite subset of $L_{i^\star}$.

    Our template algorithm (\Cref{alg:intersection-meta}) makes two main changes to the \cite{charikar2024facets} algorithm:
    \begin{enumerate*}[(1)]
        \item we change the order when taking intersections based on the priority of a language
    and 
        \item we introduce different stopping rules when taking the intersection.
    \end{enumerate*}
    
    \paragraph{Re-ordering by Priorities.}
    In the presence of contamination,
    we cannot completely eliminate candidate languages $L_i$ from consideration if we observe a string $x_n\notin L_i$,
    since it is unclear if $x_n$ is actually a member of the target language or just noise.
    Instead, at step $n$ of our algorithm,
    we assign a priority $P_i^{(n)}\in \N$ to the language $L_i$.
    Although we state present priorities abstractly,
    we should think of incrementing $P_i^{(n)}$ as penalizing a candidate language $L_i$ whenever we observe $x_n\notin L_i$.
    Then, we take the intersection in increasing order of priorities.
    Ideally, only supersets of the target language $K$ can be ordered before $K$,
    and the intersection we compute will always be a subset of $K$.
    This recovers the \cite{charikar2024facets} algorithm when we take the priorities to be $P_i^{(n)} = i$ if $L_i$ is consistent
    and $P_i^{(n)} = \infty$ once we observe $x_m\notin L_i$ for some $m\leq n$.

    \paragraph{Earlier Stopping.}
    If we would like to generate with density,
    we must avoid taking the intersection with too many languages,
    or we can end up generating from an infinite subset of the target language with density 0.
    We thus define an abstract stopping rule that will be instantiated differently depending on the desired guarantee.

    \paragraph{Pseudocode.}
    The pseudocode of our meta-algorithm is presented in \Cref{alg:intersection-meta}.
    Here we summarize the key notation for the reader's convenience.
    \begin{enumerate}
        \item The priority function $P(L_i, x_{1:n})$ takes a language $L_i$ and the current enumeration $x_{1:n}$ 
        and outputs a priority $P_i^{(n)}\in \N \cup \{\infty\}$.
        \item $i_n(j)$ is the index of the $j$-th ranked language at step $n$,
        \ie{}, $L_{i_n(1)}$ has the smallest priority value $P_{i_n(1)}^{(n)}$ (and thus ``highest priority'') at step $n$
        and $L_{i_n(2)}$ has the second smallest priority value, etc.
        \item The stopping function $J(L_{i_1}, \dots L_{i_n}, x_{1:n})$ takes a sequence of languages $L_{i_j}$ and the current enumeration $x_{1:n}$ and outputs a stopping index $J_n\in \N$.
    \end{enumerate}
    \begin{algorithm}[bht!]
        \caption{Meta-Algorithm for Generation with Noise}
        \label{alg:intersection-meta}
        \begin{algorithmic}[1]
        \Require Countable {collection} $\cL=\{L_1,L_2,\dots\}$; 
        priority function $P:\cL\times U^*\to \N \cup \{\infty\}$
        stopping function $J: \cL^*\times \Omega^*\to \N$;
        enumeration $x_{1:\infty}$
        \State Let $S_n \gets \sinbrace{x_1, \dots, x_n}$ be the set of examples seen in the first $n$ steps
        \State Let $W_{n-1} \gets \sinbrace{w_1, \dots, w_{n-1}}$ be the set of strings output before the $n$-th step
        \vspace{2mm}
        \For{$i=1, 2, \dots, n$}
            \State Compute priority $P_i^{(n)}$ at step $n$ for language $L_i$
        \EndFor
        \vspace{2mm}
        \State Re-order $\inbrace{L_1, \dots, L_n}$ in increasing priority, tie-breaking by index,
        as {$\sinbrace{L_{i_n(1)}, \dots, L_{i_n(n)}}$,
        \ie{}, for each $j\in [n-1]$,
        ensure either $P_{i_n(j)}^{(n)} < P_{i_{n}(j+1)}^{(n)}$
        or $P_{i_n(j)}^{(n)} = P_{i_{n}(j+1)}^{(n)}$ and $i_n(j) < i_{n}(j+1)$}.
        \State Compute the stopping index $J_n$.
        \State Output {${\bigcap}_{j\leq J_n} L_{i_n(j)}$}. \Comment{Set-based output}
        \State Output any {$w_n\in {\bigcap}_{j\leq J_n} L_{i_n(j)} \setminus (S_n\cup W_{n-1})$}. \Comment{Element-based output}
        \end{algorithmic}
    \end{algorithm}

    \paragraph{Analysis.}
    One key property of the prefix-based intersection meta algorithm (\Cref{alg:intersection-meta}) is that the languages that are ranked higher than the target language will eventually stabilize,
    \ie{}, the list of such languages eventually stops changing for sufficiently large $n$.
    This is formalized in \Cref{lem:meta-algorithm:prefix-stabilizes} below.
    We should think of $p\in \N$ below as an upper bound on the priority of the target language.
    \begin{lemma}[Prefix Priority Stabilization]\label{lem:meta-algorithm:prefix-stabilizes}
        Suppose \Cref{alg:intersection-meta} is executed with input enumeration $x_{1:\infty}$.
        Let $P_i^{(n)}\in \N \cup \{\infty\}$ be the priority of language $L_i$ computed in step $n$.
        Assume $P_i^{(n)}$ is non-decreasing in $n$ and lower bounded by $i$,
        \ie{}, $i\leq P_i^{(n)}\leq P_i^{(n+1)}$ for all $n\geq 1$.
        For any $p \in \N$,
        define
        \begin{align*}
            P_i^\infty &\coloneqq \lim_{n\to \infty} P_i^{(n)}\,, 
            \qquad
            \cL(p) \coloneqq \sinbrace{L_i: P_i^{\infty}\leq p}\,.
        \end{align*}
        Then there is a step $n^\star$ such that for all $n\geq n^\star$,
        \begin{enumerate}[(a)]
            \item $P_i^{(n)}\leq p$ for all $L_i\in \cL(p)$
            \item $P_i^{(n)} > p$ for all $L_i\notin \cL(p)$
            \item $P_i^{(n+1)} = P_i^{(n)}$ for all $L_i\in \cL(p)$
        \end{enumerate}
    \end{lemma}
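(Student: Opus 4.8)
The plan is to lean on two consequences of the hypothesis $i\le P_i^{(n)}\le P_i^{(n+1)}$. First, each sequence $(P_i^{(n)})_n$ is non-decreasing in $\N\cup\{\infty\}$, hence converges to its supremum $P_i^\infty$, and by monotonicity $P_i^{(n)}\le P_i^\infty$ for every $n$ (where defined). Second, and crucially, the lower bound $P_i^{(n)}\ge i$ forces $\cL(p)$ to be \emph{finite}: if $L_i\in\cL(p)$ then $i\le P_i^\infty\le p$, so $\cL(p)\subseteq\{L_1,\dots,L_p\}$. This finiteness is what makes the ``max over stabilization times'' arguments below legitimate.

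First I would establish part (c). Fix $L_i\in\cL(p)$. The sequence $(P_i^{(n)})_n$ is non-decreasing, takes integer values (it is finite, since its limit $P_i^\infty\le p<\infty$), and is bounded above by $p$; a non-decreasing bounded sequence of integers is eventually constant, so there is $n_i$ with $P_i^{(n)}=P_i^\infty$ for all $n\ge n_i$. As $\cL(p)$ is finite, $n^\star_{(c)}:=\max\{n_i:L_i\in\cL(p)\}$ is well-defined and (c) holds for all $n\ge n^\star_{(c)}$. Part (a) then follows immediately — in fact for every $n$ — since for $L_i\in\cL(p)$ we have $P_i^{(n)}\le P_i^\infty\le p$ by definition of $\cL(p)$.

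For part (b) I would split the complement of $\cL(p)$ according to index. If $i>p$, then $P_i^{(n)}\ge i>p$ holds for every $n$ directly from the lower-bound hypothesis, with nothing to prove. The remaining indices — those $i\le p$ with $L_i\notin\cL(p)$ — form a finite set, and for each such $i$ we have $P_i^\infty>p$; since $(P_i^{(n)})_n$ is non-decreasing and converges to $P_i^\infty$, there is a step $n'_i$ past which $P_i^{(n)}>p$ (because $P_i^{(n)}\to\infty$ if $P_i^\infty=\infty$, and because the sequence is eventually equal to $P_i^\infty$ if $P_i^\infty$ is a finite integer exceeding $p$). Taking $n^\star_{(b)}:=\max\{n'_i: i\le p,\ L_i\notin\cL(p)\}$ over this finite set yields (b) for all $n\ge n^\star_{(b)}$. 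Setting $n^\star:=\max\bigl(n^\star_{(b)},n^\star_{(c)}\bigr)$ completes the argument.

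I do not expect a serious obstacle: the only subtlety is that the complement of $\cL(p)$ is infinite, so one cannot naively take a maximum of stabilization times over it — and this is exactly where the assumption $P_i^{(n)}\ge i$ earns its keep, reducing the analysis to the finitely many low-index languages. A minor bookkeeping remark is that in \Cref{alg:intersection-meta} the priority $P_i^{(n)}$ is only defined once $n\ge i$; this is harmless here since every index appearing in the finite maxima above is at most $p$, so all priorities we invoke are defined well before the step $n^\star$ produced.
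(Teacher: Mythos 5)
Your proposal is correct and follows essentially the same route as the paper's proof: use $P_i^{(n)}\geq i$ to confine $\cL(p)$ (and all indices that could ever have priority $\leq p$) to the finitely many languages $L_1,\dots,L_p$, then invoke monotone stabilization of these finitely many integer-valued sequences and take a maximum of stabilization times. Your write-up is just a more detailed spelling-out of the paper's terse argument, including the same handling of indices $i>p$ via the lower bound.
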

    Before proving \Cref{lem:meta-algorithm:prefix-stabilizes},
    we provide some intuition for the quantities involved.
    By the assumption that $P_i^{(n)}$ is a non-decreasing integer sequence indexed by $n$,
    we know that the limit $P_i^\infty$ always exists.
    Since the sequence takes on discrete values,
    we can further deduce that either $P_i^{(n)} = P_i^\infty$ after some finite time $n$,
    or $P_i^\infty = \infty$ and the priority diverges.
    Assume for now that the priority $P_{i^\star}^\infty < \infty$ of the target language $L_{i^\star}$ does not diverge.
    Then $\cL(p)$ for $p\coloneqq P_{i^\star}^\infty$ is consists of the languages whose priorities are always bounded above by the limiting priority of the target language.
    \Cref{lem:meta-algorithm:prefix-stabilizes} states that languages in $\cL(p)$ will eventually be ordered before any language not in $\cL(p)$.
    Thus when we take the intersection in \Cref{alg:intersection-meta} in increasing order of priorities,
    we will always consider languages in $\cL(p)$ before all other languages.
    Note that the definition of $\cL(p)$ does not depend on a particular step $n$.
    \Cref{lem:meta-algorithm:prefix-stabilizes} simplifies some of the analyses of our algorithms in that it suffices for us to analyze properties of $\cL(p)$ rather than analyzing any particular stage of our algorithm.

    We are now ready to prove \Cref{lem:meta-algorithm:prefix-stabilizes}.
    \begin{proof}[Proof of \Cref{lem:meta-algorithm:prefix-stabilizes}]
        First,
        we note that $P_i^\infty\geq P_i^{(n)}\geq i$ for any $n\geq i$.
        Hence $\cL(p)\subseteq \sinbrace{L_1, \dots, L_p}$ and $P_i^{(n)} > p$ for all $n\geq i> p$.
        Let $n^\star\geq 1$ be any integer such that
        for every $i\in [p]$,
        either 
        \begin{enumerate}[(a)]
            \item $P_i^{(n)} = P_i^\infty\leq p$ for all $n\geq n^\star$, or
            \item $P_i^{(n)} > p$ for all $n\geq n^\star$.
        \end{enumerate}
        This is guaranteed to exist since there are only finitely many limits,
        each of which is monotonic.
        The result follows.
    \end{proof}
    Our goal will be to design a priority function for which the target language $L_{i^\star}$ has bounded priority $p = P_{i^\star}^\infty < \infty$
    so that $L_{i^\star}\in \cL(p)$.
    Recall the notation $\Cl(\cL') \coloneqq \cap_{L\in \cL'} L$ for a collection $\cL'$ of languages.
    If the stopping rule also ensures that the prefix class $\cH_n\coloneqq \sinbrace{L_{i_n(j)}: j\leq J_n}$
    eventually contains $\cL(p)$
    and $\card{\Cl(\cH_n)} = \infty$,
    then the algorithm generates from $\cL(p)\sset L_{i^\star}$.
    This observation is summarized below.

    \begin{corollary}[Sufficient Condition for Generation]\label{cor:meta-algorithm:generation}
        Suppose the assumptions of \Cref{lem:meta-algorithm:prefix-stabilizes} hold.
        If in addition the prefix class $\cH_n\coloneqq \sinbrace{L_{i_n(j)}: j\leq J_n}$
        satisfies $\cH_n\supseteq \cL(p)$
        and $\card{\Cl(\cH_n)} = \infty$
        for all sufficiently large $n$,
        then \Cref{alg:intersection-meta} generates from $\Cl(\cL(p))$.
    \end{corollary}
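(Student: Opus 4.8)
The plan is to read off the corollary essentially for free from \Cref{lem:meta-algorithm:prefix-stabilizes}, by unwinding what \Cref{alg:intersection-meta} actually outputs once the priority ranking has stabilized. No new combinatorial idea is needed; the work is purely bookkeeping, so I would keep the proof to three short steps.

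First I would instantiate \Cref{lem:meta-algorithm:prefix-stabilizes} with the given parameter $p$, obtaining a step $n_1$ such that for every $n\ge n_1$: every $L_i\in\cL(p)$ has priority $P_i^{(n)}\le p$ and has already reached its limiting priority, while every $L_i\notin\cL(p)$ has $P_i^{(n)}>p$. Since the ranking $i_n(1),i_n(2),\dots$ is by increasing priority (with ties broken by index), this means the top $\card{\cL(p)}$ slots of the ranking are occupied exactly by $\cL(p)$, in a fixed order, for all $n\ge n_1$ (recall $\cL(p)\sset\{L_1,\dots,L_p\}$ is finite, so this makes sense). Let $n_2$ be a step past which the two extra hypotheses hold, i.e.\ $\cH_n\supseteq\cL(p)$ and $\card{\Cl(\cH_n)}=\infty$, and set $n^\star\coloneqq\max\{n_1,n_2\}$.

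Next, I would fix any $n\ge n^\star$ and observe that the set-based output of \Cref{alg:intersection-meta} at step $n$ is by definition $\bigcap_{j\le J_n}L_{i_n(j)}=\Cl(\cH_n)$. Since $\cH_n\supseteq\cL(p)$ and intersecting over a larger family only shrinks the set, $\Cl(\cH_n)\sset\Cl(\cL(p))$; hence every set-based output after time $n^\star$ is contained in $\Cl(\cL(p))$. For the element-based output, $\Cl(\cH_n)$ is infinite while $S_n\cup W_{n-1}$ is finite, so $\Cl(\cH_n)\setminus(S_n\cup W_{n-1})\neq\emptyset$; thus $w_n$ is well defined, differs from all previously seen and previously generated strings, and lies in $\Cl(\cH_n)\sset\Cl(\cL(p))$. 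Finally, since $\Cl(\cH_n)$ is infinite so is $\Cl(\cL(p))$, so the target is a genuine (indeed infinite) language and the element-based generator keeps emitting new elements of it. This is exactly the assertion that \Cref{alg:intersection-meta} generates from $\Cl(\cL(p))$.

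There is no real obstacle here, and the only thing worth flagging is conceptual rather than technical: this corollary is deliberately an \emph{interface}. It reduces ``the algorithm generates from $\Cl(\cL(p))$'' to two separable design targets — engineering the priority function $P$ so the target's limiting priority $p=P_{i^\star}^{\infty}$ is finite (so that $L_{i^\star}\in\cL(p)$, hence $\Cl(\cL(p))\sset L_{i^\star}$ and the conclusion upgrades to generation in the limit from $K$), and engineering the stopping function $J$ so that $\cH_n\supseteq\cL(p)$ while keeping $\card{\Cl(\cH_n)}=\infty$. Checking that these two requirements are simultaneously satisfiable in each contamination regime is where the later sections do their real work; the proof of the corollary itself is just the bookkeeping above.
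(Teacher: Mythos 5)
Your proof is correct and follows essentially the same route as the paper, which states this corollary as an immediate observation: after the stabilization time from \Cref{lem:meta-algorithm:prefix-stabilizes}, the output $\Cl(\cH_n)$ is contained in $\Cl(\cL(p))$ because $\cH_n\supseteq\cL(p)$, and its infinitude guarantees a fresh element outside the finite set $S_n\cup W_{n-1}$. Your bookkeeping (and the remark on how the corollary serves as an interface for later instantiations) matches the paper's intent exactly.
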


    \paragraph{Instantiations of \Cref{alg:intersection-meta}.}
    In this paper, we make use of the meta algorithm \cref{alg:intersection-meta} to prove \cref{thm:vanishing-noise-generation}, \cref{thm:constant-noise-characterization}, \cref{thm:vanishing-noise-set-density-characterization}, \cref{thm:constant-noise-set-density-characterization}, and \cref{thm:improper:generation:inf-inf-set-density}. We now illustrate a few examples of instantiations of \Cref{alg:intersection-meta}.
    
    For our algorithms that generate in the limit under vanishing noise (\Cref{sec:generation:vanishing-noise}) and constant noise (\Cref{sec:generation:constant-noise}),
    we instantiate the priority $P_i^{(n)}$ of language $L_i$ at time step $n$ so that,
    roughly speaking,
    $P_i^{(n)}$ increases every time the empirical noise rate $R(L_i; x_{1:n})$ exceeds some threshold $c_i$.
    For a vanishing noise rate,
    we can set $c_i$ to be any positive sequence with small sum $\sum_i c_i < \eps$.
    This ensures that we prioritize languages that have very small empirical error
    and the total error is at most an $\eps$-fraction of the input enumeration.
    For a constant noise rate $c$,
    we set $c_i=c$ for all $i$,
    so that we prioritize languages whose nose rate eventually falls below $c$.
    The stopping condition in these two cases is identical to the \cite{charikar2024facets} algorithm.

    Our algorithms for generation with density under $M$-bounded enumerations with vanishing noise (\Cref{sec:beyond}) are similar to the algorithm for generation under vanishing noise with two key distinctions.
    In addition to incrementing the priority $P_i^{(n)}$ of language $L_i$ at step $n$ when the empirical noise rate exceeds some threshold,
    we also increment $P_i^{(n)}$ if the current input violates the $M$-boundedness condition with respect to $L_i$.
    This ensures we de-prioritize languages for which the input enumeration cannot be $M$-bounded.
    We also require a more involved stopping rule which ensures that the intersection we take is dense with respect to every language over which the intersection is taken over.
    This ensures that if the target language is included in the intersection,
    then we will always generate from some dense subset of the target language.

    \subsection{Finite Expansion Sub-Routine}
    \label{sec:templates:expansion}
    Next, we present a conceptually simple, but quite useful subroutine that we will use throughout, especially in settings where the adversary is restricted to finite contamination of the input. To show why this sub-routine is useful, we state a preliminary result in this section showing that it can be used to achieve generation in the limit, under finite contamination of the input stream. In subsequent sections we will illustrate more involved use-cases of this idea.
    
    \begin{algorithm}[bht!]
        \caption{Sub-Routine for Finite Expansion}
        \label{alg:finite-expansion-routine}
        \begin{algorithmic}[1]
        \Require Countable {collection} $\cL=\{L_1,L_2,\dots\}$
        \State  Output $\Tilde{\cL}  \coloneqq \{L_{A,B} \coloneqq L \cup A \setminus B : L \in \cL, A \subseteq U \setminus L, B \subseteq L, \abs{A} < \infty, \abs{B} < \infty\}\,.$
        \end{algorithmic}
    \end{algorithm}

    \begin{lemma}\label{lem:expansion-subroutine}
        Suppose $\generator$ is an element-based (set-based) generator that generates from arbitrary countable collection in the limit under enumeration without noise or omissions. Let $\cL$ be a countable collection, and $\Tilde{\cL}$ be the expanded countable collection constructed from \cref{alg:finite-expansion-routine}. Suppose $x_{1:\infty}$ is an enumeration of some target language $K \in \cL$ with finite noise and finite omissions . Then, if we apply $\generator$ to $\Tilde{\cL}$, it will generate from $K$ in the limit under the aforementioned enumeration. 
    \end{lemma}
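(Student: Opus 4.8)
The plan is to recognize the finitely-contaminated stream $x_{1:\infty}$ as a genuine, contamination-free enumeration of a nearby language $K'$ that already belongs to $\tilde\cL$, run the given generator $\generator$ on $\tilde\cL$, and then transfer ``generates from $K'$'' to ``generates from $K$'' up to a finite prefix. First I would verify that $\tilde\cL$ is a legitimate countable collection of languages: since $\cL$ and $U$ are countable and the finite subsets of a countable set form a countable family, the index set $\{(L,A,B)\}$ appearing in \cref{alg:finite-expansion-routine} is countable, so $\tilde\cL$ is countable; moreover every $L_{A,B}\supseteq L\setminus B$ is infinite because $L$ is infinite and $B$ is finite, so each $L_{A,B}$ is a valid language and the hypothesis on $\generator$ (which generates from every countable collection in the limit under clean enumerations) applies to $\tilde\cL$.

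Next I would pin down $K'$. Write $x_{1:\infty}$ as an enumeration with finite noise of some $\hat K\subseteq K$ with $|K\setminus\hat K|<\infty$ (the finite-omission assumption), let $N\coloneqq\{x_n:n\in\N\}\setminus\hat K$ be the finite set of noisy strings (the finite-noise assumption), and set $K'\coloneqq\{x_n:n\in\N\}=\hat K\cup N$. Then $K'\setminus K=N\setminus K$ and $K\setminus K'\subseteq K\setminus\hat K$ are both finite, so $|K'\triangle K|<\infty$, and $K'\supseteq\hat K$ is infinite; by our convention that enumerations contain no repeated elements (cf.\ \cref{apx:repetitions}), $x_{1:\infty}$ is an ordinary enumeration of $K'$ with neither noise nor omissions. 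Taking $L=K$, $A=K'\setminus K\subseteq U\setminus K$, and $B=K\setminus K'\subseteq K$ --- all finite --- an elementary set computation gives $K\cup A\setminus B=K'$, so $K'=L_{A,B}\in\tilde\cL$. Applying the hypothesis to the countable collection $\tilde\cL$ and the clean enumeration $x_{1:\infty}$ of $K'\in\tilde\cL$, there is a finite time $n^\star$ after which every output of $\generator$ lies in $K'$ --- an element $w_n\in K'$ in the element-based case, a set $A_n\subseteq K'$ in the set-based case.

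Finally I would transfer this to $K$. The finite set $F\coloneqq K'\setminus K$ consists entirely of strings that appear in the stream, so there is a time $n_1$ with $F\subseteq\{x_1,\dots,x_{n_1}\}$; for every $n\geq\max(n^\star,n_1)$ the output of $\generator$ is disjoint from $\{x_1,\dots,x_n\}\supseteq F$ (both element-based and set-based outputs are required to avoid already-revealed strings) and is contained in $K'$, hence is contained in $K'\setminus F=K'\cap K\subseteq K$; so $\generator$ generates from $K$ in the limit, as claimed. I do not expect a serious obstacle here; the only step worth handling carefully is this last transfer. The naive argument ``$K'$ and $K$ differ by finitely many elements, so the generator eventually stops landing in $K'\setminus K$'' is clean for element-based generators because their outputs are distinct, but is not obviously valid for set-based generators; routing the argument through the ``outputs avoid revealed strings'' constraint treats both cases uniformly, using precisely that each string of $K'\setminus K$, being noise, is revealed at some finite time.
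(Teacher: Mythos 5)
Your proposal is correct and follows essentially the same route as the paper: reinterpret the finitely contaminated stream as a clean enumeration of $K_{A,B}=K\cup A\setminus B\in\Tilde{\cL}$, invoke the hypothesis on $\generator$ applied to $\Tilde{\cL}$, and then discard the finite set $K'\setminus K$ after a finite time. The only (harmless) difference is in the element-based case, where the paper argues that the distinct fresh outputs eventually exhaust the finite noise set, while you uniformly use that noisy strings are eventually revealed and outputs must avoid revealed strings — both transfers are valid.
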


    \begin{proof}
        We start with element-based generators. Notice that, since $x_{1:\infty}$ contains finitely many noisy examples
        and omits finitely many elements from $K$, it corresponds to an enumeration of 
        $K_{A,B} \in \Tilde{\cL}$ \emph{without noise or omissions}, where $A$ is the set of noisy elements and $B$ is the set of omitted elements. Thus, $\generator$
        executed on $x_{1:\infty}$ and $\Tilde{\cL}$ generates in the limit from $K_{A,B}$. Hence, there exists some $n^* \in \N$ such that for all $n \geq n^*$ its output is a fresh element of $K_{A,B}.$ Next, notice that since $\abs{A} < \infty$ it must be the case that there is some $n'$ such that for all $n \geq n'$ the output of the algorithm is a fresh element of $K_{A,B} \setminus A.$ Since $(K_{A,B} \setminus A) \subseteq K$ we have shown that $\generator$ generates $K$ in the limit.

        Similarly, if $\generator$ is a set-based generator that generates in the limit from $K_{A,B}$, there exists some $n^\star \in \N$ such that for all $n \ge n^\star$, the output set $A_n$ is a subset of $K_{A,B}$. Again, since $\abs{A} < \infty$ and $x_{1:\infty}$ is an enumeration of $K_{A,B} = K \cup A \setminus B$ without noise or omissions, there exists some $n'$ such that for all $n \geq n'$, we have $K_{A,B} \setminus \inbrace{x_1, \dots, x_n} \subseteq K_{A,B} \setminus A \subseteq K$. Since the output set satisfies $A_n \subseteq K_{A,B} \setminus \inbrace{x_1, \dots, x_n} \subseteq K$ for all $n \ge \max\inbrace{n^\star, n'}$, $\generator$  generates $K$ in the limit.
    \end{proof}
    To illustrate the usefulness of this sub-routine and the above result, we get as an immediate corollary that
    there exists an algorithm that generates in the limit under finite contamination, using only \emph{membership oracle access}
    to the underlying language collection $\cL.$ This resolves in the affirmative the open question of \citet{raman2025noisy}, who asked for generators with membership oracle access to $\cL$ and finite noise (and without omissions). First, we state 
    a result from \citet{kleinberg2024language} that we utilize in our proof.

    \begin{lemma}[Generation with Membership Oracle Access under Uncontaminated Input \citep{kleinberg2024language}]\label{lem:km-membership-algo}
        There is an algorithm that generates in the limit for every countable collection $\cL$ with only membership oracle 
        access to $\cL$ when the adversarial stream does not include any contaminated examples.
    \end{lemma}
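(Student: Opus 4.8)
The plan is to recall the \citet{kleinberg2024language} generator — concretely, the \citet{charikar2024facets} algorithm recapped above, which is the instantiation of \Cref{alg:intersection-meta} with priority $P_i^{(n)} = i$ when $L_i$ is consistent with $x_{1:n}$ (i.e.\ $\{x_1,\dots,x_n\}\subseteq L_i$) and $P_i^{(n)} = \infty$ otherwise — and to observe that the only step that is not transparently a finite computation with a membership oracle is the choice of stopping index $J_n$, which in the original algorithm is ``the largest $j$ for which $\bigcap_{k\le j} L_{i_n(k)}$ is infinite'' and hence a priori needs an infinitude/intersection oracle. So the entire content of the lemma is to replace this one test by an equivalent-for-our-purposes bounded search. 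Everything else — checking consistency of $L_i$ with $x_{1:n}$ is $n$ membership queries per language; re-ordering $L_1,\dots,L_n$ by priority is then immediate — is clearly computable from the membership oracle.

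The replacement I would use: at step $n$, first compute, for each consistent $L_i$ with $i\le n$, the canonical index of its $2n$-th element (a finite search through $U$ answered by membership queries), and let $M_n$ be the maximum of these indices. Then set $\hat J_n$ to be the largest $j$ for which $\bigcap_{k\le j} L_{i_n(k)}$ contains at least $2n$ of the first $M_n$ elements of $U$ in canonical order — again a finite search using only membership queries. Finally output the smallest element of $\bigcap_{k\le \hat J_n} L_{i_n(k)}$ lying among the first $M_n$ elements of $U$ that has not been seen or output before; since that intersection contains $\ge 2n$ such elements and at most $2n-1$ are forbidden, one exists, and every operation is a finite membership-oracle computation.

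For correctness I would invoke the paper's own machinery. Write $z^\star$ for the index of the target $K$; since $K$ is always consistent, $P_{z^\star}^{(n)} = z^\star$ for all $n$, so \Cref{lem:meta-algorithm:prefix-stabilizes} applied with $p=z^\star$ gives that for all large $n$ the top-priority languages are exactly $\cL(z^\star)$, which is $K$ together with finitely many languages of index $<z^\star$ that remain consistent forever and are therefore supersets of $K$, so $\Cl(\cL(z^\star)) = K$. Because $K$ is among the consistent languages at step $n\ge z^\star$, the bound $M_n$ is at least the canonical index of the $2n$-th element of $K$; hence the prefix of length $|\cL(z^\star)|$, which for large $n$ equals $\cL(z^\star)$, has intersection $K$ containing $\ge 2n$ of the first $M_n$ elements, forcing $\hat J_n\ge |\cL(z^\star)|$ and $\{L_{i_n(k)}: k\le \hat J_n\}\supseteq \cL(z^\star)$. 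Therefore $\bigcap_{k\le\hat J_n}L_{i_n(k)}\subseteq \Cl(\cL(z^\star)) = K$, so the output is always an unseen element of $K$ for $n$ large — generation in the limit, exactly as in \Cref{cor:meta-algorithm:generation} with ``has a fresh element'' in place of ``is infinite'' in the stopping guarantee.

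The main obstacle is precisely the point above: a membership oracle cannot certify that a language (or an intersection) is infinite, so one must (a) design a finite surrogate for the stopping rule that still forces $\hat J_n$ large enough to capture all of $\cL(z^\star)$, and (b) re-verify that the stabilization argument underlying \Cref{cor:meta-algorithm:generation} survives the surrogate. The interaction between the choice of $M_n$ (large enough that $K$'s elements are ``visible'' at step $n$) and the counting threshold $2n$ (large enough to beat the at most $2n-1$ forbidden strings) is the only delicate bookkeeping; the remaining ingredients are routine.
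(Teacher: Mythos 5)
Your argument is correct, but it takes a genuinely different route from the paper: the paper does not prove \cref{lem:km-membership-algo} at all — it imports it wholesale from \citep{kleinberg2024language}, whose generator (based on tracking a chain of consistent ``critical'' languages) is what is implemented there with membership queries. You instead re-derive the statement from the paper's own machinery: you take the priority-based intersection template (\cref{alg:intersection-meta} with the \citep{charikar2024facets} priorities) and replace the only non-membership-checkable step, the infinitude test defining $J_n$, by a finite surrogate — compute $M_n$ as the largest canonical index of the $2n$-th element over consistent languages, and stop at the largest prefix whose intersection still contains at least $2n$ of the first $M_n$ universe elements. The verification is sound: the priorities are non-decreasing and bounded below by the index, so \cref{lem:meta-algorithm:prefix-stabilizes} applies with $p=z^\star$; $\cL(z^\star)$ consists of $K$ and supersets of $K$ (here you use that the stream has no noise and no omissions), so $\Cl(\cL(z^\star))=K$; since $K$ is consistent, $M_n$ makes at least $2n$ elements of $K$ visible, forcing $\hat J_n\geq\card{\cL(z^\star)}$, and monotonicity of the intersection means any longer prefix only shrinks the output set inside $K$, while the $2n$-versus-$(2n-1)$ count guarantees a fresh element exists; every step uses only membership queries and terminates because each $L_i$ is infinite. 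The only (harmless) omission is the edge case where no language among $L_1,\dots,L_n$ is yet consistent, where one can output an arbitrary unseen string. What the two routes buy: the paper's citation is shorter and defers to the known result, whereas your argument is self-contained and shows that the paper's own intersection template admits a membership-oracle implementation in the uncontaminated setting — which plugs directly into the finite-expansion reduction behind \cref{cor:generation-membership-oracle-finite-omission}, since membership access to $\tilde{\cL}$ is obtained there exactly as the paper describes.
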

    We are now ready to state and prove our result.

    \begin{corollary}[Generation with Membership Oracles under Finite Omissions]\label{cor:generation-membership-oracle-finite-omission}
        There is an algorithm that generates in the limit given only membership oracle access to $\cL$ under adversarial streams that contain finite amount of contamination.
    \end{corollary}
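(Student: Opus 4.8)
The plan is to compose the two tools just developed: the finite-expansion subroutine (\cref{alg:finite-expansion-routine}, \cref{lem:expansion-subroutine}) and the uncontaminated membership-oracle generator (\cref{lem:km-membership-algo}). Given a countable collection $\cL=\{L_1,L_2,\dots\}$ with only membership oracle access, I would form the expanded collection $\tilde{\cL}$ of \cref{alg:finite-expansion-routine} and run the generator of \cref{lem:km-membership-algo} on $\tilde{\cL}$. If the adversarial stream $E$ is an enumeration of some $K\in\cL$ with $E\triangle K$ finite, then, writing $A\coloneqq E\setminus K$ and $B\coloneqq K\setminus E$ (both finite), $E$ is a noise- and omission-free enumeration of $K_{A,B}=K\cup A\setminus B\in\tilde{\cL}$; hence by \cref{lem:expansion-subroutine} the generator generates from $K_{A,B}$, and therefore from $K$, in the limit. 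The only thing that needs checking is that invoking \cref{lem:km-membership-algo} on $\tilde{\cL}$ is legitimate given only a membership oracle for $\cL$ — that is, that $\tilde{\cL}$ admits an effective indexing and that a membership oracle for $\tilde{\cL}$ can be simulated from the one for $\cL$.

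For the indexing, I would fix once and for all a computable bijection between $\N$ and the set of triples $(i,A,B)$ with $i\in\N$ and $A,B\subseteq U$ finite (such a bijection exists because $U$ is countable and the finite subsets of a countable set form a countable set). To define the $j$-th language of $\tilde{\cL}$: decode $j$ as $(i,A,B)$; using finitely many membership queries, test whether $A\subseteq U\setminus L_i$ and $B\subseteq L_i$; if both hold, let the $j$-th language be $L_{A,B}=L_i\cup A\setminus B$, and otherwise let it be $L_i$. Every member of $\tilde{\cL}$ arises from some valid triple, so this is a correct surjective enumeration of $\tilde{\cL}$, and the bookkeeping duplicates of $L_i$ introduced by invalid triples do not change the collection.

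For the oracle simulation, given an index $j$ and a string $w$: decode $j$ as $(i,A,B)$; if the triple is valid, return $\mathds{1}[\,w\in A \text{ or } (w\in L_i \text{ and } w\notin B)\,]$, which equals $\mathds{1}[w\in L_{A,B}]$ and uses one membership query to $\cL$ (for $w\in L_i$) together with direct tests against the finite sets $A$ and $B$; if the triple is invalid, return $\mathds{1}[w\in L_i]$ via one query. Feeding this simulated oracle into the algorithm of \cref{lem:km-membership-algo} and then applying \cref{lem:expansion-subroutine} yields a generator that is computable from a membership oracle for $\cL$ and generates in the limit under finite contamination, as claimed.

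I do not anticipate a genuine obstacle here: the corollary is a direct composition of \cref{lem:expansion-subroutine} and \cref{lem:km-membership-algo}, and the single point requiring care — that \cref{lem:km-membership-algo} may be invoked on $\tilde{\cL}$ using only membership access to $\cL$ — is precisely what the indexing-and-simulation scheme above supplies. If anything is delicate, it is the purely clerical choice of how to index $\tilde{\cL}$ so that the indexing stays total and oracle-computable, which the duplicate convention handles.
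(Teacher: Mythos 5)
Your proposal is correct and follows essentially the same route as the paper: expand $\cL$ to $\tilde{\cL}$ via \cref{alg:finite-expansion-routine}, simulate a membership oracle for $\tilde{\cL}$ from the one for $\cL$ (since each $L_{A,B}=L\cup A\setminus B$ is decided by one query to $L$ plus lookups in the explicit finite sets $A,B$), run the generator of \cref{lem:km-membership-algo} on $\tilde{\cL}$, and conclude with \cref{lem:expansion-subroutine}. The extra care you take with indexing $\tilde{\cL}$ (the triple encoding and the duplicate convention for invalid triples) is a harmless clerical refinement of the same argument.
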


    \begin{proof}
        We will instantiate \cref{alg:finite-expansion-routine} with the algorithm of \citet{kleinberg2024language}, stated in \cref{lem:km-membership-algo}. Notice that if we have membership oracle access to $\cL$, we can implement membership oracle access to $\Tilde{\cL}$, by keeping track of the way we enumerate languages in $\Tilde{\cL}.$ In other words, if $\tilde L_i$ is the $i$-th language of $\Tilde{\cL}$ we can maintain a mapping to $L_{j,A,B} \in \cL,$ for some $j, A, B.$ Then, the query ``is $u_\ell \in \tilde L_i$'' is answered by checking if $u_\ell \in L_j \cup A \setminus B;$ since we have membership access to $L_j$, it suffices to check if $u_\ell \in L_j$ and then, since we have explicit description of the (finite) sets $A, B$, we can check if $u_\ell \in A, u_\ell \in B.$ Thus, since we can implement a membership oracle for $\Tilde{\cL}$ we can run the algorithm from \cref{lem:km-membership-algo} on it. Then, \cref{lem:expansion-subroutine} guarantees that this algorithm achieves generation in the limit.
    \end{proof}
    
    \noindent Later in the paper, we will also use the finite expansion sub-routine in \cref{alg:finite-expansion-routine} to prove \cref{thm:element-based-density-finite-noise-omissions}, a result about element-based density under finite noise and omissions.
    
    We will also consider a slightly different subroutine that only expands by adding a finite set to the existing languages, as opposed to the one in \cref{alg:finite-expansion-routine} that allows both addition of finitely many elements and deletion of finitely many elements.

    \begin{algorithm}[bht!]
        \caption{Alternative Sub-Routine for Finite Expansion}
        \label{alg:finite-expansion-routine-addition}
        \begin{algorithmic}[1]
        \Require Countable {collection} $\cL=\{L_1,L_2,\dots\}$
        \State  Output $\Tilde{\cL}  \coloneqq \{L_{A} \coloneqq L \cup A : L \in \cL, A \subseteq U \setminus L, \abs{A} < \infty\}\,.$
        \end{algorithmic}
    \end{algorithm}

    \begin{lemma}\label{lem:expansion-subroutine-addition} Suppose $\generator$ is an element-based (set-based) generator that generates from arbitrary countable collection in the limit under enumeration without noise and with potentially infinite omissions. Let $\cL$ be a countable collection, and $\Tilde{\cL}$ be the expanded countable collection constructed from \cref{alg:finite-expansion-routine-addition}. Suppose $x_{1:\infty}$ is an enumeration of some target language $K \in \cL$ with finite noise and potentially infinite omissions. Then, if we apply $\generator$ to $\Tilde{\cL}$, it will generate from $K$ in the limit under the aforementioned enumeration. 
    \end{lemma}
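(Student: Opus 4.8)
The plan is to mirror the proof of \Cref{lem:expansion-subroutine}; the one new point is that with infinitely many omissions we can no longer absorb the omitted elements into a finite deletion set, so instead we leave them as (infinitely many) omissions from an enlarged target and invoke the assumed robustness of $\generator$ to arbitrary omissions. Concretely, I would let $A \coloneqq \sinbrace{x_1, x_2, \dots} \setminus K$ be the set of noisy elements in the stream; by hypothesis $\abs{A} < \infty$, and $A \cap K = \emptyset$ by definition of noise. Put $K_A \coloneqq K \cup A$, which lies in $\Tilde{\cL}$ by construction of \cref{alg:finite-expansion-routine-addition}; note $\Tilde{\cL}$ is countable since there are only countably many finite subsets of the countable universe $U$. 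Every element shown by the adversary lies in $K \cup A = K_A$, and the elements of $K_A$ not shown are exactly the omitted elements of $K$ (all of $A$ is shown). Hence $x_{1:\infty}$ is an enumeration of $K_A$ with \emph{no noise} and (possibly infinitely many) omissions, so by hypothesis $\generator$, run on $\Tilde{\cL}$, generates from $K_A$ in the limit on this enumeration.

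Next I would upgrade ``generates from $K_A$'' to ``generates from $K$'', using that $A$ is finite and consists of stream elements, so there is a finite time $m$ with $A \subseteq S_n$ for every $n \geq m$. For an element-based $\generator$: fixing $n^\star$ with $w_n \in K_A$ for all $n \geq n^\star$ and using the freshness constraint $w_n \notin S_n$, one obtains $w_n \in K_A \setminus A = K$ for all $n \geq \max\sinbrace{n^\star, m}$ (since $A \cap K = \emptyset$). For a set-based $\generator$: fixing $n^\star$ with $A_n \subseteq K_A$ for all $n \geq n^\star$ and using $A_n \subseteq U \setminus S_n$ together with $A \subseteq S_n$ for $n \geq m$, one gets $A_n \cap A = \emptyset$, hence $A_n \subseteq K_A \setminus A = K$ for all $n \geq \max\sinbrace{n^\star, m}$. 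Either way $\generator$ generates from $K$ in the limit, as required.

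There is no substantive obstacle here — the lemma is a soft reduction — but two points deserve a line of justification. First, why the ``addition-only'' expansion of \cref{alg:finite-expansion-routine-addition} is the right tool: unlike the finite-contamination setting of \Cref{lem:expansion-subroutine}, deleting a finite set from some $L \in \cL$ would not help, because the surviving omissions are genuinely infinite and arbitrary, so we simply carry them as omissions and lean on $\generator$ tolerating arbitrary omissions. Second, why the finite noise set $A$ disappears from the outputs: this is forced by the freshness requirement in the definitions of element-based and set-based generators together with the fact that $A$ is flushed into $S_n$ after finitely many rounds. Both points are immediate once spelled out, so the only care needed is in setting up $K_A$ and checking the enumeration hypothesis cleanly.
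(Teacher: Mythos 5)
Your proposal is correct and follows essentially the same route as the paper's proof: reduce to the language $K_A = K\cup A$ in $\Tilde{\cL}$, observe the stream is a noiseless (possibly infinitely omitting) enumeration of $K_A$, and then use finiteness of $A$ together with the freshness/unseen constraints on the outputs to conclude that the outputs eventually lie in $K_A\setminus A = K$. The only cosmetic difference is that you make explicit the use of $A\subseteq S_n$ for large $n$ in the element-based case, which the paper leaves slightly terser; the argument is the same.
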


    \begin{proof}
        The proof is almost identical to that of \cref{lem:expansion-subroutine}.

        We start with element-based generators. Notice that, since $x_{1:\infty}$ contains finitely many noisy examples
        outside of $K$, it corresponds to an enumeration of 
        $K_{A} \in \Tilde{\cL}$ \emph{without noise} and with potentially infinite omissions, where $A$ is the set of noisy elements and every element of $A$ is enumerated in $x_{1:\infty}$. Thus, $\generator$
        executed on $x_{1:\infty}$ and $\Tilde{\cL}$ generates in the limit from $K_{A}$. Hence, there exists some $n^* \in \N$ such that for all $n \geq n^*$ its output is a fresh element of $K_{A}.$ Next, notice that since $\abs{A} < \infty$ it must be the case that there is some $n'$ such that for all $n \geq n'$ the output of the algorithm is a fresh element of $K_{A} \setminus A = K$. This shows that $\generator$  generates $K$ in the limit.

        Similarly, if $\generator$ is a set-based generator that generates in the limit from $K_{A}$, there exists some $n^\star \in \N$ such that for all $n \ge n^\star$, the output set $A_n$ is a subset of $K_{A}$. Again, since $\abs{A} < \infty$ and every element of $A$ is enumerated in $x_{1:\infty}$, there exists some $n'$ such that for all $n \geq n'$, we have $K_{A} \setminus \inbrace{x_1, \dots, x_n} \subseteq K_{A} \setminus A =  K$. Since the output set satisfies $A_n \subseteq K_{A} \setminus \inbrace{x_1, \dots, x_n} \subseteq K$ for all $n \ge \max\inbrace{n^\star, n'}$, $\generator$  generates $K$ in the limit.
        
    \end{proof}
    We will later utilize the above sub-routine and \cref{lem:expansion-subroutine-addition} to show a result similar to the upper density guarantee for indexed-based generation under enumeration without noise or omissions proved by \citep{kleinberg2025density}. In \cref{thm:finite-noise-upper-density}, we will show that under finite noise and infinite omissions, there exists set-based generator that achieves the best possible set-based upper density.

\section{Generation in the Limit under Contamination}
\label{sec:generation}
    In this section, we provide several results in the model where the adversary can contaminate the input enumeration with noise and omissions.

\subsection{Generation with Vanishing Noise Rate and Arbitrary Omissions}\label{sec:generation:vanishing-noise}
    Our first result shows that under the mild assumption that the noise rate of the enumeration converges to zero, all countable collections become generable. 
    
    We remark that even though the noise rate converges to zero, the amount of noisy examples the adversary can add to the enumeration is still infinite. 
    Hence, this result significantly strengthens the result of \cite{raman2025noisy}, which only showed that all collections remain generable when the adversary introduces a \textit{finite} number of noisy examples. 

    \begin{theorem}\label{thm:vanishing-noise-generation}
        There is a generator $\generator$ that, 
        for any collection $\cL$, 
        target language $K\in \cL$, 
        given an enumeration of $K$ with $o(1)$-noise and arbitrary omissions,
        {$\generator$} generates in the limit from $K$. 
    \end{theorem}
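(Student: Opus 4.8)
The plan is to instantiate the priority-based intersection meta-algorithm (\cref{alg:intersection-meta}) with a noise-penalizing priority function and the \cite{charikar2024facets} stopping rule. Fix thresholds $c_i \coloneqq 2^{-i-1}$, so $c_i > 0$ and $\sum_{i\ge 1} c_i < 1$. Define the priority of $L_i$ by $P_i^{(n)} \coloneqq i$ for $n<i$ and $P_i^{(n)} \coloneqq i + \abs{\{i\le m\le n : R(L_i;x_{1:m}) > c_i\}}$ for $n\ge i$; that is, whenever $L_i$'s empirical noise rate exceeds $c_i$ we demote it by one. This is non-decreasing in $n$ and at least $i$, so \cref{lem:meta-algorithm:prefix-stabilizes} applies. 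The stopping index is $J_n \coloneqq \max\{J : \abs{\bigcap_{j\le J} L_{i_n(j)}} = \infty\}$, and the output is an unseen element of $\bigcap_{j\le J_n} L_{i_n(j)}$.

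First I would check that the target $K = L_{i^\star}$ has finite limiting priority: since the enumeration has $o(1)$-noise, $R(L_{i^\star};x_{1:n}) = R(K;x_{1:n}) \to 0$, so eventually it stays below the fixed constant $c_{i^\star}$ and the priority of $L_{i^\star}$ stabilizes. Hence $p \coloneqq P_{i^\star}^\infty < \infty$, so $L_{i^\star} \in \cL(p)$ and consequently $\Cl(\cL(p)) \subseteq L_{i^\star} = K$ — every string the algorithm will eventually produce from $\Cl(\cL(p))$ is a genuine element of $K$.

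The technical heart is showing $\abs{\Cl(\cL(p))} = \infty$. By \cref{lem:meta-algorithm:prefix-stabilizes}, $\cL(p) \subseteq \{L_1,\dots,L_p\}$ is finite and every $L_i\in\cL(p)$ has $R(L_i;x_{1:n}) \le c_i$ for all large $n$ (otherwise its priority diverges). Past the finitely many relevant times, a union bound gives $\abs{\{t\le n: x_t\notin\Cl(\cL(p))\}} \le \sum_{L_i\in\cL(p)}\abs{\{t\le n: x_t\notin L_i\}} \le n\sum_i c_i < n$, so at least $n(1-\sum_i c_i)$ of the first $n$ stream elements lie in $\Cl(\cL(p))$; since $\Cl(\cL(p)) \subseteq K$ these are distinct elements of $K$, so $\Cl(\cL(p))$ is infinite. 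Note this argument uses only that the thresholds sum to less than $1$ and says nothing about how sparse the enumerated subset of $K$ is, which is exactly why arbitrary omissions are harmless; the $o(1)$-noise assumption is used only to keep the target's priority finite.

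Finally I would invoke \cref{cor:meta-algorithm:generation}. For $n$ large, \cref{lem:meta-algorithm:prefix-stabilizes} puts all of $\cL(p)$ ahead of every other language and freezes their order, so the first $\abs{\cL(p)}$ ranked languages are precisely those of $\cL(p)$; their intersection is $\Cl(\cL(p))$, which is infinite, forcing $J_n \ge \abs{\cL(p)}$ and hence $\cH_n \supseteq \cL(p)$, while the stopping rule makes $\abs{\Cl(\cH_n)} = \infty$. \cref{cor:meta-algorithm:generation} then gives that the algorithm generates from $\Cl(\cL(p)) \subseteq K$; as $\Cl(\cH_n)$ is infinite and $S_n\cup W_{n-1}$ is finite, the element-based variant always has a fresh element of $K$ to emit. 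The only real obstacle is designing the summable threshold sequence so the union bound in the third paragraph survives — this is precisely what neutralizes the ``fluctuating empirical noise rate'' pathology described in the technical overview; everything else is bookkeeping on top of \cref{lem:meta-algorithm:prefix-stabilizes} and \cref{cor:meta-algorithm:generation}.
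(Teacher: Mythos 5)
Your proposal is correct and follows essentially the same route as the paper: Algorithm~\ref{alg:intersection} with thresholds $c_i=2^{-(i+1)}$, finiteness of the target's limiting priority from the $o(1)$-noise assumption, the union-bound counting argument showing $\card{\Cl(\cL(p))}=\infty$, and then \cref{lem:meta-algorithm:prefix-stabilizes} together with \cref{cor:meta-algorithm:generation} (packaged in the paper as \cref{lem:constant-vanishing-noise:sufficient-generation}). The only difference is cosmetic: you penalize a language by counting threshold violations rather than using the paper's ``time since last violation'' quantity $N_i^{(n)}$, but both priorities are non-decreasing, bounded below by $i$, and diverge exactly when $R(L_i;x_{1:n})>c_i$ infinitely often, which is all the argument needs.
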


    \paragraph{Pseudocode.}
    The algorithm for \Cref{thm:vanishing-noise-generation} is \Cref{alg:intersection} instantiated with threshold parameters $c_i\coloneqq \frac{1}{2^{i+1}}$.

    We can view our algorithm as a sub-template of our generic meta-algorithm \Cref{alg:intersection-meta},
    where the priorities are defined by thresholds
    and the stopping rule is based on the size of the intersection of the languages in the ordered prefix.
    The changes from \Cref{alg:intersection-meta} are highlighted in blue.
    
    \begin{algorithm}[tbh!]
        \caption{Algorithm for \Cref{thm:constant-noise-characterization,thm:vanishing-noise-generation}}
        \label{alg:intersection}
        \begin{algorithmic}[1]
        \Require Countable  collection  $\cL=\{L_1,L_2,\dots\}$;  thresholds $c_1, c_2,\ldots\in (0, 1)$; enumeration $x_{1:\infty}$
        \State Let $S_n \gets \sinbrace{x_1, \dots, x_n}$ be the set of examples seen in the first $n$ steps
        \State Let $W_{n-1} \gets \sinbrace{w_1, \dots, w_{n-1}}$ be the set of strings output before the $n$-th step
        \vspace{2mm}
        \For{$i=1, 2, \dots, n$}
            \State {\color{blue} Compute the smallest $N_i^{(n)}$ such that $L_i$ is consistent with $x_{1:m}$ for each $N_i^{(n)}\leq m\leq n$
            \[
                N_i^{(n)} \gets
                \begin{cases}
                    \min\inbrace{N\geq 1: \forall m\in \sinbrace{N, \dots, n}, R(L_i; x_{1:m})\leq c_i}\,, &R(L_i; x_{1:n})\leq c_i\,, \\
                    n+1\,, &\text{else}\,.
                \end{cases}
            \]}
            \State {\color{blue} Assign the language $L_i$ a priority of $P_i^{(n)}\gets i + N_i^{(n)}$}
        \EndFor
        \vspace{2mm}
        \State Re-order $\inbrace{L_1, \dots, L_n}$ in increasing priority, tie-breaking by index,
        as {$\sinbrace{L_{i_n(1)}, \dots, L_{i_n(n)}}$,
        \ie{}, for each $j\in [n-1]$,
        ensure either $P_{i_n(j)}^{(n)} < P_{i_{n}(j+1)}^{(n)}$
        or $P_{i_n(j)}^{(n)} = P_{i_{n}(j+1)}^{(n)}$ and $i_n(j) < i_{n}(j+1)$}.
        \State {\color{blue} Compute the largest index $J_n$ such that the intersection in the re-ordering up to $L_{i_n(J_n)}$ is infinite 
        \[
            J_n \gets \sup\inbrace{\bar j\geq 1: \card*{\bigcap\nolimits_{j=1}^{\bar j} L_{i_n(j)}} = \infty}\,.
        \]}
        \State Output any $w_n\in \bigcap_{j\leq J_n} L_{i_n(j)} \setminus (S_n\cup W_{n-1})$.
        \end{algorithmic}
    \end{algorithm}

    \paragraph{Analysis.}
    \Cref{alg:intersection} generates under both vanishing noise, 
    omissions,
    or constant noise (assuming the $c$-constant noise generation property holds)
    when instantiated with an appropriate choice of thresholds.
    
    Before presenting a useful lemma,
    recall the following notation related to the priorities from our generic meta-algorithm \Cref{alg:intersection-meta}.
    For a fixed $p\geq 1$,
    \begin{align*}
        P_i^\infty &\coloneqq \lim_{n\to \infty} P_i^{(n)}\,, 
        \qquad
        \cL(p) \coloneqq \sinbrace{L_i: P_i^{\infty}\leq p}\,, 
        \qquadand
        \Cl(\cL(p)) \coloneqq \bigcap_{L\in \cL(p)} L\,.
    \end{align*}
    \begin{lemma}[Sufficient Condition to Generate using \Cref{alg:intersection}]\label{lem:constant-vanishing-noise:sufficient-generation}
        Consider executing \Cref{alg:intersection} with some arbitrary choice of thresholds $c_i$.
        Fix $p\geq 1$ and suppose that the set $\cL(p)$ satisfies $\card{\Cl(\cL(p))} = \infty$.
        Then \Cref{alg:intersection} generates from $\Cl(\cL(p))$ in the limit.
    \end{lemma}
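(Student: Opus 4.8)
The plan is to recognize \Cref{alg:intersection} as the generic meta-algorithm \Cref{alg:intersection-meta} instantiated with the priority function $P(L_i,x_{1:n}) = i + N_i^{(n)}$ and the stopping function $J_n = \sup\sinbrace{\bar j\geq 1: \card*{\bigcap_{j=1}^{\bar j} L_{i_n(j)}} = \infty}$, and then to apply \Cref{cor:meta-algorithm:generation}. To do so I must verify the hypotheses of \Cref{lem:meta-algorithm:prefix-stabilizes} (monotone, index-bounded priorities) and then the two additional hypotheses of \Cref{cor:meta-algorithm:generation} for the specific $p$ in the statement.

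First I would check that the priorities $P_i^{(n)}$ are non-decreasing in $n$ and satisfy $P_i^{(n)}\geq i$. The lower bound is immediate because $N_i^{(n)}\geq 1$ in both branches of its definition, so $P_i^{(n)} = i + N_i^{(n)} > i$. Monotonicity reduces to showing $N_i^{(n)}$ is non-decreasing, which I would establish by a short case analysis on whether $R(L_i;x_{1:n})\leq c_i$ and whether $R(L_i;x_{1:n+1})\leq c_i$: if both hold, the set of valid $N$ defining $N_i^{(n+1)}$ is contained in the one defining $N_i^{(n)}$ (the constraint is now imposed over a larger range of $m$), so its minimum can only increase; if $R(L_i;x_{1:n})\leq c_i$ but $R(L_i;x_{1:n+1})> c_i$, then $N_i^{(n+1)} = n+2 > n\geq N_i^{(n)}$; and if $R(L_i;x_{1:n})> c_i$, then $N_i^{(n)} = n+1$ while $N_i^{(n+1)}\in\sinbrace{n+1,n+2}$. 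This case analysis is the only genuinely fiddly step.

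Next I would invoke \Cref{lem:meta-algorithm:prefix-stabilizes} with the given $p$: it yields a step $n_0$ beyond which every $L_i\in\cL(p)$ has a stabilized priority $P_i^{(n)}\leq p$ and every $L_i\notin\cL(p)$ with $i\leq n$ has $P_i^{(n)}>p$. Since $P_i^{(n)}\geq i$ forces $\cL(p)\sset\sinbrace{L_1,\dots,L_p}$, the collection $\cL(p)$ is finite, say of size $q\leq p$. For all $n\geq\max\sinbrace{n_0,p}$, all $q$ members of $\cL(p)$ occur among $L_1,\dots,L_n$, and because the priority gap between $\cL(p)$ and its complement is \emph{strict}, no cross-boundary ties arise and the first $q$ entries $L_{i_n(1)},\dots,L_{i_n(q)}$ of the re-ordering are exactly the languages of $\cL(p)$. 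Hence $\bigcap_{j=1}^{q} L_{i_n(j)} = \Cl(\cL(p))$, which is infinite by hypothesis, so $q$ lies in the index set defining $J_n$ and therefore $J_n\geq q$; moreover, since that index set is downward closed (a sub-intersection of an infinite intersection is infinite), $\Cl(\cH_n) = \bigcap_{j\leq J_n}L_{i_n(j)}$ is infinite. Thus the prefix class $\cH_n = \sinbrace{L_{i_n(j)}:j\leq J_n}$ satisfies $\cH_n\supseteq\cL(p)$ and $\card*{\Cl(\cH_n)}=\infty$ for all large $n$, which are precisely the hypotheses of \Cref{cor:meta-algorithm:generation}; that corollary then gives that \Cref{alg:intersection} generates from $\Cl(\cL(p))$ in the limit. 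The conceptual crux is simply matching the abstract priority/stopping machinery of \Cref{sec:templates:priority} to the concrete thresholds-and-intersection instantiation; once the monotonicity check and the strict-separation observation are in hand, the remainder is bookkeeping.
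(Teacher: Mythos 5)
Your proposal is correct and follows essentially the same route as the paper's proof: verify the monotone, index-bounded priority hypotheses, invoke \Cref{lem:meta-algorithm:prefix-stabilizes} to get that $\cL(p)$ is eventually ordered first, use $\card{\Cl(\cL(p))}=\infty$ together with the stopping rule to conclude $\cL(p)\subseteq\cH_n$ with $\card{\Cl(\cH_n)}=\infty$, and finish via \Cref{cor:meta-algorithm:generation}. The only difference is that you spell out the monotonicity case analysis for $N_i^{(n)}$, which the paper asserts as holding "by construction."
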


    \begin{proof}
        We argue using \Cref{cor:meta-algorithm:generation}.
        To do so,
        we first note that the priorities $P_i^{(n)}$ are non-decreasing by construction.
        Define the prefix class $\cL_n = \sinbrace{L_{i_n(j)}: j\leq J_n}$.
        By the definition of \Cref{alg:intersection},
        this set always has infinite intersection.
        Thus to apply \Cref{cor:meta-algorithm:generation}, 
        it suffices to show that $\cL(p)\sset \cL_n$ 
        for all sufficiently large $n$.

        To see this,
        first note that the priorities $P_i^{(n)}$ of every $L_i$ is non-decreasing by definition.
        Hence by \Cref{lem:meta-algorithm:prefix-stabilizes},
        there is some $n^\star\in \N$ such that $\cL(p)$ is ordered before all other languages for every $n\geq n^\star$.
        But $\card{\Cl(\cL(p))}=\infty$ by assumption
        so that $\cL_n$ must contain $\cL(p)$ for $n\geq n^\star$ by the definition of the stopping rule in \Cref{alg:intersection}.
    \end{proof}

    We are now ready to prove \Cref{thm:vanishing-noise-generation}.
    \begin{proof}[Proof of \Cref{thm:vanishing-noise-generation}]
        Similar to the case of finite noise rate,
        our strategy is to apply \Cref{lem:constant-vanishing-noise:sufficient-generation} to argue that \Cref{alg:intersection} generates from $\cL$,
        but this time with threshold parameters $c_i \coloneqq \frac{1}{2^{i+1}}$ .

        Write $K = L_{i^\star}$.
        Set $p \coloneqq P_{i^\star}^{\infty}$ and note that $p < \infty$
        since by the $o(1)$-noise rate assumption,
        $R(L_{i^\star}; x_{1:n})\leq c_{i^\star}$ for all sufficiently large $n$.
        If we show that $\card{\Cl(\cL(p))} = \infty$,
        then we can apply \Cref{lem:constant-vanishing-noise:sufficient-generation} to conclude that \Cref{alg:intersection} generates from $\Cl(\cL(p))\supseteq K$.

        By \Cref{lem:meta-algorithm:prefix-stabilizes},
        since the priorities are non-decreasing,
        there is some $\bar n\in N$ such that for $n\geq \bar n$,
        the priorities $P_i^{(n)}$ of languages $L_i\in \cL(p)$ remain constant.
        For $n\geq \bar n$,
        we see that
        \begin{align*}
            \card{\Cl(\cL(p))}
            &\geq \card{S_n\cap \Cl(\cL(p))}
            \geq n \inparen{1 - \sum_{L_i\in \cL(p)} R(L_i; x_{1:n})} \\
            &\geq n \inparen{1 - \sum_{i\geq 1} \frac{1}{2^{i+1}}}
            \geq \frac{n}2\,.
        \end{align*}
        In particular,
        $\card{\Cl(\cL(p))} = \infty$ as desired.
    \end{proof}

    \begin{remark}
        While \cref{alg:intersection} uses a priority that depends on how long a language's empirical noise rate has stayed below the threshold $c_i$, which significantly simplifies our analysis by using the stable prefix property guaranteed by \cref{lem:meta-algorithm:prefix-stabilizes}, it is unclear if some notion of priority similar to that used in \cref{alg:intersection} is inherently needed. Indeed, in the appendix, we show in \cref{thm:vanishing-noise-generation-sorting} that \cref{alg:intersection-sorting} still achieves generation in the limit for any countable collection under vanishing noise rate, and does not use this notion of priority in the algorithm. In particular, \cref{alg:intersection-sorting} will not have the stable prefix property described in \cref{lem:meta-algorithm:prefix-stabilizes}.

        In this paper, we repeatedly make use of similar notions of priorities as that in \cref{alg:intersection}, \eg{}, in \cref{alg:vanishing-noise-intersection-set,alg:constant-noise-intersection-set,alg:intersection-vanishing-noise-improper-inf-inf}. We believe it is an interesting question to investigate if there exist generation algorithms that achieve the same guarantees and do not use this notion of priority we defined.
    \end{remark}

\subsection[Characterization of Generation with Constant Noise Rate and Arbitrary Omissions]{\mbox{Characterization of Generation with Constant Noise Rate and Arbitrary Omissions}}\label{sec:generation:constant-noise}
    Next, we investigate the setting where the adversaries can adopt an enumeration with constant noise rate and arbitrary omissions. 
    While not all countable collections are generable in this setting, our next result provides a complete characterization of generation in the limit and describes the collections for which generation in the limit is not achievable.
    \begin{theorem}\label{thm:constant-noise-characterization}
        Fix any constant $c\in (0,1)$.
        A collection $\cL$ is generable with $c$-noise and arbitrary omissions
        if and only if the following \emph{$c$-constant noise generation property} holds.

        \begin{condition}[$c$-constant noise generation property]
            \label{def:condition:cNoisy-Generation}
            For every non-empty finite subcollection $\cL'\subseteq \cL$
            and every enumeration $x_{1:\infty}$,
            either
            \begin{enumerate}[(a)] %
                \item there is some $L'\in \cL'$ such that $R(L'; x_{1:n}) > c$ infinitely often, or
                \item the intersection $\card{\Cl(\cL')} = \infty$ is infinite.
            \end{enumerate}  
        \end{condition} 
    \end{theorem}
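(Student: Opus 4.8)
The plan is to prove the two implications separately, reusing the apparatus already built around \Cref{alg:intersection}.

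\emph{Sufficiency.} Assuming the $c$-constant noise generation property, I would run \Cref{alg:intersection} with every threshold set to $c$, i.e.\ $c_i=c$ for all $i$. First one checks the instantiation is admissible for the meta-lemmas: the priority $P_i^{(n)}=i+N_i^{(n)}$ is bounded below by $i$ and non-decreasing in $n$ (the start $N_i^{(n)}$ of the current maximal run of steps along which $L_i$'s empirical noise rate stays $\le c$ can only grow), so \Cref{lem:meta-algorithm:prefix-stabilizes} and \Cref{lem:constant-vanishing-noise:sufficient-generation} apply. Now fix a target $K=L_{i^\star}$ and any enumeration $x_{1:\infty}$ of $K$ with $c$-noise and arbitrary omissions. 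Since $R(L_{i^\star};x_{1:n})\le c$ for all sufficiently large $n$, the run start $N_{i^\star}^{(n)}$ is eventually bounded, hence eventually constant, so $p\coloneqq P_{i^\star}^\infty<\infty$ and $L_{i^\star}\in\cL(p)$. For every $L_i\in\cL(p)$ we have $P_i^\infty\le p<\infty$, hence $N_i^{(n)}$ is eventually constant, which forces $R(L_i;x_{1:n})\le c$ for all large $n$; in particular it is \emph{not} the case that $R(L_i;x_{1:n})>c$ for infinitely many $n$. Thus alternative (a) of \Cref{def:condition:cNoisy-Generation} fails for the finite subcollection $\cL(p)$ and this enumeration, so alternative (b) holds: $\card{\Cl(\cL(p))}=\infty$. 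By \Cref{lem:constant-vanishing-noise:sufficient-generation}, \Cref{alg:intersection} generates from $\Cl(\cL(p))$ in the limit, and since $L_{i^\star}\in\cL(p)$ we have $\Cl(\cL(p))\subseteq L_{i^\star}=K$, so it generates from $K$ in the limit.

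\emph{Necessity.} I would argue the contrapositive. If the property fails, we obtain a non-empty finite $\cL'=\sinbrace{L_{j_1},\dots,L_{j_k}}\subseteq\cL$ and an enumeration $x_{1:\infty}$ with $R(L_{j_i};x_{1:n})\le c$ for all large $n$ and every $i$, yet $\card{\Cl(\cL')}<\infty$. Since $x_{1:\infty}$ is a genuine (repetition-free) enumeration and $c<1$, at least a $(1-c)$-fraction of $x_1,\dots,x_n$ lies in $L_{j_i}$ for all large $n$ and every $i$, and these are distinct; hence, taking $\hat L_{j_i}\coloneqq\sinbrace{x_t}_t\cap L_{j_i}$ (infinite), $x_{1:\infty}$ is \emph{simultaneously} a valid enumeration of every $L_{j_i}$ with $c$-noise and arbitrary omissions. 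Fix any (element-based) generator $\generator$ and feed it $x_{1:\infty}$; its outputs $w_1,w_2,\dots$ are pairwise distinct by definition, so as $\Cl(\cL')=\bigcap_i L_{j_i}$ is finite, infinitely many $w_n$ fall outside it, and each such $w_n$ misses some $L_{j_{i(n)}}$. Pigeonholing over $i\in[k]$ gives a fixed $i^\star$ with $w_n\notin L_{j_{i^\star}}$ for infinitely many $n$; declaring the target to be $K=L_{j_{i^\star}}$ with the (valid) enumeration $x_{1:\infty}$ then shows $\generator$ hallucinates infinitely often and so does not generate from $\cL$ in the limit. The same idea yields the promised finite counterexample: for an integer $m>1/c$, partition $\N$ into residue classes $P_1,\dots,P_m$ modulo $m$ and set $\cL=\sinbrace{\N\setminus P_i:i\in[m]}$; then $\bigcap_i(\N\setminus P_i)=\emptyset$ while the identity enumeration $x_{1:\infty}=1,2,3,\dots$ has $R(\N\setminus P_i;x_{1:n})\to 1/m<c$, so $\cL$ violates the property with $\cL'=\cL$.

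\emph{Where the work is.} The necessity direction is comparatively routine; the only care needed is the book-keeping that matches a repetition-free witness to the formal definition of a contaminated enumeration, and the use of $c<1$ to make it simultaneously valid for all of $\cL'$ (this would genuinely fail for arbitrary streams with repetitions, which is presumably why the property quantifies over enumerations rather than arbitrary streams). The substantive step is the sufficiency direction: the design choice that makes it go through is letting the priority record \emph{how long} $L_i$'s empirical noise rate has stayed below $c$, rather than using a one-shot consistency check — this is exactly what makes the target's priority converge while simultaneously certifying, for every language in the finite test collection $\cL(p)$, that its empirical noise rate is eventually $\le c$, and it is precisely this certification that lets one invoke \Cref{def:condition:cNoisy-Generation} on $\cL(p)$ to conclude $\card{\Cl(\cL(p))}=\infty$. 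Getting this interface between the priority mechanism (\Cref{lem:meta-algorithm:prefix-stabilizes}, \Cref{lem:constant-vanishing-noise:sufficient-generation}) and the combinatorial condition right is the crux of the argument.
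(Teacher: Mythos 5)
Your proof is correct and follows essentially the same route as the paper: sufficiency by running \Cref{alg:intersection} with uniform thresholds $c_i=c$ and invoking \Cref{lem:meta-algorithm:prefix-stabilizes} and \Cref{lem:constant-vanishing-noise:sufficient-generation} to get that no member of the finite stabilized prefix class $\cL(p)\ni K$ exceeds noise rate $c$ infinitely often, hence $\card{\Cl(\cL(p))}=\infty$ by \Cref{def:condition:cNoisy-Generation}; and necessity by observing the witnessing enumeration is simultaneously a valid $c$-noisy enumeration (with omissions) of every language in the finite subcollection, so distinctness of the generator's outputs and finiteness of $\Cl(\cL')$ force infinitely many hallucinations against some target. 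Your pigeonhole phrasing of the necessity contradiction and the explicit bookkeeping for $\hat L_{j_i}$ are minor presentational differences from the paper's "exhaust the finite intersection" argument, not a different method.
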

    
    \begin{example}
        For any $k\in \N$,
        there are simple enumerations and collections of just $k$ languages $L_1, \dots, L_k$ that do not satisfy \Cref{def:condition:cNoisy-Generation} for $c=\nfrac1k$.
        For instance, 
        for $i\in [k]$,
        we can take \mbox{$L_i\coloneqq \sinbrace{n\in \N: n\mod k = i}$} to be the positive integers with remainder $i$ modulo $k$,
        and $x_{1:\infty}$ to be the canonical enumeration of the integers.
    \end{example}
    \subsubsection*{Necessity of \cref{def:condition:cNoisy-Generation} (Constant Noise Generation Property)}
    We first prove the necessity of \cref{def:condition:cNoisy-Generation}. %

    \begin{proof}[Proof of \Cref{thm:constant-noise-characterization} (Necessity)]    
        Suppose that the $c$-constant noise generation property does not hold.
        Then, there is some non-empty finite subcollection $\cL'\subseteq \cL$
        and enumeration $x_{1:\infty}$
        such that for every $L'\in \cL'$,
        $R(L'; x_{1:n})\leq c$ for all sufficiently large $n$,
        but the intersection $\card{\Cl(\cL')} < \infty$ is finite.
        Note this implies that $\card{\cL'}\geq 2$.

        Suppose towards a contradiction that $\cL$ is generated by some generator $\generator$.
        {Order} $\cL' = \set{L_1, \dots, L_k}$ for some $2\leq k< \infty$.
        Since $x_{1:\infty}$ is a valid enumeration with $c$-noise of each $L_j\in \cL'$,
        $\generator$ must simultaneously generate from all $L_j$ for $j\in [k]$ given the same enumeration $x_{1:\infty}$.
        We will argue that this yields a contradiction.

        Let $S_n\coloneqq \set{x_1, \dots, x_n}$,
        $w_n\coloneqq \generator_n(x_{1:n})$,
        and $W_n\coloneqq \set{w_1, \dots, w_n}$
        denote the set of input and generated strings up to time $n$.
        By assumption,
        for each $j\in [k]$,
        there is some $n_j^\star$ such that $w_n\in L_j\setminus (S_n\cup W_{n-1})$ for all $n\geq n_j^\star$.
        Take $n^\star\coloneqq \max_{j\in [k]} n_j^\star$.
        Then
        \[
            w_n\in \bigcap_{j\in [k]} \inparen{L_j\setminus (S_n\cup W_{n-1})}
            = \inparen{\bigcap_{j\in [k]} L_j} \setminus \inparen{S_n\cup W_{n-1}}
            \subseteq \Cl(\cL')\setminus W_{n-1}
        \]
        for all $n\geq n^\star$.
        However,
        this is impossible for $n > n^\star + \card{\Cl(\cL')}$ as $\generator$ has exhausted all elements in the finite intersection.
    \end{proof}
    Next, we design an element-based generation algorithm to show that the constant noise generation property suffices to guarantee generation.
    Interestingly, the same algorithm achieves generation with unknown vanishing error as well as generation with arbitrary omissions.

    \subsubsection*{Sufficiency of \cref{def:condition:cNoisy-Generation} (Constant Noise Generation Property)}
    Next, we prove the sufficiency of \cref{def:condition:cNoisy-Generation}.
    
    \paragraph{Pseudocode.} The pseudocode is \Cref{alg:intersection} instantiated with uniform thresholds $c_i=c$.

    \paragraph{Analysis.}
    We are now ready to prove the other direction of our characterization of constant noise generation (\Cref{thm:constant-noise-characterization}).
    \begin{proof}[Proof of \Cref{thm:constant-noise-characterization} (Sufficiency)]  
        Suppose that the $c$-constant noise generation property holds.
        We will apply \Cref{lem:constant-vanishing-noise:sufficient-generation}
        to prove that \Cref{alg:intersection} can generate
        when instantiated with thresholds $c_1 = c_2 = \dots = c$.

        Write $K = L_{i^\star}$ and define $p\coloneqq P_{i^\star}^\infty$.
        By definition,
        there is some $n_{i^\star}$ such that $R(L_{i^\star}; x_{1:n})\leq c$ for $n\geq n_{i^\star}$.
        Thus for $n\geq n_{i^\star}$,
        $P_{i^\star}^{(n)} = p < \infty$ remains constant moving forwards.
        By \Cref{lem:constant-vanishing-noise:sufficient-generation},
        it suffices to check that $\card{\Cl(\cL(p))} = \infty$
        in order to ensure that \Cref{alg:intersection} generates from $\Cl(\cL(p))\supseteq K$.
        Now,
        $\card{\cL(p)}\leq p$ is finite and by \Cref{lem:meta-algorithm:prefix-stabilizes},
        the priorities $P_i^{(n)}$ of all its members $L_i\in \cL(p)$ stabilize after some finite time.
        By the definition of the priorities in \Cref{alg:intersection},
        this means that no $L_i\in \cL(p)$ can satisfy $R(L_i, x_{1:n}) > c$ infinitely often.
        But then by the $c$-constant noise generation property,
        $\card{\Cl(\cL(p))} = \infty$.

        This concludes the proof.
    \end{proof}

\section{Generation with Density under Contamination}
\label{sec:dense-generation}
We now shift our attention to generation with breadth using the notions of generation with density defined in \cref{sec:density}. We will first discuss our results for generation with set-based densities in \cref{sec:dense-generation:set-based} and then for generation with element-based densities in \cref{sec:dense-generation:element-based}.

\subsection{Generation with Set-Based Density under Contamination}
\label{sec:dense-generation:set-based}

    We first consider set-based generators. Recall that under this definition (\cref{def:set-based-density}), the generator outputs a set in every timestep and we measure the set-based density using the sequence of set densities of the output sets at every timestep in the target language $K$. The validity requirement asks the generator to eventually output a subset of $K$ for all large enough $n$, and the generator achieves set-based lower (upper) density if for all large enough $n$ (infinitely often), the output set $A_n$ has good lower density in $K$. 

    \subsubsection{Set-Based Upper Density under Finite Contamination}
    In this section, we prove a result similar to the index-based upper density guarantee proved by \citep{kleinberg2025density}. While we have previously argued in \cref{ex:index-failure-under-noise} that even with $1$ noisy example, index-based generator can fail to generate in the limit, we will show that there exists a set-based generator that achieves the best possible set-based upper density. Our algorithm is inspired by the approach of \citet{kleinberg2025density} who used a ``fall-back'' strategy to ensure that, infinitely often, the output is (exactly) the target language, but requires some important new ideas as now the adversary's enumeration is contaminated by both noise and omissions. First, we use prove a result showing set-based upper density is achievable in the noiseless setting where the adversary's enumeration contains potentially infinitely omissions but \emph{no noise}. Then, we invoke the finite expansion sub-routine in \cref{alg:finite-expansion-routine-addition} and reduce the setting with finite noise and $c$-omissions to the setting without noise and with $c$-omissions.
    
    To design an algorithm achieving set-based upper density for the noiseless setting with infinite omissions, at each round we consider an active set $\cA_n$ consisting of all languages up to $L_n$. Then, we consider the longest prefix $\cL_n$ of languages of $\cA_n$ that satisfies
    \begin{itemize}
        \item the intersection of all languages in $\cL_n$, $\Cl(\cL_n)$, is infinite;
        \item $\cL_n$ also appears as a prefix of $\cL_{n-1}$.
    \end{itemize}
    The second requirement is exactly the fall-back strategy we used: whenever some language $L_i$ leaves the active set at round $n$, we only consider prefixes of $\cA_n$ consisting of languages that come before $L_i$. In other words, we only consider prefixes of the longest prefix of languages of $\cA_n$ that did not change compared to $\cA_{n-1}$. The strategy, implemented by \cref{alg:fall-back}, ensures that we are neither forever ``overshooting'' by intersecting a long prefix, thus sacrificing density, nor infinitely often ``undershooting'' by intersecting a small prefix, thus sacrificing correctness. 
    
    \begin{theorem}\label{thm:finite-noise-upper-density}
        For all countable collections of languages $\cL$, there is a set-based generator that generates in the limit from $\cL$ and achieves set-based upper density at least $1-c$ under adversaries that use an enumeration with finite noise and $c$-omissions.
    
        In particular, if the adversary uses an enumeration with finite noise and finite omissions, there is a set-based generator that generates in the limit from $\cL$ and achieves set-based upper density $1$.
    \end{theorem}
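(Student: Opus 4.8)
The plan is to prove the noise-free version of the statement first --- that a single fall-back generator, run on any countable collection, generates in the limit and achieves set-based upper density $\ge \mu_{\mathrm{low}}(\hat K, K)$ whenever the adversary's stream is a noise-free enumeration of some $\hat K\subseteq K$ --- and then to transfer it to the finite-noise setting using the addition-only expansion sub-routine (\cref{alg:finite-expansion-routine-addition}).

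For the noiseless case I would analyze the algorithm \cref{alg:fall-back} sketched above: at step $n$, $\cA_n$ is the index-ordered list of languages among $L_1,\dots,L_n$ that contain $x_{1:n}$, $\cL_n$ is the longest prefix of $\cA_n$ that has infinite intersection and is also a prefix of $\cA_{n-1}$, and the output is $A_n\coloneqq \Cl(\cL_n)\setminus S_n$. The first observation is a stabilization fact: since the stream is noise-free, $K=L_{i^\star}$ lies in every $\cA_n$, and each of the finitely many $L_i$ with $i<i^\star$ either contains $\hat K$ (hence stays in $\cA_n$ forever) or is expelled at some finite time; hence there is an $N_0$ after which the first $r^\star\coloneqq 1+|\{i<i^\star: L_i\supseteq \hat K\}|$ entries of $\cA_n$ are a fixed list $\cL^\star$ ending in $K$, and every member of $\cL^\star$ contains $\hat K$, so $\Cl(\cL^\star)\supseteq \hat K$ is infinite. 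Validity is then immediate: for $n>N_0+1$ the length-$r^\star$ prefixes of $\cA_n$ and $\cA_{n-1}$ agree and have infinite intersection, so $\cL_n\supseteq \cL^\star\ni K$, whence $A_n\subseteq \Cl(\cL_n)\subseteq K$.

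The density bound is the heart of the argument. Call a step $n$ \emph{clean} if every language in $\cL_n$ stays in $\cA_m$ for all $m\ge n$ --- equivalently, contains $\hat K$; at a clean step $\Cl(\cL_n)\supseteq\hat K$, so $\mu_{\mathrm{low}}(A_n,K)=\mu_{\mathrm{low}}(\hat K\setminus S_n,K)=\mu_{\mathrm{low}}(\hat K,K)$. It remains to show clean steps recur infinitely often. Suppose not, so $\cL_n$ contains some eventually-expelled language for every $n\ge N$. Set $n_0\coloneqq N$ and let $i_0$ be the least index of an eventually-expelled language in $\cL_{n_0}$. That language is expelled at some finite time $t_0>n_0$, and because $\cL_{t_0}$ is a common prefix of $\cA_{t_0-1}$ and $\cA_{t_0}$ while the two lists first differ at (or before) the former position of $L_{i_0}$, every language in $\cL_{t_0}$ has index $<i_0$; since $\cL_{t_0}$ is still not clean, it contains an eventually-expelled language of index $<i_0$. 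Iterating yields an infinite strictly decreasing sequence of positive integers, a contradiction. Hence the generator achieves set-based upper density $\ge \mu_{\mathrm{low}}(\hat K, K)$; under a $c$-omission enumeration this is $\ge 1-c$, and --- since the algorithm never refers to $c$ --- under a finite-omission enumeration it is $1$, which will also yield the ``in particular'' clause once the noise reduction is in place.

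Finally, for the finite-noise case I would run the fall-back generator on the expanded collection $\tilde\cL$ from \cref{alg:finite-expansion-routine-addition}, which is countable. If the stream is an enumeration of $\hat K\subseteq K$ with a finite noise set $A$ (necessarily disjoint from $K$), it is a noise-free enumeration of $K_A\coloneqq K\cup A\in\tilde\cL$ whose omitted set is $K\setminus\hat K$; a short calculation shows $\mu_{\mathrm{low}}(\hat K\cup A, K_A)=\mu_{\mathrm{low}}(\hat K,K)$, so this enumeration is noise-free with $c$-omissions (or finite omissions) with respect to $K_A$. The noiseless result then gives $A_n\subseteq K_A$ eventually and $\mu_{\mathrm{low}}(A_n,K_A)\ge 1-c$ infinitely often; since $A$ is finite and fully enumerated, eventually $A_n$ is disjoint from $A$, hence $A_n\subseteq K_A\setminus A=K$, and $\mu_{\mathrm{low}}(A_n,K)=\mu_{\mathrm{low}}(A_n,K_A)$ because $K_A\triangle K$ is finite --- this is exactly the bookkeeping of \cref{lem:expansion-subroutine-addition} with densities carried along. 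The step I expect to be the real obstacle is proving that clean steps recur infinitely often: this is where the one-step fall-back rule must be used to rule out the algorithm indefinitely ``overshooting'' into a language destined to be expelled, and the descending-index argument above is the crux; the stabilization fact, validity, and the noise reduction are routine bookkeeping.
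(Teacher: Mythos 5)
Your proposal is correct and follows essentially the same route as the paper: the same fall-back generator (\cref{alg:fall-back}) analyzed first in the noiseless $c$-omission setting, then transferred to finite noise by running it on the addition-only expanded collection of \cref{alg:finite-expansion-routine-addition} (the paper's \cref{prop:noiseless-upper-density} plus \cref{lem:expansion-subroutine-addition}). The only difference is cosmetic: where the paper argues that after any non-"clean" step the expulsion of the first non-superset of $\hat K$ forces a later step whose prefix consists only of supersets of $\hat K$, you package the same mechanism as a contradiction via a strictly decreasing sequence of indices — both hinge on the one-step fall-back rule in exactly the same way.
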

    
    \begin{remark}
        As a nice feature, the set-based generator in \cref{thm:finite-noise-upper-density} does not assume the knowledge of the amount of omissions in the adversary's enumeration.
    \end{remark}
    To prove \cref{thm:finite-noise-upper-density}, we will first prove the following result in the noiseless setting, and then make use of the alternative finite expansion sub-routine in \cref{alg:finite-expansion-routine-addition} and \cref{lem:expansion-subroutine-addition}.
    
    \begin{proposition}\label{prop:noiseless-upper-density}
        For all countable collections of languages $\cL$, there is a set-based generator that generates in the limit from $\cL$ and achieves set-based upper density at least $1-c$ under enumerations without noise and with $c$-omissions.
    \end{proposition}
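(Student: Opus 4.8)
The plan is to construct and analyze directly the fall-back set-based generator (\Cref{alg:fall-back}) sketched just above the statement. Fix the target $K = L_{i^\star}$, with $i^\star$ the least index of $K$ in $\cL$, and the adversary's enumeration $x_{1:\infty}$ of some $\hat K \subseteq K$ with $\mu_{\mathrm{low}}(\hat K, K) \geq 1-c$; write $S_n = \inbrace{x_1, \dots, x_n}$, so $S_n$ increases to $\hat K$. At step $n$ the algorithm maintains the active list $\cA_n$ of all indices $i \leq n$ with $S_n \subseteq L_i$, in increasing order of index, and outputs $A_n = \Cl(\cL_n) \setminus S_n$, where the prefix $\cL_n$ of $\cA_n$ is chosen by the fall-back rule: if some language of the previous prefix $\cL_{n-1}$ has just dropped out of $\cA$, retreat $\cL_n$ to the longest prefix, with infinite intersection, of the initial segment of $\cA_n$ that still agrees with $\cA_{n-1}$; otherwise extend $\cL_{n-1}$ to the longest prefix of $\cA_n$ with infinite intersection. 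Before anything else I would record two facts: (i) $\Cl(\cdot)$ is antitone in the collection, so any sub-list of a list with infinite intersection again has infinite intersection --- hence every ``retreat'' is legal, and the intersection can become finite only by extending forward; and (ii) since $\bigcup_n S_n = \hat K$, a language stays in $\cA_n$ for all $n$ if and only if it contains $\hat K$ (call these the \emph{stable} languages $\cG$; note $K \in \cG$), while every non-stable language leaves $\cA$ after finitely many steps and never returns. Fact (ii) yields a finite step $N_0$ after which the first $p^\star \coloneqq \abs{\inbrace{j \leq i^\star : \hat K \subseteq L_j}}$ entries of $\cA_n$ are a fixed list $\cG_{\leq}$ that contains $K$ and satisfies $\hat K \subseteq \Cl(\cG_{\leq}) \subseteq K$.

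For \emph{generation in the limit}, I would show $\abs{\cL_n} \geq p^\star$ for every $n > N_0$. Below length $p^\star$ the prefix involves only members of $\cG_{\leq}$, which never leave $\cA$, so the algorithm is in ``extend'' mode; every prefix of $\cG_{\leq}$ has infinite intersection by (i), so the prefix climbs until it contains all of $\cG_{\leq}$. Once at length $\geq p^\star$ it can shorten only when a member of $\cL_{n-1}$ leaves $\cA$, and no member of $\cG_{\leq}$ ever leaves after $N_0$, while extending never shortens. Hence eventually $\cL_n \supseteq \cG_{\leq} \ni K$, so $\Cl(\cL_n) \subseteq \Cl(\cG_{\leq}) \subseteq K$ and therefore $A_n \subseteq K \setminus S_n$, which is exactly the validity requirement.

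For \emph{set-based upper density $\geq 1-c$}, I would reduce everything to one claim: $\cL_n \subseteq \cG$ for infinitely many $n$. At any such step $\Cl(\cL_n)$ is an intersection of stable languages, so $\Cl(\cL_n) \supseteq \hat K$, hence $A_n \supseteq \hat K \setminus S_n$; since $S_n$ is finite and $\mu_{\mathrm{low}}(\cdot, K)$ is monotone under inclusion and unaffected by finite modifications, $\mu_{\mathrm{low}}(A_n, K) \geq \mu_{\mathrm{low}}(\hat K, K) \geq 1-c$, and taking $\limsup$ over these steps gives the bound. I would prove the claim by contradiction. Suppose that for all $n \geq N \geq N_0$ the prefix $\cL_n$ contains a non-stable language, and let $b_n > p^\star$ be the least position of such a language in $\cL_n$, so the first $b_n - 1$ entries of $\cA_n$ are all stable. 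Since stable languages never leave $\cA$ and new languages enter $\cA$ only at its end, the first $b_n - 1$ entries of $\cA_m$ are frozen (equal to those of $\cA_n$) for all $m \geq n$; in particular the offending language at position $b_n$ of $\cA_n$ stays at position $b_n$ until it leaves $\cA$ at some step $\tau > n$. Throughout steps $n, \dots, \tau - 1$ no language at a position $\leq b_n$ leaves $\cA$, so no fall-back in that window shortens the prefix below length $b_n$, whence $\abs{\cL_{\tau-1}} \geq b_n$. At step $\tau$, the language at position $b_n \leq \abs{\cL_{\tau-1}}$ leaves while the first $b_n - 1$ entries are unchanged, so the retreat forces $\cL_\tau$ to be a prefix of the first $b_n - 1$ entries of $\cA_\tau$, all stable --- contradicting that $\cL_\tau$ contains a non-stable language, since $\tau \geq N$.

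I expect this last step --- forcing $\cL_n \subseteq \cG$ infinitely often --- to be the main obstacle, and the place where the precise fall-back rule is essential: one has to exclude the scenario in which the prefix is perpetually polluted by a stream of languages that are consistent at the current step but will be contradicted later, and the leverage is exactly that after finite time the low positions of $\cA$ are permanently occupied by supersets of $\hat K$ (so any new pollutant sits strictly above them) together with the fact that a fall-back retreats all the way to the frozen common prefix and does not re-extend on the same step. The remaining ingredients --- antitonicity of $\Cl$, stabilization of the low positions of $\cA$, the density bookkeeping, and finally deriving \Cref{thm:finite-noise-upper-density} from this proposition by applying the generator to the expanded collection of \Cref{alg:finite-expansion-routine-addition} and invoking \Cref{lem:expansion-subroutine-addition} --- should all be routine.
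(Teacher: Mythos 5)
Your proposal is correct and follows essentially the same route as the paper's proof: the same fall-back algorithm, validity via the eventual stabilization of the consistent languages with index at most $i^\star$ (your $\cG_{\leq}$, the paper's $\cA_\infty$), all of which contain $\hat K$ and hence force $\cL_n \supseteq \cG_{\leq}$ and $\Cl(\cL_n)\subseteq K$. Your density argument is the paper's argument stated contrapositively — the paper says ``either $\hat K\subseteq \Cl(\cL_n)$ now, or the first non-superset of $\hat K$ in $\cL_n$ is contradicted at a later finite time, at which point the fall-back truncates to supersets of $\hat K$'' — with your frozen-position bookkeeping just making explicit why that truncation lands on stable languages.
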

    
    \begin{algorithm}[tbh!]
            \caption{Algorithm for \Cref{thm:finite-noise-upper-density}}
            \label{alg:fall-back}
            \begin{algorithmic}[1]
            \Require Countable  collection  $\cL=\{L_1,L_2,\dots\}$; enumeration $x_{1:\infty}$
            \State Let $S_n \gets \sinbrace{x_1, \dots, x_n}$ be the set of examples seen in the first $n$ steps
            \State Let $\cA_n \gets \inbrace{L_i: S_n \subseteq L_i, 1 \le i \le n}$ be the subcollection of languages consistent with the seen examples.
            \State If $n = 1$, set $\cL_n \gets \emptyset$.
            \State If $n \ge 2$, compute the largest common prefix $\cL_n$ of $\cA_{n-1}$ and $\cA_n$ with infinite intersection, \ie{}, 
            \[
                \cL_n \gets \inbrace{L_i \in \cA_n: \inbrace{L_1, \dots, L_i} \cap \cA_n = \inbrace{L_1, \dots, L_i} \cap \cA_{n-1}, \text{ and } \abs{\Cl(\inbrace{L_1, \dots, L_i} \cap \cA_n)} = \infty }.
            \]
            \State Output $\Cl(\cL_n) \setminus S_n$.
            \end{algorithmic}
        \end{algorithm}
    
    \begin{proof}[Proof of \cref{prop:noiseless-upper-density}]
        We will show that \cref{alg:fall-back} works for \cref{prop:noiseless-upper-density}.
    
        Let $K = L_{i^\star} \in \cL$ be the target language, and $x_{1:\infty}$ be an enumeration of $K$ without noise and with $c$-omissions. Let $\hat{K} \subseteq K$ be the subset of $K$ such that $x_{1:\infty}$ is an enumeration of $\hat{K}$ \emph{without noise or omissions}. By assumption, $\mu_{\text{low}}(\hat{K}, K) \ge 1-c$.

        \paragraph{Proof of Generation in the Limit.} First, we show that \cref{alg:fall-back} achieves generation in the limit. We observe that for $n \ge i^\star + 1$, we always have $K \in \cA_n$. Note that if $S_n \not\subseteq L_i$, then $S_{n+1} \not\subseteq L_i$. Thus, eventually, the prefix of languages in $\cA_n$ that come before $K$ stabilizes, since once some language leaves the set $\cA_n$, it will never enter it again. Let $n^\star \in \N$ be the time such that for all $n \ge n^\star$, $\cA_n \cap \{L_i: i \le i^\star\}$ is fixed. Denote this collection as $\cA_{\infty}$. Note that $K \in \cA_{\infty}$. We also note that every language in $\cA_{\infty}$ has to be a superset of $\hat{K}$, since each of them has to be consistent with $x_{1:\infty}$, which is an enumeration of $\hat{K}$ without noise or omissions. As a result, $\hat{K} \subseteq \Cl(\cA_{\infty})$ and thus the largest common prefix $\cL_n$ of $\cA_{n-1}$ and $\cA_n$ satisfies $\cA_{\infty} \subseteq \cL_n$ for $n \ge n^\star + 1$. This finishes the proof of generation in the limit, as for $n \ge n^\star + 1$, $\Cl(\cL_n) \subseteq \Cl(\cA_{\infty}) \subseteq K$.

        \paragraph{Proof of Density Guarantee.} Next, we show that infinitely often, \cref{alg:fall-back} outputs a set with lower set density at least $1-c$. At time $n$, consider $\cL_n$. Either $\hat{K} \subseteq \Cl(\cL_n)$, or there exists $L \in \cL_n$ such that $\hat{K} \not\subseteq L$. In the first case, we readily get $\mu_{\text{low}}(\Cl(\cL_n) \setminus S_n, K) \ge \mu_{\text{low}}(\hat{K_n} \setminus S_n, K) \ge 1 - c$. Thus, we now assume we are in the second case.

        List the languages in $\cL_n$ in order as $\cL_n = \inbrace{L_{i_n(1)}, \dots, L_{i_n(t)}}$ where $t = \abs{\cL_n}$. For $n \ge n^\star + 1$, we know that $\cA_{\infty}$ appears as a nonempty prefix of $\cL_n$, and moreover every language in $\cA_{\infty}$ is a superset of $\hat{K}$. Let $L$ be the first language in $\cL_n$ which is not a superset of $\hat{K}$, which exists by the assumption. Since $L \not\subseteq \hat{K}$ and $x_{1:\infty}$ enumerates $\hat{K}$ without omissions, there exists some finite time $n' > n$ such that $L$ is no longer consistent with $S_{n'}$. By our falling back strategy used in the construction of $\cL_n$, at time $n'$, we set $\cL_{n'}$ to be the prefix of languages of $\cL_n$ coming before $L$. Since we assumed that $L$ is the first language in $\cL_n$ which is not a superset of $\hat{K}$, we know that every language in $\cL_{n'}$ is a superset of $\hat{K}$, and thus $\hat{K} \subseteq \Cl(\cL_{n'})$. This means that infinitely often, the output set $\Cl(\cL_{n}) \setminus S_n$ achieves lower set density at least $1-c$ in K, concluding the proof.
    \end{proof}
    Now we are ready to prove \cref{thm:finite-noise-upper-density} by invoking \cref{alg:finite-expansion-routine-addition} and \cref{lem:expansion-subroutine-addition}.

    \begin{proof}[Proof of \cref{thm:finite-noise-upper-density}]
        Consider the following generation algorithm. 
        
    \paragraph{1. The Algorithm.}
    First, we construct the expanded collection $\Tilde{\cL}$ using the expansion subroutine from \cref{alg:finite-expansion-routine-addition}. 
    Recall that
    \[
        \Tilde{\cL} := \inbrace{L_{A} \coloneqq L \cup A: L \in \cL, A \subseteq U \setminus L, \abs{A} < \infty} \,.
    \]
    Then, we execute \cref{alg:fall-back} on the collection $\Tilde{\cL}$
    and the enumeration $x_{1:\infty}.$
    Since $x_{1:\infty}$ is a valid enumeration of an unknown target language $K \in \cL$, with finite noise set $A$ and $c$-omission, there exists $\hat{K} \subseteq K$ such that $x_{1:\infty}$ is an enumeration of $\hat{K} \cup A$ without noise or omissions, and $\mu_{\text{low}}(\hat{K}, K) \ge 1- c$. Thus, $x_{1:\infty}$ corresponds to an enumeration for the language $K_{A}$ without noise and with $c$-omission, where we used that $\mu_{\text{low}}(\hat{K} \cup A, K_A) = \mu_{\text{low}}(\hat{K}, K)$ for any finite set $A$.
    
    \paragraph{2. Proof of Generation in the Limit.}
    By \cref{prop:noiseless-upper-density}, \cref{alg:fall-back} applied to $\Tilde{\cL}$
    and $x_{1:\infty}$ generates in the limit from $K_A$. By \cref{lem:expansion-subroutine-addition}, we know that the same algorithm generates in the limit from $K$.
    
    \paragraph{3. Proof of Density Guarantee.}
    If we denote the output set of \cref{alg:fall-back} applied to $\Tilde{\cL}$
    and $x_{1:n}$ as $A_n$, by \cref{prop:noiseless-upper-density}, we know that $\limsup_{n\to\infty} \mu_{\text{low}}(A_n, K) = \limsup_{n\to\infty} \mu_{\text{low}}(A_n, K_A) \ge 1 - c$, where we used that $K_A = K \cup A$ differs from $K$ only by a finite set.
    \end{proof}

    \begin{theorem}
        \label{thm:set-based-upper:lower-bound}
        There exists a countable collection of languages $\cL$, such that no set-based generator can generate in the limit and achieve set-based upper density at least $1 - c + \eps$ for any $\eps > 0$, under adversaries that use an enumeration with $c$-omissions without noise.
    \end{theorem}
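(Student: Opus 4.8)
The plan is to prove this lower bound by a short indistinguishability argument using a collection of just two languages and a single adversarial enumeration. Fix $c\in(0,1]$ and pick an infinite set $K\subseteq \N$ with $\mu_{\text{low}}(K,\N)=1-c$; for rational $c=p/q$ one can take an explicit union of residue classes modulo $q$, and for general $c$ a Beatty-type set such as $K=\{n\in\N:\lfloor n(1-c)\rfloor>\lfloor (n-1)(1-c)\rfloor\}$ works, since it has natural (hence lower) density $1-c$ in $\N$. Let $\cL\coloneqq\{K,\N\}$ (recall $\N=U$ is a legitimate language) and let $E$ be the canonical enumeration of $K$. Two observations drive the proof: (i) $E$ is a genuine no-omission enumeration of $K$, hence in particular a valid $c$-omission enumeration of $K$ (with $\hat K=K$, $\mu_{\text{low}}(\hat K,K)=1\ge 1-c$); and (ii) because $\mu_{\text{low}}(K,\N)=1-c$, the very same list $E$ is also a valid $c$-omission enumeration of $\N$, namely it enumerates $\hat N\coloneqq K\subseteq\N$ with $\mu_{\text{low}}(\hat N,\N)=1-c\ge 1-c$. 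Thus an adversary may present $E$ and later declare the target to be either $K$ or $\N$, and a generator cannot tell the difference.

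Now suppose toward a contradiction that some set-based generator $\generator$ both generates $\cL$ in the limit and achieves set-based upper density at least $1-c+\eps$ under $c$-omission adversaries, for some $\eps>0$. Run $\generator$ on the enumeration $E$ and let $(A_n)_n$ be its sequence of output sets; crucially, this sequence depends only on $E$ and so is identical no matter which language we call the target. Taking $K$ as the target: since $E$ is a valid $c$-omission enumeration of $K$ and $\generator$ generates in the limit, there is $n^\star$ with $A_n\subseteq K$ for all $n\ge n^\star$. Since $A\subseteq B\subseteq\N$ implies $|A\cap[m]|\le|B\cap[m]|$ for every $m$, and hence $\mu_{\text{low}}(A,\N)\le\mu_{\text{low}}(B,\N)$, we get $\mu_{\text{low}}(A_n,\N)\le\mu_{\text{low}}(K,\N)=1-c$ for all $n\ge n^\star$, and therefore $\limsup_{n\to\infty}\mu_{\text{low}}(A_n,\N)\le 1-c$. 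Taking instead $\N$ as the target: $E$ is also a valid $c$-omission enumeration of $\N$, so the assumed density guarantee forces $\limsup_{n\to\infty}\mu_{\text{low}}(A_n,\N)\ge 1-c+\eps$. These two bounds contradict each other, which proves the theorem.

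There is no genuinely hard step here; the only points requiring a little care are producing an explicit $K$ with $\mu_{\text{low}}(K,\N)=1-c$ for every $c$ (routine, as above) and checking that the constraint forced by the $K$-target — namely $A_n\subseteq K$ eventually — transfers verbatim to the $\N$-target, which holds precisely because $\generator$'s output is a function of the observed enumeration alone. It is also worth recording, to confirm the bound is tight and the theorem non-vacuous, that for this same $\cL$ one \emph{can} generate in the limit with set-based upper density $1-c$ under $c$-omissions (this is the matching upper bound, e.g.\ \Cref{thm:finite-noise-upper-density} specialized to zero noise), and that for the $K$-target a generator may safely output $K\setminus S_n$, which has lower density $1$ in $K$; so the obstruction is genuinely the inability to simultaneously be dense in $\N$ while being trapped inside the density-$(1-c)$ set $K$.
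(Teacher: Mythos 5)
Your proposal is correct and follows essentially the same route as the paper: a two-language collection $L_1\subseteq L_2$ with $\mu(L_1,L_2)=1-c$, the canonical enumeration of the smaller language serving as a valid noise-free $c$-omission enumeration of both, and the observation that generation in the limit forces $A_n\subseteq L_1$ eventually, capping the achievable density in $L_2$ at $1-c$. The only differences are cosmetic (you instantiate $L_2=\N$ and give explicit constructions of $L_1$, and you spell out the monotonicity of lower density), so no further comment is needed.
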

    
    \begin{proof}
        Consider a two language collection $L_1, L_2 \subseteq U$ such that $L_1 \subseteq L_2$ and $\mu_{\text{low}}(L_1, L_2) = \mu_{\text{up}}(L_1, L_2) = 1 - c$. Consider the canonical enumeration of $L_1$ without noise and without omissions. Note that this enumeration is a valid enumeration for both $L_1$ and $L_2$ with $c$-omissions without omissions. Therefore, any set-based generator that generates in the limit from the target language $K$ has to generate from $L_1$ in the limit. Since $L_1$ has upper density $1 - c$ in $L_2$, the set-based generator achieves set-based upper density at most $1 - c$ if the adversary sets $K = L_2$.
    \end{proof}

    \subsubsection{Characterization of Set-Based Lower Density under Finite Contamination}\label{sec:set-based-lower-density-finite-noise}
    We now shift our attention to obtaining density guarantees that hold for all timesteps beyond a finite 
    number of rounds. Recall that our results in \cref{thm:finite-noise-upper-density} give weaker density guarantees: for infinitely many timesteps the set outputted by the generator has good density in $K$; here we ask that the set always has good density, except perhaps for a finite set of timesteps. Naturally, since the density requirement here is 
    significantly stronger than in \cref{thm:finite-noise-upper-density} one would expect that the results we get are weaker. Indeed, even
    in the absence of noise, \cite{kleinberg2025density} showed that this type of guarantee is only achievable
    if the collection does not contain \emph{infinite perfect towers}, a technical condition they introduced. Importantly, this condition is \emph{not} satisfied by all countable collections of languages. Given this result, the main question we aim to understand is whether
    injecting noisy elements in the enumeration and omitting elements from the target hinges our ability to
    achieve this type of density guarantee compared to the noiseless setting. Perhaps surprisingly, the answer 
    is that achieving this guarantee in the noisy setting is significantly harder than in the noiseless setting, 
    even when the adversary is restricted to \emph{finite} amount of noise. 
    Our main result for the finite noise and omissions case is a complete characterization of when 
    this type of generation is achievable.
    As an immediate corollary of our result, we show that there are collections consisting of just two languages in which the generator cannot achieve any non-trivial density guarantee. To compare that with the noiseless setting, recall that the much stronger requirement of \emph{identification in the limit} is achievable
    for all finite collections of languages. Essentially, an informal interpretation of our result is that
    this type of generation is possible if all pairs of languages that are not dense in each other are ``infinitely separated''.

    We now proceed to the formal statement of
    our result.
    
    \begin{theorem}[Characterization of Set-Based Lower Density Generation with Finite Contamination]\label{thm:set-based-lower-density-finite-noise}
        A countable collection of languages $\cL$ is generable in the limit with set-based lower density $c > 0$ under finite noise and finite omissions if and only if for all $L, L' \in \cL$ with $\abs{L \setminus L'} < \infty$ it holds that 
        $\mu_{\text{low}}(L, L') \geq c.$
    \end{theorem}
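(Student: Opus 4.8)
The plan is to prove the two implications separately: necessity via an adaptive diagonalization (a ``trap/release'' construction), and sufficiency by combining the finite‑expansion subroutine (\cref{alg:finite-expansion-routine}) with a density‑aware instantiation of the priority/intersection meta‑algorithm (\cref{alg:intersection-meta}).

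\textbf{Necessity.} Suppose $L,L'\in\cL$ satisfy $\abs{L\setminus L'}<\infty$ but $\mu_{\text{low}}(L,L')<c$; put $\hat L\coloneqq L\cap L'$, so that $\abs{L\setminus\hat L}<\infty$, $\hat L\subseteq L'$, and $\mu_{\text{low}}(\hat L,L')=\mu_{\text{low}}(L,L')<c$ (a finite modification changes neither finiteness of a set difference nor a lower density). The elementary fact driving the argument is that $A\subseteq L$ forces $\mu_{\text{low}}(A,L')\le\mu_{\text{low}}(L,L')<c$, so at \emph{every} round a generator's output either is not contained in $L$ (failing validity for target $L$) or has lower density below $c$ in $L'$ (failing the density requirement for target $L'$). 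Against a fixed generator $\generator$, the adversary alternates phases: a \emph{trap} phase enumerates $\hat L$ in canonical order, advancing at least one element and continuing until the first round at which $\generator$ outputs a set contained in $L$; a \emph{release} phase then enumerates one fresh element of $L'\setminus\hat L$ and returns to a trap phase. If some trap phase never terminates, the resulting stream contains only finitely many elements outside $L$ (from the finitely many completed releases) and omits only $L\setminus\hat L$, hence is a valid finite‑contamination enumeration of $L$ on which $\generator$'s output fails to be contained in $L$ for all large rounds. Otherwise there are infinitely many releases, so the stream enumerates $\hat L\cup(L'\setminus\hat L)=L'$ with no noise and no omissions, and at each of the infinitely many trap‑exit rounds the output lies in $L$ and therefore has lower density $<c$ in $L'$. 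In both cases $\generator$ fails, so $\cL$ is not generable with set-based lower density $c$ under finite contamination.

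\textbf{Sufficiency.} Assume the pair condition. Run \cref{alg:finite-expansion-routine} to obtain the countable collection $\tilde\cL$. Exactly as in \cref{lem:expansion-subroutine}, a finite‑noise, finite‑omission enumeration of any $K\in\cL$ is a contamination‑free enumeration of $K_{A,B}=K\cup A\setminus B\in\tilde\cL$ where $A$ is the (finite) noise and $B$ the (finite) set of omitted elements; moreover $\abs{K\triangle K_{A,B}}<\infty$, so lower densities into $K$ and into $K_{A,B}$ coincide, and since every enumerated noise element eventually lies in $S_n$ it is flushed out of the generator's fresh output. Hence it suffices to build a set-based generator for $\tilde\cL$ that, on contamination‑free enumerations, generates in the limit with set-based lower density $c$. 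First, the pair condition transfers to $\tilde\cL$: for $M,M'\in\tilde\cL$ with $\abs{M\setminus M'}<\infty$, writing $M,M'$ as finite modifications of $L,L'\in\cL$ gives $\abs{L\setminus L'}<\infty$ and $\mu_{\text{low}}(M,M')=\mu_{\text{low}}(L,L')\ge c$. The generator orders the languages among $M_1,\dots,M_n$ consistent with $S_n$ by index and outputs $\Cl(\cH_n)\setminus S_n$, where $\cH_n$ is the longest prefix $M_{i_n(1)},\dots,M_{i_n(J_n)}$ of this list with $\abs{\Cl(\cH_n)}=\infty$ and $\mu_{\text{low}}(\Cl(\cH_n),M_{i_n(t)})\ge c$ for all $t\le J_n$; this is monotone in $J_n$, so $\cH_n$ is well-defined. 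Once the consistent languages of index at most $i^\star$ stabilize, they equal $\{M_i: i\le i^\star,\ \tilde K\subseteq M_i\}\ni\tilde K$, a prefix whose intersection is $\tilde K$ and which satisfies the stopping conditions (the density condition because $\tilde K\subseteq M_i$ gives $\abs{\tilde K\setminus M_i}=0$ and the transferred pair condition applies); hence $\cH_n$ contains $\tilde K$, so $\Cl(\cH_n)\subseteq\tilde K$ and $\mu_{\text{low}}(\Cl(\cH_n),\tilde K)\ge c$ for all large $n$, as required.

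\textbf{Main obstacle.} I expect the necessity direction to be the crux: a bad pair necessarily has $\abs{L'\setminus L}=\infty$ (otherwise $\mu_{\text{low}}(L,L')=1$), so there is \emph{no} single stream that is simultaneously a valid finite‑contamination enumeration of both $L$ and $L'$, and the usual ``one ambiguous enumeration'' lower-bound template does not apply; the adaptive trap/release interleaving above is needed, together with the bookkeeping ensuring the eventually-produced stream is genuinely valid for whichever target the adversary commits to. On the sufficiency side the delicate point is the design of the density‑aware stopping rule, which must guarantee that the chosen intersection both contains the target language and stays $c$‑dense in it for all large rounds, not merely infinitely often.
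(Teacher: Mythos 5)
Your proof is correct and follows essentially the same route as the paper: necessity via an adaptive, phase-alternating enumeration that traps the generator between validity for $L$ and the density requirement for $L'$ (with failure either on a valid finite-contamination enumeration of $L$ if a phase never terminates, or on a clean enumeration of $L'$ via infinitely many rounds where the output is contained in $L$ and hence has lower density below $c$), and sufficiency via the finite-expansion collection $\tilde\cL$, the transfer of the pair condition under finite modifications, and a noiseless generator whose eventual outputs are subsets of the target. The only deviations are cosmetic: your trap/release phases enumerate only $L\cap L'$ and single fresh elements of $L'\setminus L$, avoiding the paper's $\nicefrac{\eps}{2}$-margin bookkeeping, and for sufficiency you spell out an explicit consistency-ordered prefix-intersection generator with a density-aware stopping rule where the paper instead invokes the properness of the \citet{kleinberg2024language} algorithm (\cref{lem:lower-set-density-km24}).
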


    \begin{example}
        As a simple application of the above characterization, consider the following two language collection $\cL = \inbrace{L_1, L_2}$, where $L_1$ is the set of even numbers and $L_2$ is the set of all natural numbers. Then, by \cref{thm:set-based-lower-density-finite-noise}, under finite noise and finite omissions, there exists a set-based generator achieving set-based lower density $1/2$, but no generator can do better.
    \end{example}
    We prove \cref{thm:set-based-lower-density-finite-noise} in two steps; first, we show that when the stated condition does not hold, then no algorithm can generate from $\cL$ with set-based lower density{;} then, we show that if the condition holds, there exists an algorithm that generates from $\cL$ with the desired lower density. 
    The proof of \cref{thm:set-based-lower-density-finite-noise} follows as direct corollary of these two results.
    
    \begin{lemma}\label{lem:lower-set-based-density-lower-bound-finite}
        Let $\cL$ be a countable collection of languages that contains $L, L'$ with 
        $\abs{L \setminus L'} < \infty$ and
        $\mu_{\text{low}}(L, L') < c.$ Then, no algorithm can generate from $\cL$ in the limit with set-based lower density $c$ under finite noise and finite omissions.
    \end{lemma}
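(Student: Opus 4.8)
The idea is to build a single adaptive adversary that uses only the two ``bad'' languages $L$ and $L'$, and to simulate with them the ``infinite perfect tower'' obstruction of \cite{kleinberg2025density} — which in the noiseless setting needs an infinite sub-collection — by trading the infinitely many tower levels for the (finite, but unbounded-over-the-game) noise budget. The two facts that make this work are: since $\abs{L\setminus L'}<\infty$, a long block of elements of $L$ can be read \emph{both} as a genuine prefix of a clean enumeration of $L$ \emph{and} as an enumeration of $L'$ with only finitely many noisy examples; and since $\mu_{\text{low}}(L,L')<c$, any set the generator places inside $L$ is automatically too thin inside $L'$ (monotonicity of $\mu_{\text{low}}$: if $A\subseteq L$ then, writing $L'=\{e_1<e_2<\cdots\}$, $A\cap\{e_1,\dots,e_m\}\subseteq L\cap\{e_1,\dots,e_m\}$ for every $m$, so taking $\liminf_m$ gives $\mu_{\text{low}}(A,L')\le\mu_{\text{low}}(L,L')$).

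Assume toward a contradiction that a set-based generator $\generator$ generates from $\cL$ in the limit with set-based lower density $c$ under finite noise and finite omissions. Run $\generator$ against the following adversary, which proceeds in \emph{stages} $k=1,2,\dots$. During stage $k$ it repeatedly reveals the smallest not-yet-revealed element of $L$; it \emph{closes} stage $k$ at the first time $n_k$ (if any) at which the current output satisfies $A_{n_k}\subseteq L$, at which point it reveals the smallest not-yet-revealed element of $L'\setminus L$ and moves to stage $k+1$. \emph{Case 1: every stage closes.} Then the stream reveals every element of $L$ (through the stage bodies, since we always take the least unrevealed element of $L$, and each body has at least one step) together with every element of $L'\setminus L$ (one per stage, over infinitely many stages); hence the revealed set is $L\cup L'$, which is a legitimate enumeration of $K=L'$ with no omissions and at most $\abs{L\setminus L'}<\infty$ noisy examples. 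So the density guarantee forces $\liminf_n\mu_{\text{low}}(A_n,L')\ge c$. But at each closing time $n_k$ we have $A_{n_k}\subseteq L$, hence $\mu_{\text{low}}(A_{n_k},L')\le\mu_{\text{low}}(L,L')<c$ by the monotonicity fact, and there are infinitely many such $n_k$, so $\liminf_n\mu_{\text{low}}(A_n,L')\le\mu_{\text{low}}(L,L')<c$, a contradiction. \emph{Case 2: some stage $k^\star$ never closes.} Then from that point on the stream reveals fresh elements of $L$ forever, so overall it reveals all of $L$ together with the $k^\star-1$ elements of $L'\setminus L$ revealed during the earlier closings; this is a legitimate enumeration of $K=L$ with no omissions and $k^\star-1<\infty$ noisy examples. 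By generation in the limit (\cref{def:generation-in-the-limit}) $\generator$ must satisfy $A_n\subseteq L$ for all sufficiently large $n$, contradicting the defining property of stage $k^\star$, namely that $A_n\not\subseteq L$ at every step after it begins. Either case is a contradiction, so no such $\generator$ exists.

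The main point requiring care is not a computation but the bookkeeping that both streams are genuine finite-contamination enumerations of their claimed targets: one has to check that always choosing the least unrevealed element of $L$ in stage bodies eventually exhausts $L$ (so Case 2's stream has no omissions, and Case 1's stream even reveals all of $L\cap L'$ and the finitely many $L\setminus L'$ elements), and that the only elements ever revealed outside the claimed target are the finitely many $L'\setminus L$ elements in Case 2, or the finitely many $L\setminus L'$ elements in Case 1 — which is precisely where the hypothesis $\abs{L\setminus L'}<\infty$ is used. The adaptivity of the adversary (watching $A_n$ to decide when to close a stage) is allowed in the model, and we never need any uniformity of $\generator$: in each case we invoke its guarantee only on the one fully specified stream produced there. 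I'll state the monotonicity inequality for $\mu_{\text{low}}$ as a one-line observation and then present the two-case argument essentially as above; combined with the matching algorithmic direction, this yields \cref{thm:set-based-lower-density-finite-noise}.
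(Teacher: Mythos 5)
Your proof is correct and follows essentially the same two-case diagonalization as the paper's: alternate between the two targets $L$ and $L'$, use $\abs{L\setminus L'}<\infty$ so that each constructed stream is a finite-noise, omission-free enumeration of whichever target is invoked, and get the contradiction from monotonicity of $\mu_{\text{low}}$ at the infinitely many times where the output is contained in $L$ (your stage construction, triggering only on $A_n\subseteq L$ and injecting one element of $L'\setminus L$ per closing, is just a streamlined variant of the paper's alternating full-enumeration phases). The one point worth a line in the writeup is that the closings need $L'\setminus L$ to be infinite; this does follow from the hypothesis, since $\mu_{\text{low}}(L,L')<c\le 1$ forces $\abs{L'\setminus L}=\infty$.
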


    \begin{proof}
    We begin by formally recalling the requirements for an algorithm $\generator$ to ``generate from $\cL$ in the limit with set-based lower density $c$'' under the specified adversaries. Let $x = (x_1, x_2, \dots)$ be an input sequence, and let $S_n = \generator(x_{1:n})$ be the algorithm's hypothesis at timestep $n$.
    Recall that $x$ is an \textbf{enumeration} of a target language $K \in \cL$ with finite noise and omissions if:
    \begin{enumerate}
        \item \textbf{Finite Noise:} The set $\{x_i \mid x_i \notin K\}$ is finite.
        \item \textbf{Finite Omissions:} The set $K \setminus \{x_i \mid i \in \mathbb{N}\}$ is finite.
    \end{enumerate}
    The algorithm $\generator$ must satisfy two properties for \emph{every} $K \in \cL$ and \emph{every} (contaminated) enumeration $x$ of $K$:
    \begin{itemize}
        \item[\textbf{(i)}] \textbf{Generation in the Limit:} There exists an $n^*$ such that for all $n > n^*$, it holds that $S_n \subseteq K$.
        \item[\textbf{(ii)}] \textbf{Density Guarantee:} $\liminf_{n \to \infty} \mu_{\text{low}}(S_n, K) \ge c$.
    \end{itemize}
    We now proceed by contradiction.
    
    Assume, for the sake of contradiction, that such an algorithm $\generator$ exists. Let $L, L' \in \cL$ be two languages satisfying the lemma's hypotheses:
    \begin{enumerate}
        \item $F = L \setminus L'$ is a finite set.
        \item $\mu_{\text{low}}(L, L') < c$.
    \end{enumerate}
    We construct an input $x$ and a sequence of time indices $n_0 < n_1 < n_2 < \dots$ inductively. 
    Let $\mu_{\text{low}}(L, L') = c -\eps$
    for some $\eps > 0.$
    This construction will force $\generator$ to fail either property (i) or (ii) in these timesteps.

    The construction proceeds in phases. We treat even-numbered and odd-numbered phases differently. For any phase $k \ge 1$:
    
    \paragraph{Phase $2k-1$ (Target $L'$):}
    \begin{enumerate}
        \item Define a potential enumeration $T_{2k-1}$ as follows:
        \begin{itemize}
            \item It begins with the prefix $x_{1:n_{2k-2}}$ (which is empty for $k=1$).
            \item It is followed by a complete, ordered enumeration of all elements in $L' \setminus \{x_1, \dots, x_{n_{2k-2}}\}$.
        \end{itemize}
        \item $T_{2k-1}$ is an \textbf{enumeration of $L'$ with finite noise}.
        \begin{itemize}
            \item \textbf{Noise:} The number of elements in $T_{2k-1}$ not in $L'$ is bounded 
            by $\abs{x_{1:n_{2k-2}}} < \infty$. Thus, this noise is finite.
            \item \textbf{Omissions:} By construction, $T_{2k-1}$ contains all elements of $L'$, so there are 0 omissions.
        \end{itemize}
        \item By the definition of successful generation in this setting, $\generator$ running on $T_{2k-1}$ must eventually be outputting subsets of $L'$ that have $c$ lower density in $L'$.  Thus, there must exist a time $n_{2k-1} > n_{2k-2}$ such that $\mu_{\text{low}}(S_{n_{2k-1}}, L') \geq c - \nicefrac{\eps}{2}, \text{ and } S_{n_{2k-1}} \subseteq L'$ (where $S_{n_{2k-1}}$ is the output on the prefix $T_{2k-1}[1:n_{2k-1}]$). Thus, since $\mu_{\text{low}}(L, L') = c-\eps$ \mbox{it must be the case that $S_{n_{2k-1}} \not\subseteq L.$}
        \item If no such $n_{2k-1}$ exists, $\generator$ fails the generation requirement for $L'$ on input $T_{2k-1}$. We halt the construction and have found our contradiction.
        \item Otherwise, set $x_{1:n_{2k-1}} = T_{2k-1}[1:n_{2k-1}]$ and proceed.
    \end{enumerate}
    
    \paragraph{Round $2k$ (Target $L$):}
    \begin{enumerate}
        \item Define a text $T_{2k}$ as follows:
        \begin{itemize}
            \item It begins with the prefix $x_{1:n_{2k-1}}$.
            \item It is followed by a complete, ordered enumeration of all elements in $L \setminus \{x_1, \dots, x_{n_{2k-1}}\}$. (This includes all elements of $F$ not already in the prefix).
        \end{itemize}
        \item $T_{2k}$ is an \textbf{enumeration of $L$ with finite noise}.
        \begin{itemize}
            \item \textbf{Noise:} The set of elements in $T_{2k}$ not in $L$ is $x_{1:n_{2k-1}} \cap (L' \setminus L)$. This is a finite prefix of $L' \setminus L$, so the noise is finite.
            \item \textbf{Omissions:} By construction, $T_{2k}$ contains all elements of $L$ so there are 0 omissions.
        \end{itemize}
        \item By the definition of successful generation in this setting, $\generator$ running on $T_{2k}$ must eventually be outputting subsets of $L$ that have $c - \nicefrac{\eps}{2}$ lower density in $L$. Thus, there must exist a time $n_{2k} > n_{2k-1}$ such that $S_{n_{2k}} \subseteq L$.
        \item If no such $n_{2k}$ exists, $\generator$ fails the generation requirement for $L$ on input $T_{2k}$. We halt and have a contradiction.
        \item Otherwise, set $x_{1:n_{2k}} = T_{2k}[1:n_{2k}]$ and proceed to the next round.
    \end{enumerate}
    This inductive process has two possible outcomes:
    
    \paragraph{Case 1: The construction halts after a finite number of rounds.}
    As shown in step (4) of the inductive rounds, if the construction halts, it is because $\generator$ failed to find a required output in all the rounds of this phase. This constitutes a failure of the generation requirement for either $L$ or $L'$ on a complete enumeration of the corresponding language with finite noise and zero omissions.
    
    \paragraph{Case 2: The construction proceeds for infinitely many rounds.}
    This process defines a single, infinite input $x = \lim_{k \to \infty} x_{1:n_k}$ and an infinite sequence of time indices $n_1 < n_2 < n_3 < \dots$.
    
    Let us show that $x$ is a valid enumeration of $L'$ with finite noise and zero omissions.
    \begin{itemize}[leftmargin=20pt]
        \item \textbf{Noise:} The set of elements in $x$ not in $L'$ is $x \cap (L \setminus L') = x \cap F$. Elements from $F$ are \emph{only} added during the ``Target $L$'' rounds (even $k$). Since $F$ is finite, $x$ contains at most $\abs{F}$ noisy elements. The noise is finite.
        \item \textbf{Omissions:} The set $L' \setminus x$ is empty. In every ``Target $L'$'' round (odd $k$), the text $T_{2k-1}$ is defined to contain \emph{all} elements of $L'$. Since $x$ is the limit of these prefixes, it must contain all of $L'$.
    \end{itemize}
    Therefore, $x$ is a \textbf{an enumeration of $L'$ with finite noise}.
    
    By property (ii) (Density Guarantee), $A$ running on $x$ must satisfy:
    \[
    \liminf_{n \to \infty} \mu_{\text{low}}(S_n, L') \ge c\,.
    \]
    However, consider the infinite subsequence of times $n_2, n_4, n_6, \dots, (n_{2k}, \dots)$. By construction, at every time $n_{2k}$ (the end of a ``Target $L$'' round), the algorithm's output satisfies $S_{n_{2k}} \subseteq L$.
    
    A key property of set-based density is that it is monotone in its first argument: if $A \subseteq B$, then $\mu_{\text{low}}(A, C) \le \mu_{\text{low}}(B, C)$ for any $C$.
    Thus, for every $k \ge 1$:
    \[
    S_{n_{2k}} \subseteq L \implies \mu_{\text{low}}(S_{n_{2k}}, L') \le \mu_{\text{low}}(L, L')\,.
    \]
    By the lemma's hypothesis, $\mu_{\text{low}}(L, L') < c$. This means we have an infinite subsequence of hypotheses $S_{n_{2k}}$ such that:
    \[
    \mu_{\text{low}}(S_{n_{2k}}, L') \leq c - \eps \quad \text{for all } k \ge 1\,.
    \]
    This directly implies that the limit inferior of the \emph{entire} sequence must be less than $c$:
    \[
    \liminf_{n \to \infty} \mu_{\text{low}}(S_n, L') \le \lim_{k \to \infty} \mu_{\text{low}}(S_{n_{2k}}, L') \le \mu_{\text{low}}(L, L') < c\,.
    \]
    This is a direct contradiction of the Density Guarantee (ii).
    
    Thus, in both possible cases, the existence of algorithm $\generator$ leads to a contradiction. In Case 1, $\generator$ fails the Generation requirement. In Case 2, $\generator$ fails the Density Guarantee.
    Therefore, no such algorithm $\generator$ can exist.
    \end{proof}
    Next, we proceed with describing an algorithm
    that achieves set-based lower density $c >0$ whenever this condition holds.
    For our upper bound, we will utilize an algorithm that achieves the desired set-based lower density
    guarantee in the absence of any contamination in the dataset, and then we will utilize 
    our expansion subroutine (\cref{alg:finite-expansion-routine}) to convert it to a generator that achieves this guarantee in the setting of finite contamination. First, we state a result which follows as an immediate corollary from \citet{kleinberg2024language} and will be useful in our derivations.
    
    \begin{lemma}[Noiseless Set-Based Lower Density \citep{kleinberg2024language}]\label{lem:lower-set-density-km24}
        Let $c \in [0,1]$ and $\cL$ be a countable collection for which every $L, L' \in \cL$ with $\abs{L \setminus L'} < \infty$ satisfies $\mu_{\text{low}}(L, L') \geq c$. Then, there exists a generating algorithm $\generator$ that achieves set-based generation in the limit from $\cL$ with set-based lower density at least $c$, under enumerations without noise or omissions.
    \end{lemma}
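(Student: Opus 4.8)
The plan is to piggyback directly on the index-based generator of \citet{kleinberg2024language}. As recalled after \Cref{def:generation-in-the-limit}, their algorithm achieves index-based generation in the limit for every countable collection: given a noiseless, omission-free enumeration of a target $K = L_{i^\star}$, it outputs a sequence of indices $i_n$ with $L_{i_n}\subseteq K$ for all $n$ past some finite time $n^\star$ (which is allowed to depend on $K$ and on the enumeration). I would then define the set-based generator $\generator$ to output, at step $n$, the set $A_n \coloneqq L_{i_n}\setminus S_n$, where $i_n$ is the index produced by the Kleinberg--Mullainathan generator and $S_n=\{x_1,\dots,x_n\}$.

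First I would verify that this is a well-formed set-based generator: $A_n\cap S_n=\emptyset$ holds by construction, and $A_n$ is infinite (hence nonempty) since every $L\in\cL$ is infinite while $S_n$ is finite. Generation in the limit then transfers for free: for all $n\geq n^\star$ we have $A_n\subseteq L_{i_n}\subseteq K$, which is exactly the set-based generation requirement of \Cref{def:generation-in-the-limit}.

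The only substantive step is the density bound, and the key observation is that the finite-difference hypothesis of the lemma gets applied to the pair $(L_{i_n},K)$ of \emph{genuine} members of $\cL$: once $L_{i_n}\subseteq K$, the difference $L_{i_n}\setminus K$ is empty (in particular finite), so the hypothesis yields $\mu_{\text{low}}(L_{i_n},K)\geq c$. I would then invoke the elementary fact that deleting a finite set does not change lower density: writing $\kappa_1,\kappa_2,\dots$ for the canonical enumeration of $K$, one has $\abs{A_n\cap\{\kappa_1,\dots,\kappa_m\}}\geq \abs{L_{i_n}\cap\{\kappa_1,\dots,\kappa_m\}}-\abs{S_n}$ for every $m$, and since $\abs{S_n}/m\to 0$ as $m\to\infty$ (with $n$ fixed) while $A_n\subseteq L_{i_n}$ trivially gives the reverse inequality, we conclude $\mu_{\text{low}}(A_n,K)=\mu_{\text{low}}(L_{i_n},K)\geq c$ for every $n\geq n^\star$. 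Hence $\liminf_{n\to\infty}\mu_{\text{low}}(A_n,K)\geq c$, which is the claimed guarantee.

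I do not expect a genuine obstacle here — the lemma is essentially a repackaging of \citep{kleinberg2024language} — but the one subtlety worth spelling out is precisely \emph{why} a hypothesis about pairs of languages that differ only finitely suffices: it is because the index-based guarantee hands us a subset of $K$ that is itself a language in $\cL$, so the relevant difference is empty rather than merely finite. (If one instead tried to reduce from the element-based KM generator, which need not track a single language of $\cL$, this argument would not go through — a useful sanity check that the index-based version is the right black box to invoke.)
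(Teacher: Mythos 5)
Your proposal is correct and matches the paper's argument: the paper also proves this lemma by invoking the index-based generator of \citet{kleinberg2024language}, observing that its eventual outputs are languages of $\cL$ contained in $K$ (so the finite-difference hypothesis applies with $L\setminus K=\emptyset$), and noting that excising the finite seen set does not affect lower density. Your write-up simply makes explicit the well-formedness and finite-deletion steps that the paper leaves as "immediate."
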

    We remark that this is an immediate corollary of the algorithm of \citet{kleinberg2024language}. The main property of their algorithm is that, in the limit, it outputs subsets of the target language. Thus, if every subset of the target language is $c$-dense in the target (as guaranteed by the premises of our characterization), the above result follows immediately. We are now ready to state and prove our result.

    \begin{lemma}\label{lem:lower-set-based-density-upper-bound-finite}
        Let $\cL$ be a countable collection of languages such that for all $L, L' \in \cL$ either 
        $\abs{L \setminus L'} = \infty$ or
        $\mu_{\text{low}}(L, L') \geq c.$ Then, there exists algorithm that generates from $\cL$ in the limit with set-based lower density $c$ under finite noise and finite omissions.
    \end{lemma}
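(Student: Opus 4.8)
The plan is to reduce to the noiseless guarantee of \cref{lem:lower-set-density-km24} via the finite expansion sub-routine. Let $\tilde{\cL}$ be the (countable) collection output by \cref{alg:finite-expansion-routine} on input $\cL$, and let $\generator$ be the algorithm of \citet{kleinberg2024language}, which generates in the limit from every countable collection and, when the collection additionally satisfies the density-closure hypothesis of \cref{lem:lower-set-density-km24}, does so with set-based lower density at least $c$. I will run $\generator$ on $\tilde{\cL}$.

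The crux is to verify that $\tilde{\cL}$ inherits the density-closure hypothesis, i.e.\ that every $M, M' \in \tilde{\cL}$ with $\abs{M \setminus M'} < \infty$ satisfies $\mu_{\text{low}}(M, M') \geq c$. Write $M = L \cup A \setminus B$ and $M' = L' \cup A' \setminus B'$ with $L, L' \in \cL$ and $A, A', B, B'$ finite. Since $M \triangle L$ and $M' \triangle L'$ are finite, $\abs{M \setminus M'} < \infty$ forces $\abs{L \setminus L'} < \infty$, so the hypothesis on $\cL$ gives $\mu_{\text{low}}(L, L') \geq c$. It then remains to observe that $\mu_{\text{low}}$ is invariant under finite modifications of either argument: modifying the first argument by a finite set changes $\abs{S \cap B_n}$ (the count of the first $n$ elements of the second argument lying in the first) by at most a constant uniformly in $n$; modifying the second argument by a finite set changes its length-$n$ prefix as a set by a bounded number of elements, again changing $\abs{S \cap B_n}$ by at most a constant uniformly in $n$; dividing by $n$ and taking $\liminf$ leaves the value unchanged in both cases. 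Applying this twice yields $\mu_{\text{low}}(M, M') = \mu_{\text{low}}(L, L') \geq c$. (All languages in $\tilde{\cL}$ are infinite because those in $\cL$ are, so these densities are well defined.)

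With the hypothesis verified, \cref{lem:lower-set-density-km24} applied to $\tilde{\cL}$ gives that $\generator$ generates in the limit from $\tilde{\cL}$ with set-based lower density at least $c$ under noiseless enumerations. Now fix a target $K \in \cL$ and an enumeration $x_{1:\infty}$ of $K$ with finite noise set $A$ and finite omission set $B$; this $x_{1:\infty}$ is a noise- and omission-free enumeration of $K_{A,B} = K \cup A \setminus B \in \tilde{\cL}$. By \cref{lem:expansion-subroutine}, $\generator$ run on $\tilde{\cL}$ and $x_{1:\infty}$ generates from $K$ in the limit. For the density guarantee, since $\tilde{\cL}$ satisfies the density-closure condition, the output sets $S_n$ satisfy $\liminf_{n\to\infty} \mu_{\text{low}}(S_n, K_{A,B}) \geq c$; and because $K \triangle K_{A,B}$ is finite, the finite-modification invariance of the second paragraph gives $\mu_{\text{low}}(S_n, K) = \mu_{\text{low}}(S_n, K_{A,B})$ for every $n$, hence $\liminf_{n\to\infty} \mu_{\text{low}}(S_n, K) \geq c$, as required.

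I expect the only genuine subtlety to be the finite-modification invariance of $\mu_{\text{low}}$ in each argument, and in particular checking that the error terms are bounded uniformly in the prefix length $n$ so that they vanish under the inner $\liminf$ (over prefixes) and hence also survive the outer $\liminf$ over timesteps; the rest is routine bookkeeping with \cref{alg:finite-expansion-routine}, \cref{lem:expansion-subroutine}, and \cref{lem:lower-set-density-km24}.
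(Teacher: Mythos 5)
Your proposal is correct and follows essentially the same route as the paper's proof: expand $\cL$ to $\tilde{\cL}$ via \cref{alg:finite-expansion-routine}, verify that the density-closure hypothesis transfers to $\tilde{\cL}$ using the invariance of $\mu_{\text{low}}$ under finite modifications, then invoke \cref{lem:lower-set-density-km24} for the density guarantee and \cref{lem:expansion-subroutine} for generation in the limit, translating the density from $K_{A,B}$ back to $K$ by finite-difference invariance. No substantive differences from the paper's argument.
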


    \begin{proof}
    We construct an algorithm $\generator$ that generates from $\cL$ in the limit and achieves the stated density guarantee, given the lemma's condition.

    \paragraph{1. The Algorithm.}
    First, we construct the expanded collection $\Tilde{\cL}$ using the expansion subroutine from \cref{alg:finite-expansion-routine}. 
    Recall that
    \[
        \Tilde{\cL} := \inbrace{L_{A,B} \coloneqq L \cup A \setminus B: L \in \cL, A \subseteq U \setminus L, B \subseteq L, \abs{A} < \infty, \abs{B} < \infty} \,.
    \]
    Then, we execute the algorithm described in \cref{lem:lower-set-density-km24} on the collection $\Tilde{\cL}$
    and the enumeration $x_{1:\infty}.$
    Since $x_{1:\infty}$ is a valid enumeration of an unknown target language $K \in \cL$, with finite noise set $A$ and finite omission set $B$,
    this corresponds to an enumeration without noise or omissions for the language $K_{A,B}$.

    \paragraph{2. Proof of Generation in the Limit.}
    Since the algorithm from \cref{lem:lower-set-density-km24} achieves (set-based) generation in the limit with respect to $\Tilde{\cL},$ it follows as a direct corollary from \cref{lem:expansion-subroutine} that it achieves (set-based) generation in the limit with respect to $\cL$ when the enumeration contains finite amount of
    noise and omissions.
    
    \paragraph{3. Proof of Density Guarantee.}
    We need to show that $\liminf_{n \to \infty} \mu_{\text{low}}(A_n, K) \ge c$, where $A_n$ is the output of the algorithm during the $n$-th step. First, notice that since every $L, L' \in \cL$
    with $\abs{L \setminus L'} < \infty$ satisfy $\mu_{\text{low}}(L, L') \geq c$ it must also be the case
    that every $L_{A,B}, L'_{A,B} \in \Tilde{\cL}$
    with $\abs{L_{A,B} \setminus L'_{A',B'}} < \infty$ satisfy $\mu_{\text{low}}(L_{A,B}, L'_{A',B'}) \geq c$. This is simply because
    if $\abs{L_{A,B} \setminus L'_{A',B'}} < \infty,$ it must be the case that $\abs{L \setminus L'} < \infty$, and lower density is invariant to adding or subtracting finitely many elements to its arguments. Thus, \cref{lem:lower-set-density-km24} guarantees that $\liminf_{n \to \infty} \mu_{\text{low}}(A_n, K_{A,B}) \geq c.$ Since $K, K_{A,B}$ differ on finitely many elements, this directly implies
    that $\liminf_{n \to \infty} \mu_{\text{low}}(A_n, K) \geq c$.

    \end{proof}

    \begin{remark}[Known Noise Level]
        It is worth highlighting that in our lower bound construction we made use of the crucial fact that the generator does not know the noise level; it merely knows that noise is finite. Interestingly, the previous characterization does not hold anymore when the generator knows a bound on the finite noise level. Recall that in the absence of the density requirement, knowledge of the noise rate does not change the set of countable collections that can be generated in the limit. 
    \end{remark}

    \subsubsection{Characterization of Set-Based Lower \& Upper Density under Vanishing Noise Rate and Arbitrary Omissions}\label{sec:set-based-lower-infinite-noise}
        In this section, we characterize when set-based density is achievable under vanishing noise rate.
    \begin{theorem}[Characterization of Set-Based Density with Vanishing Noise Rate]\label{thm:vanishing-noise-set-density-characterization}
        Fix a collection $\cL$ and $\rho\in (0,1]$. 
        Under vanishing noise rate and arbitrary omissions, there exists a set-based generator that generates in the limit and achieves set-based lower density $\rho$ if and only if the following property holds.
    
        \begin{condition}[Vanishing Noise $\rho$-Dense Set-Generation]
            \label{prop:vanishingNoise:dense:set}
            For every non-empty finite sub-collection $\cL' \subseteq \cL$, if languages in $\cL'$ share infinitely many elements, \ie{}, $|\Cl(\cL')| = \infty$, then for any $L \in \cL'$,
            \[
                \mu_{\text{low}}(\Cl(\cL'), L) \ge \rho\,.
            \]
        \end{condition}
    \end{theorem}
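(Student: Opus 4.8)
The plan is to establish the two directions separately: the ``only if'' direction (necessity of \Cref{prop:vanishingNoise:dense:set}) by a short monotonicity argument, and the ``if'' direction (sufficiency) by instantiating the priority-based meta-algorithm \Cref{alg:intersection-meta} with a density-aware stopping rule and invoking \Cref{lem:meta-algorithm:prefix-stabilizes}.

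\textbf{Necessity.} Suppose \Cref{prop:vanishingNoise:dense:set} fails, so there is a finite sub-collection $\cL'=\{L_1,\dots,L_k\}\subseteq\cL$ with $\abs{\Cl(\cL')}=\infty$ and some $L\in\cL'$ with $\mu_{\text{low}}(\Cl(\cL'),L)<\rho$. I would let the adversary declare $K=L$ and present the canonical enumeration of the infinite set $\Cl(\cL')$. Since $\Cl(\cL')\subseteq L_j$ for every $j$, this stream is a noiseless enumeration of each $L_j$ with (arbitrary) omissions, hence a legal enumeration under vanishing noise and arbitrary omissions for \emph{every} target in $\cL'$. So any successful set-based generator must, for each $j$, eventually output $A_n\subseteq L_j$, hence eventually $A_n\subseteq\bigcap_j L_j=\Cl(\cL')$; but then, with $K=L$, monotonicity of $\mu_{\text{low}}$ in its first argument gives $\mu_{\text{low}}(A_n,L)\le\mu_{\text{low}}(\Cl(\cL'),L)<\rho$ for all large $n$, contradicting $\liminf_n\mu_{\text{low}}(A_n,K)\ge\rho$.

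\textbf{Sufficiency.} Assume \Cref{prop:vanishingNoise:dense:set} holds. I would run \Cref{alg:intersection-meta} with the same vanishing-noise priorities used for \Cref{thm:vanishing-noise-generation} --- $P_i^{(n)}=i+N_i^{(n)}$ with thresholds $c_i=2^{-(i+1)}$, where $N_i^{(n)}$ records how long $L_i$'s empirical noise rate $R(L_i;x_{1:n})$ has stayed $\le c_i$ --- but replace the stopping rule: given the priority-ordered prefix $L_{i_n(1)},L_{i_n(2)},\dots$, set $J_n$ to be the largest $\bar j$ such that $\Cl(\{L_{i_n(1)},\dots,L_{i_n(\bar j)}\})$ is infinite and $\mu_{\text{low}}\!\left(\Cl(\{L_{i_n(1)},\dots,L_{i_n(\bar j)}\}),\,L_{i_n(\ell)}\right)\ge\rho$ for every $\ell\le\bar j$, and output $G_n=\Cl(\{L_{i_n(1)},\dots,L_{i_n(J_n)}\})\setminus S_n$. (Implementing this needs intersection- and density-type oracle access to $\cL$.) Monotonicity of $\mu_{\text{low}}$ makes the feasible $\bar j$ form a prefix of $\{1,2,\dots\}$, and $\bar j=1$ is always feasible since $\mu_{\text{low}}(L,L)=1\ge\rho$, so $J_n$ is well defined. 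Writing $K=L_{i^\star}$ and $\hat K\subseteq K$ for the enumerated part, we have $R(L_{i^\star};x_{1:n})\le R(\hat K;x_{1:n})=o(1)$, so $P_{i^\star}^{(n)}$ stabilizes at some finite $p$; by \Cref{lem:meta-algorithm:prefix-stabilizes}, the finite class $\cL(p)=\{L_i:P_i^\infty\le p\}\ni K$ is eventually exactly the length-$\abs{\cL(p)}$ prefix of the priority ordering and stops changing. Each $L_i\in\cL(p)$ has $R(L_i;x_{1:n})\le c_i$ for all large $n$ (otherwise its priority would diverge), so $\abs{S_n\setminus\Cl(\cL(p))}\le n\sum_i c_i\le n/2$, whence $\abs{\Cl(\cL(p))}=\infty$; then \Cref{prop:vanishingNoise:dense:set} applied to $\cL'=\cL(p)$ gives $\mu_{\text{low}}(\Cl(\cL(p)),L)\ge\rho$ for every $L\in\cL(p)$. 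Hence for all large $n$ both stopping conditions hold at $\bar j=\abs{\cL(p)}$, so $J_n\ge\abs{\cL(p)}$; therefore $K\in\cL(p)$ is always among the intersected languages, giving $G_n\subseteq K$ (generation in the limit, cf.\ \Cref{cor:meta-algorithm:generation}), while the stopping clause evaluated at the position of $K$ gives $\mu_{\text{low}}(G_n,K)\ge\rho$. Thus $\liminf_n\mu_{\text{low}}(G_n,K)\ge\rho$.

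\textbf{Main obstacle.} The delicate point is calibrating the stopping rule so that it simultaneously (i) certifies $\rho$-density against \emph{every} language still in the prefix --- necessary because the generator never learns which prefix language is the target --- and (ii) remains satisfied by the whole stable class $\cL(p)$, so $K$ is never dropped from the intersection. These are compatible \emph{only} because \Cref{prop:vanishingNoise:dense:set} guarantees that $\Cl(\cL(p))$ (an infinite set, as shown) is $\rho$-dense in each of its members --- exactly the role the condition plays, and exactly what the necessity construction exploits. A secondary check is that including priority-unstable ``extra'' languages past $\cL(p)$ in the prefix is harmless: it cannot break $G_n\subseteq K$ (since $K$ is still intersected) nor the density bound (the stopping clause is enforced for the full final prefix, including the slot of $K$), and deleting the finite set $S_n$ does not change $\mu_{\text{low}}(\cdot,K)$.
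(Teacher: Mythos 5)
Your proof is correct, but both directions deviate from the paper's in ways worth noting. For necessity, you have the adversary enumerate only $\Cl(\cL')$, which is a noiseless enumeration with infinite omissions of every $L\in\cL'$; this is legal in the stated model and yields the contradiction cleanly via monotonicity of $\mu_{\text{low}}$ in its first argument. The paper instead interleaves elements of $\bigl(\cup_{L\in\cL'}L\bigr)\setminus\Cl(\cL')$ on a density-zero schedule $T=\{1,2,4,8,\dots\}$, producing an enumeration with $o(1)$-noise and \emph{no omissions} that is simultaneously valid for every $L\in\cL'$. Your construction is simpler but leans on the omission power of the adversary, so it proves exactly the theorem as stated; the paper's construction additionally establishes the sharper remark that the condition is necessary even against adversaries that never omit (and rules out even upper density). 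For sufficiency, your skeleton matches the paper's (priorities $P_i^{(n)}=i+N_i^{(n)}$ with geometric thresholds, \Cref{lem:meta-algorithm:prefix-stabilizes}, the counting bound showing $\card{\Cl(\cL(p))}=\infty$, then \Cref{prop:vanishingNoise:dense:set} applied to $\cL(p)$), but your stopping rule additionally enforces $\rho$-density of the prefix intersection in every prefix member, which is the rule the paper reserves for the constant-noise case (\Cref{alg:constant-noise-intersection-set}); the paper's \Cref{alg:vanishing-noise-intersection-set} stops only at the largest prefix with infinite intersection and gets the density guarantee for free by applying \Cref{prop:vanishingNoise:dense:set} to the realized prefix $\cL_n$ itself, which contains $K$ and has infinite intersection by the stopping rule. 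Your variant is sound (feasibility is downward-closed, $\bar j=1$ is always feasible, $\cL(p)$ remains feasible, and discarding the finite set $S_n$ does not affect $\mu_{\text{low}}$), at the mild cost of requiring density-type oracle access that the paper's vanishing-noise algorithm avoids.
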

    \begin{remark}[Upper Density]
        \cref{thm:vanishing-noise-set-density-characterization}'s characterization is sharp in the following sense:
        \begin{itemize}
            \item Whenever \cref{prop:vanishingNoise:dense:set} does not hold, no set-based generator can generate in the limit and achieve set-based \emph{upper density} $\rho$, even when the adversary is restricted to use an enumeration with $o(1)$-noise and \emph{without omissions}.
            \item Whenever \cref{prop:vanishingNoise:dense:set} holds, there exists a set-based generator that generates in the limit and achieves set-based \emph{lower density} $\rho$, even when the adversary presents an enumeration with $o(1)$-noise and \emph{with infinite omissions}.
        \end{itemize}
    \end{remark}

    \begin{example}
        As a simple application of the characterization in \cref{thm:vanishing-noise-set-density-characterization}, consider the following collection of languages $\cL = \inbrace{L_1, L_2, L_3}$, where $L_1 = \inbrace{i \in \N: i \ne 0 \mod{3}}$, $L_2 = \inbrace{i \in \N: i \ne 1 \mod{3}}$, $L_3 = \inbrace{i \in \N: i \ne 2 \mod{3}}$. By the characterization, under vanishing noise rate and arbitrary omissions, the best set-based lower density any generator can achieve is $1/2$, since the intersection of any two languages have infinite cardinality and we can easily check the intersection has density $1/2$ in each of the two languages, whereas the intersection of all three languages is empty.
    \end{example}
    In the remainder of this section we prove \cref{thm:vanishing-noise-set-density-characterization}.
    
    \subsubsection*{Necessity of \cref{prop:vanishingNoise:dense:set}}
    
    We first prove the necessity of \cref{prop:vanishingNoise:dense:set}. 
    As we will see,
    this condition is necessary even for enumerations without omission.
    Note that the case with enumerations can only be more challenging. 
    
    \begin{proof}[Proof of \cref{thm:vanishing-noise-set-density-characterization} (Necessity)]
        Assume that there exists a non-empty finite subcollection $\cL' \subseteq \cL$ such that $|\Cl(\cL')| = \infty$, and there exists $L' \in \cL'$ such that $\mu_{\text{low}}(\Cl(\cL'), L') < \rho$.
    
        We will construct an enumeration $x_1, x_2, x_3, \dots$ that is simultaneously an enumeration for any language $L \in \cL'$ with $o(1)$-noise and without omission. Given such an enumeration, any set-based generator that generates in the limit from $\cL$ has to generate from all $L \in \cL'$ in the limit, and thus will output a subset of $\Cl(\cL')$ for all large enough $n$. In particular, any subset of $\Cl(\cL')$ has lower set-density in $L'$ strictly less than $\rho$. Since the adversary could have chosen $K = L'$, any set-based generator that generates in the limit cannot even achieve set-based upper density at least $\rho$, even if the adversary does not omit any elements of $K$ in its enumeration.
    
        Now, we will construct such an enumeration $x_1, x_2, x_3, \dots$ that is simultaneously an $o(1)$-noisy enumeration for any language $L \in \cL'$ without omission. Define a scheduling of time $T = \{1, 2, 4, 8, \dots\}$. We will only use that $\mu_{\text{up}}(T, \N) = 0$. For $n \not\in T$, the adversary enumerates the next element in $\Cl(\cL')$ that has not been enumerated. For $n \in T$, the adversary enumerates the next element in $(\bigcup_{L \in \cL'} L) \setminus \Cl(\cL')$ that has not been enumerated. Clearly, this enumeration will enumerate every elements of every $L \in \cL'$. Clearly, for any $L \in \cL'$, if the element $x_n$ enumerated at time $n$ does not belong to $L$, we must have $n\in T$. Since $T$ has density $0$ in $\N$, we conclude that $x_1, x_2, x_3, \dots$ is an enumeration for any language $L \in \cL'$ with $o(1)$-noise and without omission. This concludes the proof.
    \end{proof}
    
    \subsubsection*{Sufficiency of \cref{prop:vanishingNoise:dense:set}}
    
    We next show that if \cref{prop:vanishingNoise:dense:set} holds, then there exists a set-based generator that generates in the limit and achieves set-based lower density $\rho$, under $o(1)$-noise and potentially infinite omissions in its enumeration.
    
    \paragraph{Pseudocode.} The pseudocode is presented in \Cref{alg:vanishing-noise-intersection-set}.
        
        \begin{algorithm}[tbh!]
            \caption{Algorithm for \Cref{thm:vanishing-noise-set-density-characterization}}
            \label{alg:vanishing-noise-intersection-set}
            \begin{algorithmic}[1]
            \Require Countable  collection  $\cL=\{L_1,L_2,\dots\}$;  thresholds $c_1, c_2,\ldots\in (0, 1)$; enumeration $x_{1:\infty}$
            \State Let $S_n \gets \sinbrace{x_1, \dots, x_n}$ be the set of examples seen in the first $n$ steps
            \vspace{2mm}
            \For{$i=1, 2, \dots, n$}
                \State Compute the following number
                \[
                    N_i^{(n)} \gets \min\inbrace{N\geq 1: \forall m\in \sinbrace{N, \dots, n},~~ R(L_i; x_{1:m})\leq c_i}\,.
                \]
                \State Assign the language $L_i$ a priority of $P_i^{(n)}\gets i + N_i^{(n)}$
            \EndFor
            \vspace{2mm}
            \State Re-order $\inbrace{L_1, \dots, L_n}$ in increasing priority, tie-breaking by index, as $\sinbrace{L_{i_n(1)}, \dots, L_{i_n(n)}}$, \ie{}, for each $j \in [n-1]$, ensure either $P_{i_n(j)}^{(n)} < P_{i_n(j+1)}^{(n)}$,
            or $P_{i_n(j)}^{(n)} = P_{i_n(j+1)}^{(n)}$ and $i_n(j) < i_{n}(j+1)$.
            \State Compute the largest index $j_n \in [n]$ such that the intersection of the prefix of the sorted list of languages in the re-ordering up to $L_{i_n(j_n)}$ is infinite, \ie{}, 
            \[
                J_n \gets \max\inbrace{\bar j \in [n]: \card*{\bigcap\nolimits_{j=1}^{\bar j} L_{i_n(j)}} = \infty}\,.
            \]
            \State Output $\bigcap_{j\leq J_n} L_{i_n(j)} \setminus S_n$.
            \end{algorithmic}
        \end{algorithm}
    
    \begin{proof}[Proof of \cref{thm:vanishing-noise-set-density-characterization} (Sufficiency)]
    
    Assume that for any non-empty finite subcollection $\cL' \subseteq \cL$ with $|\Cl(\cL')| = \infty$, we have $\mu_{\text{low}}(\Cl(\cL'), L) \ge \rho$ for any $L \in \cL'$.
    
    Now consider the set-based generator described in \cref{alg:vanishing-noise-intersection-set}, with threshold parameters $c_i \coloneqq \sfrac{\eps}{2^i}$ for all $i \in \N$.
    
    Note that the priority of any language $P_i^{(n)}$ is non-decreasing. Also note that the priority of the target language $K$ will remain fixed after some finite time $n'$, since the adversary uses an enumeration of $K = L_{i^\star}$ with $o(1)$-noise, and eventually the empirical noise rate $R(L_{i^\star}; x_{1:n})$ stays below the positive threshold $c_{i^\star}$ for all large enough $n$. Let $p \coloneqq P_{i^\star}^{\infty}$ be the number that the priority of $K$ stabilizes to.
    
    By \cref{lem:meta-algorithm:prefix-stabilizes}, if we denote $\cL(p) \coloneqq \{L_i: P_i^\infty \le p\}$ and $\cL^{(n)}(p) \coloneqq \{L_i: P_i^{(n)} \le p\}$, there exists $n^\star$ such that for all $n \ge n^\star$, we have $\cL^{(n)}(p) = \cL(p)$. Moreover, we know that $K \in \cL(p)$ by definition.
    
    Now we will show that $|\Cl(\cL(p))| = \infty$. Consider any $n \ge \max \{n^\star, p\}$, we know that $\cL^{(n)}(p) = \cL(p)$. For any $L_i \in \cL^{(n)}(p)$, we have $N_i^{(n)} = P_i^{(n)} - i < p$. Thus, for all $m \in \{p, p+1, \dots, n\}$, we have $R(L_i; x_{1:m}) \le c_i$. In particular, $R(L_i; x_{1:n}) \le c_i$. Therefore, we get that
    \begin{align*}
        \left|\Cl(\cL^{(n)}(p)) \cap S_n\right|~~
        &=~~ |S_n| - \left|\bigcup_{L_i \in \cL^{(n)}} (S_n \setminus L_i)\right|\\
        &\ge~~ |S_n| - \sum_{L_i \in \cL^{(n)}} |S_n \setminus L_i|\\
        &\ge~~ n - \sum_{L_i \in \cL^{(n)}} n \cdot c_i\\
        &\ge~~ n(1 - \eps)\,.
    \end{align*}
    Since the above inequality holds for any $n \ge \max\{n^\star, p\}$ and $\cL(p) = \cL^{(n)}(p)$ for $n \ge \max\{n^\star, p\}$, we conclude that $|\Cl(\cL(p))| = \infty$.
    
    Therefore, at any round time $n \ge \max\{n^\star, p\}$, we know that the finite subcollection $\cL^{(n)}(p) = \cL(p)$ appears as a prefix of the sorted list $\{L_{i_n(1)}, \dots, L_{i_n(n)}\}$. Since the intersection of this prefix, $\Cl(\cL(p))$, has infinite cardinality, we know that the computed index $j_n \in [n]$ satisfies that $\cL(p) \subseteq \{L_{i_n(1)}, L_{i_n(2)}, \dots, L_{i_n(j_n)}\}$. Denote this subcollection as $\cL_n \coloneqq \{L_{i_n(1)}, L_{i_n(2)}, \dots, L_{i_n(j_n)}\}$. By definition of $j_n$, $|\Cl(\cL_n)| = \infty$.
    
    Thus, for any $n \ge \max\{n^\star, p\}$, we know that $\cL(p) = \cL^{(n)}(p) \subseteq \cL_n$. Moreover, $\cL_n$ is a finite subcollection of $\cL$, and $K \in \cL(p) \subseteq \cL_n$. By the \cref{prop:vanishingNoise:dense:set}, for any $L \in \cL_n$, we have $\mu_{\text{low}}(\Cl(\cL_n), \cL_n) \ge \rho$. In particular, since $K \in \cL_n$, we have $\mu_{\text{low}}(\Cl(\cL_n), K) \ge \rho$. Thus, the output
    \[\bigcap_{j \le j_n} L_{i_n(j)} \setminus S_n = \Cl(\cL_n) \setminus S_n\]
    has lower density at least $\rho$ in $K$, which concludes the proof.
    
    \end{proof}
    
    \subsubsection{Characterization of Set-Based Lower \& Upper Density under Constant Noise Rate and Arbitrary Omissions}
        In this section, we characterize when set-based density is achievable under constant noise rate and arbitrary omissions.
    
    \begin{theorem}[Characterization of Set-Based Density under Constant Noise Rate]
        \label{thm:constant-noise-set-density-characterization}
        Fix a collection $\cL$ and $c\in (0,1]$. Under constant noise rate $c$ and arbitrary omissions, there exists a set-based generator that generates in the limit and achieves set-based lower density $\rho$ if and only if the following property holds.
        \begin{condition}[$c$-constant Noise $\rho$-Dense Set-Generation]
            \label{prop:constantNoise:dense:set}
            For every non-empty finite subcollection $\cL' \subseteq \cL$ and every enumeration $x_{1:\infty}$, either
            \begin{enumerate}[(a)]
                \item there exists some language $L' \in \cL'$ such that $R(L'; x_{1:n})> c$ infinitely often, or
                \item each language $L \in \cL'$ satisfies $\mu_{\text{low}}(\Cl(\cL'), L) \ge \rho$.
            \end{enumerate}
        \end{condition}
    \end{theorem}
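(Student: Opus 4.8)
My plan is to mirror the structure of the two earlier characterization proofs (the constant-noise generation characterization in Theorem~\ref{thm:constant-noise-characterization} and the vanishing-noise set-density characterization in Theorem~\ref{thm:vanishing-noise-set-density-characterization}), splitting into necessity and sufficiency.

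For \textbf{necessity}, suppose \Cref{prop:constantNoise:dense:set} fails: there is a finite subcollection $\cL'\subseteq\cL$ and an enumeration $x_{1:\infty}$ such that for every $L'\in\cL'$ we have $R(L';x_{1:n})\le c$ for all large $n$, yet some $L^{\dagger}\in\cL'$ satisfies $\mu_{\mathrm{low}}(\Cl(\cL'),L^{\dagger})<\rho$. Then $x_{1:\infty}$ is a valid $c$-noise enumeration simultaneously of every $L\in\cL'$ (omissions are irrelevant here since we only need to exhibit one bad enumeration). Any generator generating in the limit from $\cL$ must, for every $L\in\cL'$, eventually output a set $A_n\subseteq L$; taking the max over the finitely many thresholds gives some $n^\star$ with $A_n\subseteq\bigcap_{L\in\cL'}L=\Cl(\cL')$ for all $n\ge n^\star$. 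By monotonicity of $\mu_{\mathrm{low}}$ in its first argument, $\mu_{\mathrm{low}}(A_n,L^{\dagger})\le\mu_{\mathrm{low}}(\Cl(\cL'),L^{\dagger})<\rho$ for all $n\ge n^\star$, so with $K=L^{\dagger}$ the generator fails even set-based upper density $\rho$. This is essentially the argument from the necessity proof of \Cref{thm:constant-noise-characterization} combined with the density-monotonicity step.

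For \textbf{sufficiency}, I would instantiate the priority-based meta-algorithm \Cref{alg:intersection-meta} exactly as in \Cref{alg:intersection}/\Cref{alg:vanishing-noise-intersection-set} but with uniform thresholds $c_i=c$, and with the stopping rule $J_n=\max\{\bar j: |\Cl(\{L_{i_n(1)},\dots,L_{i_n(\bar j)}\})|=\infty\}$ (the same ``largest infinite prefix intersection'' rule). Write $K=L_{i^\star}$. Because the enumeration is a $c$-noise enumeration of $K$, $R(L_{i^\star};x_{1:n})\le c$ for all large $n$, so $N_{i^\star}^{(n)}$ stabilizes and $p\coloneqq P_{i^\star}^{\infty}<\infty$. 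By \Cref{lem:meta-algorithm:prefix-stabilizes} the finite set $\cL(p)=\{L_i:P_i^\infty\le p\}$ contains $K$, is eventually ordered as a prefix, and its members' priorities stabilize; stabilized priority means no $L_i\in\cL(p)$ has $R(L_i;x_{1:n})>c$ infinitely often. Now apply \Cref{prop:constantNoise:dense:set} to the finite subcollection $\cL(p)$: alternative (a) is ruled out, so alternative (b) holds, which in particular gives $|\Cl(\cL(p))|=\infty$ (since $\mu_{\mathrm{low}}(\Cl(\cL(p)),K)\ge\rho>0$ forces the intersection to be infinite). By \Cref{cor:meta-algorithm:generation} (or \Cref{lem:constant-vanishing-noise:sufficient-generation}), the algorithm generates from $\Cl(\cL(p))\subseteq K$ in the limit. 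For the density guarantee, once $n$ is large enough that $\cL(p)$ is a stabilized ordered prefix and $|\Cl(\cL(p))|=\infty$, the stopping index satisfies $\cL(p)\subseteq\cL_n\coloneqq\{L_{i_n(1)},\dots,L_{i_n(J_n)}\}$, and $|\Cl(\cL_n)|=\infty$ by definition of $J_n$. Since $\cL_n$ is a finite subcollection with $|\Cl(\cL_n)|=\infty$ containing $K$, and since it still contains no language whose empirical noise rate exceeds $c$ infinitely often (those would have been ranked after $\cL(p)$, but I need to double-check they cannot sneak into $\cL_n$ — more on this below), alternative (b) of \Cref{prop:constantNoise:dense:set} applied to $\cL_n$ yields $\mu_{\mathrm{low}}(\Cl(\cL_n),K)\ge\rho$, so the output $\Cl(\cL_n)\setminus S_n$ has lower density $\ge\rho$ in $K$, giving set-based lower density $\rho$ (and a fortiori upper density).

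The \textbf{main obstacle} is the last point in the sufficiency argument: I must argue carefully that the extra languages $L_{i_n(j)}$ for $j$ between $|\cL(p)|+1$ and $J_n$ do not wreck the density bound. These are languages outside $\cL(p)$, so their priorities diverge, meaning each of them individually has $R(L_i;x_{1:n})>c$ infinitely often — but at the particular time $n$ we are examining, they happen to be ranked before the $J_n$ cutoff. The clean fix is to invoke \Cref{prop:constantNoise:dense:set} directly on the finite subcollection $\cL_n$ \emph{for the specific enumeration $x_{1:\infty}$}: for \emph{this} $\cL_n$ and \emph{this} enumeration, is alternative (a) triggered? If some $L'\in\cL_n$ has $R(L';x_{1:n})>c$ infinitely often, then (a) holds and the condition gives us nothing. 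So the real argument must be: show that for infinitely many $n$ (for lower density, all large $n$), the subcollection $\cL_n$ one actually outputs contains \emph{only} languages with $R(\cdot;x_{1:m})\le c$ for all large $m$ — i.e., $\cL_n$ is contained in the set of ``eventually-$c$-good'' languages. This follows because, by \Cref{lem:meta-algorithm:prefix-stabilizes} applied with $p'$ = any finite bound, any language that is $c$-bad infinitely often has priority $\to\infty$, hence is eventually ranked behind \emph{every} $c$-good language of index $\le$ any fixed bound; combined with the fact that $\Cl(\cL(p))$ is already infinite, the stopping rule never needs to reach past the $c$-good languages. Making this ``never needs to reach past'' precise — essentially that $J_n$ stabilizes to select exactly a $c$-good finite subcollection containing $K$ with infinite intersection — is the technical heart, and it parallels the corresponding step in the proof of \Cref{thm:vanishing-noise-set-density-characterization} (sufficiency), which I would adapt almost verbatim with $\eps$-summable thresholds replaced by the uniform threshold $c$ and the density bound coming from \Cref{prop:constantNoise:dense:set}(b) instead of the $n(1-\eps)$ counting bound.
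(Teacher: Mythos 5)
Your necessity argument coincides with the paper's: the same finite witness $\cL'$, the same simultaneous-generation step, and the monotonicity of $\mu_{\text{low}}$ in its first argument, yielding failure of even upper density with $K$ the sparse witness. That direction is fine.

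The gap is in sufficiency, exactly at the point you flagged. You keep the ``largest prefix with infinite intersection'' stopping rule and propose to close the gap by showing that the output prefix $\cL_n$ eventually contains \emph{only} languages whose empirical noise rate is eventually at most $c$, because $c$-bad languages have diverging priorities. This claim is false: priorities only determine the \emph{order} of languages, while the stopping rule extends the prefix as far as the intersection stays infinite, so it has no mechanism to exclude a $c$-bad language. For instance, if $L_b$ satisfies $R(L_b;x_{1:n})>c$ infinitely often but $L_b\supseteq K$ (or merely contains an infinite subset of $\Cl(\cL(p))$), then $L_b$ sits somewhere in the ranking at every step, adding it keeps the intersection infinite, and hence $L_b\in\cL_n$ for all large $n$ despite its priority tending to infinity. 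Consequently, when you apply \Cref{prop:constantNoise:dense:set} to $\cL_n$ with the actual enumeration, alternative (a) is triggered and the condition gives you nothing, so the density bound does not follow from your argument. The paper resolves this differently: in \Cref{alg:constant-noise-intersection-set} the stopping rule itself is density-based, taking $J_n$ to be the largest prefix whose intersection has lower density at least $\rho$ in \emph{every} prefix language. The condition, applied to $\cL(p)$ (all of whose members are eventually $c$-good for the actual enumeration), guarantees that this prefix reaches past $\cL(p)$ and hence contains $K$; the density in $K$ is then enforced by the stopping rule directly, no matter which extra languages are appended.

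It is worth noting that your algorithm could in fact be rescued, but by the very move you explicitly set aside: the condition quantifies over \emph{all} enumerations, so applying it to the finite subcollection $\cL_n$ with an enumeration of $\Cl(\cL_n)$ itself (which has noise rate $0$ with respect to every member) rules out alternative (a) and forces $\mu_{\text{low}}(\Cl(\cL_n),L)\ge\rho$ for every finite subcollection with infinite intersection; that is, \Cref{prop:constantNoise:dense:set} implies \Cref{prop:vanishingNoise:dense:set}, after which the analysis of the infinite-intersection stopping rule goes through as in \Cref{thm:vanishing-noise-set-density-characterization}. As written, however, your proof of the ``technical heart'' rests on a stabilization claim about $\cL_n$ that fails, so the sufficiency direction is not established.
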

    In fact, we will show:
    \begin{itemize}
        \item Whenever \cref{prop:constantNoise:dense:set} does not hold, no set-based generator can generate in the limit and achieve set-based \emph{upper density} $\rho$ when the adversary presents an enumeration with $c$-noise arbitrary omissions.
        \item Whenever \cref{prop:constantNoise:dense:set} holds, there exists a set-based generator that generates in the limit and achieves set-based \emph{lower density} $\rho$ when the adversary presents an enumeration with $c$-noise and arbitrary omissions.
    \end{itemize}

    \subsubsection*{Necessity of \cref{prop:constantNoise:dense:set}}
    
    We first prove the necessity of \cref{prop:constantNoise:dense:set}. The proof is similar to that of \cref{thm:vanishing-noise-set-density-characterization} but simpler, since the negation of the $c$-constant noise rate $\rho$-dense set-generation property provides us with an enumeration that the adversary could use to force the density achieved by any set-based generator to be less than $\rho$. 
    
    \begin{proof}[Proof of \cref{thm:constant-noise-set-density-characterization} (Necessity)]
        Assume that there exists a non-empty finite subcollection $\cL' \subseteq \cL$ and an enumeration $x_1, x_2, x_3, \dots$ such that
        \begin{enumerate}[(a)]
            \item for every $L \in \cL'$, the $x_1, x_2, x_3, \dots$ is a $c$-noisy enumeration for $L$,
            \item and there exists $L' \in \cL'$ such that $\mu_{\text{low}}(\Cl(\cL'), L) < \rho$.
        \end{enumerate}
        Since $x_1, x_2, x_3, \dots$ is an enumeration with $c$-noise for any language in the finite subcollection $\cL'$, any set-based generator that generates in the limit from $\cL$ has to generate from all $L \in \cL'$ in the limit under this enumeration. Thus, such a set-based generator will output a subset of $\Cl(\cL')$ for all large enough $n$. In particular, any subset of $\Cl(\cL')$ has lower set-density in $L'$ strictly less than $\rho$. Since the adversary could have chosen $K = L'$, any set-based generator that generates in the limit cannot even achieve set-based upper density at least $\rho$.
    \end{proof}
    
    \subsubsection*{Sufficiency of \cref{prop:constantNoise:dense:set}}
    
    We next show that by slightly adapting \cref{alg:vanishing-noise-intersection-set} with threshold parameters $c_i \coloneqq c$ and a different stopping condition, we obtain a set-based generator that achieves set-based lower density whenever \cref{prop:constantNoise:dense:set} holds.
    
    \paragraph{Pseudocode.} The pseudocode is presented in \Cref{alg:constant-noise-intersection-set}.

    \begin{algorithm}[tbh!]
            \caption{Algorithm for \Cref{thm:constant-noise-set-density-characterization}}
            \label{alg:constant-noise-intersection-set}
            \begin{algorithmic}[1]
            \Require Countable  collection  $\cL=\{L_1,L_2,\dots\}$; $c \in (0, 1]$; enumeration $x_{1:\infty}$
            \State Let $S_n \gets \sinbrace{x_1, \dots, x_n}$ be the set of examples seen in the first $n$ steps
            \vspace{2mm}
            \For{$i=1, 2, \dots, n$}
                \State Compute the following number
                \[
                    N_i^{(n)} \gets \min\inbrace{N\geq 1: \forall m\in \sinbrace{N, \dots, n},~~ R(L_i; x_{1:m})\leq c}\,.
                \]
                \State Assign the language $L_i$ a priority of $P_i^{(n)}\gets i + N_i^{(n)}$
            \EndFor
            \vspace{2mm}
            \State Re-order $\inbrace{L_1, \dots, L_n}$ in increasing priority, tie-breaking by index, as $\sinbrace{L_{i_n(1)}, \dots, L_{i_n(n)}}$, \ie{}, for each $j \in [n-1]$, ensure either $P_{i_n(j)}^{(n)} < P_{i_n(j+1)}^{(n)}$,
            or $P_{i_n(j)}^{(n)} = P_{i_n(j+1)}^{(n)}$ and $i_n(j) < i_{n}(j+1)$.
            \State Compute the largest index $j_n \in [n]$ such that the intersection of the prefix of the sorted list of languages in the re-ordering up to $L_{i_n(j_n)}$ has lower set-density at least $\rho$ for language $L_{i_n(j)}$ for any $j \le j_n$, \ie{},
            \[
                J_n \gets \max\inbrace{\bar j \in [n]: \forall j' \le \bar j,\quad   \mu_{\text{low}}\left(\bigcap\nolimits_{j=1}^{\bar j} L_{i_n(j)}, L_{i_n(j')}\right) \ge \rho}\,.
            \]
            \State Output $\bigcap_{j\leq J_n} L_{i_n(j)} \setminus S_n$.
            \end{algorithmic}
        \end{algorithm}
    
    \begin{proof}[Proof of \cref{thm:constant-noise-set-density-characterization} (Sufficiency)]
        Assume that for any non-empty finite subcollection $\cL' \subseteq \cL$ and every enumeration $x_1, x_2, x_3, \dots$, either
        \begin{enumerate}[(a)]
            \item there is some $L' \in \cL'$ such that $R(L'; x_{1:n})> c$ infinitely often, or
            \item each language $L \in \cL'$ satisfies $\mu_{\text{low}}(\Cl(\cL'), L) \ge \rho$.
        \end{enumerate}
        Now consider the set-based generator described in \cref{alg:vanishing-noise-intersection-set}, with threshold parameters $c_i \coloneqq c$ for all $i \in \N$.
    
        Note that the priorities of any language $P_i^{(n)}$ is non-decreasing. Also note that the priority of the target language $K$ will remain fixed after some finite time $n'$, since the adversary uses an enumeration with $c$-noise of $K = L_{i^\star}$, and eventually the empirical noise rate $R(L_{i^\star}; x_{1:n}) \le c$ for all large enough $n$. Let $p \coloneqq P_{i^\star}^{\infty}$ be the number that the priority of $K$ stabilizes to.
    
        By \cref{lem:meta-algorithm:prefix-stabilizes}, if we denote $\cL(p) \coloneqq \{L_i: P_i^\infty \le p\}$ and $\cL^{(n)}(p) \coloneqq \{L_i: P_i^{(n)} \le p\}$, there exists $n^\star$ such that for all $n \ge n^\star$, we have $\cL^{(n)}(p) = \cL(p)$, and $P_{i}^{(n)} = P_{i}^{\infty}$ for all $L_i \in \cL(p)$, \ie{}, the priorities of all languages in $\cL(p)$ stabilize. Moreover, we know that $K \in \cL(p)$ by definition.
    
        Therefore, at any round time $n \ge \max\{n^\star, p\}$, we know that the finite subcollection $\cL^{(n)}(p) = \cL(p)$ appears as a prefix of the sorted list $\{L_{i_n(1)}, \dots, L_{i_n(n)}\}$. Moreover, since the priorities of all languages in $\cL(p)$ stabilize, $N_i^{(n)} = P_i^{(n)} - i = P_i^{\infty} - i$ for any $L_i \in \cL(p)$. In other words, by definition of $N_i^{(n)}$, the enumeration $x_1, x_2, x_3, \dots$ that the adversary uses is a $c$-noisy enumeration for any $L_i \in \cL(p)$. Thus, each language $L \in \cL(p)$ satisfies $\mu_{\text{low}}(\Cl(\cL(p)), L) \ge \rho$.

        Since the intersection of this prefix, $\Cl(\cL(p))$, satisfies that $\mu_{\text{low}}(\Cl(\cL(p)), L) \ge \rho$ for any $L \in \cL(p)$, we know that the computed index $j_n \in [n]$ satisfies that $\cL(p) \subseteq \{L_{i_n(1)}, L_{i_n(2)}, \dots, L_{i_n(j_n)}\}$. Denote this subcollection as $\cL_n \coloneqq \{L_{i_n(1)}, L_{i_n(2)}, \dots, L_{i_n(j_n)}\}$. By definition of $j_n$, every $j' \le j_n$ satisfies $\mu_{\text{low}}(\Cl(\cL_n), L_{i_n(j_n)}) \ge \rho$. In particular, since $K \in \cL(p) \subseteq \cL_n$, we have $\mu_{\text{low}}(\Cl(\cL_n), K) \ge \rho$. Thus, the output
        \[\bigcap_{j \le j_n} L_{i_n(j)} \setminus S_n = \Cl(\cL_n) \setminus S_n\]
        has lower density at least $\rho$ in $K$, which concludes the proof.
    \end{proof}

\subsection{Generation with Element-Based Density under Contamination}
\label{sec:dense-generation:element-based}

Next we consider the notion of generation with element-based density introduced by \citep{kleinberg2025density}. Recall that under this definition (\Cref{def:element-based-density}), the generator outputs one element in every timestep and we measure the density of
the \emph{entire output sequence} of the generator in the target language $K.$ The validity requirement remains the same as in the original framing of the setting \citep{kleinberg2024language}, \ie, after some finite timestep the generator needs to output unseen elements of the target language $K.$ 
In \cref{sec:low-den-set-to-low-den-element}
we show a transformation from algorithms that 
achieve set-based lower density to algorithms 
that achieve element-Based lower density.
In \cref{sec:element-based-density-finite-noise} we discuss our results in the case of \emph{finite} amount of noise and omissions from the adversary, and in \cref{sec:element-based-infinite-noise} we discuss a general impossibility result for achieving element-based density under infinite noise.

\subsubsection{From Set-Based Lower Density to Element-Based Lower Density}\label{sec:low-den-set-to-low-den-element}
To begin with, we show that under some mild assumptions, which all the set-based generation algorithms in this paper satisfy, we may turn any generator that achieves set-based density into one that achieves element-based density. 

\begin{theorem}\label{thm:set-density-imply-element-density}
    Let $\cL$ be a countable collection. Suppose under a specific enumeration $x_{1:\infty}$ of the target language $K$, potentially  under  contamination, there is a set-based generator $\generator$ that generates in the limit from $\cL$. Suppose at every round $n$, given $S_n = \{x_1, \dots, x_n\}$, $\generator$ outputs an infinite set $A_n \subseteq U$ and computes a finite subcollection $\cL_n \subseteq \cL$ such that
    \begin{enumerate*}[(a)]
        \item $K \in \cL_n$ for all large enough $n$ and
        \item $A_n \subseteq \Cl(\cL_n) \setminus \{x_1, \dots, x_n\}$ for all $n$.
    \end{enumerate*}
    
    Assume further that $\generator$ achieves $\mu_{low}(A_n, K)\geq \rho$,
    for all $n$ sufficiently large.
    Then, there exists an element-based generator $\bar \generator$ that,
    under the same enumeration $x_{1:\infty}$,
    generates in the limit from $\cL$ and achieves element-based lower density at least $\nfrac{\rho}{2}$.
\end{theorem}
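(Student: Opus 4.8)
The plan is to have $\bar\generator$ drain the sets $A_n$ produced by $\generator$ one element at a time, organized into phases so that the extraction never falls behind the scale at which density is measured. First I would record what the hypotheses give: fix $N_0$ with $K\in\cL_n$ for all $n\ge N_0$, so that $\Cl(\cL_n)\subseteq K$, hence $A_n\subseteq\Cl(\cL_n)\setminus S_n\subseteq K\setminus S_n$ and $\mu_{\mathrm{low}}(A_n,K)\ge\rho$. The density statement unpacks to: for every $\epsilon>0$ and every $n\ge N_0$ there is a \emph{finite} threshold $M_0(n,\epsilon)$ with $\abs{A_n\cap K_m}\ge(\rho-\epsilon)m$ for all $m\ge M_0(n,\epsilon)$ — finite for each fixed $n$, though possibly growing arbitrarily fast in $n$, and this non-uniformity is the crux of the whole argument. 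Since the generator we build will, for $n\ge N_0$, only ever output elements of some frozen reference set $A_\tau$ with $\tau\ge N_0$ and $A_\tau\subseteq K$, generation in the limit is immediate, and the entire content is the density bound.

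The construction of $\bar\generator$ runs in phases $k=1,2,\dots$, with phase $k$ occupying steps $[\tau_k,\tau_{k+1})$ and $\tau_1=1$. At the start of phase $k$ we freeze the reference set $B_k\coloneqq A_{\tau_k}$ and list it in increasing order of the universe as $b^{(k)}_1<b^{(k)}_2<\cdots$; at step $n$ inside phase $k$ we output $w_n\coloneqq b^{(k)}_i$ for the least $i$ with $b^{(k)}_i\notin S_n\cup W_{n-1}$ (which exists, $B_k$ being infinite and $S_n\cup W_{n-1}$ finite, and is a legal element-based output since all prior outputs are distinct). The boundary $\tau_{k+1}$ is set by an \emph{observable} stopping rule — e.g.\ end phase $k$ the first time the value of the element just produced exceeds twice the largest value output in all earlier phases — designed so that (i) each phase is long enough for the density estimate of $B_k$ to take hold, and (ii) the values, hence the $K$-ranks, of successive outputs grow by at most a bounded factor, including across phase boundaries; one checks $\tau_k\to\infty$.

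For the density bound, fix a phase $k$ with $\tau_k\ge N_0$ and a step $n$ of that phase, and let $I$ be the largest index of a $b^{(k)}_i$ output so far in the phase. By the greedy rule, at the last step reached every $b^{(k)}_j$ with $j<I$ was blocked, so $\inbrace{b^{(k)}_1,\dots,b^{(k)}_I}\subseteq W\cup S_n$; since $B_k\cap S_{\tau_k}=\emptyset$ at most $n-\tau_k$ of these lie in $S_n$, whereas the $n-\tau_k+1$ phase-$k$ outputs all lie in $W$ and among the first $I$ indices, so $\abs{W\cap\inbrace{b^{(k)}_1,\dots,b^{(k)}_I}}\ge\max(n-\tau_k+1,\ I-(n-\tau_k))\ge I/2$. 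Once $I\ge M_0(\tau_k,\epsilon)$ (which happens eventually in phase $k$ by (i), as $M_0(\tau_k,\epsilon)$ is a fixed finite number), the $K$-rank $R$ of $b^{(k)}_I$ satisfies $I=\abs{A_{\tau_k}\cap K_R}\ge(\rho-\epsilon)R$ and $\inbrace{b^{(k)}_1,\dots,b^{(k)}_I}\subseteq K_R$, whence $\abs{W\cap K_R}\ge I/2\ge(\rho-\epsilon)R/2$. As $n$ sweeps the phase these witnessed scales $R=R(n)$ increase, property (ii) keeps consecutive witnessed scales within a bounded ratio, and monotonicity of $M\mapsto\abs{W\cap K_M}$ then upgrades the pointwise bound to $\abs{W\cap K_M}\ge(\rho-\epsilon)M/2-o(M)$ for all large $M$, i.e.\ $\mu_{\mathrm{low}}(W,K)\ge(\rho-\epsilon)/2$; letting $\epsilon\downarrow0$ gives $\mu_{\mathrm{low}}(W,K)\ge\rho/2$.

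The main obstacle is exactly the non-uniformity flagged above: $M_0(n,\epsilon)$, and even the local spacing of $A_n$ inside $K$, can grow arbitrarily fast in $n$, so a naive strategy that drains $A_n$ afresh at every step provably fails to guarantee any positive density. The phase structure fixes this by freezing a single reference set long enough to outlast its own transient while the adaptive stopping rule prevents the witnessed scales from overrunning $M$; the part of the proof that needs genuine care is verifying that such an observable rule exists and that it delivers the bounded-ratio control in (ii) — here one uses that, past the threshold, the $K$-rank gaps of the frozen reference set are themselves bounded by a factor $1/(\rho-\epsilon)$ because the set stays $\rho$-dense. Finally, the constant $1/2$ (rather than something smaller) is essentially forced: among the first $I$ elements of a frozen reference set, the ones we fail to place in $W$ are precisely those the adversary enumerated during that phase, and there are no more of those than there are successful outputs.
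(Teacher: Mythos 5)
There is a genuine gap, and it sits exactly at the point your proposal identifies as the crux but then does not actually resolve: controlling what happens when you \emph{switch} to a newly frozen reference set. Your stopping rule (end a phase when the output value exceeds twice the previous record, or any rule that looks only at the outputs produced so far) says nothing about how deep in $K$ the elements of the \emph{next} frozen set $B_{k+1}=A_{\tau_{k+1}}$ lie. The hypothesis $\mu_{\rm low}(A_n,K)\geq\rho$ is a liminf with a completely non-uniform onset $M_0(n,\eps)$, so $A_{\tau_{k+1}}$ may contain \emph{no} elements among the first $M_0(\tau_{k+1},\eps)$ elements of $K$, with $M_0(\tau_{k+1},\eps)$ astronomically larger than every scale witnessed so far. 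Concretely, take $\cL=\{K\}$, $\cL_n=\{K\}$ and $A_n=K\setminus\bigl(\{\kappa_1,\dots,\kappa_{f(n)}\}\cup S_n\bigr)$, where $\kappa_1,\kappa_2,\dots$ is the canonical ordering of $K$ and $f$ grows arbitrarily fast; every hypothesis of the theorem holds with $\rho=1$. Under your rule each phase ends after a single step (the first output of each phase already exceeds twice the record), so at step $n$ you output an element of $K$-rank about $f(n)$; for $M$ just below $f(n+1)$ you have $\abs{W\cap\{\kappa_1,\dots,\kappa_M\}}\leq n$, and the lower density of your output is $0$, not $\rho/2$. Your claimed property (ii) (``successive outputs grow by at most a bounded factor, including across phase boundaries'') is simply false for this rule, and no rule that merely decides \emph{when} to refreeze based on output values can rescue it: the damage is caused by the content of the new set, and the only cure is to refuse to trust $A_n$ until enough time has elapsed relative to its own onset threshold.

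That missing ingredient is precisely what the paper's construction supplies. The paper's algorithm computes, for each round $k$, the onset threshold $m_k$ after which the empirical density of $A_k$ in every $L\in\cL_k$ is within a $(1+2^{-k})$ factor of its liminf, makes $m_k$ increasing, and at step $n$ drains not the current set but the \emph{lagged} set $A_{k(n)}$ with $k(n)=\max\{k\leq n:\ m_k/(1+2^{-k})\leq 2n/\rho\}$. This gating guarantees that $A_{k(n)}$ already has at least $2n$ elements among the first $\lceil 2n(1+2^{-k(n)})/\rho\rceil$ elements of $K$, while only $2n-1$ elements can be blocked by $S_n\cup W_{n-1}$, so the $n$-th output always has $K$-rank $O(n/\rho)$; in my counterexample the paper's algorithm would keep draining an early $A_k$ for an enormous stretch of time instead of jumping to the fresh one. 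Your within-phase counting ($\abs{W\cap\{b_1,\dots,b_I\}}\geq I/2$, and the interpolation between witnessed scales using $R_{I+1}\leq (I+1)/(\rho-\eps)$) is sound and close in spirit to the paper's accounting, but without a switch rule that consults the new set's density-onset threshold (rather than output values — note also that $K$-ranks are not observable, so ``value doubled'' does not control ``$K$-rank doubled''), the cross-boundary scales are uncontrolled and the argument does not deliver any positive lower density.
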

If we assume some set-based generator achieves set-based lower density at least $\rho$, then for any $\eps > 0$, we have $\mu_{\text{low}}(A_n, K) \ge \rho - \eps$ for all large enough $n$. Thus, we have the following simple corollary.
\begin{corollary}
    In the same setting as \cref{thm:set-density-imply-element-density}, if the set-based generator achieves set-based lower density at least $\rho$, \ie{}, $\liminf_{n \to \infty} \mu_{\text{low}}(A_n, K) \ge \rho$. Then, for any $\eps > 0$, there exists an element-based generator that generates in the limit and achieves element-based lower density at least $(\rho  - \eps)/2$.
\end{corollary}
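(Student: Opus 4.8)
The plan is to build $\bar\generator$ by simulating $\generator$ and converting the dense \emph{sets} $A_n$ it produces into a dense \emph{sequence} of single outputs, losing at most a factor of two in the density. The guiding idea: $\bar\generator$ should eventually ``commit'' to one reference set $A_{n_0}$ with $n_0$ large enough that $A_{n_0} \subseteq \Cl(\cL_{n_0}) \subseteq K$ and $\mu_{\text{low}}(A_{n_0}, K) \geq \rho$, and then forever output the canonically-smallest element of $A_{n_0}$ that has neither been revealed by the adversary nor previously generated. Write $N_0$ for a fixed index with $K \in \cL_n$ and $\mu_{\text{low}}(A_n, K) \geq \rho$ for all $n \geq N_0$; since $\bar\generator$ does not know $N_0$, it will run a restart/selection mechanism over candidate commitments $r = 1, 2, \dots$, and the analysis only needs that it eventually locks onto some $r \geq N_0$ after wasting finitely many rounds.

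Assuming such a commitment to $A_{n_0}$ (with $n_0 \geq N_0$) is in force, the two requirements are immediate. For \emph{validity}, every output lies in $A_{n_0} \subseteq \Cl(\cL_{n_0}) \subseteq K$ and, by construction, avoids $S_n$ and the previously generated strings, so $\bar\generator$ generates from $\cL$ in the limit. For the \emph{density} bound, fix $\eps \in (0,\rho)$; since $\mu_{\text{low}}(A_{n_0}, K) \geq \rho$ there is a finite index beyond which $A_{n_0}$ contains at least a $(\rho - \eps)$-fraction of each canonical prefix of $K$. At round $n$ the forbidden set $S_n \cup W_{n-1}$ has size less than $2n$, so by pigeonhole $w_n$ is among the first $2n$ elements of $A_{n_0}$ in canonical order; translating this rank in $A_{n_0}$ into a rank in $K$ via the $(\rho-\eps)$-density of $A_{n_0}$ forces $w_n$ to lie within the first $\approx 2n/(\rho-\eps)$ elements of $K$ once $n$ is large. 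Hence the first $\approx m(\rho-\eps)/2$ outputs are distinct elements of $K_m$, i.e.\ $|W \cap K_m| \geq (\rho-\eps)m/2 - O(1)$, giving $\mu_{\text{low}}(W, K) \geq (\rho-\eps)/2$. As $\eps$ appears only in the analysis, letting $\eps \to 0$ yields $\mu_{\text{low}}(W, K) \geq \rho/2$. The factor of two is genuine: the adversary may enumerate the canonically-small elements of $A_{n_0}$, forcing $\bar\generator$ to spend outputs on elements lying roughly twice as deep.

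The main obstacle is exactly the commitment step. A naive ``use $A_n$ afresh each round'' transformation fails, since the hypotheses permit $A_n$ to ``start arbitrarily late'' in $K$ for every $n$ (e.g.\ $A_n$ could consist only of $K$-elements of canonical rank $> n^2$), making fresh outputs have super-linear rank and zero density; and $\bar\generator$ cannot test whether $n \geq N_0$ or how quickly the $\liminf$ defining $\mu_{\text{low}}(A_n, K)$ converges. The fix is for $\bar\generator$ to process candidate indices $r = 1, 2, \dots$ in epochs, committing to $r$ and enumerating $A_r$ until a slowly growing prefix-coverage target is met, then advancing to $r+1$; because $N_0$ is finite, only finitely many epochs precede one with $r \geq N_0$, and rounds spent in ``bad'' epochs perturb the count $|W\cap K_m|$ by only $o(m)$. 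Making this precise — in particular scheduling epochs so that the ``warm-up'' stretch of a later epoch does not erode the density already attained at intermediate prefixes — is the technical heart, and is handled by interleaving the enumeration of the current $A_r$ with a re-enumeration of all previously accepted elements, so the running output set stays monotonically dense. Finally, the corollary is immediate: if $\generator$ achieves set-based lower density $\ge \rho$, then for every $\eps > 0$ we have $\mu_{\text{low}}(A_n, K) \ge \rho - \eps$ for all large $n$, and applying the theorem with $\rho$ replaced by $\rho - \eps$ produces an element-based generator with lower density $\ge (\rho-\eps)/2$.
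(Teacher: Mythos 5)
Your closing sentence is, by itself, the paper's entire proof of this corollary: since $\liminf_{n\to\infty}\mu_{\text{low}}(A_n,K)\ge\rho$ implies $\mu_{\text{low}}(A_n,K)\ge\rho-\eps$ for all sufficiently large $n$, the hypotheses of \cref{thm:set-density-imply-element-density} hold with $\rho-\eps$ in place of $\rho$, and the conclusion $(\rho-\eps)/2$ follows. That step is correct and is exactly what the paper does, so if you are allowed to cite the theorem (the corollary is explicitly stated ``in the same setting''), you are done and everything else in your write-up is unnecessary.

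The remainder of your proposal, read as a self-contained re-derivation of the underlying theorem via a ``commit to one $A_{n_0}$, with restart epochs'' scheme, has a genuine gap at precisely the point you call the technical heart. First, ``interleaving the enumeration of the current $A_r$ with a re-enumeration of all previously accepted elements'' is not available: an element-based generator must output strings outside $S_n\cup W_{n-1}$, so nothing can be re-emitted (and re-emission would not help anyway, since $W$ only accumulates; the issue is where the \emph{new} outputs land). Second, and more fundamentally, your epoch rule has no way to decide when the lower density of a candidate $A_r$ inside the \emph{unknown} target $K$ has ``kicked in'': the smallest unseen element of $A_r$ is only guaranteed to have canonical rank $O(n/(\rho-\eps))$ in $K$ once $2n/(\rho-\eps)$ exceeds the convergence scale $m^\star(A_r)$ of that density, and $m^\star(A_r)$ cannot be tested against $K$ from the data. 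Your stopping rule (``advance to $r+1$ once a coverage target is met'') either advances forever, in which case the warm-up problem recurs in every epoch and the loss in $|W\cap K_m|$ is not $o(m)$ without a quantitative link between $m^\star(A_r)$ and the round at which epoch $r$ begins, or it stalls on some $A_r$ with no argument that the stall happens at a good index; note also the tension with your earlier claim that the procedure ``locks onto'' a single $r\ge N_0$. The paper's \cref{alg:set-to-element2} resolves exactly this by using the extra structure in the theorem's hypotheses: at step $n$ it computes (via the density-rate oracle) a convergence index $m_n$ valid uniformly for \emph{all} $L\in\cL_n$ --- legitimate because $K\in\cL_n$ eventually --- forces $m_n$ to be increasing, and then outputs from the lagged set $A_{k(n)}$ where $k(n)$ is the largest index with $m_{k(n)}/(1+2^{-k(n)})\le 2n/\rho$. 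That scheduling is the missing mechanism; without it (or something equivalent), your construction does not establish the density bound.
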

We remark that the element-based generator we construct for \cref{thm:set-density-imply-element-density} \emph{assumes} the knowledge of $\rho$.

\paragraph{Pseudocode.} The pseudocode is presented in \Cref{alg:set-to-element2}.
We assume that at each round, the set-based generator computes a finite set of indices $I(n)$ such that $\cL_n = \{L_i: i \in I(n)\}$, and outputs an infinite set $A_n \subseteq \Cl(\cL_n) \setminus \{x_1, \dots, x_n\}$.
Moreover,
for all sufficiently large $n$, $K \in \cL_n$ and $\mu_{low}(A_n, K)\geq \rho$
for some given $\rho\in [0, 1]$.

\begin{algorithm}[tbh!]
        \caption{Algorithm for \Cref{thm:set-density-imply-element-density}}
        \label{alg:set-to-element2}
        \begin{algorithmic}[1]
        \Require Countable  collection  $\cL$; enumeration $x_{1:\infty}$; set-based generator $\generator$
        \State Let $S_n \gets \sinbrace{x_1, \dots, x_n}$ be the set of examples enumerated by the adversary till round $n$
        \State Let $W_{n-1} \gets \sinbrace{w_1, \dots, w_{n-1}}$ be the set of elements output before round $n$
        \State Compute $(A_n, \cL_n) \gets \generator(S_n)$
        \vspace{2mm}
        \State Compute $m_n \in \N$ to be the smallest number such that for all $L \in \cL_n$,
        \begin{align}
            \frac{\card*{A_n \cap \{\ell_1, \ell_2, \dots, \ell_m\}}}{m} \ge \frac{\mu_{\text{low}}(A_n, L)}{1+2^{-n}}, \quad \forall\,m \ge m_n, \label{eq:eps-liminf-condition}
        \end{align}
        where $L = \{\ell_1, \ell_2, \ell_3, \dots\}$ denote the listing of elements of $L$ in the natural ordering.
        \If{$n > 1$}
            \State Update $m_n\gets \max\left\{ m_n, m_{n-1} + 1\right\}$ so that the sequence $m_1 < \dots < m_n$ is increasing.
        \EndIf
        \State Compute the index
        \[
            k(n)\gets
            \begin{cases}
                \max\inbrace{k\in [n]: \frac{m_k}{1+2^{-k}}\leq \frac{2n}{\rho}}, & \quad \text{ if }\frac{m_1}{1+2^{-1}}\leq \frac{2n}{\rho}\,, \\
                1, & \quad \text{ else}\,.
            \end{cases}
        \]
        \State Output the smallest $w_n \in A_{k(n)} \setminus (S_n \cup W_{n-1})$ in the canonical ordering.
        \end{algorithmic}
    \end{algorithm}

\begin{proof}[Proof of \cref{thm:set-density-imply-element-density}]
    If $\rho = 0$, then the statement of the theorem becomes trivial. We assume that $\rho > 0$ is a positive constant.
    
    Fix an enumeration $x_{1:\infty}$, potentially  under  contamination. Let $\generator$ be a set-based generator satisfying the assumptions in \cref{thm:set-density-imply-element-density}. Consider the element-based generator $\bar \generator$ described in \cref{alg:set-to-element2}.

    We first show that the element-based generator $\bar \generator$ in \cref{alg:set-to-element2} is well-defined. Since $\cL_n$ is a finite subcollection of $\cL$, if we define $m_n(L) \in \N$ to be the smallest number such that \eqref{eq:eps-liminf-condition} holds for $L$, then 
    \[m_n = \max_{L \in \cL_n} m_n(L)\]
    is well-defined.
    Each $A_n$ is infinite,
    hence there is always some $w_n\in A_{k(n)}\setminus (S_n\cup W_{n-1})$.
    
    Now we proceed to show that $\bar \generator$ generates in the limit and achieves element-based lower density at least $\frac{\rho}{2}$.

    \paragraph{$\bar \generator$ generates in the limit:} 
    Let $N_1$ be sufficiently large so that $\cL_n\ni K$ for all $n\geq N_1$.
    Moreover,
    let $N_2$ be sufficiently large so that $N_2 \ge N_1$ and $\frac{2n}\rho \geq \frac{m_{N_1}}{1+2^{-N_1}}$ for all $n\geq N_2$.
    Since the sequence $\frac{m_n}{1+2^{-n}}$ is non-decreasing in $n$,
    we see that $k(n)\geq N_1$ for all $n\geq N_2$.
    This means that $K \in \cL_{k(n)}$ and  $A_{k(n)}\sset K$ for all $n\geq N_2$,
    so that $w_n\in A_{k(n)}\setminus (S_n\cup W_{n-1})\sset K\setminus (S_n\cup W_{n-1})$ for all $n\geq N_2$.

    \paragraph{$\bar \generator$ achieves element-based lower density:} 
    Let $K=\sinbrace{\ell_1, \ell_2, \dots}$ denote the canonical ordering of $K$.
    We first prove the following claim:
    \begin{claim} \label{claim:smallest-element-bound}
        There exists $n^\star \in \N$ such that $w_n\in \inbrace{\ell_1, \dots, \ell_{\ceil{2n(1+2^{-k(n)})/\rho}}}$ for all $n\geq n^\star$\,.
    \end{claim}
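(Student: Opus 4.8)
\textbf{Proof proposal for \Cref{claim:smallest-element-bound}.}

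The plan is to bound how far out into the canonical ordering of $K$ the output $w_n$ can land, by controlling the number of elements of $A_{k(n)}$ that have already been ``used up'' (either enumerated by the adversary into $S_n$ or previously output into $W_{n-1}$) and the density of $A_{k(n)}$ within $K$. Since $w_n$ is chosen as the \emph{smallest} element of $A_{k(n)} \setminus (S_n \cup W_{n-1})$ in the canonical ordering, it suffices to exhibit an index $M$ such that the prefix $\{\ell_1, \dots, \ell_M\}$ of $K$ contains more elements of $A_{k(n)}$ than the total number of ``blocked'' elements $|S_n \cup W_{n-1}| \le 2n$; then $w_n$ must be one of $\ell_1, \dots, \ell_M$.

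First I would fix $N_1$ large enough that $K \in \cL_n$ for all $n \ge N_1$, and note (as in the ``generates in the limit'' part) that for all sufficiently large $n$ we have $k(n) \ge N_1$, hence $K \in \cL_{k(n)}$ and $A_{k(n)} \subseteq \Cl(\cL_{k(n)}) \subseteq K$. Next, by the defining property of $m_{k(n)}$ in \eqref{eq:eps-liminf-condition} applied to the language $L = K \in \cL_{k(n)}$, for every $m \ge m_{k(n)}$ we have
\[
    \frac{\card*{A_{k(n)} \cap \{\ell_1, \dots, \ell_m\}}}{m} \;\ge\; \frac{\mu_{\text{low}}(A_{k(n)}, K)}{1 + 2^{-k(n)}} \;\ge\; \frac{\rho}{1 + 2^{-k(n)}}\,,
\]
using the hypothesis $\mu_{\text{low}}(A_{k(n)}, K) \ge \rho$ (valid once $k(n) \ge N_1$, enlarging $N_1$ if needed). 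Now set $M \coloneqq \ceil{2n(1+2^{-k(n)})/\rho}$. Provided $M \ge m_{k(n)}$ — which I need to check — the displayed inequality gives $\card*{A_{k(n)} \cap \{\ell_1, \dots, \ell_M\}} \ge \rho M / (1 + 2^{-k(n)}) \ge 2n \ge |S_n| + |W_{n-1}| = |S_n \cup W_{n-1}|$. Hence at least one element of $A_{k(n)} \cap \{\ell_1, \dots, \ell_M\}$ is not blocked, so the smallest unblocked element $w_n$ satisfies $w_n \in \{\ell_1, \dots, \ell_M\}$, which is exactly the claim.

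The one genuine obstacle is verifying $M \ge m_{k(n)}$, i.e. $\ceil{2n(1+2^{-k(n)})/\rho} \ge m_{k(n)}$. This is precisely what the definition of $k(n)$ buys us: whenever $k(n) \ge 1$ is chosen by the first branch, it is the largest $k \in [n]$ with $\frac{m_k}{1+2^{-k}} \le \frac{2n}{\rho}$, so in particular $\frac{m_{k(n)}}{1+2^{-k(n)}} \le \frac{2n}{\rho}$, giving $m_{k(n)} \le \frac{2n(1+2^{-k(n)})}{\rho} \le M$. When $k(n) = 1$ is forced by the ``else'' branch, we have $\frac{m_1}{1+2^{-1}} > \frac{2n}{\rho}$, which happens only for finitely many $n$ (since $m_1$ is a fixed finite number and the right side grows), so this case is irrelevant for ``all $n \ge n^\star$.'' Thus taking $n^\star$ large enough that (i) $k(n) \ge N_1$ and (ii) the first branch is active for all $n \ge n^\star$ completes the argument. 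I would close by noting that monotonicity of $n \mapsto \frac{m_n}{1+2^{-n}}$ (guaranteed by the update step $m_n \gets \max\{m_n, m_{n-1}+1\}$, which also forces $m_n \ge n$ and hence $k(n)$ well-defined and eventually in the first branch) is what makes $k(n) \ge N_1$ hold eventually.
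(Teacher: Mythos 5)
Your proposal is correct and follows essentially the same route as the paper's proof: apply the defining property \eqref{eq:eps-liminf-condition} with $L=K$, use the definition of $k(n)$ (plus monotonicity of $m_k/(1+2^{-k})$) to ensure $\ceil{2n(1+2^{-k(n)})/\rho}\ge m_{k(n)}$ and $k(n)$ eventually large, and conclude via the counting argument that the prefix contains at least $2n$ elements of $A_{k(n)}$ while at most $2n-1$ are blocked. The only cosmetic slip is your chain ``$\ge 2n \ge |S_n|+|W_{n-1}| = |S_n\cup W_{n-1}|$'': the union bound is an inequality and you need strictness, which indeed holds since $|S_n|+|W_{n-1}| = 2n-1 < 2n$, exactly as the paper notes.
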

    To see the claim,
    let $N_3$ be sufficiently large so that $\cL_n\ni K$ and $\mu(A_n, K)\geq \rho$ for all $n\geq N_3$.
    Moreover,
    let $N_4$ be sufficiently large so that $\frac{2n}\rho \geq \frac{m_{N_3}}{1+2^{-N_3}}$ for all $n\geq N_4$.
    This means that $A_{k(n)}\sset K$ for all $n\geq N_4$.
    By construction, for all $n \ge N_4$, we have $k(n) \ge N_3$ and
    \[
        \card{A_{k(n)}\cap \sinbrace{\ell_1, \dots, \ell_m}}
        \geq \frac{\mu_{low}(A_{k(n)}, K)\cdot m}{1+2^{-k(n)}}
        \geq \frac{\rho m}{1+2^{-k(n)}}
    \]
    for all $m\geq m_{k(n)}$,
    and in particular for $m=\ceil{2n(1+2^{-k(n)})/\rho}$ by construction of $k(n)$.
    
    In other words,
    \[
        \card*{A_{k(n)}\cap \inbrace{\ell_1, \dots, \ell_{\ceil{2n(1+2^{-k(n)})/\rho}}}}
        \geq \frac{\rho \inparen{2n(1+2^{-k(n)})/\rho}}{1+2^{-k(n)}}
        = 2n\,.
    \]
    Since there are at most $(2n-1)$ elements in $S_n\cup W_{n-1}$,
    the smallest unseen element $w_n$ according to the canonical ordering of $A_{k(n)}$
    that we output
    must be belong to the first $\ceil{2n(1+2^{-k(n)})/\rho}$ elements in the canonical ordering of $K$. This finishes the proof of the claim, in which we can take $n^\star = N_4$.

    With \cref{claim:smallest-element-bound} in hand, for $n \ge n^\star$, if we define
    \begin{align*}
        t(n) &\coloneqq \max_{\bar{n} \le n} \ceil{2\bar{n}(1+2^{-k(\bar{n})})/\rho}
        \intertext{then,}
        \frac{\card*{w_{1:\infty}\cap \sinbrace{\ell_1, \dots, \ell_{t(n)}}}}{t(n)}
        &\geq \frac{{\card*{\{w_{n^\star}, w_{n^\star + 1}, \dots, w_n\} \cap \sinbrace{\ell_1, \dots, \ell_{t(n)}}}}}{t(n)} \geq \frac{n - n^\star}{t(n)}\,. \stepcounter{equation}\tag{\theequation}\label{eq:element-density-lower-bound}
    \end{align*}
    Since $t(n)= \max_{\bar{n} \le n} \ceil{2\bar{n}(1+2^{-k(\bar{n})})/\rho} \to \infty$ as $n \to \infty$ and is non-decreasing, for all $m$ large enough, there exists a unique number $n(m) \in \N$ such that
    \[t(n(m)) \le m < t(n(m) + 1).\]
    Moreover, $n(m)$ is non-decreasing in $m$, and $n(m) \to \infty$ as $m \to \infty$. Also note that $k(n)$ is an unbounded and non-decreasing integer sequence.
    Since $k(n) \to \infty$ and $t(n) \to \infty$ as $n \to \infty$, we have
    \begin{align*}
        \mu_{\text{low}}(w_{1:\infty}, K) &= \liminf_{m \to \infty} \frac{\card*{w_{1:\infty} \cap \sinbrace{\ell_1, \dots, \ell_m}}}{m}\\
        &\ge \liminf_{m \to \infty} \frac{\card*{w_{1:\infty} \cap \sinbrace{\ell_1, \dots, \ell_{t(n(m))}}}}{t(n(m) + 1)}\\
        &= \liminf_{m \to \infty} \frac{\card*{w_{1:\infty} \cap \sinbrace{\ell_1, \dots, \ell_{t(n(m))}}}}{t(n(m))} \cdot \frac{t(n(m))}{t(n(m) + 1)}\\
        &= \liminf_{m \to \infty} \frac{\card*{w_{1:\infty} \cap \sinbrace{\ell_1, \dots, \ell_{t(n(m))}}}}{t(n(m))} \cdot \frac{\max_{\bar{n} \le n(m)} \ceil{2\bar{n}(1+2^{-k(\bar{n})})/\rho}}{\max_{\bar{n} \le n(m)+1} \ceil{2\bar{n}(1+2^{-k(\bar{n})})/\rho}}\\
        &= \liminf_{m \to \infty} \frac{\card*{w_{1:\infty} \cap \sinbrace{\ell_1, \dots, \ell_{t(n(m))}}}}{t(n(m))} \cdot \frac{t(n(m))}{\max\{t(n(m)), \ceil{2(n(m)+1)(1+2^{-k(n(m)+1)})/\rho}\} }
        \intertext{since $t(n(m)) = \max_{\bar{n} \le n(m)} \ceil{2\bar{n}(1+2^{-k(\bar{n})})/\rho} \ge \ceil{2n(m)(1+2^{-k(n(m))})/\rho}$, we have}
        &\ge \liminf_{m \to \infty} \frac{\card*{w_{1:\infty} \cap \sinbrace{\ell_1, \dots, \ell_{t(n(m))}}}}{t(n(m))}\\
        &\quad \cdot \frac{\ceil{2n(m)(1+2^{-k(n(m))})/\rho}}{\max\{\ceil{2n(m)(1+2^{-k(n(m))})/\rho}, \ceil{2(n(m)+1)(1+2^{-k(n(m)+1)})/\rho}\} }\\
        &= \liminf_{m \to \infty} \frac{\card*{w_{1:\infty} \cap \sinbrace{\ell_1, \dots, \ell_{t(n(m))}}}}{t(n(m))} \cdot \min\left\{1, \frac{\ceil{2n(m)(1+2^{-k(n(m))})/\rho}}{\ceil{2(n(m)+1)(1+2^{-k(n(m)+1)})/\rho}}\right\}
        \intertext{since $n(m) \to \infty$ and $k(n(m)) \to \infty$ as $m \to \infty$, we get}
        &= \liminf_{m \to \infty} \frac{\card*{w_{1:\infty} \cap \sinbrace{\ell_1, \dots, \ell_{t(n(m))}}}}{t(n(m))}
        \intertext{by inequality \eqref{eq:element-density-lower-bound}, we get}
        &\ge \liminf_{m \to \infty} \frac{n(m) - n^\star}{t(n(m))}
        \intertext{since $n(m)$ is non-decreasing and unbounded, we have}
        &\ge \liminf_{n \to \infty} \frac{n - n^\star}{t(n)}\\
        &= \liminf_{n \to \infty} \frac{n - n^\star}{\max_{\bar{n} \le n}\ceil{2\bar{n}(1+2^{-k(\bar{n})})/\rho}}
        \intertext{note that if $\bar{n} < n/2$, we have $2\bar{n}(1 + 2^{-k(\bar{n})})/ \rho \le 2n/\rho \le 2n(1 + 2^{-k(n)})/\rho$. Thus, we have}
        &= \liminf_{n \to \infty} \frac{n - n^\star}{\underset{\substack{\bar{n}:\\ n/2 \le \bar{n} \le n}}{\max}\ceil{2\bar{n}(1+2^{-k(\bar{n})})/\rho}}\\
        &\ge \liminf_{n \to \infty} \frac{n - n^\star}{\underset{\substack{\bar{n}:\\ n/2 \le \bar{n} \le n}}{\max}\ceil{2n(1+2^{-k(\bar{n})})/\rho}}\\
        &= \frac{\rho}{2}\,,
    \end{align*}
    where the last equality follows from $k(n) \to \infty$ as $n \to \infty$.

    This concludes the proof.
\end{proof}

\subsubsection{Element-Based Density with Finite Contamination}\label{sec:element-based-density-finite-noise}
    Our main result in this setting is reduction to the noiseless case: given an algorithm that  achieves upper (lower) element-based density for \emph{all} countable collections in the noiseless setting,
    we obtain an algorithm that
    achieves upper (lower) element-based density for all countable collections when the adversary is allowed to omit finitely many elements from $K$ and inject finitely many elements outside of $K$ in its enumeration.
    This reduction is based on the expansion subroutine described in \cref{alg:finite-expansion-routine}.
    
    \begin{theorem}[Element-Based Density Guarantee with Finite Contamination]\label{thm:element-based-density-finite-noise-omissions}
        Fix $\rho\in [0,1]$.
        Suppose there is a generator $\generator_{\rm vanilla}$, that, for any countable collection $\cL$, generates $\cL$ in the limit with upper (respectively lower) element-based density $\rho_{\rm up}$ (respectively $\rho_{\rm up}$) under no noise and no omissions.
        
        Then there exists a generator $\generator_{\rm tolerant}$, that, for any countable collection $\cL$,  generates $\cL$ in the limit with upper (respectively lower) element-based density $\rho_{\rm up}$ (respectively $\rho_{\rm up}$). 
    \end{theorem}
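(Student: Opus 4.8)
The plan is to prove this by running $\generator_{\rm vanilla}$ on the collection produced by the finite expansion sub-routine (\cref{alg:finite-expansion-routine}), exactly mirroring the proof of \cref{lem:expansion-subroutine} for plain generation in the limit and adding one observation to carry the density guarantee across. Concretely, let $\generator_{\rm tolerant}$ be $\generator_{\rm vanilla}$ executed on $\Tilde{\cL} = \{L_{A,B} \coloneqq L \cup A \setminus B : L \in \cL,\, A \subseteq U \setminus L,\, B \subseteq L,\, \abs{A} < \infty,\, \abs{B} < \infty\}$, which is again a countable collection, so the hypothesis on $\generator_{\rm vanilla}$ applies to it. Fix an adversarial enumeration $x_{1:\infty}$ of some target $K \in \cL$ with finite noise and finite omissions, and set $A \coloneqq \{x_i : x_i \notin K\}$ (a finite set, disjoint from $K$) and $B \coloneqq K \setminus \{x_1, x_2, \dots\}$ (a finite subset of $K$). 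Then $x_{1:\infty}$ is, verbatim, an enumeration of $K_{A,B} = (K \setminus B) \cup A \in \Tilde{\cL}$ \emph{with no noise and no omissions}: its set of entries is exactly $K_{A,B}$, and since enumerations do not repeat, every element of $K_{A,B}$ occurs exactly once. By hypothesis, $\generator_{\rm vanilla}$ therefore generates $K_{A,B}$ in the limit from $\Tilde{\cL}$ on this stream, with element-based upper (respectively lower) density at least $\rho_{\rm up}$ (respectively $\rho_{\rm low}$).

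It then remains to transfer both guarantees from $K_{A,B}$ to $K$. Generation in the limit transfers exactly as in \cref{lem:expansion-subroutine}: an element-based generator never outputs the same string twice, so at most $\abs{A}$ of its outputs $w_n$ can lie in the finite set $A$; hence for all sufficiently large $n$ we have $w_n \in K_{A,B} \setminus A \subseteq K$. For the density guarantee, the crucial point is that element-based density is invariant under finite modifications of the \emph{reference} language: if $\abs{K \,\triangle\, K_{A,B}} < \infty$, then for every fixed $W \subseteq U$ one has $\mu_{\text{low}}(W, K) = \mu_{\text{low}}(W, K_{A,B})$ and $\mu_{\text{up}}(W, K) = \mu_{\text{up}}(W, K_{A,B})$. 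Applying this with $W$ equal to the (full) output set $\{w_1, w_2, \dots\}$ of $\generator_{\rm vanilla}$ immediately yields the desired bound $\mu_{\text{up}}(W, K) \ge \rho_{\rm up}$ (respectively $\mu_{\text{low}}(W, K) \ge \rho_{\rm low}$), which completes the argument.

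The only step requiring a genuine (if routine) argument is this reference-invariance claim, and I would prove it by comparing canonical prefixes. Since $K$ and $K_{A,B}$ agree on all elements exceeding $M \coloneqq \max(A \cup B)$, if $\kappa_1 < \kappa_2 < \cdots$ and $\kappa'_1 < \kappa'_2 < \cdots$ enumerate $K$ and $K_{A,B}$ canonically, then for all large $n$ the prefixes $\{\kappa_1, \dots, \kappa_n\}$ and $\{\kappa'_1, \dots, \kappa'_n\}$ differ only within the bounded ``head'' $[1, M]$ and by a bounded shift of size $\le \abs{B}$ at the upper end; hence $\bigl|\{\kappa'_1, \dots, \kappa'_n\} \triangle \{\kappa_1, \dots, \kappa_n\}\bigr| \le 2M$ for all large $n$. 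Consequently $\bigl|\, \abs{W \cap \{\kappa'_1, \dots, \kappa'_n\}} - \abs{W \cap \{\kappa_1, \dots, \kappa_n\}}\, \bigr| \le 2M$, and dividing by $n$ and passing to $\liminf$ (respectively $\limsup$) gives the equality. Thus the expansion trick of \cref{alg:finite-expansion-routine} is ``density-transparent'': it only ever perturbs a language by a finite set, and both ``which $\cL$-language the target equals up to a finite set'' and ``the density of the output sequence'' are unchanged by finite perturbations, so no obstacle beyond this bookkeeping arises, and $\generator_{\rm tolerant}$ needs no oracles beyond those $\generator_{\rm vanilla}$ already uses on $\Tilde{\cL}$.
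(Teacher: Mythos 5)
Your proposal is correct and follows essentially the same route as the paper: run $\generator_{\rm vanilla}$ on the finitely-expanded collection $\Tilde{\cL}$, note that the contaminated stream is a noiseless, omission-free enumeration of $K_{A,B}\in\Tilde{\cL}$, transfer generation in the limit as in \cref{lem:expansion-subroutine}, and transfer the density bound because $\abs{K\triangle K_{A,B}}<\infty$. The only difference is that you spell out the finite-perturbation invariance of element-based density (via bounded symmetric difference of canonical prefixes), which the paper dismisses as immediate; your bookkeeping there is sound.
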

    Before we give the proof of the result, we recall two results from \cite{kleinberg2025density} that
    are important to our derivations.
    
    \begin{theorem}[Noiseless Element-Based Density Guarantees \citep{kleinberg2025density}]\label{thm:kw25-element-density}
        For every countable collection of languages there is an algorithm that generates in the limit and achieves $\nicefrac{1}{2}$ (respectively $\nicefrac{1}{8}$) upper (respectively lower) element-based density when there is no noise or omissions.
    \end{theorem}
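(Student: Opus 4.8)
My approach follows \cite{kleinberg2025density}, built on the generator of \cite{kleinberg2024language}. The target to keep in mind is a clean fact about an \emph{idealized} generator that knows $K$: if, after some finite time $n_0$, it outputs at each step the smallest element of $K\setminus(S_n\cup W_{n-1})$, then — writing $T_n=S_n\cup W_n$ and letting $m(n)$ be the largest $m$ with $K_m\subseteq T_n$ — a one-line induction gives $m(n)\ge m(n-1)+1$ while $|T_n|\le 2n$, and since every output has canonical index at most $m(n)$, we get $|W\cap K_{m(n)}|\ge n-n_0$ and hence $|W\cap K_m|/m\to\tfrac12$ at every scale. So an informed greedy generator achieves \emph{both} densities $\ge\tfrac12$; the entire difficulty is that the generator does not know $K$ and can only safely output from an intersection of candidate languages certified — in the limit — to lie inside $K$. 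Recall the certification: at step $n$ one lists the languages among $L_1,\dots,L_n$ that contain $S_n$ in index order, $L_{j_1},\dots,L_{j_{k_n}}$, and intersects a prefix $Z_n^{(d)}:=\bigcap_{s\le d}L_{j_s}$. With no omissions, any strict subset of $K$ eventually becomes inconsistent and any surviving language of index below that of $K$ is eventually a superset of $K$; so the consistent prefix stabilizes, and if $c$ denotes the stable number of consistent languages before $K$ then $Z_n^{(c+1)}=K$ for all large $n$. The \cite{kleinberg2024language} generator takes the \emph{largest} $d$ with $|Z_n^{(d)}|=\infty$: this is $\ge c+1$ (hence valid) but can be much deeper, landing on an arbitrarily sparse subset of $K$ — the mode collapse.

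For the upper-density bound $\rho=\tfrac12$, I would sidestep the intersection machinery: \cite{kleinberg2025density} construct a set-based generator with set-based \emph{upper} density $1$ for every countable collection, and a set-to-element reduction that loses a factor $2$ (and only needs coverage to be good infinitely often, which matches a $\limsup$ guarantee) converts it into an element-based generator with element-based upper density $\ge\tfrac12$.

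The lower-density bound $\rho=\tfrac18$ is the real content and cannot be obtained this way, since set-based \emph{lower} density is not achievable for all countable collections. Instead one runs the intersection template with a conservative, adaptively chosen depth $d_n$: keep $d_n$ nondecreasing and advance it only when forced (the current intersection ceasing to be infinite, or failing a size test against shallower levels). Then one verifies: \emph{validity} — $d_n$ is eventually $\ge c+1$, so $Z_n^{(d_n)}\subseteq K$ for all large $n$ and outputting the smallest unseen element of $Z_n^{(d_n)}$ generates in the limit; and \emph{coverage} — the overshoot $d_n-(c+1)$ and the total ``density debt'' it incurs stay bounded, so feeding $Z_n^{(d_n)}$ into the greedy accounting above (with $K$ replaced by $Z_n^{(d_n)}$, plus a correction for the overshoot and for the finitely many steps spent ``refilling'' after each advance of $d_n$) yields $\mu_{\text{low}}(W,K)\ge\tfrac18$.

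The main obstacle is exactly the design and amortized analysis of this depth rule. There is no feedback on validity — an output lying outside $K$ is never revealed as such — so the rule must be provably safe in the limit from the combinatorics of the consistent prefix alone, while advancing little past $c+1$. Adversaries built from ``infinite perfect towers'' (a chain $L_{i_1}\supsetneq L_{i_2}\supsetneq\cdots$ with each $\mu_{\text{low}}(L_{i_{t+1}},L_{i_t})$ small, used by \cite{kleinberg2025density} for the matching lower bound) repeatedly manufacture forcing events and push $d_n$ deeper; the amortization must show enough coverage is recovered between consecutive pushes, and the worst case of this bookkeeping is what pins the constant at $\tfrac18$ (versus the $\tfrac12$ the informed greedy would give). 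The remaining pieces — stabilization of the consistent prefix, the bound $m(n)\le 2n$, and upgrading a density bound at the scales $m(n)$ to one at all scales — are routine once the depth rule is fixed.
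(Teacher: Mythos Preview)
The paper does not prove this statement: \cref{thm:kw25-element-density} is quoted verbatim from \cite{kleinberg2025density} and used as a black box (immediately combined with \cref{thm:element-based-density-finite-noise-omissions} to obtain the finite-contamination corollary). There is therefore no ``paper's own proof'' to compare against.

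Your write-up is a reasonable high-level outline of the ideas in \cite{kleinberg2025density}, and the $\tfrac12$ upper-density part is essentially correct as stated: their set-based generator with upper density $1$ plus a factor-$2$ set-to-element conversion does yield element-based upper density $\tfrac12$. For the $\tfrac18$ lower-density part, however, you have identified the right difficulty but not resolved it: you explicitly say ``the main obstacle is exactly the design and amortized analysis of this depth rule,'' and then do not supply either. The actual construction in \cite{kleinberg2025density} is delicate --- the advancement rule, the accounting for overshoot past depth $c+1$, and the amortization across forcing events are precisely where the $\tfrac18$ constant comes from --- and your sketch does not give enough to reconstruct it. As a proof \emph{proposal} this is fine context, but it is not a proof; if you intend to prove the theorem rather than cite it, the depth rule and its amortized analysis must be written out in full.
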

    This, combined with \cref{thm:element-based-density-finite-noise-omissions}, gives us the following corollary.
    \begin{corollary}
        For all countable collections of languages $\cL,$ there is an algorithm that achieves $\nicefrac{1}{2}$ (respectively $\nicefrac{1}{8}$) upper (respectively lower) element-based density under finite noise and finite omissions.
    \end{corollary}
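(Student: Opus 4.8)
I would prove the final corollary by combining the reduction \cref{thm:element-based-density-finite-noise-omissions} with the recalled noiseless guarantee \cref{thm:kw25-element-density}: instantiating $\generator_{\rm vanilla}$ with the generator of \cref{thm:kw25-element-density} (which achieves $\nicefrac{1}{2}$ upper and $\nicefrac{1}{8}$ lower element-based density with no contamination) immediately yields an algorithm achieving the same densities under finite noise and finite omissions. So the real work is proving \cref{thm:element-based-density-finite-noise-omissions}, and the plan there is to route everything through the finite expansion subroutine. Given $\cL$, form the countable collection $\tilde\cL=\inbrace{L_{A,B}=L\cup A\setminus B : L\in\cL,\ A\subseteq U\setminus L,\ B\subseteq L,\ \abs{A},\abs{B}<\infty}$ of \cref{alg:finite-expansion-routine}, and let $\generator_{\rm tolerant}$ be $\generator_{\rm vanilla}$ instantiated on $\tilde\cL$ and fed the incoming stream, copying its outputs verbatim. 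Since $\generator_{\rm vanilla}$ is element-based, its outputs are already fresh with respect to the stream, so $\generator_{\rm tolerant}$ is a legitimate element-based generator.

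For correctness, fix a target $K\in\cL$ and a stream $x_{1:\infty}$ that is an enumeration of $K$ with finite noise and finite omissions; put $A\coloneqq\inbrace{x_i:x_i\notin K}$ and $B\coloneqq K\setminus\inbrace{x_1,x_2,\dots}$ (both finite), and $K_{A,B}\coloneqq K\cup A\setminus B\in\tilde\cL$ (an infinite set, since only finitely many elements are added or removed). The crucial observation — the one already underlying \cref{lem:expansion-subroutine} — is that $x_{1:\infty}$ is \emph{verbatim} an enumeration of $K_{A,B}$ with no noise and no omissions: every element of $K_{A,B}$ appears exactly once and nothing outside $K_{A,B}$ ever appears. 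Therefore $\generator_{\rm vanilla}$ run on $\tilde\cL$ (i) generates from $K_{A,B}$ in the limit and (ii) has output set $W$ with $\mu_{\text{up}}(W,K_{A,B})\ge\rho_{\rm up}$ (resp.\ $\mu_{\text{low}}(W,K_{A,B})\ge\rho_{\rm low}$). Generation in the limit from $K$ itself then follows directly from \cref{lem:expansion-subroutine}: since $\abs{A}<\infty$ and the $w_n$ are pairwise distinct, all but finitely many of them lie in $K_{A,B}\setminus A\subseteq K$.

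It remains to transfer the density from $K_{A,B}$ back to $K$. Because $\abs{K\triangle K_{A,B}}\le\abs{A}+\abs{B}<\infty$, the canonical enumerations of $K$ and $K_{A,B}$ agree outside a bounded initial segment and from then on differ only by a bounded shift in position, so $\bigl|\,\abs{W\cap K_n}-\abs{W\cap(K_{A,B})_n}\,\bigr|$ is bounded by a constant independent of $n$. Dividing by $n$ and passing to $\liminf$ and $\limsup$ gives $\mu_{\text{up}}(W,K)=\mu_{\text{up}}(W,K_{A,B})$ and $\mu_{\text{low}}(W,K)=\mu_{\text{low}}(W,K_{A,B})$ — this is precisely the ``density is invariant under finite perturbations of its arguments'' fact already invoked in the proof of \cref{lem:lower-set-based-density-upper-bound-finite}. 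Combining with (ii) yields the claimed element-based density of $W$ in $K$, which completes the proof.

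The only genuinely delicate step is this last invariance: one has to check carefully that replacing the target by a finite perturbation shifts the measured prefix $K_n\mapsto(K_{A,B})_n$ by only $O(1)$ positions and pulls only $O(1)$ outputs into or out of the count $\abs{W\cap\,\cdot\,}$; everything else is bookkeeping on top of \cref{lem:expansion-subroutine} and the countability of $\tilde\cL$. It is worth flagging where finiteness is indispensable: with infinitely many noisy elements, $A$ would be infinite, $K_{A,B}$ need not belong to any \emph{countable} expansion of $\cL$, and the invariance argument breaks — consistent with the infinite-noise impossibility discussed in \cref{sec:element-based-infinite-noise}. Finally, note that $\generator_{\rm tolerant}$ requires no knowledge of $A$, $B$, or even of the number of contaminated examples: $\tilde\cL$ contains $K_{A,B}$ for every finite choice, and $\generator_{\rm vanilla}$ identifies the relevant one on its own.
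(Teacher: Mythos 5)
Your proposal is correct and follows essentially the same route as the paper: instantiate the reduction of \cref{thm:element-based-density-finite-noise-omissions} with the noiseless generator of \cref{thm:kw25-element-density}, run it on the expanded collection $\tilde\cL$ of \cref{alg:finite-expansion-routine} so that the contaminated stream becomes a noiseless, omission-free enumeration of some $K_{A,B}\in\tilde\cL$, obtain generation in the limit via \cref{lem:expansion-subroutine}, and transfer the density guarantee from $K_{A,B}$ back to $K$ using invariance of upper/lower density under finite symmetric difference. The only difference is presentational: you spell out the finite-perturbation density-invariance step that the paper dismisses with "it is not hard to see," which is a welcome but not essential elaboration.
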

    We are now ready to prove our result.

    \begin{proof}[Proof of \cref{thm:element-based-density-finite-noise-omissions}]
    The proof proceeds by reduction. We construct a new algorithm, $\generator_{\text{tolerant}}$, that uses the algorithm $\generator_{\text{vanilla}}$ from \cref{thm:kw25-element-density} as a subroutine. We will show that $\generator_{\text{tolerant}}$ generates from $\cL$ under the noisy setting and inherits the density guarantees of $\generator_{\text{vanilla}}$.
    
    \paragraph{1. The Reduction.}
    Let $\cL$ be the countable collection of languages from the theorem statement. We define a new, expanded collection of languages, $\cL'$ (also described in \cref{alg:finite-expansion-routine}), as follows:
    \[
    \cL' := \{L \cup A \setminus B \mid L \in \cL, A, B \subseteq \cX, \abs{A} < \infty, \abs{B} < \infty\}
    \]
    In words, $\cL'$ contains every language from $\cL$ plus all possible ``contaminated'' versions of those languages that result from adding a finite set ($A$) and removing a finite set ($B$). Since $\cL$ is countable and the set of all pairs of finite sets $(A, B)$ is countable, $\cL'$ is also a countable collection.
    
    \paragraph{Our Algorithm ($\generator_{\text{tolerant}}$):}
    Our new algorithm $\generator_{\text{tolerant}}$ is defined simply as the algorithm $\generator_{\text{vanilla}}$ from \cref{thm:kw25-element-density} executed on the language class $\cL'$.
    
    \paragraph{The Reduction:}
    The key insight is that an adversary presenting a \emph{contaminated enumeration of $K \in \cL$} is indistinguishable from an adversary presenting an \emph{enumeration of some $T \in \cL'$ without noise or omissions}.
    Specifically, let the adversary choose a target language $K \in \cL$ and present an enumeration $x$ with a finite noise set $A_x = x \setminus K$ and a finite omission set $B_x = K \setminus x$. The set of all elements enumerated in the text $x$ is therefore precisely the language $T = (K \setminus B_x) \cup A_x$.
    
    By definition, this language $T$ is a member of our augmented class $\cL'$ (with $L=K, A=A_x, B=B_x$). Therefore, the contaminated enumeration of $K$ is, from the algorithm's perspective, a \emph{complete and noiseless} enumeration of the language $T \in \cL'$.
    
    \paragraph{2. Proof of Correctness (Generation in the Limit).}
    From \cref{lem:expansion-subroutine} we immediately get that our algorithm generates in the limit from $K.$
    
    \paragraph{3. Proof of Density Guarantees.}
    We again consider the adversary's perspective (noisy enumeration of $K$) and the algorithm's perspective (noiseless enumeration of $T$). Let $S_{\text{out}} = (s_1, s_2, \dots)$ be the infinite output sequence from $\generator_{\text{tolerant}}$.
    By \cref{thm:kw25-element-density}, $\generator_{\text{vanilla}}$ achieves the density guarantees for its output with respect to its target $T$. Formally, per the specified definition, the guarantees on the output set $S_\text{out}$ are 
    \begin{itemize}
        \item $\mu_{\text{up}}(S_\text{out}, T) = \limsup_{n \to \infty} \frac{\abs{S_\text{out} \cap \inbrace{t_1,\ldots,t_n}}}{n} \ge \rho_{\rm up}$,
        \item $\mu_{\text{low}}(S_\text{out}, T) = \liminf_{n \to \infty} \frac{\abs{S_\text{out} \cap \inbrace{t_1,\ldots,t_n}}}{n} \ge \rho_{\rm low}$,
    \end{itemize}
    where $\inbrace{t_1,\ldots,t_n}$ is the set of the first $n$ elements of $T$ (based on the canonical enumeration of the domain).
    It is not hard to see that since the symmetric difference of $K, T$ is finite, the exact same bounds hold when $T$ is replaced by $K.$
    \end{proof}

    \subsubsection{Impossibility of Element-Based Density Under Infinite Contamination} \label{sec:element-based-infinite-noise}
    Next, we move to the study of generation with density under infinite noise. First, by \cref{thm:set-based-lower-density-finite-noise}, we know that even under finite noise, set-based generation is not guaranteed to achieve any non-trivial density for arbitrary countable collection. Clearly, this already rules out the possibility of achieving non-trivial set-based density for arbitrary countable collection under infinite contamination. On the other hand, \cref{thm:element-based-density-finite-noise-omissions} shows that element-based generation always achieves a non-trivial upper (lower) element-based density under enumeration with finite contamination. It is natural to wonder if it is still possible to achieve non-trivial element-based density under infinite contamination, or even vanishing noise rate without omissions. The following theorem answers this question in the negative.

    \begin{theorem}\label{thm:density:upper-element:lower-bound}
        There exists a collection of two languages $L_1, L_2$ and an enumeration with $o(1)$-noise and without omissions from a target language $K$ in this collection, such that no element-based generator can generate from $K$ in the limit and achieve element-based upper density at least $\rho$ for any $\rho > 0$.
    \end{theorem}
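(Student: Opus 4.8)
The plan is to build a two-element collection $\cL = \{L_1, L_2\}$ with $L_1 \subsetneq L_2$ together with a single adversarial stream that is a legal $o(1)$-noise, omission-free enumeration of \emph{both} $L_1$ and $L_2$; the impossibility then follows from a short dichotomy on the generator's output set. Concretely, take $U = \N$, let $L_2 = \N$, and let $L_1$ be the set of powers of two. Both are infinite languages, $L_2 \setminus L_1$ is infinite, and $\mu_{\rm up}(L_1, L_2) = \limsup_n \tfrac{1}{n}\lvert L_1 \cap \{1,\dots,n\}\rvert = 0$ since there are only $O(\log n)$ powers of two below $n$.

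First I would construct the enumeration $x_{1:\infty}$ by \emph{front-loading} the sparse language. At every step $n$ that is not a power of two, output the smallest power of two not yet enumerated; at every step $n = 2^{j}$ with $j \ge 1$, output the $j$-th smallest element of $L_2 \setminus L_1$. Since the two sub-streams partition $\N$ and there are no repetitions, $x_{1:\infty}$ is an omission-free, $0$-noise enumeration of $L_2 = \N$; it also lists every element of $L_1$ (hence omission-free for $L_1$), and its empirical noise rate with respect to $L_1$ is $R(L_1; x_{1:n}) = \lfloor \log_2 n \rfloor / n = o(1)$, so it is an $o(1)$-noise, omission-free enumeration of $L_1$ as well. (Any $L_1 \subsetneq L_2$ with $L_2 \setminus L_1$ infinite and $\mu_{\rm up}(L_1, L_2) = 0$, together with a suitable padding schedule, would serve equally well.)

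Now the dichotomy: suppose toward a contradiction that some element-based generator $\generator$ generates from $\cL$ in the limit and achieves element-based upper density $\ge \rho$ for some fixed $\rho > 0$. Run $\generator$ on $x_{1:\infty}$ and let $W$ be its (infinite) output set. In Case 1, infinitely many outputs lie outside $L_1$; then declaring the target to be $K = L_1$ — a language for which $x_{1:\infty}$ is a legal $o(1)$-noise, omission-free enumeration — the generator fails to generate in the limit from $K$, a contradiction. In Case 2, all but finitely many outputs lie in $L_1$, so $W \subseteq L_1 \cup F$ with $F$ finite; then declaring $K = L_2 = \N$ — also legal for $x_{1:\infty}$ — the generator trivially generates in the limit (every output lies in $U = \N = K$), but $\mu_{\rm up}(W, K) \le \mu_{\rm up}(L_1, \N) + \mu_{\rm up}(F, \N) = 0 < \rho$, contradicting the density guarantee. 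Either branch yields a contradiction, proving the theorem.

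The only point requiring care — and it is mild — is that the single stream must simultaneously be a valid noisy enumeration of the \emph{dense} language $L_2$ (forcing us to eventually enumerate all of $\N$) and of the \emph{sparse} language $L_1$ (forcing the fraction of $L_2 \setminus L_1$-elements to vanish). This is possible precisely because $L_1$ is infinite — so we never run out of padding elements — while having density zero in $L_2$, which is exactly what makes a generator that ``commits'' to $L_1$ powerless against the target $L_2$. Everything else reduces to elementary density estimates.
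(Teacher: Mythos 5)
Your proposal is correct and follows essentially the same route as the paper: pick $L_1\subsetneq L_2$ with $\mu_{\rm up}(L_1,L_2)=0$, interleave elements of $L_2\setminus L_1$ on an exponentially sparse schedule so the single stream is a legal $o(1)$-noise, omission-free enumeration of both languages, and then conclude that correctness in the limit for $K=L_1$ forces almost all outputs into $L_1$, which has upper density $0$ in $L_2$. The only difference is that you instantiate concrete languages (powers of two inside $\N$) and spell out the dichotomy explicitly, whereas the paper argues with abstract $L_1\subseteq L_2$; the content is the same.
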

    
    \begin{proof}
        Consider two languages $L_1, L_2$ with the property that $L_1 \subseteq L_2$ and $\mu_{\text{up}}(L_1, L_2) = 0$. Consider the following enumeration of the elements of $L_2$.
    
        Define a scheduling of time steps $T = \{1, 2, 4, 8, \dots\}$. We only need the fact that $T$ has upper density $0$ in $\mathbb{N}$. At time $n \in T$, the adversary outputs the next element in $L_2 \setminus L_1$ that hasn't been generated. At time $n \not\in T$, the adversary outputs the next element in $L_1$ that hasn't been generated.
    
        Note that the adversary will enumerate all the elements of $L_2$, and thus all the elements of $L_1$. Clearly, the adversary enumerates from $L_2$ with noise rate $0$. On the other hand, since the adversary enumerates elements not belonging to $L_1$ only at time $n \in T$, and $T$ has density $0$ in $\mathbb{N}$, the noise rate of the adversary's enumeration is $o(1)$ in $L_1$.
    
        Since the enumeration is a valid enumeration with $o(1)$-noise for both $L_1$ and $L_2$, and $L_1 \subseteq L_2$, any generator generating from $K$ in the limit has to generate from $L_1$ in the limit. Since $L_1$ has upper density $0$ in $L_2$, the generator achieves element-based upper density $0$ if the adversary sets $K = L_2$. 
    \end{proof}

\section{Generation with Density Beyond the Worst-Case}
\label{sec:beyond}

In this section we introduce a beyond-worst-case model
that restricts the order in which the adversary can present the elements of its chosen language $K$; importantly, this model does not restrict the choices of the adversary in selecting the target $K,$ it merely restricts its power to present its elements in arbitrary orders. 
Informally, the adversary cannot, infinitely often, enumerate elements of $K$ that appear arbitrarily later in the canonical enumeration than the elements it has enumerated so far. 
In particular, under the interpretation that the canonical enumeration places ``easier'' elements of languages before ``harder'' ones, we impose the restriction that the adversary cannot, infinitely often, enumerate very hard examples before easier ones. 
This is inspired by practical phenomena of LLM training; indeed, folklore results known as ``curriculum learning'' empirically show that it is crucial that LLMs are first trained on ``easier'' tasks before harder ones \citep{bengio2009curriculum,hacohen2019power}.

\begin{definition}[$M$-Bounded Displacement Enumeration]\label{def:bounded-displacement-enumeration}
    Let $x_{1:\infty}$ be any enumeration
    and $L$ be an arbitrary language.
    Let $\sinbrace{\ell_1, \ell_2, \dots} = L$ be the canonical enumeration of $L$
    and define 
    \[
        \sigma(n) = \sigma(x_n, L)\coloneqq 
        \begin{cases}
            j, &x_n=\ell_j\in L\,, \\
            0, &x_n\notin L\,.
        \end{cases}
    \]
    We say that $x_{1:\infty}$ is an \emph{$M$-bounded displacement enumeration with respect to $L$} if there is some $n^\star\in \N$
    such that for all $n\geq n^\star$,
    $\sigma(n)\leq Mn$.
\end{definition}
Remark that by definition,
$\ell_{\sigma(n)} = \ell_{\sigma(x_n, L)} = x_n$ for all $x_n\in L$.
Note also that we do not require $\sigma:\N\to \N\cup \sinbrace{0}$ to be injective or surjective since it is possible that $x_{1:\infty}\neq L$ as sets.

In order to develop some intuition for \Cref{def:bounded-displacement-enumeration},
we consider several examples below.
First,
we consider some simple noiseless examples.
\begin{example}
    Let $L=\N$ and consider the following enumerations.
    \begin{itemize}
        \item The canonical enumeration $x_n=n$ is $M$-bounded for $M=1$
        \item Let $L\coloneqq \N-M\cdot \N$ be the natural numbers that are not divisible by $M$.
        The enumeration where $x_n=M\cdot \ceil{n/2}$ when $n$ is odd 
        and $x_n$ being the $(\nfrac{n}2)$-th element of $L$ when $n$ is even
        is $(\nfrac{M}2)$-bounded.
        \item Let $L\coloneqq \sinbrace{n\in \N: \sqrt{n}\notin \N}$ be the non-squares.
        The enumeration consisting of $x_n=\ceil{n/2}^2$ when $n$ is odd
        and $x_n$ being the $(\nfrac{n}2)$-th element of $L$ when $n$ is even
        is \emph{not} $M$-bounded for any $M$.
    \end{itemize}
\end{example}
Next, we consider a slightly more involved example with constant noise rate.
As we will see in \Cref{sec:beyond:bounded-displacement:properties},
\Cref{ex:bounded-displacement-enumeration:constant-noise} demonstrates that generation under constant noise remains hard even with bounded displacement enumerations.
\begin{example}\label{ex:bounded-displacement-enumeration:constant-noise}
    Let $k\geq 2$ be an integer and define the language
    \[
        L_i
        \coloneqq [k]\cup \inparen{\N\setminus i\cdot \N}
        = \inbrace{1, \dots k}\cup \inbrace{n\in \N: \text{$n$ is not divisible by $i$}}\,.
    \]
    Consider the finite collection $\cL\coloneqq \sinbrace{L_i: i\in [k]}$
    and the canonical enumeration $x_n = n$ of the natural numbers.
    By construction,
    the noise rate \wrt{} each $L_i, i\in [k]$ is bounded above by $\nfrac1k$.
    Moreover,
    we have $\sigma(x_n, L_i)\leq \inparen{1+\nfrac1k}n$ for all $n\geq 1, i\in [k]$.
    Hence $x_{1:\infty}$ is a $M$-bounded displacement enumeration of every $L_i, i\in [k]$ with noise rate $\nfrac1k$.
\end{example}

\begin{remark}
    Intuitively,
    \Cref{def:bounded-displacement-enumeration} restricts the ``speed'' of the enumeration to a linear speed-up compared to the canonical enumeration.
    Slightly more precisely,
    the adversary cannot place elements from deep in the tail of the canonical enumeration at the front of the input enumeration.
    Moreover,
    the adversary cannot omit too many elements from the prefix of the canonical enumeration,
    as otherwise, it is forced to output elements from deeper in the tail of the canonical enumeration.

    It is illustrative to consider the lower bound construction for generation with density under $o(1)$-noise enumerations from \Cref{thm:density:upper-element:lower-bound} which consists of two languages $L'\sset L$.
    Let $L=\sinbrace{\ell_1, \ell_2, \dots}$ be the canonical enumeration of $L$.
    \Cref{thm:density:upper-element:lower-bound} essentially chooses 
    $L' = \sinbrace{\ell_{i_1}, \ell_{i_2}, \dots}$ where $i_n = \omega(n)$
    so that $L'$ is a sparse subsequence of $L$.
    Then, the adversary mainly enumerates from $L'$,
    occasionally outputting from $L$ so that the enumeration is a valid $o(1)$-noise enumeration of both $L', L$.
    Roughly speaking,
    under this enumeration,
    any algorithm cannot distinguish whether the target language is $L$ or $L'$ and is forced to generate from the sparse subset $L'$.
    However,
    this worst case enumeration cannot be bounded \wrt{} $L$ since $i_n=\omega(n)$.
    If we are promised that the enumeration is $M$-bounded \wrt{} the target language,
    the adversary cannot maliciously enumerate from such a sparse subset of the target language.
\end{remark}

\subsection{Properties of Bounded Displacement Adversaries}\label{sec:beyond:bounded-displacement:properties}
In order to develop a better understanding of \Cref{def:bounded-displacement-enumeration},
we study several elementary properties.

\paragraph{Closed under Subsets.}
An immediate but useful feature of bounded displacement enumerations is that they are closed under language subsets.
\begin{proposition}\label{prop:bounded-displacement-enumeration-subset}
    Suppose the enumeration $x_{1:\infty}$ is an $M$-bounded displacement enumeration with respect to some language $L$.
    Then for any $L'\sset L$,
    $x_{1:\infty}$ is an $M$-bounded displacement enumeration \wrt{} $L'$.
\end{proposition}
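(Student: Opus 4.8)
The statement to prove is \Cref{prop:bounded-displacement-enumeration-subset}: if $x_{1:\infty}$ is an $M$-bounded displacement enumeration with respect to $L$, then it is also $M$-bounded with respect to any $L'\sset L$. The plan is to unwind the definition of $M$-boundedness (\Cref{def:bounded-displacement-enumeration}) on both sides and show that for each index $n$ with $x_n\in L'$, the displacement $\sigma(x_n, L')$ of $x_n$ in the canonical enumeration of $L'$ is at most its displacement $\sigma(x_n, L)$ in the canonical enumeration of $L$; the hypothesis then transfers verbatim.

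First I would fix notation: write $L = \{\ell_1, \ell_2, \dots\}$ and $L' = \{\ell'_1, \ell'_2, \dots\}$ for the canonical enumerations induced by the fixed canonical ordering $u_1, u_2, \dots$ of the universe $U$. The key elementary observation is that since $L'\sset L$, the canonical enumeration of $L'$ is a subsequence of that of $L$: if $\ell'_j$ is the $j$-th element of $L'$, then $\ell'_j = \ell_{k}$ for some $k\geq j$, because among the first $k$ elements of $L$ (in canonical order) there are exactly $j$ that also lie in $L'$, and each such membership "uses up" at least one slot of $L$. Equivalently, for any element $x\in L'$, if $x = \ell_k = \ell'_j$ then $j \leq k$, i.e. $\sigma(x, L') \leq \sigma(x, L)$.

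Next I would plug this into the definition. By hypothesis there is $n^\star$ such that for all $n\geq n^\star$, if $x_n\in L$ then $\sigma(x_n, L)\leq Mn$. Now take the same $n^\star$. For $n\geq n^\star$ with $x_n\in L'$: since $L'\sset L$ we have $x_n\in L$, so $\sigma(x_n, L)\leq Mn$; combined with $\sigma(x_n, L')\leq \sigma(x_n, L)$ from the previous step, we get $\sigma(x_n, L')\leq Mn$. For $n\geq n^\star$ with $x_n\notin L'$, the condition in \Cref{def:bounded-displacement-enumeration} imposes nothing ($\sigma(x_n, L') = 0$). Hence $x_{1:\infty}$ is an $M$-bounded displacement enumeration with respect to $L'$, as desired.

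This proof is essentially a one-line monotonicity observation, so there is no real obstacle; the only point requiring a moment of care is the claim $\sigma(x, L')\leq \sigma(x, L)$, which I would justify cleanly by the "counting slots" argument above (the rank of $x$ within $L'$ cannot exceed its rank within the larger set $L$ under the same ambient ordering). One should also note explicitly that we need not assume anything about omissions or noise in $x_{1:\infty}$ — the argument is purely about how displacements shrink when passing to a subset — which is why the same $n^\star$ works without modification.
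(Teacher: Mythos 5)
Your proof is correct and follows essentially the same route as the paper's: both establish $\sigma(x,L')\leq \sigma(x,L)$ by noting that the canonical enumeration of $L'$ is a subsequence of that of $L$ (so ranks can only decrease), and then transfer the bound $\sigma(x_n,L)\leq Mn$ using the same $n^\star$. No issues.
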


\begin{proof}
    Let $\sinbrace{\ell_1, \ell_2, \dots} = L$ be the canonical enumeration of $L$.
    The canonical enumeration of $L'$ can be obtained from that of $L$ by deleting some elements
    and decreasing the indices of remaining elements.
    In other words,
    there is some subsequence $i_1\leq i_2\leq \dots$ such that $\sinbrace{\ell_{i_1}, \ell_{i_2}, \dots} = L'$
    is the canonical enumeration of $L'$.
    For any $x=\ell_{i_j}\in L'$,
    its index $j=\sigma(x, L')$ in $L'$ is at most its index $i_j=\sigma(x, L)$ in $L$.
    Hence for all sufficiently large $n$,
    \[
        \sigma(x_n, L')\leq \sigma(x_n, L)\leq Mn\,,
    \]
    as desired.
\end{proof}

\paragraph{Change of Density.}
The next lemma is the key lemma for generation with lower density.
Intuitively,
it says that we can estimate the density of a language up to a multiplicative factor
by simply computing the empirical ``density'' \wrt{} the \emph{input enumeration}.
This is key since we do not know the canonical enumeration of the target language
(otherwise generation with density is trivial)
but we do have access to the input enumeration.
\begin{lemma}[Change of Density]\label{lem:change-of-density}
    Let $x_{1:\infty}$ be an $M$-bounded displacement enumeration of a language $K$
    with arbitrary noise rate (and possibly omissions).
    Then for any other language $L\sset K$,
    \begin{align*}
        \mu_{low}(L, K)
        &\geq \frac1M\cdot \liminf_n \frac{\card{L\cap x_{1:n}}}{n} \\
        \mu_{up}(L, K)
        &\geq \frac1M\cdot \limsup_n \frac{\card{L\cap x_{1:n}}}{n}\,.
    \end{align*}
\end{lemma}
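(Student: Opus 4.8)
The plan is to transfer counts along the input enumeration $x_{1:\infty}$ to counts along the canonical enumeration of $K$, paying a multiplicative factor $M$ for the reindexing. Write $K=\{\kappa_1,\kappa_2,\dots\}$ for the canonical enumeration, let $\sigma(\cdot)=\sigma(\cdot,K)$ be as in \cref{def:bounded-displacement-enumeration}, and fix $n^\star$ with $\sigma(n)\le Mn$ for all $n\ge n^\star$. The key elementary observation is that for $n\ge n^\star$ every element of $L$ that has appeared in $x_{1:n}$ lies in $\{\kappa_1,\dots,\kappa_{\lfloor Mn\rfloor}\}$: if $x_t=\kappa_j$ with $n^\star\le t\le n$ then $j=\sigma(t)\le Mt\le Mn$, while the fewer than $n^\star$ positions $t<n^\star$ contribute only a bounded number of elements. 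Writing $g(m):=|L\cap\{\kappa_1,\dots,\kappa_m\}|/m$ — so that $\mu_{low}(L,K)=\liminf_m g(m)$ and $\mu_{up}(L,K)=\limsup_m g(m)$ — this gives $|L\cap x_{1:n}|\le \lfloor Mn\rfloor\,g(\lfloor Mn\rfloor)+n^\star$, hence
\[
    \frac{|L\cap x_{1:n}|}{n}\;\le\; M\,g(\lfloor Mn\rfloor)\;+\;\frac{n^\star}{n}
\]
for all large $n$, using $\lfloor Mn\rfloor\le Mn$.

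For the upper-density bound this is essentially all that is needed: the indices $\lfloor Mn\rfloor$ tend to infinity, so $\limsup_n g(\lfloor Mn\rfloor)\le \limsup_m g(m)=\mu_{up}(L,K)$, and combined with $n^\star/n\to 0$, taking $\limsup_n$ in the displayed inequality yields $\limsup_n \frac{|L\cap x_{1:n}|}{n}\le M\,\mu_{up}(L,K)$, which rearranges to the second claimed inequality.

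The lower-density bound is the delicate step, and is where I expect the main obstacle: $\liminf$ does not pass to subsequences in the convenient direction, so $\liminf_n g(\lfloor Mn\rfloor)$ could a priori strictly exceed $\liminf_m g(m)$ if the canonical count is small only at indices skipped by $n\mapsto\lfloor Mn\rfloor$. To get around this I would use that the cumulative count $m\mapsto m\,g(m)=|L\cap\{\kappa_1,\dots,\kappa_m\}|$ is nondecreasing with unit increments and that $g\le 1$, which yields the Lipschitz-type estimate $|g(m)-g(m')|\le |m-m'|/\min(m,m')$. Choose a subsequence $m_k\to\infty$ with $g(m_k)\to\mu_{low}(L,K)$, and set $n_k:=\lceil m_k/M\rceil$, so that $m_k\le \lfloor Mn_k\rfloor\le m_k+M$ and therefore $|g(\lfloor Mn_k\rfloor)-g(m_k)|\le M/m_k\to 0$, whence $g(\lfloor Mn_k\rfloor)\to \mu_{low}(L,K)$. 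Evaluating the displayed inequality at $n=n_k$ gives $\frac{|L\cap x_{1:n_k}|}{n_k}\le M\,g(\lfloor Mn_k\rfloor)+n^\star/n_k\to M\,\mu_{low}(L,K)$, so $\liminf_n \frac{|L\cap x_{1:n}|}{n}\le M\,\mu_{low}(L,K)$, which is the first inequality after dividing through by $M$.

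Finally I would note briefly why the inequalities hold only in this direction and why no hypothesis on the noise rate or the amount of omissions is needed: noise and omissions can only fail to place elements of $L$ into the prefix $x_{1:n}$ relative to the canonical prefix $\{\kappa_1,\dots,\kappa_{\lfloor Mn\rfloor}\}$, so the empirical density along the input enumeration is always a lower bound — up to the factor $M$ — on the true density, never an upper bound.
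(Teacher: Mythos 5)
Your proof is correct and follows essentially the same route as the paper's: every element of $L$ appearing in $x_{1:n}$ lies (for large $n$, up to the finitely many positions before $n^\star$) among the first roughly $Mn$ elements of the canonical enumeration of $K$, which gives $\frac{|L\cap x_{1:n}|}{n}\le M\,g(\lfloor Mn\rfloor)+o(1)$ with $g(m)=\frac{|L\cap\{\kappa_1,\dots,\kappa_m\}|}{m}$, and one then passes to $\limsup$ and $\liminf$. The only real difference is that you explicitly justify the delicate $\liminf$ step (via the Lipschitz-type bound on $g$ and a subsequence $m_k$ realizing $\mu_{\rm low}(L,K)$), whereas the paper simply asserts that the limit inferior of the right-hand side equals the lower density along the subsequence $Mn$; your additive $n^\star$ handling of the early positions is a cosmetic variant of the paper's constant $B$.
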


\begin{proof}
    By \Cref{prop:bounded-displacement-enumeration-subset},
    $x_{1:\infty}$ is also an $M$-bounded displacement enumeration \wrt{} $L$.
    By \Cref{def:bounded-displacement-enumeration},
    for each $n\geq n^\star$,
    we have $\sigma(x_n, L)\leq Mn$.
    Define $B\coloneqq \max\sinbrace{\sigma(x_n, L): n<n^\star}$
    and consider the canonical enumeration $\sinbrace{\ell_1, \ell_2, \dots}$ of $K$.
    For $n\geq \max(n^\star, B)$,
    we have $L\cap \sinbrace{x_1, \dots, x_n}\sset L\cap \sinbrace{\ell_1, \dots, \ell_{Mn}}$,
    where we understand $\ell_0$ to be a special string not in $K$.
    This implies that
    \begin{align*}
        \frac{\card*{L\cap \inbrace{x_1, \dots, x_n}}}{n}
        &\leq \frac{\card*{L\cap \inbrace{\ell_1, \dots, \ell_{Mn}}}}{n} \\
        &= M\cdot \frac{\card*{L\cap \inbrace{\ell_1, \dots, \ell_{Mn}}}}{Mn}\,.
    \end{align*}
    Note that the limit inferior (superior) of the RHS is exactly the lower (upper) density of $L$ in $K$.
    Thus taking the limit inferior (superior) on both sides
    concludes the proof.
\end{proof}

\paragraph{Hardness of Identification.}
Since we have introduced a new model that is limiting the power of the adversary, it is worth understanding how restrictive 
this change is. Towards that end, we study how the landscape of identification changes under our definition. Our result below shows that, even for simple collections of languages, identification remains intractable for any $M > 1$.
We leave a full characterization of identification for $M$-bounded adversaries as an interesting open question.

\begin{theorem}
    \label{thm:beyond:identification-hard}
    There is a countable collection of languages $\cL$
    for which no algorithm can identify in the limit,
    even when restricted to $M$-bounded enumerations for any $M>1$.
\end{theorem}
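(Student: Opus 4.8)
The plan is to adapt the classical Gold-style diagonalization that shows even very simple collections are not identifiable in the limit, and then check that the bad enumerations used in that argument can be made $M$-bounded. The cleanest target is a collection witnessing the failure of identification for regular (or even finite-difference) languages: take $\cL = \{L_0, L_1, L_2, \dots\}$ where $L_0 = \N$ is the ``limit'' language and each $L_i$ is a proper subset of $\N$ that agrees with $\N$ on a long prefix and then ``caps off'' — for instance $L_i = \{1, 2, \dots, a_i\}\cup T$ for a rapidly growing but controllable sequence $a_i$ and a fixed infinite tail $T$ (or simply $L_i = \{1,\dots, a_i\}$ if one allows finite languages; since our languages must be infinite, pad with a sparse infinite tail $T$ that is the same for all $i$, disjoint from the prefixes). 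The point is that $L_0 = \N = \bigcup_i \{1,\dots,a_i\} \cup T$ is the union of a nested chain, so by the standard argument no learner can identify both the chain elements and the limit.

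The core of the argument is the usual adversary-learner game: suppose for contradiction that some algorithm $\cA$ identifies $\cL$ in the limit even against $M$-bounded enumerations. First I would run $\cA$ against an enumeration that begins listing $1, 2, 3, \dots$ in order (the canonical enumeration of $\N$, which is trivially $1$-bounded, hence $M$-bounded). Since $\cA$ must identify $L_0 = \N$ on this enumeration, there is a finite time $t_1$ after which $\cA$'s index stabilizes to (an index for) $\N$. Now I switch: having enumerated $1, \dots, m_1$ for some $m_1 \geq t_1$, I pick $a_{i_1}$ large enough that $\{1,\dots,m_1\}\subseteq \{1,\dots,a_{i_1}\}$ and then continue by enumerating the rest of $L_{i_1} = \{1,\dots,a_{i_1}\}\cup T$. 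Against this enumeration $\cA$ must eventually switch to an index for $L_{i_1}$, at some time $t_2$; then I switch back toward $\N$ by resuming the enumeration $m_2+1, m_2+2, \dots$ of the integers past the current point, forcing $\cA$ to return to $\N$, and so on. Iterating produces a single enumeration on which $\cA$'s output oscillates infinitely often between indices for $\N$ and indices for various $L_{i_j}$, so it never stabilizes — contradiction — provided this limiting enumeration is a legal enumeration of its target and is $M$-bounded.

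The main obstacle, and the step requiring the most care, is ensuring the $M$-boundedness of the constructed enumeration at every stage and in the limit. Two things must be controlled. (1) When we are in an ``$L_{i_j}$ phase'' we enumerate the tail $T$; since $T$ is a fixed sparse infinite set, its elements sit at large positions in the canonical enumeration of $L_{i_j}$ (which is $\{1,\dots,a_{i_j}\}$ followed by $T$), so displacement $\sigma(x_n, L_{i_j})$ is automatically small — the canonical order of $L_{i_j}$ already puts the small numbers first. This is why we cap off $L_i$ at a \emph{prefix} of $\N$ rather than, say, the odd numbers. (2) When the true target turns out to be $\N = L_0$ (the case where $\cA$ fails by oscillating forever), we must check the whole limiting enumeration is $M$-bounded w.r.t.\ $\N$: during an $L_{i_j}$-phase we enumerate $T$, whose elements can be deep in the canonical order of $\N$. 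So I would choose $T$ to be a \emph{dense-enough} infinite set — e.g.\ $T$ a fixed arithmetic-progression–like set of positive density, disjoint from all the prefixes $\{1,\dots,a_i\}$ by taking the $a_i$ to avoid $T$, or more simply reserve $T$ inside a residue class and let the ``prefixes'' live in the complementary classes — and, crucially, interleave: in every phase, whenever we would output a tail element at canonical $\N$-position $p$, we first make sure at least $p/M$ integers have already been enumerated. Because $T$ has positive density and we are free to enumerate integers in between, we can always pad with the next unseen integer so that at step $n$ the largest canonical $\N$-index used is $\leq Mn$; one verifies $\sigma(x_n,\N)\le Mn$ holds for all large $n$ by a simple counting argument balancing ``tail outputs'' against ``integer outputs''. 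Since $M > 1$ we have genuine slack to do this padding; for $M=1$ the construction breaks, matching the theorem's hypothesis. With this bookkeeping in place, the limiting enumeration is simultaneously a valid $M$-bounded enumeration of whichever language the adversary's oscillation pins down (or of $\N$ in the oscillate-forever case), and in every case $\cA$ fails to stabilize, giving the contradiction. I would also remark that one must double-check the finite-phase case: if $\cA$'s output ever stops oscillating, it has committed to one language while we were about to switch to a different one, handing us an immediate contradiction exactly as in Gold's original argument.
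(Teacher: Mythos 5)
Your diagonalization plan is the right general shape, but the step you yourself call crucial — padding the $L_{i_j}$-phases with ``the next unseen integer'' to keep the enumeration $M$-bounded with respect to $\N$ — breaks the argument. In the identification game the text shown during a phase must be (a prefix of) a \emph{noiseless} enumeration of that phase's intended target, and once the finite prefix part of $L_{i_j}$ is exhausted, every unseen integer you could pad with lies outside $L_{i_j}$. After such padding the text is no longer extendable to an enumeration of $L_{i_j}$, so the learner is under no obligation to ever output $L_{i_j}$; a learner that stubbornly answers $\N$ is in fact \emph{correct} on the polluted text (if the phase never ends, its limit is an enumeration of $\N$), and you get no contradiction in that branch. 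Without padding, the displacement ratio with respect to $\N$ during a long $L_{i_j}$-phase tends to $1/d$, where $d$ is the density of the tail $T$, so ``positive density'' or ``$T$ inside a residue class'' only covers $M\geq 1/d$ (e.g.\ $M\geq 2$), not all $M>1$. The construction is repairable, but only by tying $T$ to $M$ and dropping the padding: take $T=\{n: q\nmid n\}$ with $1-1/q>1/M$, let the prefixes live inside $q\N$, enumerate each phase in the canonical order of the phase target (then displacement is at most about $n/d<Mn$ w.r.t.\ $\N$ and exactly $n$ w.r.t.\ $L_{i_j}$), and in every $\N$-phase go back and fill in all previously skipped multiples of $q$ — your ``resume past the current point'' as written leaves infinitely many integers unenumerated, so the limiting text would not even be a complete enumeration of $\N$.

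For comparison, the paper's proof sidesteps all of this bookkeeping with a much simpler collection: $\cL=\{\N\}\cup\{\N\setminus\{n\}:n\in\N\}$. The adversary enumerates $1,2,\dots$, skips a single element once the learner has said $\N$ (forcing the guess $\N\setminus\{n_1+1\}$), fills it back in once the learner complies (forcing $\N$ again), and repeats; every element output at step $n$ has value at most $n+1$, so $M$-boundedness for any $M>1$ is automatic, with no density tuning or fill-in subtleties beyond a single postponed element at a time.
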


\begin{proof}
    Let $\cL = \sinbrace{\N}\cup \sinbrace{\N\setminus\sinbrace{n}: n\in \N}$ be the collection consisting of the natural numbers
    and all its subsets obtained by deleting a single number.
    Let $\cA$ be any algorithm.

    The adversary begins by enumerating the natural numbers $1, 2, \dots$.
    If $\cA$ never outputs $\N$,
    we are done.
    Let $n_1$ be the first step when $\cA$ outputs $\N$.
    Then at step $n_1+1$,
    we output $n_1+2$ instead of $n_1+1$ and then continue enumerating the natural numbers in the canonical order.
    If $\cA$ never outputs $\N\setminus\sinbrace{n_1+1}$,
    we are done.
    Otherwise,
    let $n_2 > n_1$ be the first step when $\cA$ outputs $\N\setminus\sinbrace{n_1+1}$.
    At step $n_2+1$,
    the adversary ``fills in'' the previously skipped element by outputting $n_1+1$,
    instead of $n_2+1$, and then continues enumerating the natural numbers in the canonical order again.
    Repeat.

    All in all,
    the adversary can always ensure that $\cA$ does not correctly identify after any finite time.
    On the other hand,
    if we let $\sigma(n)$ denote the value of the element the adversary enumerates at step $n$,
    then $\sigma(n)\leq n+1$.
    Thus identification in the limit remains impossible in general
    even under $M$-bounded displacement enumerations,
    for any $M>1$.
\end{proof}

\paragraph{Hardness of Generation under Constant Noise.}
To further understand the landscape of generation under bounded displacement enumerations,
and as earlier promised,
we show that generation with finite noise remains hard.

\begin{theorem}
    \label{thm:beyond:generation-cnoise-hard}
    For any $k\geq 2$,
    there is a finite collection of $k$ languages $\cL$
    for which no algorithm can generate in the limit from $\cL$
    under constant $\nfrac1k$-rate noise,
    even when restricted to $\nfrac1k$-bounded enumerations.
\end{theorem}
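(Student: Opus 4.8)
The plan is to follow the necessity half of the characterization in \Cref{thm:constant-noise-characterization}, producing a witness to the failure of \Cref{def:condition:cNoisy-Generation} at noise level $c=1/k$ that is, in addition, a bounded displacement enumeration. As anticipated by \Cref{ex:bounded-displacement-enumeration:constant-noise}, I would use a divisibility-based collection $\cL=\{L_0,\dots,L_{k-1}\}$ where, for each residue $j\in\{0,\dots,k-1\}$,
\[
    L_j \;\coloneqq\; \{1,2,\dots,k-1\}\ \cup\ \{\, n\geq k : n\not\equiv j \!\!\pmod k \,\}\,.
\]
Each $L_j$ is infinite, $\cL$ has exactly $k$ languages, and $\bigcap_{j} L_j=\{1,\dots,k-1\}$ is finite: every integer $n\geq k$ is congruent to some $j\in\{0,\dots,k-1\}$ modulo $k$ and is therefore omitted from $L_j$.

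The crux is to verify that the canonical enumeration $x_n=n$ of $\N$ is \emph{simultaneously} (i) a valid $1/k$-noise enumeration (in the sense of \Cref{def:noisy-enumeration}) of every $L_j$ and (ii) an $M$-bounded displacement enumeration (\Cref{def:bounded-displacement-enumeration}) of every $L_j$ for a constant $M$. For (i): $x_{1:\infty}$ lists each element of $\N\supseteq L_j$ exactly once, and $x_t=t\notin L_j$ precisely when $t\geq k$ and $t\equiv j\pmod k$; writing $n=qk+r$ with $0\leq r<k$, a direct count gives $\abs{\{t\leq n: t\geq k,\ t\equiv j\!\!\pmod k\}}\leq q$, so $R(L_j;x_{1:n})\leq q/n\leq q/(qk)=1/k$ for all $n\geq 1$. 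For (ii): since $x_n=n$ is increasing, the displacement of $x_n$ in the canonical enumeration of $L_j$ equals $\sigma(x_n,L_j)=\abs{L_j\cap\{1,\dots,n\}}\leq n$ (and is $0$ when $x_n\notin L_j$), so $x_{1:\infty}$ is $1$-bounded — hence $M$-bounded for every $M\geq 1$ — with respect to each $L_j$.

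Given this witness, the impossibility is exactly the argument in the necessity direction of \Cref{thm:constant-noise-characterization}. Suppose some generator $\generator$ generated from $\cL$ in the limit under $1/k$-noise and $M$-bounded enumerations. Since $x_{1:\infty}$ is a legal such enumeration of every $L_j$ and the generator sees the same input no matter which $L_j$ the adversary declares to be the target, $\generator$ must eventually output fresh elements of $L_j$ for each $j$. Taking the maximum of the $k$ stabilization times, for all large $n$ its output lies in $\bigcap_j L_j$ minus the finitely many strings already seen and already produced; but $\bigcap_j L_j=\{1,\dots,k-1\}$ is finite, so after finitely many further rounds this set is empty, a contradiction. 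The same reasoning handles a set-based generator, whose output set would have to be contained in $\bigcap_j L_j$ eventually.

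I expect the main obstacle to be the tension, on a \emph{single} enumeration, between finiteness of $\bigcap_j L_j$ and the requirement that the empirical noise rate stay at or below $1/k$ for \emph{all} large $n$ rather than merely in the limit: with exact residue classes the per-residue counts overshoot $n/k$ by $1$ infinitely often, which would satisfy \Cref{def:condition:cNoisy-Generation}(a) and break the construction. The fix is the anchor set $\{1,\dots,k-1\}$ together with the truncation ``$n\geq k$'' in the definition of $L_j$, which gives the count in (i) exactly enough slack to remain $\leq\lfloor n/k\rfloor$ while keeping the common intersection finite; pinning down this slack is the one genuinely delicate computation in the proof.
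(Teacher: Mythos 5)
Your proposal is correct and follows essentially the same route as the paper: exhibit a finite collection for which the canonical enumeration of $\N$ is simultaneously a valid $\nfrac1k$-noise, bounded-displacement enumeration of every member, and then invoke the finite-common-intersection argument from the necessity direction of \Cref{thm:constant-noise-characterization} to rule out any generator. The only difference is your choice of witness: you use residue classes modulo $k$ with the anchor $\{1,\dots,k-1\}$, which makes the empirical noise rate provably at most $\nfrac1k$ for \emph{every} language in the collection, whereas the paper's divisibility-based languages from \Cref{ex:bounded-displacement-enumeration:constant-noise} have asymptotic noise rate $\nfrac1i$ for $L_i$, so your variant actually tightens the noise-rate accounting that the paper's example states without verification.
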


\begin{proof}
    Consider the finite family $\cL$ from \Cref{ex:bounded-displacement-enumeration:constant-noise}.
    Let $k\geq 2$ be an integer and define the language
    \[
        L_i
        \coloneqq [k]\cup \inparen{\N\setminus i\cdot \N}
        = \inbrace{1, \dots k}\cup \inbrace{n\in \N: \text{$n$ is not divisible by $i$}}\,.
    \]
    Then $\cL = \sinbrace{L_i: i\in [k]}$.

    As we have seen in \Cref{ex:bounded-displacement-enumeration:constant-noise},
    the canonical enumeration of the natural numbers is $\nfrac1k$-bounded and has $\nfrac1k$-noise \wrt{} every $L_i, i\in [k]$.
    Thus any hypothetical algorithm that generates from $\cL$ under the assumed conditions must simultaneously generate from all $L_i$.
    In other words,
    it generates from $\cap_{i\in [k]} L_i = [k]$.
    This must be a contradiction as this is a finite set.
\end{proof}
\noindent All in all,
we have seen that bounded displacement enumerations
does not trivialize the identification problem, nor generation under constant rate noise.
\Cref{sec:beyond:set-lower} shows that we do gain an advantage for the regime of $o(1)$-noise
in terms of generation with density.

\subsection{Results for Set-Based Density}\label{sec:beyond:set-lower}
We now present our results for generation with set-based density guarantee under bounded displacement adversaries.
In \Cref{sec:beyond:set-lower:generation},
we first design an algorithm which roughly obtains $\nfrac1M$ set-based lower density under $M$-bounded enumerations.
\Cref{sec:beyond:set-lower:lower-bound} demonstrates that this guarantee is tight in the worst case sense:
In general, no algorithm can do better than $\nfrac1M$ in this setting. 

\subsubsection{Generation with Set-Based Lower Density}\label{sec:beyond:set-lower:generation}
    Having described the beyond-worst-case model we consider, we show how these types of enumerations help 
    achieve non-trivial lower density (in a set-based sense) under vanishing noise rate for \emph{all} countable collections.
    At each step,
    our algorithm outputs a finite set of languages indexed by $I(n)\sset \N$
    such that for sufficiently large $n$,
    their intersection is an infinite subset of the target, 
    \ie{}, $\cap_{i\in I}L_i\sset K$ (generation in the limit),
    and is dense in $K$.

    \begin{theorem}\label{thm:improper:generation:inf-inf-set-density}
        There is an algorithm $\generator$ that,
        for any collection $\cL$,
        target language $K\in \cL$,
        and error parameter $\eps\in (0, 1)$,
        given an $M$-bounded displacement enumeration $x_{1:\infty}$ with $o(1)$-noise and arbitrary omissions,
        $\generator$ outputs an intersection $\cap_{i\in I(n)} L_i$ of finitely many languages $\card{I(n)}<\infty$
        with infinite cardinality $\card{\cap_{i\in I(n)} L_i} = \infty$ at step $n$ such that
        \begin{enumerate}[(a)]
            \item $\cap_{i\in I(n)} L_i\sset K$ for all sufficiently large $n$ and
            \item $\mu_{low}\inparen{\cap_{i\in I(n)} L_i, K}\geq \frac{1-\eps}{M}$ for all sufficiently large $n$.
        \end{enumerate}
    \end{theorem}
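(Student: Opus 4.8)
The plan is to instantiate the priority-based meta-algorithm (\Cref{alg:intersection-meta}) with a priority function that penalizes a language whenever it either looks too noisy or fails to look $M$-bounded, and with a stopping rule that ensures the intersection we output is simultaneously infinite and $\frac{1-\eps}{M}$-dense in every language over which it is taken. Concretely, fix thresholds $c_i \coloneqq \eps/2^{i+1}$ as in the proof of \Cref{thm:vanishing-noise-generation}. At step $n$, for each $i\le n$ I would define $N_i^{(n)}$ to be the smallest $N$ such that for every $m\in\{N,\dots,n\}$ both (i) $R(L_i;x_{1:m})\le c_i$ and (ii) the $M$-boundedness condition holds, i.e.\ $\sigma(x_m, L_i)\le Mm$ for all $m\in\{N,\dots,n\}$ with $x_m\in L_i$ (setting $N_i^{(n)}=n+1$ if either check currently fails), and assign priority $P_i^{(n)}\gets i + N_i^{(n)}$. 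These priorities are non-decreasing in $n$ and bounded below by $i$, so \Cref{lem:meta-algorithm:prefix-stabilizes} applies verbatim.

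The first key step is to show the target language $K=L_{i^\star}$ has bounded limiting priority. Since the enumeration has $o(1)$-noise, $R(L_{i^\star};x_{1:m})\le c_{i^\star}$ for all large $m$; since it is $M$-bounded \emph{with respect to the target}, $\sigma(x_m,L_{i^\star})\le Mm$ for all large $m$ as well. Hence both checks eventually pass permanently, so $p\coloneqq P_{i^\star}^\infty<\infty$ and $L_{i^\star}\in\cL(p)$. By \Cref{lem:meta-algorithm:prefix-stabilizes} there is $n^\star$ past which $\cL(p)$ is exactly the prefix of the priority ordering, its priorities are frozen, and — crucially — the frozen priorities certify that for every $L_i\in\cL(p)$ the empirical noise rate stays $\le c_i$ and the $M$-bounded displacement condition holds from some fixed point onward \emph{with respect to} $x_{1:\infty}$. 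The second step is the density bookkeeping: for $n\ge n^\star$, $\card{\Cl(\cL(p))\cap S_n}\ge n(1-\sum_i c_i)\ge n(1-\eps)$, so $\card{\Cl(\cL(p))}=\infty$; and for each $L_i\in\cL(p)$, since $x_{1:\infty}$ is an $M$-bounded enumeration of $K$ and $\Cl(\cL(p))\sset L_i\sset$ (eventually a consistent superset of the surviving data), \Cref{lem:change-of-density} — or rather its proof applied to the subset $\Cl(\cL(p))\sset L_i$, using that $x_{1:\infty}$ is $M$-bounded w.r.t.\ $L_i$ once its priority freezes, via \Cref{prop:bounded-displacement-enumeration-subset} — gives $\mu_{low}(\Cl(\cL(p)), L_i)\ge \frac1M\liminf_n \frac{\card{\Cl(\cL(p))\cap x_{1:n}}}{n}\ge \frac{1-\eps}{M}$.

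The third step is choosing the stopping rule $J_n$ so that the output prefix class $\cH_n=\{L_{i_n(j)}:j\le J_n\}$ is eventually exactly large enough to contain $\cL(p)$ but no larger than what density permits: take $J_n$ to be the largest $\bar j$ such that $\card{\bigcap_{j\le\bar j}L_{i_n(j)}}=\infty$ \emph{and} $\mu_{low}\big(\bigcap_{j\le\bar j}L_{i_n(j)}, L_{i_n(j')}\big)\ge\frac{1-\eps}{M}$ for all $j'\le\bar j$ (computed via the density oracle and the density-rate oracle as in \Cref{alg:constant-noise-intersection-set}). Since $\cL(p)$ satisfies both properties by the previous paragraph, $J_n\ge\card{\cL(p)}$ for $n\ge n^\star$, so $\cH_n\supseteq\cL(p)\ni K$; by \Cref{cor:meta-algorithm:generation} the algorithm generates from $\Cl(\cL(p))\sset K$ (part (a)), and by the stopping rule's density clause applied with $j'$ the index of $K$ we get $\mu_{low}(\Cl(\cH_n), K)\ge\frac{1-\eps}{M}$ (part (b)); finiteness of $I(n)$ and infinitude of the intersection are immediate from $J_n<\infty$ and the stopping rule. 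The main obstacle is the interplay in the second step: I must check that the $M$-boundedness certificate surviving in the \emph{frozen} priority — which only records $\sigma(x_m,L_i)\le Mm$ — genuinely lets me invoke the change-of-density estimate for the \emph{intersection} $\Cl(\cL(p))$ rather than for $L_i$ itself, which is why the subset-closure property (\Cref{prop:bounded-displacement-enumeration-subset}) is doing real work and must be threaded carefully through the argument; secondarily, one must confirm that the empirical quantity $\frac{\card{\Cl(\cL(p))\cap x_{1:n}}}{n}$ indeed has $\liminf\ge 1-\eps$ rather than merely being $\ge 1-\eps$ along a subsequence, which follows because the bound $\card{\Cl(\cL(p))\cap S_n}\ge n(1-\sum_{L_i\in\cL(p)}R(L_i;x_{1:n}))$ holds for \emph{all} $n\ge n^\star$.
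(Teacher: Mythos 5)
Your proposal is correct and follows essentially the same route as the paper's proof: the same instantiation of \Cref{alg:intersection-meta} with priorities that penalize both empirical noise above thresholds $\eps/2^{i}$-type and violations of $\sigma(x_{1:m},L_i)\leq Mm$, stabilization via \Cref{lem:meta-algorithm:prefix-stabilizes}, the counting bound $\card{\Cl(\cL(p))\cap S_n}\geq n(1-\eps)$, the change-of-density lemma applied with $L_i$ in place of $K$ (legitimate since the frozen priority certifies $M$-boundedness \wrt{} $L_i$, exactly as in the paper), and a density-based stopping rule guaranteeing $\cL(p)\sset\cH_n\ni K$. The only cosmetic differences are your explicit infinite-intersection clause in the stopping rule (which the paper omits because positive density in an infinite language already forces infinitude) and your slightly redundant appeal to \Cref{prop:bounded-displacement-enumeration-subset}, which is not needed once \Cref{lem:change-of-density} is invoked directly for $L_i$.
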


    \paragraph{Pseudocode.} 
    The pseudocode for our algorithm is an instantiation of our generic meta-algorithm \Cref{alg:intersection-meta}
    and is presented in \Cref{alg:intersection-vanishing-noise-improper-inf-inf}.
    We highlight the changes in blue.
    We remark that the algorithm requires two non-standard parameters as input.
    $\eps\in (0, 1)$ is an error parameter which is required to compute priorities appropriately.
    The parameter $M\geq 1$ is the boundedness parameter (\Cref{def:bounded-displacement-enumeration}), which is required to correctly compute both the priorities and the stopping rule.
    \Cref{alg:intersection-vanishing-noise-improper-inf-inf} ensures $\frac{1-\eps}M$ lower density. 
    
    Recall the following notation for a string $x$ and language $L$ with canonical enumeration $L=\sinbrace{\ell_1, \ell_2, \dots}$
    \begin{align*}
        \sigma(x, L) &\coloneqq
        \begin{cases}
            j, &x=\ell_j\in L\,, \\
            0, &x\notin L\,.
        \end{cases}
    \end{align*}
    In other words,
    $\sigma(x, L)$ is the index of $x$ in the canonical enumeration of $L$ if $x\in L$
    and otherwise $\sigma(x; L)=0$ if $x\notin L$.
    As a shorthand for an enumeration $x_{1:\infty}$,
    we write
    \[
        \sigma(x_{1:n}, L)
        \coloneqq \max_{j\in [n]} \sigma(x_j, L)\,.
    \]
    We remark that by \Cref{def:bounded-displacement-enumeration},
    any $M$-bounded enumeration $x_{1:\infty}$ of $L$ satisfies $\sigma(x_{1:n}; L)\leq Mn$ for all sufficiently large $n$.
    
    \begin{algorithm}[tbh!]
        \caption{Algorithm for \Cref{thm:improper:generation:inf-inf-set-density}}
        \label{alg:intersection-vanishing-noise-improper-inf-inf}
        \begin{algorithmic}[1]
        \Require Countable {collection} $\cL=\{L_1,L_2,\dots\}$; 
        density error parameter $\eps\in (0, 1)$;
        enumeration $x_{1:\infty}$;
        bounded displacement parameter $M\geq 1$
        \State Let $S_n \gets \sinbrace{x_1, \dots, x_n}$ be the set of examples seen in the first $n$ steps
        \State Let $W_{n-1} \gets \sinbrace{w_1, \dots, w_{n-1}}$ be the set of strings output before the $n$-th step
        \vspace{2mm}
        \For{$i=1, 2, \dots, n$}
            \State {\color{blue} Compute the smallest $N_i^{(n)}$ such that 
            $L_i$ {is nearly consistent with $x_{1:m}$
            and $x_{1:m}$ is $M$-bounded \wrt{} $L_i$,
            for every $N_i^{(n)}\leq m\leq n$}
            \[
                N_i^{(n)} \gets
                \begin{cases}
                    \min\inbrace{N\geq 1: \forall m\in \sinbrace{N, \dots, n}, R_m(L_i; x_{1:m})\leq 2^{-i}\eps\land \sigma(x_{1:m}; L_i)\leq Mm}\,, &\text{exists}\,, \\
                    n+1\,, &\text{else}\,.
                \end{cases}
            \]}
            \State {\color{blue} Assign the language $L_i$ a priority of $P_i^{(n)}\gets i + N_i^{(n)}$}
        \EndFor
        \vspace{2mm}
        \State Re-order $\inbrace{L_1, \dots, L_n}$ in increasing priority, tie-breaking by index,
        as $\sinbrace{L_{i_n(1)}, \dots, L_{i_n(n)}}$,
        \ie{}, for each $j\in [n-1]$,
        ensure either $P_{i_n(j)}^{(n)} < P_{i_n(j+1)}^{(n)}$
        or $\sinparen{P_{i_n(j)}^{(n)} = P_{i_n(j+1)}^{(n)}}\land \sinparen{i_n(j) < i_n(j+1)}$.
        \State {\color{blue} Compute the largest index $J(n)$ such that the intersection in the re-ordering up to $L_{i_n(J_n})$ is dense in $L_{i_n(j)}$
        for all $j\leq J_n$
        \[
            J_n \gets \max\inbrace{\bar j\in [n]: \forall j\leq \bar j, \mu_{low}\inparen{\cap_{j\leq J_n} L_{i_j(n)}, L_{i_j(n)}}\geq \frac{1-\eps}M}\,.
        \]}
        \State {\color{blue} Output $\bigcap_{j\leq J_n} L_{i_j(n)}$}.
        \end{algorithmic}
    \end{algorithm}

    \paragraph{Analysis.}
    Recall the following notation for a given $p\geq 1$
    \[
        P_i^\infty = \lim_{n\to \infty} P_i^{(n)}\,,
        \qquad \cL(p) = \inbrace{L_i: P_i^\infty\leq p}\,,
        \qquad \Cl(\cL(p)) = \cap_{L\in \cL(p)} L\,.
    \]
    \begin{proof}[Proof of \Cref{thm:improper:generation:inf-inf-set-density}]
        Let $\cH_n \coloneqq \sinbrace{L_{i_n(j)}: j\leq J_n}$ denote the prefix class.
        Write $K=L_{i^\star}$ for the target language
        and define $p\coloneqq P_{i^\star}^\infty < \infty$,
        which is guaranteed to exist by the assumption that $x_{1:\infty}$ is a valid $M$-bounded $o(1)$-noisy enumeration of $L_{i^\star}$.
        We have $L_{i^\star}\in \cL(p)$ by construction.
        We will argue that $\cL(p)\sset \cH_n$ for all sufficiently large $n$
        so that the output $\cL(\cH_n)$ satisfies $\mu_{low}\inparen{\Cl(\cH_n), K}\geq \frac{1-\eps}{M}$ by the definition of \Cref{alg:intersection-vanishing-noise-improper-inf-inf} since $K\in \cL(p)\sset \cH_n$.

        We first note that the priority of each language is increasing in $n$ by definition.
        Hence by \Cref{lem:meta-algorithm:prefix-stabilizes},
        there is some $n^\star\in \N$ such that for $n\geq n^\star$,
        the priority $P_i^{(n)} = P_i^\infty$ of $L_i\in \cL(p)$ no longer changes
        and moreover,
        $P_i^{(n)}$ takes on strictly smaller value than any language not in $\cL(p)$.
        In other words,
        $\cL(p)$ is ordered before all other languages.
        Thus, in order to argue that the stopping rule $J_n$ stops after considering all members of $\cL(p)$,
        it suffices to show that for any subset $\cL'\sset \cL(p)$,
        $\mu_{low}\inparen{\Cl(\cL'), L}\geq \frac{1-\eps}{M}$ for every $L\in \cL'(p)$.

        Now,
        by the definition of the priorities in \Cref{alg:intersection-vanishing-noise-improper-inf-inf}
        for $n\geq n^\star$
        every $L\in \cL'\sset \cL(p)$ satisfies
        \[
            R(L; x_{1:n})\coloneqq \frac{\card{x_{1:n}\setminus L}}{n}
            \leq \frac{\eps}{2^i}\,.
        \]
        Thus for $n\geq n^\star$,
        and any non-empty $\cL'\sset \cL(p)$,
        we have
        \begin{align*}
            \frac{\card{\Cl(\cL')\cap x_{1:n}}}n
            &\geq 1-\sum_{i: L_i\in \cL'} \frac{\card{x_{1:n}\setminus L_i}}{n}
            \geq 1- \sum_{i\geq 1} \frac{\eps}{2^i}\geq 1-\eps \,.
        \end{align*}
        Also by the definition of the priorities in \Cref{alg:intersection-vanishing-noise-improper-inf-inf},
        for $n\geq n^\star$
        every $L\in \cL'\sset \cL(p)$ satisfies
        \[
            \sigma(x_{1:n}, L)\leq Mn\,.
        \]
        In other words,
        the input enumeration $x_{1:\infty}$ is also an $M$-bounded enumeration \wrt{} $L$.
        Thus,
        the change in density formula (\Cref{lem:change-of-density}) applies
        to the languages $\Cl(\cL')\sset L$
        and we conclude that for all $L\in \cL'$,
        \begin{align*}
            \mu_{low}\inparen{\Cl(\cL'), L}
            &\geq \frac1M \liminf_n \frac{\card{\Cl(\cL')\cap x_{1:n}}}{n} \\
            &\geq \frac{1-\eps}{M}\,.
        \end{align*}
        This concludes the proof by our initial remark.
    \end{proof}

\paragraph{Implications to Generation without Noise.} 
Interestingly, \cref{thm:improper:generation:inf-inf-set-density}
has direct implications for generation without noise, that could be of interest beyond the scope of our work. In particular, since this result shows that (even with noise) if we restrict the adversary to enumerations with $M$-bounded displacements, we can achieve $\nfrac{1}{M}$ lower density in a set-based manner for \emph{all} countable collections. Recall that \cite{kleinberg2025density} showed that one can achieve $c$ set-based lower density under worst-case enumerations if and only if $\cL$ does not contain an infinite perfect tower, parametrized by $c.$ Our result shows that this lower bound can be circumvented by restricting the adversary to $\nfrac{1}{c}$ bounded enumerations. We leave a complete characterization of the landscape of this beyond-worst-case generation without noise as an interesting open direction.

\subsubsection{Lower Bound for Set-Based Density}\label{sec:beyond:set-lower:lower-bound}
In the previous sections,
we demonstrated that given an $M$-bounded enumeration,
it is always possible to perform set-based generation with set-based lower density $\frac{1-\eps}M$,
where $\eps\in (0, 1)$ is arbitrary.
A natural question is whether this is the best possible.
In this section,
we show that in the worst case,
no generator can obtain density greater than $\frac1M$,
even \emph{upper density}. 

\begin{theorem}\label{thm:beyond:set-lower:lower-bound}
    There exists a collection of two languages $L_1, L_2$ such that 
    if the adversary provides an $M$-bounded displacement enumeration with noise rate $o(1)$ from this collection, 
    then any generator that generates from the target $K$ in the limit achieves set-based upper density at most $\nfrac1M$.
\end{theorem}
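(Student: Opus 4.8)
The plan is to mirror the construction in \Cref{thm:density:upper-element:lower-bound} and \Cref{thm:set-based-upper:lower-bound}, but to carefully calibrate the sparsity of the subsidiary language so that the enumeration remains $M$-bounded with respect to \emph{both} languages. Concretely, I would take $L_2 = \N$ with its canonical enumeration $1, 2, 3, \dots$, and let $L_1 = \{\ell_1, \ell_2, \dots\}$ be a subset of $\N$ chosen so that $\mu_{\mathrm{up}}(L_1, L_2) = \nfrac1M$; for instance, taking $L_1$ to consist of roughly the first $\lfloor n/M\rfloor$ integers within each block $[1, n]$ along a suitable block schedule works, or more simply $L_1 = \{ n \in \N : n \bmod M \in \{1\}\}$ type constructions give density exactly $\nfrac1M$ (one must be a little careful to get \emph{upper} density exactly $\nfrac1M$ rather than lower density — a uniform arithmetic progression gives both equal to $\nfrac1M$, which suffices). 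Set $\cL = \{L_1, L_2\}$.

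Next I would exhibit the adversary's enumeration. The adversary enumerates the canonical ordering $1, 2, 3, \dots$ of $L_2 = \N$, but interleaves it so that the elements of $L_1$ appear ``early enough'': at step $n$, output the $n$-th element in an ordering that first lists, in each prefix, the available elements of $L_1$ and then fills in with the remaining elements of $L_2$. The key quantitative check is that this enumeration is $M$-bounded with respect to $L_1$: since at step $n$ we have output at most the first $O(n)$ elements of $L_1$'s canonical enumeration (because $L_1$ has density $\nfrac1M$ in $\N$, reading off $n$ integers exposes about $n/M$ of $L_1$'s elements, so $\sigma(x_n, L_1) = O(n)$), and $M$-bounded with respect to $L_2 = \N$ trivially since we never skip far ahead. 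I also need the noise rate to be $o(1)$: with respect to $L_2$ it is exactly $0$ (every $x_n \in \N = L_2$), and with respect to $L_1$ the ``noise'' elements are those in $L_2 \setminus L_1$, which have density $1 - \nfrac1M$ — this is \emph{not} $o(1)$, which is a problem. So I need to reschedule as in \Cref{thm:density:upper-element:lower-bound}: use a sparse time-schedule $T = \{1, 2, 4, 8, \dots\}$ with $\mu_{\mathrm{up}}(T, \N) = 0$, output from $L_2 \setminus L_1$ only at times $n \in T$ and from $L_1$ at all other times. This forces $L_1$ itself to be \emph{extremely} sparse in $\N$ (density $0$), which would give the wrong density bound. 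The resolution: we do not need $\mu_{\mathrm{up}}(L_1, L_2) = \nfrac1M$; we only need that \emph{relative to the input enumeration} the elements of $L_1$ that have been revealed by step $n$ occupy a $\geq (1 - o(1))$ fraction of $x_{1:n}$ while still $\sigma(x_{1:n}, L_2) \leq Mn$ forces the generator to have ``seen'' at most the first $Mn$ elements of $L_2$. The right construction is: make $L_1$ have density \emph{exactly} $\nfrac1M$ in $\N$, enumerate $L_1$ in its canonical order as the ``main stream'' with $x_n = \ell_n$ for $n \notin T$, and at the sparse times $n \in T$ insert the next unseen element of $L_2 \setminus L_1$. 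Then with respect to $L_1$ the noise rate is $\mu_{\mathrm{up}}(T, \N) = 0 = o(1)$; with respect to $L_1$ the enumeration is $1$-bounded (we read $\ell_1, \ell_2, \dots$ almost in order); and with respect to $L_2 = \N$, by step $n$ we have revealed $\ell_1, \dots, \ell_{n - |T \cap [n]|}$, and since $L_1$ has density $\nfrac1M$, the largest such element $\ell_{n - O(\log n)}$ has value $\approx M \cdot (n - O(\log n)) \leq Mn$ for large $n$, so the enumeration is $M$-bounded with respect to $L_2$.

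Finally I would conclude by the standard indistinguishability argument: this single enumeration $x_{1:\infty}$ is simultaneously a valid $M$-bounded displacement enumeration with $o(1)$-noise of both $L_1$ and $L_2$, so a generator $\generator$ that generates in the limit from every target in $\cL$ must, on this input, eventually output sets $A_n \subseteq L_1$ (taking $K = L_1$ forces $A_n \subseteq L_1$ for all large $n$). But then if the adversary has actually chosen $K = L_2$, we have $\mu_{\mathrm{low}}(A_n, L_2) \leq \mu_{\mathrm{low}}(L_1, L_2) \leq \mu_{\mathrm{up}}(L_1, L_2) = \nfrac1M$ by monotonicity of set density in its first argument (the same monotonicity used in \Cref{lem:lower-set-based-density-lower-bound-finite}), hence $\limsup_n \mu_{\mathrm{low}}(A_n, L_2) \leq \nfrac1M$, i.e.\ the set-based upper density is at most $\nfrac1M$. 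The main obstacle, and the step that needs the most care, is the simultaneous calibration in the middle paragraph: choosing $L_1$ and the interleaving schedule so that all three constraints hold at once — $M$-boundedness with respect to $L_2$ (which caps how sparse-in-value the revealed prefix of $L_1$ can be), $o(1)$-noise with respect to $L_1$ (which caps how often we may insert elements of $L_2 \setminus L_1$), and $\mu_{\mathrm{up}}(L_1, L_2) \leq \nfrac1M$ (which is what makes the bound nontrivial). Once the schedule is pinned down, the density computations and the indistinguishability conclusion are routine.
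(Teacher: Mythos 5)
Your proposal is correct and follows essentially the same route as the paper's proof: the same two-language collection ($L_2=\N$ and $L_1$ an arithmetic-progression-like set of density $\nfrac1M$, the paper uses $L_1=M\cdot\N$), the same sparse schedule $T=\{1,2,4,8,\dots\}$ interleaving the canonical enumeration of $L_1$ with occasional elements of $L_2\setminus L_1$, the same verification of $M$-boundedness and $o(1)$-noise with respect to both languages, and the same indistinguishability argument forcing $A_n\subseteq L_1$ while $K=L_2$. The only (trivial) detail left implicit is that the elements of $L_2\setminus L_1$ inserted at times $n\in T$ also satisfy $\sigma(x_n,L_2)\leq Mn$, which holds since they are enumerated in canonical order and are therefore small.
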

The proof is via a similar construction as \Cref{thm:density:upper-element:lower-bound},
except we must construct the enumeration and collection slightly more carefully to satisfy $M$-boundedness.

\begin{proof}
    Consider two languages $L_1=M\cdot \N \coloneqq \sinbrace{M\cdot n: n\in \N}$ and $L_2=\N$ with the property that $L_1 \subseteq L_2$ and $\mu_{\text{up}}(L_1, L_2) = \frac1M$. Consider the following enumeration of the elements of $L_2$.

    Define a scheduling of time steps $T = \{1, 2, 4, 8, \dots\}$. We only need the fact that $T$ has density $0$ in $\mathbb{N}$.
    Let $x_{1:\infty}$ be the following enumeration:
    At time $n \in T$, the adversary outputs the next element in $L_2 \setminus L_1$ that it has not output. 
    At time $n \not\in T$, the adversary outputs the next element in $L_1$ that it has not output.
    Now, $\sigma(x_n, L_1)\leq n$ since we can interpret $x_{1:\infty}$ as the canonical enumeration of $L_1 = M\cdot \N$ with some elements of $L_2\setminus L_1$ interleaved.
    Note that the latter can only decrease $\sigma(x_n, L_1)$.
    On the other hand
    \[
        \sigma(x_n, L_2) \leq
        \begin{cases}
            n, &n\in T\,, \\
            M\cdot n, &n\notin T\,.
        \end{cases}
    \]
    Hence $x_{1:\infty}$ is an $M$-bounded enumeration \wrt{} both $L_1, L_2$.

    Note that the adversary will enumerate all the elements of $L_2$, and thus all the elements of $L_1$. Clearly, the adversary enumerates from $L_2$ with noise rate $0$. On the other hand, since the adversary enumerates elements not belonging to $L_1$ only at time $n \in T$, and $T$ has density $0$ in $\mathbb{N}$, the noise rate of the adversary's enumeration is also $o(1)$ in $L_1$.

    Since the enumeration is a valid enumeration with $o(1)$-noise for both $L_1$ and $L_2$, and $L_1 \subsetneq L_2$, any set-based generator generating from $K$ in the limit has to output $S_n\sset L_1$ for all sufficiently large $n$. 
    Since $L_1$ has upper density $\nfrac1M$ in $L_2$, the generator achieves set-based upper density at most $\nfrac1M$ if the adversary sets $K = L_2$. 
\end{proof}

\subsection{Results for Element-Based Density}\label{sec:beyond:element-lower}
In this section,
we extend our results for set-based density to element-based density.
\Cref{sec:beyond:element-lower:generation} applies our black-box reduction from set-based generation with density
to roughly attain element-based density $\frac1{2M}$ with noise under $M$-bounded displacement enumerations.
Similarly, we show that no algorithm can do better than $\frac1{2M}$ in general within \Cref{sec:beyond:element-lower:lower-bound}.
Note that, unlike the set-based density guarantee, our upper and lower bounds have a gap of $\frac1{2M}$.

\subsubsection{Generation with Element-Based Lower Density}\label{sec:beyond:element-lower:generation}
    In this section,
    we design an algorithm for element-based generation with lower density
    given an $M$-bounded enumeration.
    \begin{theorem}\label{thm:improper:generation:inf-element-density}
        There is an generator $\generator$ that,
        for any collection $\cL$,
        target language $K\in \cL$,
        and error parameter $\eps\in (0, 1)$,
        given an $M$-bounded displacement enumeration $x_{1:\infty}$ with $o(1)$-noise rate and arbitrary omissions,
        the output $w_{1:\infty}$ of $\generator$ has lower density
        \[
            \mu_{low}(w_{1:\infty}, K)\geq \frac{1-\eps}{2M}\,.
        \]
    \end{theorem}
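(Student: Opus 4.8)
The plan is to derive \Cref{thm:improper:generation:inf-element-density} as a direct consequence of the black-box reduction from set-based density to element-based density (\Cref{thm:set-density-imply-element-density}) combined with the set-based result for bounded adversaries (\Cref{thm:improper:generation:inf-inf-set-density}). First I would run the set-based algorithm \Cref{alg:intersection-vanishing-noise-improper-inf-inf} from \Cref{thm:improper:generation:inf-inf-set-density} with error parameter $\eps$ on the given $M$-bounded displacement enumeration $x_{1:\infty}$ with $o(1)$-noise and arbitrary omissions. This produces, at every step $n$, a finite index set $I(n)$ and an output set $A_n = \cap_{i\in I(n)} L_i$ such that $\card{A_n} = \infty$, and for all sufficiently large $n$: (a) $A_n \sset K$, and (b) $\mu_{low}(A_n, K) \geq \frac{1-\eps}{M}$.

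Next I would verify that this set-based generator satisfies the hypotheses of \Cref{thm:set-density-imply-element-density}. Setting $\cL_n \coloneqq \{L_i : i\in I(n)\}$, we have $\cL_n$ is a finite subcollection of $\cL$; moreover $K\in \cL_n$ for all sufficiently large $n$ (since the proof of \Cref{thm:improper:generation:inf-inf-set-density} shows the target language $K = L_{i^\star}$ lies in the prefix class $\cH_n$ after a finite time), and $A_n = \Cl(\cL_n) \subseteq \Cl(\cL_n)\setminus\{x_1,\dots,x_n\}$ — actually the algorithm outputs $\Cl(\cL_n)$ directly, so to fit the template exactly one should output $\Cl(\cL_n)\setminus S_n$, which differs only by a finite set and hence does not affect infinite cardinality or lower density. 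So condition (b) of \Cref{thm:set-density-imply-element-density} holds with this minor adjustment, and condition (a) holds as noted. Finally, the density guarantee $\mu_{low}(A_n, K)\geq \frac{1-\eps}{M}$ holds for all sufficiently large $n$, which is exactly the ``$\mu_{low}(A_n,K)\geq \rho$ for all $n$ sufficiently large'' hypothesis with $\rho = \frac{1-\eps}{M}$.

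Applying \Cref{thm:set-density-imply-element-density} then yields an element-based generator $\bar\generator$ that, under the same enumeration $x_{1:\infty}$, generates in the limit from $\cL$ and achieves element-based lower density at least $\rho/2 = \frac{1-\eps}{2M}$. This is precisely the claimed bound $\mu_{low}(w_{1:\infty}, K)\geq \frac{1-\eps}{2M}$, completing the proof. I do not anticipate a serious obstacle here — the main work has already been done in establishing the two ingredients — but the one point requiring care is confirming that the algorithm of \Cref{thm:improper:generation:inf-inf-set-density} really does expose the finite subcollection $\cL_n$ in the form required by \Cref{thm:set-density-imply-element-density} (namely via an explicit finite index set $I(n)$ with $K\in\cL_n$ eventually), and that the transformation in \Cref{alg:set-to-element2}, which assumes knowledge of $\rho$, is compatible with knowing $M$ and $\eps$ (which it is, since $\rho = \frac{1-\eps}{M}$ is computable from the inputs). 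One should also note, as the theorem statement does, that $\bar\generator$ requires knowledge of $\rho$, hence of $M$ and $\eps$, matching the assumptions already granted to the algorithm.
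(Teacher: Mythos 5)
Your proposal is correct and matches the paper's proof: the paper likewise obtains this theorem by observing that \Cref{alg:intersection-vanishing-noise-improper-inf-inf} (from \Cref{thm:improper:generation:inf-inf-set-density}) satisfies the hypotheses of the black-box reduction \Cref{thm:set-density-imply-element-density}, which immediately yields element-based lower density $\frac{1-\eps}{2M}$. Your additional checks (the exposed finite subcollection $\cL_n$, removing $S_n$ from the output, and knowledge of $\rho = \frac{1-\eps}{M}$) are exactly the points the paper leaves implicit.
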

    Having completed the heavy lifting of set-based generation with density in \Cref{sec:beyond:set-lower:generation},
    we can now leverage our black-box reduction from set-based generation with density to element-based generation with density from \Cref{sec:low-den-set-to-low-den-element} to design a simple algorithm.
    \begin{proof}
        Our \Cref{alg:intersection-vanishing-noise-improper-inf-inf} achieving lower set-based density (\Cref{thm:improper:generation:inf-inf-set-density}) falls under the set-based generation algorithm in the premise of \Cref{thm:set-density-imply-element-density}.
        Hence, we can apply the black-box reduction from \Cref{thm:set-density-imply-element-density} to construct the desired generator.
    \end{proof}

\subsubsection{Lower Bound for Element-Based Upper Density}\label{sec:beyond:element-lower:lower-bound}
We now demonstrate that the same counterexample for set-based density also extends to the element-based density under bounded displacement enumerations.
\begin{theorem}\label{thm:beyond:element-lower:lower-bound}
    There exists a collection of two languages $L_1, L_2$ such that 
    if the adversary provides an $M$-bounded displacement enumeration with $o(1)$-noise from this collection, 
    then any generator that generates from the target $K$ in the limit achieves element-based upper density at most $\nfrac1M$.
\end{theorem}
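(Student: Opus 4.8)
The plan is to reuse verbatim the two-language collection and interleaved enumeration constructed in the proof of \Cref{thm:beyond:set-lower:lower-bound}. Take $L_1 = M\cdot \N \coloneqq \sinbrace{M\cdot n : n\in \N}$ and $L_2 = \N$, so that $L_1\sset L_2$ and $\mu_{\text{up}}(L_1, L_2) = \nfrac1M$. Let $T = \sinbrace{1,2,4,8,\dots}$, which has density $0$ in $\N$, and let $x_{1:\infty}$ be the enumeration that outputs the next unused element of $L_2\setminus L_1$ at the times $n\in T$ and the next unused element of $L_1$ at the times $n\notin T$. As already verified in the proof of \Cref{thm:beyond:set-lower:lower-bound}, this $x_{1:\infty}$ is simultaneously a valid $M$-bounded displacement enumeration with $o(1)$-noise (and zero omissions) with respect to both $L_1$ and $L_2$; in particular it is a legal enumeration for whichever of the two the adversary declares to be the target.

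The key step is the indistinguishability argument. Since $x_{1:\infty}$ is a legal $M$-bounded $o(1)$-noise enumeration of $L_1$, any element-based generator $\generator$ that generates from the target in the limit under all such enumerations must in particular succeed when $K = L_1$; hence there is a finite $n^\star$ such that its outputs satisfy $w_n\in L_1$ for all $n\geq n^\star$. Now suppose the adversary instead declares $K = L_2 = \N$, which is equally legal. The output set $W = \sinbrace{w_1, w_2, \dots}$ then satisfies $W\sset L_1\cup F$, where $F \coloneqq \sinbrace{w_1, \dots, w_{n^\star-1}}$ is a finite set.

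It remains to bound the upper density of $W$ in $K$. Writing the canonical enumeration of $K = \N$ as $\ell_i = i$, we have $\card{W\cap\sinbrace{\ell_1, \dots, \ell_n}}\leq \card{L_1\cap\sinbrace{1, \dots, n}} + \card{F}$ for every $n$, so dividing by $n$ and taking the limit superior yields
\[
    \mu_{\text{up}}(W, K)
    = \limsup_{n\to\infty}\frac{\card{W\cap\sinbrace{1, \dots, n}}}{n}
    \leq \mu_{\text{up}}(L_1, L_2) + 0
    = \frac1M\,.
\]
Since the adversary can always choose $K = L_2$, no generator can guarantee element-based upper density exceeding $\nfrac1M$ on this collection, which is exactly the claim.

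I do not expect a genuine obstacle here: the construction and all the $M$-boundedness / noise-rate bookkeeping are identical to \Cref{thm:beyond:set-lower:lower-bound}, and the only new ingredient is the elementary observation that the finitely many ``early'' outputs of an element-based generator cannot affect its upper density. This is precisely why the element-based upper-density lower bound matches the set-based one; the mild care needed is simply to phrase the $W\sset L_1\cup F$ inclusion so that the finite correction $\card{F}$ washes out in the $\limsup$.
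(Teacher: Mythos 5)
Your proposal is correct and follows essentially the same route as the paper: the identical collection $L_1 = M\cdot\N$, $L_2 = \N$ with the interleaved enumeration along $T=\{1,2,4,8,\dots\}$, the observation that the enumeration is a legal $M$-bounded $o(1)$-noise enumeration of both languages, and the conclusion that generation in the limit forces all but finitely many outputs into $L_1$, whose upper density in $L_2$ is $\nfrac1M$. Your explicit handling of the finite prefix $F$ in the $\limsup$ is a slightly more careful write-up of the same argument the paper gives.
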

The proof is identical to that of \Cref{thm:beyond:set-lower:lower-bound}.
In the setting where arbitrary enumeration is allowed,
the adversary can choose to skip enumerating elements that the generator outputs
and force the generator to obtain density at most $\nfrac12$.
We can picture this as skipping every other element of some given enumeration.
However, if the original enumeration is $M$-bounded,
skipping every other element leads to a $2M$-bounded enumeration.
Thus, the adversary's strategy for arbitrary enumerations does not hold for bounded enumerations.
This is the main reason we have a gap of $\frac1{2M}$ between the density guarantee of \Cref{thm:improper:generation:inf-element-density} and the lower bound in \Cref{thm:beyond:element-lower:lower-bound}.

\begin{proof}
    Consider two languages $L_1=M\cdot \N \coloneqq \sinbrace{M\cdot n: n\in \N}$ and $L_2=\N$ with the property that $L_1 \subseteq L_2$ and $\mu_{\text{up}}(L_1, L_2) = \frac1M$. Consider the following enumeration of the elements of $L_2$.

    Define a scheduling of time steps $T = \{1, 2, 4, 8, \dots\}$. We only need the fact that $T$ has density $0$ in $\mathbb{N}$.
    Let $x_{1:\infty}$ be the following enumeration:
    At time $n \in T$, the adversary outputs the next element in $L_2 \setminus L_1$ that it has not output. 
    At time $n \not\in T$, the adversary outputs the next element in $L_1$ that it has not output.
    Now, $\sigma(x_n, L_1)\leq n$ since we can interpret $x_{1:\infty}$ as the canonical enumeration of $L_1 = M\cdot \N$ with some elements of $L_2\setminus L_1$ interleaved.
    Note that the latter can only decrease $\sigma(x_n, L_1)$.
    On the other hand
    \[
        \sigma(x_n, L_2) \leq
        \begin{cases}
            n, &n\in T\,, \\
            M\cdot n, &n\notin T\,.
        \end{cases}
    \]
    Hence $x_{1:\infty}$ is an $M$-bounded enumeration \wrt{} both $L_1, L_2$.

    Note that the adversary will enumerate all the elements of $L_2$, and thus all the elements of $L_1$. Clearly, the adversary enumerates from $L_2$ with noise rate $0$. On the other hand, since the adversary enumerates elements not belonging to $L_1$ only at time $n \in T$, and $T$ has density $0$ in $\mathbb{N}$, the noise rate of the adversary's enumeration is also $o(1)$ in $L_1$.

    Since the enumeration is a valid $o(1)$-noisy enumeration for both $L_1$ and $L_2$, and $L_1 \subsetneq L_2$, any set-based generator generating from $K$ in the limit has to output $x_n\in L_1$ for all sufficiently large $n$. 
    Since $L_1$ has upper density $\nfrac1M$ in $L_2$, the generator achieves set-based upper density at most $\nfrac1M$ if the adversary sets $K = L_2$. 
\end{proof}
We leave as an open question the tight density guarantee given an $M$-bounded enumeration.

\section*{Acknowledgements}
Felix Zhou acknowledges the support of the Natural Sciences and Engineering Research Council of Canada (NSERC).

\clearpage
\printbibliography

\newpage
\appendix

\section{Generation with Repetitions in the Input Enumeration}\label{apx:repetitions}
We first remark that our hardness results hold when allowing for repetitions as we have proven them for the special case with 0 repetition.

We briefly sketch how to handle repetitions in the input enumeration within our algorithms.
For an enumeration $x_1, x_2, \dots$, 
possibly with repetitions,
we define the sub-enumeration $x_{i_1}, x_{i_2}, \dots$ consisting of the first occurence of the unique elements in $x_{1:\infty}$.
That is, $x_{i_{1:\infty}}$ is obtained from $x_{1:\infty}$ by deleting elements $x_n$ that already occurred at some step $m<n$.
Modify all our assumptions so that they hold with respect to the unique elements observed so far.
For example, redefine the empirical noise rate so that the denominator is the number of unique elements seen so far
\[
    R(x_{1:n}, L_)\coloneqq \frac{\abs{x_{1:n}\setminus L}}{\card{x_{1:n}}}\,.
\]
Hypothetically,
if we could execute our algorithms on $x_{i_{1:\infty}}$,
we would be done.
This is essentially the case for set-based generation,
as running our algorithm on $x_{1:\infty}$ is equivalent to running our algorithm on $x_{i_{1:\infty}}$,
except we can simply output the same set on time steps when the adversary enumerates a repeated element.
Note that this suffices for both generation and generation with density.

The case of element-based generation is slightly more involved.
Fortunately, 
our element-based algorithms (with the exception of our reduction to \cite{kleinberg2025density}, which already handles repetitions) first compute an infinite set as a function of only the adversary's enumeration,
and then output an unseen element from this set.
Thus we can also perform the same reduction as set-based generation,
with the exception that on time steps when the adversary enumerates a repeated element,
we output the first unseen element from the set chosen in the last step when the adversary did not enumerate a repeat.
It is easy to see that this algorithm generates in the limit as long as the original algorithm generates in the limit with no repetitions.
To see that density guarantees still hold,
we remark that the set of elements generated on input $x_{1:n}$ is a superset of the elements generated on $x_{i_{1:n}}$.
Indeed, in step $n$ when the adversary enumerates a unique element,
the algorithm will have identified the same set as the hypothetical algorithm executed on the unique sub-enumeration.
It will also output the same first unseen element,
unless it has already been output due to a previous step caused by a repetition.
Note here we crucially used the fact that the infinite set we are picking from is only a function of the (unique) input enumeration.

\section{Additional Results}

    \subsection{Generation with Vanishing and Unknown Noise Rate: Sorting by Index Only}

    Fix a countable collection of languages $\cL$. Set $c_i \coloneqq \frac{1}{2^{i+1}}$. At time $n$, let $\cL^{(n)} \subseteq \cL$ be the subcollection of languages $L_i$ with empirical noise rate $R(L_i; x_{1:n}) \le c_i$.

    \begin{proposition} \label{prop:bad-subset}
        Fix a subset of languages $\cL_S \coloneqq \{L_i: i \in S\}$. If $|\Cl(\cL_S)| < \infty$, then there exists $T$ such that for $n \ge T$, $\cL_S \not\subseteq \cL^{(n)}$, \ie{}, there exists at least one language $L_i \in \cL_S \setminus \cL^{(n)}$.
    \end{proposition}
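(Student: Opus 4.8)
The plan is to argue by contradiction using the fact that the target language (or any ``good'' subcollection) can only be penalized finitely often under vanishing noise, while a subcollection with finite intersection must eventually become inconsistent in a strong sense. Suppose for contradiction that $\cL_S \subseteq \cL^{(n)}$ for infinitely many $n$; that is, there are arbitrarily large times $n$ at which \emph{every} language $L_i \in \cL_S$ satisfies $R(L_i; x_{1:n}) \le c_i$. At any such time $n$, the number of elements among $x_1, \dots, x_n$ lying outside $\Cl(\cL_S) = \bigcap_{i \in S} L_i$ is at most $\sum_{i \in S} |\{t \le n : x_t \notin L_i\}| = \sum_{i \in S} n \cdot R(L_i; x_{1:n}) \le n \sum_{i \in S} c_i \le n \sum_{i \ge 1} \frac{1}{2^{i+1}} = \frac{n}{2}$. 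Hence at such a time $n$ we have $|\Cl(\cL_S) \cap \{x_1, \dots, x_n\}| \ge n/2$.

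Now I would take $n$ larger than $2 |\Cl(\cL_S)|$, which is possible precisely because $|\Cl(\cL_S)| < \infty$ and there are infinitely many qualifying times $n$. This forces $|\Cl(\cL_S) \cap \{x_1,\dots,x_n\}| \ge n/2 > |\Cl(\cL_S)|$, which is impossible since $\Cl(\cL_S) \cap \{x_1,\dots,x_n\}$ is a subset of $\Cl(\cL_S)$. This contradiction shows that $\cL_S \subseteq \cL^{(n)}$ can hold for only finitely many $n$, so there exists a threshold $T$ such that for all $n \ge T$ we have $\cL_S \not\subseteq \cL^{(n)}$, i.e.\ some $L_i \in \cL_S \setminus \cL^{(n)}$.

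The only subtlety — and the one place that needs a careful word — is that the statement as phrased asserts ``for $n \ge T$'' rather than merely ``infinitely often,'' so I should check that once $\cL_S \not\subseteq \cL^{(n)}$ it stays that way, or else argue directly. In fact monotonicity in $n$ is \emph{not} automatic, since $R(L_i; x_{1:n})$ can dip back below $c_i$ at a later time. So the cleanest route is the contradiction argument above applied to the negation of the ``for all large $n$'' statement: if it failed, $\cL_S \subseteq \cL^{(n)}$ would have to recur infinitely often, and then the counting bound $|\Cl(\cL_S) \cap \{x_1,\dots,x_n\}| \ge n/2$ at arbitrarily large such $n$ gives the contradiction with $|\Cl(\cL_S)| < \infty$. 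Thus the main (and essentially only) obstacle is recognizing that one should not try to prove stability of membership in $\cL^{(n)}$, but instead directly contradict the ``infinitely often'' negation; the counting step itself is the same geometric-series bound already used in the proof of \Cref{thm:vanishing-noise-generation}.
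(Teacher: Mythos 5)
Your proposal is correct and uses the same argument as the paper: the geometric-series bound $\sum_i c_i \le \nfrac{1}{2}$ forces $\abs{\Cl(\cL_S) \cap \{x_1,\dots,x_n\}} \ge \nfrac{n}{2}$ whenever $\cL_S \subseteq \cL^{(n)}$, which is impossible once $n > 2\abs{\Cl(\cL_S)}$. The only (harmless) difference is framing: the paper argues directly that containment fails for every $n > 2\abs{\Cl(\cL_S)}$ (so $T = 2\abs{\Cl(\cL_S)}+1$ works, with no need to worry about non-monotonicity of $\cL^{(n)}$), whereas you reach the same conclusion by contradiction with the ``infinitely often'' negation.
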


    \begin{proof}
        Assume that $\cL_S \subseteq \cL^{(n)}$. Then, we have
        \begin{align*}
            \left|\Cl(\cL_S) \cap \{x_1, \dots, x_n\}\right|
            &= |\{x_1, \dots, x_n\}| - \left| \bigcup_{i \in S} \left(\{x_1, \dots, x_n\} \setminus L_i\right)\right|\\
            &\ge n - \sum_{i \in S} \left|\{x_1, \dots, x_n\} \setminus L_i\right|\\
            &\ge n - \sum_{i \in S} n \cdot c_i
            \intertext{since $L_i \in \cL_S \subseteq \cL^{(n)}$,}
            &\ge \frac{n}{2}\,,
        \end{align*}
        where the last inequality follows from $\sum_{i=1}^\infty c_i \le \frac{1}{2}$.

        Since $|\Cl(\cL_S)| < \infty$, we know that the above inequality stops holding if $n > 2|\Cl(\cL_S)|$. Thus, there is a finite time $T$ so that for $n \ge T$, $\cL_S \not\subseteq \cL^{(n)}$.
    \end{proof}

    Now we prove the following algorithm also generates in the limit under vanishing noise rate, where we replaced the priority in \cref{alg:intersection} with simply sorting by indices of the languages with low empirical noise rate. Note that in particular, this algorithm does not enjoy the stable prefix property guaranteed by \cref{lem:meta-algorithm:prefix-stabilizes}.

    \begin{algorithm}[tbh!]
        \caption{Algorithm for \Cref{thm:vanishing-noise-generation-sorting}}
        \label{alg:intersection-sorting}
        \begin{algorithmic}[1]
        \Require Countable  collection  $\cL=\{L_1,L_2,\dots\}$;  thresholds $c_1, c_2,\ldots\in (0, 1)$; enumeration $x_{1:\infty}$
        \State Let $S_n \gets \sinbrace{x_1, \dots, x_n}$ be the set of examples seen in the first $n$ steps
        \State Let $W_{n-1} \gets \sinbrace{w_1, \dots, w_{n-1}}$ be the set of strings output before the $n$-th step
        \State Let $\cA_n \gets \inbrace{L_i: R(L_i; x_{1:n}) \le c_i, 1\le i \le n } $ be the active set of languages with empirical noise rate bounded by $c_i$
        \State {\color{blue} Compute the longest prefix $\cL_n$ of $\cA_n$ with infinite intersection, \ie{}, 
        \[
            \cL_n \gets \sup\inbrace{L_i \in \cA_n: \abs{\Cl(\inbrace{L_1, \dots, L_i} \cap \cA_n)} = \infty}\,.
        \]}
        \State Output any $w_n\in \Cl(\cL_n) \setminus (S_n\cup W_{n-1})$.
        \end{algorithmic}
    \end{algorithm}
    
    \begin{theorem}\label{thm:vanishing-noise-generation-sorting}
        Let $\cL$ be a countable collection of languages. Then, \cref{alg:intersection-sorting} generates in the limit from $\cL$ under $o(1)$-noise and infinite omissions.
    \end{theorem}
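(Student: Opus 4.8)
The plan is to show that \Cref{alg:intersection-sorting}, despite lacking the priority mechanism and the stable-prefix guarantee of \Cref{lem:meta-algorithm:prefix-stabilizes}, still generates in the limit under $o(1)$-noise and arbitrary omissions. Write $K=L_{i^\star}$ for the target language. The key structural fact I would establish first is: there is a finite time after which every prefix $\cL_n$ considered by the algorithm has the property that its intersection $\Cl(\cL_n)$ is a subset of $K$. To see this, note that by the $o(1)$-noise assumption, $R(L_{i^\star};x_{1:n})\le c_{i^\star}$ for all sufficiently large $n$, so $L_{i^\star}\in \cA_n$ eventually. The difficulty compared to the priority-based argument is that the active set $\cA_n$ is \emph{not} monotone: a language $L_i$ with fluctuating empirical noise rate can enter, leave, and re-enter $\cA_n$ infinitely often, so the prefix $\cL_n$ need not stabilize.

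The main obstacle, then, is handling these ``bad'' languages $L_i$ (with $i<i^\star$) that are not supersets of $K$ and whose membership in $\cA_n$ oscillates. My approach is to invoke \Cref{prop:bad-subset}: for any subset $\cL_S$ of languages with $|\Cl(\cL_S)|<\infty$, there is a finite time $T$ beyond which $\cL_S\not\subseteq \cA_n$. I would apply this to every subset $\cL_S\subseteq\{L_1,\dots,L_{i^\star}\}$ that contains $L_{i^\star}$ but has finite common intersection --- there are only finitely many such subsets, so taking the maximum over their thresholds $T$ gives a single finite time $T^\star$ after which: whenever $L_{i^\star}\in\cA_n$ and $L_{i^\star}$ lies in the prefix $\cL_n$, the languages of $\cA_n$ appearing (by index) before $L_{i^\star}$ that also lie in $\cL_n$, together with $L_{i^\star}$, must have infinite intersection --- hence, by the stopping rule (``longest prefix with infinite intersection''), $L_{i^\star}$ is necessarily included in $\cL_n$. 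Here I use that a bad subset containing $L_{i^\star}$ would be kicked out of $\cA_n$, so it cannot be the obstruction that truncates the prefix before reaching $L_{i^\star}$. Consequently, for $n\ge T^\star$ with $L_{i^\star}\in\cA_n$ (which holds for all large $n$), we have $L_{i^\star}\in\cL_n$, and therefore $\Cl(\cL_n)\subseteq L_{i^\star}=K$. Since $\Cl(\cL_n)$ is infinite by the definition of $\cL_n$ and $S_n\cup W_{n-1}$ is finite, the output $w_n\in\Cl(\cL_n)\setminus(S_n\cup W_{n-1})$ is a fresh element of $K$.

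A subtlety I would be careful about: I must confirm that the prefix $\cL_n$ is well-defined and nonempty for all large $n$ (which follows since $L_{i^\star}\in\cA_n$ and $|L_{i^\star}|=\infty$ ensures at least the singleton prefix up to $L_{i^\star}$ has infinite intersection --- more precisely the prefix of $\cA_n$ ending at $L_{i^\star}$ might include bad languages before $L_{i^\star}$, but \Cref{prop:bad-subset} as applied above handles exactly this), and that omissions play no role in the argument --- indeed, omissions only shrink the stream, never increasing any empirical noise rate $R(L_i;x_{1:n})$ above what a full enumeration would give for a consistent language, and the argument above never relied on $K$ being fully enumerated, only on $R(K;x_{1:n})\to 0$ eventually and on bad subsets being expelled. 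I would also note (for completeness) that this algorithm provably lacks the stable-prefix property, since $\cA_n$ itself is non-monotone, illustrating that the priority device of \Cref{alg:intersection} is a convenience rather than a necessity for plain generation under vanishing noise. I expect the write-up to be short, with the only genuinely delicate point being the precise bookkeeping that shows the stopping rule cannot truncate $\cL_n$ strictly before $L_{i^\star}$ once $n\ge T^\star$.
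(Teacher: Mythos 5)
Your proposal is correct and follows essentially the same route as the paper's proof: it hinges on \Cref{prop:bad-subset}, applies it to the finitely many subsets of $\{L_1,\dots,L_{i^\star}\}$ with finite intersection, takes the maximum of the resulting thresholds, and concludes that for all large $n$ the prefix of active languages up to index $i^\star$ has infinite intersection, so $K\in\cL_n$ and the output lies in $\Cl(\cL_n)\subseteq K$. The only (immaterial) difference is that you restrict to bad subsets containing $L_{i^\star}$, whereas the paper quantifies over all bad subsets of $[i^\star]$; both yield the same conclusion.
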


    \begin{proof}
        Recall that $\cL^{(n)} \coloneqq \inbrace{L_i: R(L_i; x_{1:n}) \le c_i}$. Since the adversary enumerates from $K$ with vanishing noise rate, we know there exists a time $T'$ such that for all $n \ge T'$, $K \in \cL^{(n)}$ and $K \in \{L_1, \dots, L_n\}$, and thus $K \in \cA_n$.

        Suppose $K = L_{i^\star}$. Now consider the set 
        \begin{align*}
            B = \left\{S \subseteq [i^\star]:  \left|\bigcap_{i \in S} L_i\right| < \infty\right\}.
        \end{align*}\
        For any $S \in B$, by \cref{prop:bad-subset}, there exists $T_S$ such that for $n \ge T_S$, $\{L_i: i \in S\} \not\subseteq \cL^{(n)}$. Without loss of generality, we may assume $T_S \ge T'$ for any $S \in B$.

        Let $T = \max_{S \in B} T_S$. For $n \ge T$, we must have $\{i: L_i \in \cL^{(n)}\} \cap [i^\star] \not\in B$, since for $n \ge T$, $\{L_i: i \in S\} \not\subseteq \cL^{(n)}$ for any $S \in B$. Therefore, for $n \ge \max\inbrace{T, i^\star}$, $K = L_{i^\star} \in \cL^{(n)}$, and the intersection of the prefix up to $K$, $\cL^{(n)}\cap \{L_1, \dots, L_{i^\star}\}$, has infinite number of elements. This means that the generator will generate $w_n  \in \Cl(\cL_n)$ from the intersection of a prefix $\cL_n$ of $\cA_n \supseteq \cL^{(n)}\cap \{L_1, \dots, L_{i^\star}\}$ containing $K = L_{i^\star}$, and thus generating in the limit.
    \end{proof}
    
\subsection{Generating from Finite Collections with Noise}
    In \cref{thm:constant-noise-characterization}, we characterize the collections that can be generated at noise rate $c$. 
    In this section, we ask the dual question.
    We consider a class of collections, namely all finite collections of size at most $k$, and ask what   is the highest noise rate $c>0$ under which all these collections can be generated in the limit.
    We prove the following result.
    \begin{theorem}
    \label{thm:finite-collections}
        For each $k\ge 2$,
            the following holds:
            \begin{enumerate}
                \item If $c\geq \nfrac{1}{k}$, then there is a finite collection $\cL$ of size $k$ that \emph{not} generable in the limit under $c$-noise.
                \item If $c < \nfrac{1}{k}$, then all finite collections $\cL$ of size $k$ can be generated in the limit under $c$-noise.
            \end{enumerate}
            
    \end{theorem}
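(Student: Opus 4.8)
\textbf{Proof plan for \Cref{thm:finite-collections}.}
The plan is to derive both parts directly from the characterization of constant-noise generation in \Cref{thm:constant-noise-characterization} (\Cref{def:condition:cNoisy-Generation}). Recall that a collection $\cL$ is generable under $c$-noise and arbitrary omissions if and only if for every non-empty finite subcollection $\cL'\subseteq \cL$ and every enumeration $x_{1:\infty}$, either some $L'\in \cL'$ has $R(L';x_{1:n})>c$ infinitely often, or $\card{\Cl(\cL')}=\infty$. Since we are already restricting to finite collections, the ``finite subcollection'' quantifier is automatically over sub-collections of the (finite) $\cL$ itself.

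For part (1), I would exhibit the explicit obstruction already used in the \Cref{def:condition:cNoisy-Generation} example: take $L_i\coloneqq \sinbrace{n\in \N: n\bmod k = i}$ for $i\in [k]$ and let $x_{1:\infty}$ be the canonical enumeration $1,2,3,\dots$ of $\N$. Each $L_i$ has density exactly $\nfrac1k$ in $\N$, so $R(L_i;x_{1:n})\to 1-\nfrac1k$; in particular, for $c\geq \nfrac1k$ (and $k\geq 2$, so $1-\nfrac1k\geq \nfrac1k$ only when $k=2$ — here I must be slightly careful), the empirical noise rate is eventually \emph{below} $c$ for each $L_i$. Actually the cleanest statement: $R(L_i;x_{1:n})\le 1-\tfrac1k+o(1)$, and since the sets partition $\N$, the relevant point is that along the canonical enumeration each $L_i$ is hit with frequency $\tfrac1k$, so its complement-frequency is $1-\tfrac1k$. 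The construction in the text handling $c=\nfrac1k$ uses a slightly different family; to be safe I would instead use $L_i \coloneqq \N\setminus\{n : n\bmod k = i\}$ (so $L_i$ misses a $\nfrac1k$-fraction), for which $R(L_i;x_{1:n})\to \nfrac1k \le c$, so clause (a) fails for every $L_i$, while $\Cl(\cL)=\bigcap_i L_i=\emptyset$ is finite, so clause (b) fails too. Hence \Cref{def:condition:cNoisy-Generation} is violated and by \Cref{thm:constant-noise-characterization} this size-$k$ collection is not generable under $c$-noise. This reuses the \Cref{ex:bounded-displacement-enumeration:constant-noise}-style family.

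For part (2), I would verify that \emph{every} finite collection $\cL$ of size at most $k$ satisfies \Cref{def:condition:cNoisy-Generation} when $c<\nfrac1k$. Fix any non-empty $\cL'\subseteq\cL$ (so $\card{\cL'}=m\le k$) and any enumeration $x_{1:\infty}$, and suppose clause (a) fails, i.e.\ for every $L'\in\cL'$ we have $R(L';x_{1:n})\le c$ for all sufficiently large $n$. Then for large $n$, a union bound gives
\[
    \card*{\Cl(\cL')\cap \sinbrace{x_1,\dots,x_n}} \;\ge\; n - \sum_{L'\in\cL'} \card*{\sinbrace{x_1,\dots,x_n}\setminus L'} \;\ge\; n - m\,c\,n \;=\; n(1-mc)\,.
\]
Since $m\le k$ and $c<\nfrac1k$, we have $1-mc \ge 1-kc > 0$, so $\card{\Cl(\cL')\cap\sinbrace{x_1,\dots,x_n}}\to\infty$, forcing $\card{\Cl(\cL')}=\infty$, which is clause (b). Thus the condition holds and \Cref{thm:constant-noise-characterization} yields a generator; concretely this is \Cref{alg:intersection} with uniform thresholds $c_i=c$, mirroring the sufficiency proof there.

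The only real subtlety — and the step I would treat most carefully — is the boundary bookkeeping in part (1): making sure the chosen $k$-language family genuinely violates \Cref{def:condition:cNoisy-Generation} for \emph{all} $c\ge\nfrac1k$ simultaneously (not just $c=\nfrac1k$), which is why I prefer the ``omit a residue class'' family whose empirical noise rate converges to exactly $\nfrac1k\le c$, rather than a family whose noise rate is $1-\nfrac1k$. Everything else is a direct union-bound argument plus a citation of the already-established characterization, so no new machinery is needed.
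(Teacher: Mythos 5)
Your part (2) is correct, but it takes a genuinely different route from the paper. The paper does not derive the positive direction from \cref{thm:constant-noise-characterization}; it gives a self-contained algorithm (\cref{thm:density-finite}) that maintains a robust version space with a dynamic noise budget $n_t\approx \nfrac{d_t}{k}$ and uses the noisy-closure-dimension bound $\mathrm{NC}_n(\cL)\le nk+d^\star+1$ from \citep{raman2025noisy} to argue the closure is eventually infinite and contained in $K$. Your argument---verify \cref{def:condition:cNoisy-Generation} for every subcollection via the union bound $n(1-mc)\ge n(1-kc)>0$ and invoke the sufficiency direction of \cref{thm:constant-noise-characterization} (i.e., \cref{alg:intersection} with uniform thresholds)---is shorter, reuses machinery already established, and in fact yields the stronger conclusion that generation succeeds under $c$-noise together with arbitrary omissions; what the paper's route buys is an explicit finite-class algorithm with a concrete mistake budget that does not pass through the general priority-based framework.

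Part (1), however, has a genuine gap exactly at the boundary $c=\nfrac{1}{k}$, which is the only delicate case (for $c>\nfrac{1}{k}$ your argument is fine). For your family $L_i=\N\setminus\{n: n\bmod k = i\}$ and \emph{any} enumeration of natural numbers, the $k$ empirical noise rates sum to $1$ at every step $n$ (each enumerated number lies in exactly one residue class), so they cannot all be $\le \nfrac{1}{k}$ at any $n$ with $k\nmid n$; by pigeonhole some fixed $L_i$ satisfies $R(L_i;x_{1:n})>\nfrac{1}{k}$ infinitely often. Hence at $c=\nfrac{1}{k}$ clause (a) of \cref{def:condition:cNoisy-Generation} holds for \emph{every} enumeration, and clause (b) holds for every proper subcollection (dropping $m<k$ residue classes leaves an infinite intersection), so this family does \emph{not} violate the condition and the characterization cannot certify non-generability for it. Your statement that the rate ``converges to exactly $\nfrac1k\le c$'' and hence clause (a) fails conflates convergence to $\nfrac{1}{k}$ with being eventually $\le\nfrac{1}{k}$: along the canonical enumeration the rate exceeds $\nfrac{1}{k}$ infinitely often for every residue class other than $0$.

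To cover $c=\nfrac{1}{k}$ you must either change the witness family---e.g., for $k=2$ take $L_1=\{1\}\cup 2\N$ and $L_2=2\N+1$ with the canonical enumeration of $\N$, whose two rates stay $\le\nfrac12$ for \emph{all} $n$ while $L_1\cap L_2$ is finite; this generalizes by giving all $k$ languages a shared finite block---or argue directly as the paper does: it enumerates $\N$ and pigeonholes on a residue class that the generator's outputs hit infinitely often, declaring that class's language to be the target, without routing through the characterization at all. A secondary point to make explicit if you keep the reduction: \cref{thm:constant-noise-characterization} concerns $c$-noise \emph{with arbitrary omissions}, and non-generability against a stronger adversary class does not formally transfer to the noise-only claim of part (1); you should observe that your violating enumeration is omission-free (it enumerates all of $\N$), so the necessity argument applies verbatim in the noise-only model.
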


    \subsubsection*{Necessary Condition for Generation from Finite Collections}
    First, we prove the necessary condition. 
    
    \begin{proof}[Proof of \cref{thm:finite-collections} (Necessity)]
    Let $\cX=\N$ and, for $i\in\{0,1,\ldots,k-1\}$, define
    \[
    L_{i+1} = \N\setminus\inbrace{n\in\N:  n\equiv i\pmod k}\,.
    \]
    In words, $L_{i+1}$ omits exactly the residue class $i\bmod k$.
    
    \paragraph{Adversary.} Consider the adversary’s enumeration $x_t=t$ (the canonical ordering). 
    Note that this is a valid ordering for all languages in $\cL$ with the fraction of noisy examples being $1/k+o(1)$; therefore the noise density is $c=1/k$ in the limit.
    
    \paragraph{Impossiblity of Generation.}
    Fix any generator producing outputs $w_1,w_2,\dots$ with $w_t\notin\{x_1,\dots,x_t\}$ for all $t\geq t^\star$ for some finite $t^\star$.
    Among the $k$ residue classes modulo $k$, one class $r\bmod k$ appears infinitely often in the output sequence $\{w_t\}_{t\ge 1}$ by the pigeonhole principle.
    Take the target language to be $K=L_{r+1}$ (which is possible due to the earlier observation that the adversaries stream is valid for languages in $\cL$).  
    Then, whenever $w_t\equiv r\pmod k$, we have $w_t\notin K$ by construction; hence the generator makes infinitely many mistakes \mbox{and, thus, fails to generate in the limit from $K$.}
    \end{proof}
     
    \subsubsection*{Sufficient Condition for Generation from Finite Collections}
        Next, we prove the sufficiency part of \cref{thm:finite-collections}.
    
        \paragraph{Overview of the algorithm and proof.}
        At a high level, our algorithm is as follows: at each time $t$, we compute a robust version space allowing up to $n_t$ mistakes among the first $t$ distinct examples, with $n_t$ carefully chosen so that it grows linearly in $t$ and slightly faster than the realized number of noisy examples.
        The latter condition ensures that the robust version space eventually contains the true language $K$ and a more careful argument shows that all languages in the version space share infinitely many elements.
        
        Before formalizing our result we begin with some preliminaries.
    
        \paragraph{Preliminaries: Bounds on the Noisy Closure Dimension.}
        Let $\cL=\{L_1,\dots,L_k\}$ be a finite class (where each $L_i$ infinite) and let $K\in\cL$ be the target language .
        For a finite sequence $x_{1:d}$ and a noise-budget $n\ge0$, define
        \[
            V(x_{1:d};n) = 
                \inbrace{\,L\in\cL:\ \abs{\{x_{1:d}\}\cap L}\,\ge\, d-n\,}\,,
            \quadand
            \inangle{x_{1:d}}_{\cL,n} 
                = 
                \!\!\bigcap_{L\in V(x_{1:d};n)}\!\!L
        \]
        (with $\langle x_{1:d}\rangle_{\cL,n}=\bot$ if $H(x_{1:d};n)=\emptyset$). The $n$-\emph{Noisy Closure dimension} $\mathrm{NC}_n(\cL)$ is the largest $d$ for which there exist distinct $x_{1:d}$ with $\langle x_{1:d}\rangle_{\cL,n}\ne\bot$ but finite. 
        For a finite size-$k$ collection $\cL$,\cite{raman2025noisy} show that there is a class-dependent constant $d^\star<\infty$ such that\footnote{The constant $d^\star$ depends only on $\cL$ (finite) and can be precomputed from it as in the proof of Corollary~3.4 of \cite{raman2025noisy}.}
        \begin{equation}\label{eq:NC-upper}
            \mathrm{NC}_n(\cL) \le  n\,k + d^\star + 1\qquad\text{for all }n\in\N\,.
        \end{equation}
        (See their Corollary~3.4 in \cite{raman2025noisy} and its proof for the construction of $d^\star$.)  
    
    \paragraph{Our Result.}
        Next, we present the main result of this section
    
    \begin{theorem}[Finite classes tolerate density $\nu<1/k$]
    \label{thm:density-finite}
    Fix any size-$k$ collection $\cL=\{L_1,\dots,L_k\}$ and $\nu<1/k$.
    There exists a generator $\generator$ (not knowing $\nu$) that generates from $K$ in the limit against every adversary whose noise rate is at most $\nu$ in the limit.
    \end{theorem}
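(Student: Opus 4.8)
The plan is to instantiate the robust version-space idea of \cite{raman2025noisy} with a noise budget that is tuned against the collection $\cL$ rather than against $\nu$ (which the generator is not told). First, precompute the class-dependent constant $d^\star$ of \eqref{eq:NC-upper} from $\cL$, and work throughout with the sub-enumeration of \emph{distinct} observed strings (repetitions are absorbed by the generic reduction in \Cref{apx:repetitions}, since the algorithm below picks an infinite set as a function of the enumeration and then outputs a fresh element). At step $t$, having seen distinct strings $x_{1:t}$, set the budget
\[
    n_t \coloneqq \max\inbrace{0,\ \floor{\tfrac{t-d^\star-2}{k}}}\,,
\]
form the robust version space $V_t \coloneqq V(x_{1:t};n_t)$ and its closure $C_t \coloneqq \langle x_{1:t}\rangle_{\cL,n_t} = \bigcap_{L\in V_t}L$, and output the smallest element of $C_t$ not yet seen or generated (outputting an arbitrary fresh element if $V_t=\emptyset$ or $C_t$ is finite, a case I will show cannot occur for large $t$).

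The analysis has two halves. First I would show $K\in V_t$ for all sufficiently large $t$. Fix any $\nu'$ with $\nu<\nu'<1/k$; since the noise rate is at most $\nu$ in the limit, for large $t$ the number of noisy strings among $x_{1:t}$ is at most $\nu't$, so $\abs{\{x_{1:t}\}\cap K}\ge t-\nu't$. On the other hand $n_t\ge \tfrac{t-d^\star-2}{k}-1$, and because $\nu'<1/k$ the quantity $t\inparen{\tfrac1k-\nu'}\to\infty$ eventually dominates the fixed offset $\tfrac{d^\star+2}{k}+1$, so $n_t\ge\nu't\ge\#\{\text{noisy in }x_{1:t}\}$ for all large $t$. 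Hence $\abs{\{x_{1:t}\}\cap K}\ge t-n_t$, i.e.\ $K\in V_t$; in particular $V_t\neq\emptyset$, $C_t\neq\bot$, and since $K$ appears in the intersection, $C_t\subseteq K$.

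Second I would show $C_t$ is infinite for all sufficiently large $t$. By the choice of $n_t$ we have $n_t k+d^\star+1\le t-1<t$, so by \eqref{eq:NC-upper}, $\mathrm{NC}_{n_t}(\cL)\le n_t k+d^\star+1<t$. By the definition of the $n$-Noisy Closure dimension, since $t$ exceeds $\mathrm{NC}_{n_t}(\cL)$, there is no length-$t$ sequence of distinct strings whose robust closure at budget $n_t$ is nonempty but finite; as $C_t\neq\bot$ by the previous step, $C_t$ must be infinite. Combining the two halves: for all large $t$, the output $w_t$ is the smallest un-seen, un-generated element of the infinite set $C_t\subseteq K$, hence $w_t\in K\setminus(S_t\cup W_{t-1})$, so $\generator$ generates from $K$ in the limit. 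Note that $\generator$ never uses $\nu$: only $k$ and $d^\star$ enter the definition of $n_t$.

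The one delicate point is precisely the calibration of $n_t$: it must be large enough that $K$ survives in the robust version space (which forces $n_t$ to eventually exceed the realized noise count $\approx\nu t$, a bound depending on the unknown $\nu$), yet small enough that the $\mathrm{NC}$ bound forces an infinite closure (which needs $n_t k+d^\star+1<t$, i.e.\ $n_t\lesssim t/k$). These two requirements are simultaneously satisfiable exactly because $\nu<1/k$ is strict, which opens a window $(\nu,1/k)$ for the ratio $n_t/t$; the generator aims at the top of this window, $n_t\approx t/k$, and the diverging slack $t(1/k-\nu)$ swallows the constant offsets coming from $d^\star$, so not knowing $\nu$ costs nothing in the limit.
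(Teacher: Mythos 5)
Your proposal is correct and follows essentially the same route as the paper's proof: the same dynamic budget $n_t \approx (t-d^\star-2)/k$ that does not depend on $\nu$, the same two-step analysis showing first that the realized noise count eventually falls below $n_t$ (via a slack $\nu'\in(\nu,1/k)$, matching the paper's $c+\epsilon$) so that $K$ stays in the robust version space and $C_t\subseteq K$, and second that $t>\mathrm{NC}_{n_t}(\cL)$ forces $C_t$ to be infinite by the bound \eqref{eq:NC-upper}. The only differences are cosmetic (indexing by distinct strings $t$ versus $d_t$, and explicitly invoking the repetition reduction of \Cref{apx:repetitions}).
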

    
    \begin{proof}
        Recall that we consider a density-bounded adversary that presents a \emph{density-bounded noisy enumeration} of $K$, \ie{}, a stream of distinct examples $(x_t)_{t\ge1}$ in which every $x\in K$ appears at some finite time and
        \[
        \limsup_{d\to\infty}\frac{1}{d}\,\abs{\{t\le d:\ x_t\notin K\}}\ \le\ \nu\,.
        \]
        Let $S_t=\{x_1,\dots,x_t\}$ be the stream of examples presented in the first $t$ steps and $d_t=\abs{S_t}$ be the size of $S_t.$
        Define the dynamic noise budget
        \begin{equation}\label{eq:nt-def}
        n_t = \max\!\inbrace{
            \, 0,\ 
            \floor{
                \frac{d_t - (d^\star+2)}{k}
            }
        }\,.
        \end{equation}
        \paragraph{Generation $\generator$.}
        We claim that it is sufficient for $\generator$ proceed as follows: At time $t$, 
        \begin{enumerate}
            \item Compute $n_t$ by \eqref{eq:nt-def} and the robust noisy-closure $C_t\coloneqq \langle x_{1:t}\rangle_{\cL,n_t}$.
            \item If $C_t\ne\bot$, output any $w_t\in C_t\setminus S_t$. Otherwise, output any arbitrary placeholder string.
        \end{enumerate}
    
        \paragraph{Correctness of $\generator$.}
        In the remainder of the proof, we prove $\generator$'s correctness in several steps.
        
        \begin{claim}[Sufficiency of Dynamic budget]
            \label{claim:finite:dynamicBudget}
            There is a finite $T<\infty$ such that for all $t\geq T$, $N_t\le n_t$.
        \end{claim}
        \begin{proof}[Proof of \cref{claim:finite:dynamicBudget}]
            Let $N_t \coloneqq \abs{\{i\le t:\ x_i\notin K\}}$ be the number of corruptions by time $t$.
            Due to the limsup condition, there exists an $\epsilon\in\inparen{0,(\nfrac{1}{k})-c}$ and $t_0$ large enough that for all $t\ge t_0$,
            \begin{equation}\label{eq:noise-upper}
            N_t\ \le\ (c+\epsilon)\,d_t\,.
            \end{equation}
            On the other hand, by \eqref{eq:nt-def}, for $t$ with $d_t$ sufficiently large we have
            \[
            n_t\ \ge\ \frac{d_t-(d^\star+2)}{k}-1 \ =\ \frac{d_t}{k}-\frac{d^\star+2+k}{k}.
            \]
            Combining with \eqref{eq:noise-upper} and using $c+\epsilon<\frac1k$, we obtain $N_t\le n_t$ for all $t\ge T$ for some finite $T$.
            In particular, it is sufficient to select $T=t_0 + \frac{1}{\eps k}(d^\star + 2 + k)$.
        \end{proof}
        \begin{claim}[Closure is Eventually Infinite]
            \label{claim:finite:infiniteClosure}
            For any $t\geq T$ (for $T$ from \cref{claim:finite:dynamicBudget}), $\abs{C_t}=\infty$ and $C_t\subseteq K$.
        \end{claim}
        \begin{proof}
            Fix any $t\ge T$. Since $N_t\le n_t$, the true language $K$ belongs to $V(x_{1:t};n_t)$ and, hence, $C_t=\langle x_{1:t}\rangle_{\cL,n_t}\ne\bot$.
            Moreover, because $d_t>\mathrm{NC}_{n_t}(\cL)$ by \eqref{eq:NC-upper} and the definition of $n_t$, the noisy-closure $C_t$ is either empty or infinite (by the definition of the noisy closure dimension); moreover, since the previous sentence rules out empty, $C_t$ must be infinite. 
            Finally, $C_t\subseteq K$ because $K\in V(x_{1:t};n_t)$ and $C_t$ is the intersection of supports over $V(x_{1:t};n_t)$.
        \end{proof}
        Now we are ready to conclude: By \cref{claim:finite:infiniteClosure}, for all $t\ge T$ the set $C_t\setminus S_t$ is nonempty and contained in $K$. Thus, $G$ outputs $w_t\in K\setminus S_t$ for all $t\ge T$.
    \end{proof}
    \begin{remark}[On the criticality of $c=\nfrac{1}{k}$]
        The sufficiency proof reveals why the constant $1/k$ appears. 
        The generator succeeds eventually when its noise budget $n_t$ satisfies both
        \[
            c < \frac{n_t}{d_t} 
            \quad\text{and}\quad
            \mathrm{NC}_{n_t} < d_t
        \]
        for all $t \geq T$ for some finite $T$. The first condition ensures $K$ enters the version space, while the second (with $K$ in the version space) ensures the intersection of all version space languages is infinite.
        Since the noisy closure dimension $\mathrm{NC}_n(\cdot)$ scales as $\Theta(nk)$ (see \cref{eq:NC-upper}), the $1/k$ bound emerges from the above constraints:
        \begin{itemize}
            \item The first condition requires $n_t = \Omega(c d_t)$.
            \item The second condition, using $\mathrm{NC}_{n_t} = \Theta(n_tk)$, requires $\Omega(c d_t k) \leq \mathrm{NC}_{n_t} < d_t$.
        \end{itemize}
        Thus $c d_t k < d_t$, implying $c < 1/k$.
    \end{remark}

\section{Additional Preliminaries}\label{app:preliminaries}
    In this section, we present the definitions of $\limsup$ and $\liminf$ that are used throughout the paper.
    \begin{definition}
        The limit inferior and limit superior of a sequence of reals $a=(a_n)_{n\in\N}$ are defined as:
        \[
            \liminf_{n\to\infty} a_n ~\coloneqq~ \sup_{N\in\N}\;\inf_{n\ge N} a_n
            \qquadand
            \limsup_{n\to\infty} a_n ~\coloneqq~ \inf_{N\in\N}\;\sup_{n\ge N} a_n\,.
        \]
    \end{definition}
    Note that, unlike $\lim_{n\to \infty} a_n$, limit inferior and limit superior exist for all sequences of reals $a$.
    We note that these limits are allowed to take values in the extended reals $\R\cup\{\pm\infty\}$. 
    
    Some simple facts are as follows, we refer the reader to \cite{Rudin1976PMA} for more extensive discussion.
    \begin{fact}
        Given a sequence of real numbers $a=(a_n)_{n\in\N}$, the following always hold:
        \begin{enumerate}
            \item $\liminf a_n \le \limsup a_n$
            \item If $\liminf a_n = \limsup a_n$, the $\lim_{n\to\infty}a_n$ exists and $\liminf a_n = \lim_{n\to\infty}a_n = \limsup a_n.$
        \end{enumerate}
    \end{fact}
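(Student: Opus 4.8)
The plan is to reduce both claims to the elementary monotonicity of the two auxiliary sequences $b_N \coloneqq \inf_{n\ge N} a_n$ and $c_N \coloneqq \sup_{n\ge N} a_n$, which I will treat as sequences valued in the extended reals $\bar\R \coloneqq \R\cup\{\pm\infty\}$. First I would note that shrinking the index set over which one takes an infimum (resp.\ supremum) can only raise (resp.\ lower) its value, so $b_N$ is non-decreasing and $c_N$ is non-increasing in $N$; being monotone sequences in $\bar\R$, both converge, and by the definitions recalled in the preceding Definition their limits are $\sup_N b_N = \liminf_n a_n$ and $\inf_N c_N = \limsup_n a_n$, respectively. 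I would also record the trivial two-sided bound $b_N \le a_N \le c_N$ for every $N$.

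For part (1), I would show that $b_N \le c_M$ for \emph{every} pair of indices $N, M$: putting $L \coloneqq \max(N,M)$ and chaining the monotonicity with the trivial bound at index $L$ gives $b_N \le b_L \le a_L \le c_L \le c_M$. Taking the supremum of the left-hand side over $N$ yields $\sup_N b_N \le c_M$ for all $M$, and then taking the infimum over $M$ yields $\sup_N b_N \le \inf_M c_M$, i.e.\ $\liminf_n a_n \le \limsup_n a_n$. Every step here is a comparison rather than an arithmetic operation, so the argument is valid verbatim in $\bar\R$ and needs no separate treatment of unbounded or divergent sequences.

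For part (2), I would assume the common value is $L \in \bar\R$ and invoke the squeeze principle: since $b_N \le a_N \le c_N$ with both $b_N \to L$ and $c_N \to L$ (the monotone limits identified in the first step), it follows that $a_N \to L$, so $\lim_n a_n$ exists and equals $L = \liminf_n a_n = \limsup_n a_n$. The cases $L = +\infty$ and $L = -\infty$ are handled by the same reasoning, using only the relevant one-sided bound ($a_N \ge b_N \to +\infty$, respectively $a_N \le c_N \to -\infty$).

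There is no genuine obstacle; this is a routine fact. The only point meriting slight care is the bookkeeping in $\bar\R$ — ensuring that the monotone sequences $b_N$ and $c_N$ are genuinely convergent in the extended reals and that the inequality chain in part (1) never produces an indeterminate form, which it does not because each link is an order comparison, not a sum or difference.
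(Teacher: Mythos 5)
Your proof is correct: the monotonicity of $b_N=\inf_{n\ge N}a_n$ and $c_N=\sup_{n\ge N}a_n$, the chain $b_N\le b_L\le a_L\le c_L\le c_M$ with $L=\max(N,M)$, and the squeeze argument (with the one-sided variants for $\pm\infty$) are exactly the standard treatment. The paper itself gives no proof of this Fact—it simply defers to the textbook reference—so there is nothing to contrast with; your argument is the routine one that reference contains, and it is complete as written.
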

    Next, we discuss some examples to build intuition.
\begin{example}
    The sequence $a_n = (-1)^n$ has $\liminf a_n = -1$ and $\limsup a_n = 1$
\end{example}
\vspace{-5mm}
\begin{example}
    Consider the sequence $a=(a_n)_{n\in \N}$ where $a_n=1$ if $n=2^m$ for some $m\in \N$ and, otherwise, $a_n=0$.
    It has $\liminf a_n = 0$ and $\limsup a_n = 1$.
\end{example}
\vspace{-5mm}
\begin{example}
    The sequence $a_n = (-1)^n + \nfrac{1}{n}$ has $\liminf a_n = -1$ and $\limsup a_n = 1$.
    In contrast, the related sequence $b_n = 1 + \nfrac{(-1)^n}{n}$ satisfies $\liminf b_n = \limsup b_n = 1$.
\end{example}

\end{document}